\documentclass{article} 
\usepackage{iclr2026_conference,times}

\usepackage{amsmath,amsfonts,bm}

\def\eqref#1{equation~\ref{#1}}

\def\ceil#1{\lceil #1 \rceil}

\def\1{\bm{1}}

\def\ve{{\bm{e}}}

\def\vk{{\bm{k}}}

\def\vp{{\bm{p}}}

\def\vv{{\bm{v}}}
\def\vw{{\bm{w}}}
\def\vx{{\bm{x}}}
\def\vy{{\bm{y}}}

\DeclareMathAlphabet{\mathsfit}{\encodingdefault}{\sfdefault}{m}{sl}
\SetMathAlphabet{\mathsfit}{bold}{\encodingdefault}{\sfdefault}{bx}{n}

\newcommand{\E}{\mathbb{E}}

\DeclareMathOperator*{\argmin}{arg\,min}

\usepackage{hyperref}
\usepackage{url}
\usepackage{enumitem}
\usepackage{color}
\usepackage[dvipsnames]{xcolor}
\usepackage{booktabs}

\usepackage{tikz}
\usetikzlibrary{positioning,calc,arrows.meta}

\newcommand{\revised}[1]{\textcolor{black}{#1}}

\newcommand{\revisedstart}{\color{black}}
\newcommand{\revisedend}{\color{black}}

\newcommand{\calM}{\mathcal{M}}
\newcommand{\calN}{\mathcal{N}}
\newcommand{\calX}{\mathcal{X}}
\newcommand{\calP}{\mathcal{P}}

\usepackage{amsmath, amssymb, amsthm, physics, color}
\usepackage{bbm}
\usepackage{mathtools}
\usepackage{graphicx}
\usepackage{bm}
\usepackage{hyperref}
\usepackage{enumitem}
\usepackage{comment}
\excludecomment{comment}

\usepackage[capitalize]{cleveref}

\usepackage{tikz}
\usetikzlibrary{arrows.meta, positioning}
\usepackage{tikz-cd}

\theoremstyle{plain}
\newtheorem{theorem}{Theorem}
\newtheorem{lemma}{Lemma}
\newtheorem{corollary}{Corollary}
\newtheorem{example}{Example}

\theoremstyle{definition}
\newtheorem{definition}{Definition}
\newtheorem{assumption}{Assumption}
\newtheorem{remark}{Remark}
\newtheorem{setting}{Setting}

\mathtoolsset{showonlyrefs = true}

\allowdisplaybreaks

\title{Transformers as Measure-Theoretic Associative Memory: A Statistical Perspective and Minimax Optimality}

\author{Ryotaro Kawata$^{1,2,*}$, Taiji Suzuki$^{1,2,\S}$}

\newcommand{\probsp}{\mathcal{P}}
\newcommand{\query}{x_\mathrm{q}}
\newcommand{\queryt}{x_{\mathrm{q}_t}}
\newcommand{\rkhs}{\mathcal{H}}

\newcommand{\ball}{B}
\newcommand{\tokensp}{\mathcal{X}}
\newcommand{\mmd}{\mathrm{MMD}}
\newcommand{\gammaball}{\gamma_{\mathrm{b}}}
\newcommand{\gammadist}{\gamma_{\mathrm{d}}}
\newcommand{\gammafunc}{\gamma_{\mathrm{f}}}
\newcommand{\attention}{\mathrm{Attn}}
\newcommand{\mlp}{\mathrm{MLP}}
\newcommand{\dimattn}{d_{\mathrm{attn}}}
\newcommand{\covering}{\mathcal{N}}
\newcommand{\packing}{\mathcal{M}}
\newcommand{\inputsetforlipscitz}{\ball(\rkhs_0,\|\|_{{\rkhs_0}^{\gammaball}})}
\newcommand{\inputsetforlipschitz}{\ball(\rkhs_0,\|\|_{{\rkhs_0}^{\gammaball}})}
\newcommand{\metricforlipschitz}{\|\|_{\rkhs_0^{\gammafunc}}}

\newcommand{\measureofinputs}{\mathbb{P}_{\nu,\query}}
\newcommand{\measureofv}{\mathbb{P}_{v}}
\newcommand{\measureofmuzero}{\mathbb{P}_{\mu_0}}
\newcommand{\measureofmuzeroistar}{\mathbb{P}_{\mu_0}}
\newcommand{\measureofmuzeroandquery}{\mathbb{P}_{\mu_0,\query}}

\newcommand{\inputsetofquery}{\tokensp_{\mathrm{q}}}

\newcommand{\embedding}{\mathrm{Emb}}

\iclrfinalcopy 
\begin{document}

\maketitle

\begingroup  
\vspace{-2em} 
$^1$Department of Mathematical Informatics, University of Tokyo, Japan \\
$^2$Center for Advanced Intelligence Project, RIKEN, Japan \\
$^*$\texttt{\href{mailto:kawata-ryotaro725@g.ecc.u-tokyo.ac.jp}{kawata-ryotaro725@g.ecc.u-tokyo.ac.jp}} \\
$^\S$\texttt{\href{mailto:taiji@mist.u-tokyo.ac.jp}{taiji@mist.u-tokyo.ac.jp}}\\
\vspace{1em}
\par
\endgroup

\begin{abstract}
Transformers excel through content-addressable retrieval and the ability to exploit contexts of, in principle, unbounded length.
We recast associative memory at the level of probability measures, treating a context as a distribution over tokens and viewing attention as an integral operator on measures. 
Concretely, for mixture contexts $\nu = I^{-1} \sum_{i=1}^I \mu^{(i^*)}$ and a query $x_{\mathrm{q}}(i^*)$, the task decomposes into (i) recall of the relevant component $\mu^{(i^*)}$ and (ii) prediction from $(\mu_{i^*},x_\mathrm{q})$.
We study learned softmax attention (not a frozen kernel) trained by empirical risk minimization and show that a shallow measure-theoretic Transformer composed with an MLP learns the recall-and-predict map under a spectral assumption on the input densities. 
We further establish a matching minimax lower bound with the same rate exponent (up to multiplicative constants), proving sharpness of the convergence order.
The framework offers a principled recipe for designing and analyzing Transformers that recall from arbitrarily long, distributional contexts with provable generalization guarantees.
\end{abstract}

\section{Introduction}

Transformers \citep{vaswani2017} have achieved strong empirical performance across natural language \citep{brown2020language}, vision \citep{dosovitskiy2021an}, and speech/audio \citep{dong2018speech}. Two properties motivate our study: (i) \emph{content-addressable retrieval} of associated information—an associative-memory view of attention—and (ii) the ability to leverage contexts of variable, in principle unbounded, length. 

In this work, we cast associative memory at the \emph{level of probability measures}, treating context as a distribution over tokens, and develop a rigorous statistical analysis of learned softmax-attention Transformers in this measure-theoretic setting.

Associative memory provides a unifying lens on how neural systems store and retrieve from partial cues: from early self-organizing and correlation memories to Hopfield attractors \citep{amari1972learning,kohonen1972cmemory,nakano1972associatron,hopfield1982,hopfield1984}.
Transformers recast associative memory or recall via content-addressable attention, formally equivalent to Hopfield-style associative updates \citep{vaswani2017,ramsauer2020}. Recent studies quantify memory emergence and capacity~\citep{bietti2023birth,cabannes2024scaling,mahdavi2024memorization,kim2023provable,jiang2024latent,nichani2025}.

As Transformers are engineered to ingest massive text corpora and long contexts, researchers have formalized this ``context” as a \emph{probability measure} over tokens, yielding a measure-theoretic handle on variable-size inputs. \revised{Summarizing the text data as one measure by the law of large numbers helps them to show results that are independent of the text length.} A measure-theoretic view of Transformers formalizes attention as a map on distributions, enabling analysis of stability and emergent structure \citep{vuckovic2020mathematical,sander2022sinkformers,geshkovski2025mathematical,burger2025analysis}. On the expressivity side, Transformers can interpolate between input/output measures \citep{geshkovski2024measure} and even uniformly approximate continuous in-context mappings where the context is itself a probability distribution \citep{furuya2025transformers}. 

Recent work has developed statistical analyses of Transformers with infinite-dimensional inputs. Yet the link to \emph{associative memory}—arguably a defining feature of attention—remains under-specified. Prior generalization results in distribution regression typically assume a \emph{frozen} (non-learnable) attention kernel \citep{liuzhou2025}, leaving unclear how \emph{learned} attention retrieves the associated measure. 
Likewise, in a sequence-based in-context setting with infinite-dimensional inputs~\citep{kim2024incontext}, the analysis was carried out under linear attention, whose limited expressiveness makes it difficult to realize the sharp, spiky weight distributions achievable by softmax attention~\citep{han2024bridging,fan2025rectifying}.
These considerations motivate the central question:
\begin{center}
\textbf{Q.} \emph{Can a learned softmax-attention Transformer recall an infinite-dimensional (measure-valued) context and predict from it with provable generalization guarantees?}
\end{center}

\paragraph{``Associative memory'' at the level of measures (informal).}
\revisedstart
Consider a text corpus composed of $I$ documents. We model each token as a vector
$x = (v,z) \in \mathbb{R}^{d_1} \times \mathbb{R}^{d_2}$, where
$v$ encodes a document-level feature (e.g., topic) and $z$ encodes token-level content.
For document $i$, the document feature is a fixed vector $v^{(i)} \in \mathbb{S}^{d_1}$,
while the content part $z$ is sampled from some distribution $\mu^{(i)}_0$ on $\mathbb{R}^{d_2}$.
In the limit of an infinitely long document, the empirical token distribution of
document $i$ converges to a probability measure $\mu_{v^{(i)}}$ on $\mathbb{R}^{d_1+d_2}$,
namely the law of $x = (v^{(i)}, Z)$ with $Z \sim \mu^{(i)}_0$.
The \emph{context} seen by the model is then the mixture
\[
  \nu \;=\; \sum_{i=1}^{I} w_i \,\mu^{(i)}_{v^{(i)}},
  \qquad w_i \ge 0,\ \sum_{i} w_i = 1,
\]
representing a whole dataset containing many documents.
Given such a mixture $\nu$ and a query $\query \in \mathbb{R}^{d_1+d_2}$ whose first
$d_1$ coordinates align with some document feature $v^{(i^\star)}$, the desired
``associative memory'' behavior is:
\begin{itemize}[leftmargin=15pt]
  \item first, \emph{recall} the component indexed by $i^\star$ from the mixture $\nu$, and
  \item then \emph{predict} a scalar quantity that depends only on the associated
  content distribution $\mu^{(i^\star)}_0$ (and possibly on $\query$).
\end{itemize}
We denote by $F^\star : (\nu,\query) \mapsto F^\star(\nu,\query) \in \mathbb{R}$
this ground-truth recall-and-predict map: by construction, its output depends on $\nu$
only through the single component $\mu^{(i^\star)}_0$ selected by the query (\cref{fig:recall-predict}).
\revisedend
We study learned softmax-attention Transformers (with an integral/empirical measure view of attention)
trained by empirical risk minimization (ERM) to implement this recall-and-predict pipeline.
On the statistical side, \revised{we work in a very smooth regime}: we endow the space of context measures
with a reproducing kernel Hilbert space (RKHS) whose Mercer eigenvalues satisfy
\(\lambda_j \asymp \exp(-c\,j^{\alpha})\) for some \(\alpha>0\) (as for Gaussian-type kernels~\citep{scholkopf2002learning}).
\revised{
This spectral decay encodes strong smoothness of the underlying densities} and induces an
\emph{effective dimension} that will govern our learning rates.
\begin{figure}[t]
\centering
\usetikzlibrary{positioning,fit}
\begin{tikzpicture}[>=stealth]

\tikzset{
  Big/.style={draw, rounded corners, inner sep=6pt, minimum height = 27mm},
  Title/.style={align=center, text width=3.2cm, minimum height=1.3cm},
  Small/.style={draw, rounded corners=1pt, inner sep=1pt,
                minimum width=9mm, minimum height=4mm, font=\scriptsize},
    Unselected/.style={draw, dashed, rounded corners=1pt, inner sep=1pt,
                minimum width=9mm, minimum height=4mm, font=\scriptsize},
  Selected/.style={Small, very thick, font=\scriptsize\bfseries}
}

\node (t2) [Title]
  {Select index $i^*$ \\[1mm] $\frac{1}{I}\sum_{i}\mu^{(i)}_{v^{(i)}}\mapsto \mu^{(i^*)}_{v^{(i)}}$};

\node (b2) [Unselected, above=2mm of t2] {$\mu^{(2)}_{v^{(2)}}$}; 
\node (b1) [Unselected, left=1mm of b2] {$\mu^{(1)}_{v^{(1)}}$}; 
\node (b3) [Selected, right=1mm of b2] {$\mu^{(3)}_{v^{(3)}}$}; 

\node (B2) [Big, fit=(t2) (b1) (b2) (b3)] {};

\node (t1) [Title, left=1.3cm of t2]
  {Mixture $\propto \sum_{i=1}^I {\mu^{(i)}_{v^{(i)}}}; $ \\[1mm] + Query $\query(i^*)$};

\node (a2) [Small, above=2mm of t1] {$\mu^{(2)}_{v^{(2)}}$}; 
\node (a1) [Small, left=1mm of a2] {$\mu^{(1)}_{v^{(1)}}$}; 
\node (a3) [Small, right=1mm of a2] {$\mu^{(3)}_{v^{(3)}}$}; 

\node (B1) [Big, fit=(t1) (a1) (a2) (a3)] {};

\node (t3) [Title, right=1.3cm of t2]
  {Evaluate \\[1mm] $\tilde{F}^\star({\mu^{(i^*)}_0}; , \query)$};

\node (c2) [Unselected, above=2mm of t3] {$\mu^{(2)}_{v^{(2)}}$};
\node (c1) [Unselected, left=1mm of c2] {$\mu^{(1)}_{v^{(1)}}$};
\node (c3) [Selected, right=1mm of c2] {$\mu^{(3)}_{v^{(3)}}$};    

\node (B3) [Big, fit=(t3) (c1) (c2) (c3)] {};

\draw[->, thick] (B1.east) -- node[above]{\scriptsize recall} (B2.west);
\draw[->, thick] (B2.east) -- node[above]{\scriptsize predict} (B3.west);

\end{tikzpicture}
\caption{Associative recall at the level of measures (informal): the query $\query(i^*)$ selects the relevant component measure $\mu^{(i^*)}_{v^{(i^*)}}$ from the mixture $\nu\propto \sum_{i=1}^I \mu^{(i)}_{v^{(i)}}$, followed by prediction from $(\mu^{(i^*)}_0,\query)$. Note that each $\mu^{(i)}_{v^{(i)}}$ is constructed by $v^{(i)} \in \mathbb{S}^{d_1}$ and a measure $\mu^{(i)}_0$ on $\mathbb{R}^{d_2}$.}
\label{fig:recall-predict}
\end{figure}
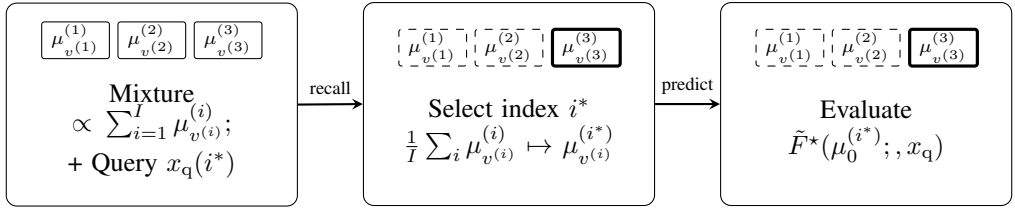

\paragraph{Contributions.}
We now outline the principal contributions of this work:
\begin{enumerate}[leftmargin=15pt]
    \item \textbf{Associative memory at the level of measures.}
    We formalize a general, mathematically rigorous framework for associative recall over measures: given a measure-valued context and a query, a \emph{recall operator} selects the associated measure, and a \emph{predictor} maps the recalled measure together with the query to an output. We formalize query-conditioned selection from arbitrarily long contexts: the model recalls the associated probability measure capturing the relevant content and predicts from its statistics.

    \item \textbf{Generalization.}
We show that a shallow (depth-2) measure-theoretic Transformer composed with an MLP can learn the recall-and-predict mapping at the level of measures (\cref{main-theorem-upper-informal}).
In contrast to linear attentions~\citep{kim2024incontext} or frozen kernels~\citep{zhou2024rkhs}, softmax attention enables \emph{sparse} and \emph{adaptive} recall of the relevant measure.
For empirical risk minimization over a bounded-parameter hypothesis class—provided the number of recall candidates is not excessively large—we establish the \emph{sub-polynomial} population-risk bound $\exp{-\Theta((\log n)^{\alpha/(\alpha+1)})}$, showing that the statistical difficulty is governed by the kernel’s Mercer eigen-decay $\alpha$.
    \item \textbf{Minimax Optimality.}
We prove a minimax lower bound with the same rate exponent \((\log n)^{\alpha/(\alpha+1)}\), establishing the sharpness of the convergence order (\cref{main-theorem-minimax-lower-bound}).
Thus, under our spectral and mixture-growth assumptions, the proposed measure-theoretic Transformer is minimax-optimal \emph{in the order of the exponent}, though multiplicative constants may differ.

\end{enumerate}

The remainder of this paper is organized as follows. \cref{main-section-related} reviews related work. \cref{main-section-settings} introduces the problem setting and the student model. \cref{main-section-main-thoerms} presents our main theoretical results. 
\cref{main-section-conclusions} concludes with discussion. Technical details and proofs are deferred to the appendices.

\section{Related Work}

\label{main-section-related}

\paragraph{Associative Memory and Recall.}

Associative memory concepts originated in early neuroscience models~\citep{hopfield1982,amari1972learning,kohonen1972cmemory,nakano1972associatron,hopfield1982,hopfield1984}, followed by \citet{graves2014,weston2014,ramsauer2020,millidge2022universal}. The Transformer architecture is closely related to associative memory by employing self-attention as a content-addressable mechanism \citep{vaswani2017}.
Recent work has increasingly focused on how associative memory emerges and scales within Transformer architectures~\citep{bietti2023birth,cabannes2024scaling,mahdavi2024memorization,kim2023provable,jiang2024latent,nichani2025}.


\paragraph{Transformers for Infinite-Dimensional Inputs.}
A measure-theoretic perspective has enabled insightful analysis of Transformer architectures. \citet{vuckovic2020mathematical,sander2022sinkformers} formalized self-attention as a map on probability measures. Its Lipschitzness is explored in \citet{castin2024smooth}. Building on that framework, \citet{geshkovski2023emergence,geshkovski2025mathematical,burger2025analysis} modeled self-attention as an interacting particle system.
\citet{geshkovski2024measure} proved a universality result showing that Transformers can interpolate arbitrary input–output measure pairs, later strengthened by \citet{furuya2025transformers} to uniform approximation of continuous mappings over distributions and queries.
On generalization, \citet{liuzhou2025} studied distribution regression, though restricted to a frozen attention kernel.
In the context of sequential, infinite-dimensional inputs, 
\citet{kim2024incontext} studied in-context learning with linear attention, which essentially reduces to averaging behaviors; hence their analysis assumed \emph{relaxed sparsity} and \emph{orthonormality} of the recall candidates, reflecting the difficulty of achieving spiky one-hot recall in contrast to softmax attention~\citep{han2024bridging,fan2025rectifying}.

\paragraph{MLP Approximations of Functional Mappings.}
In statistical learning, \citet{mhaskar1997functional} laid the groundwork by showing that multi-layer perceptrons (MLPs) can approximate continuous nonlinear functionals over function spaces in a optimal rate that was generalized by \citet{stinchcombe1999neural}. \citet{rossi2005} introduced a novel functional MLPs which is applicable to functional data, followed by variants ~\citep{yao2021icml,song2023relu,zhou2024rkhs}.
On the optimization front, \citet{suzuki2020,nishikawa2022} established global optimization assurances for two-layer networks operating in a infinite-dimensional regime.



\section{Problem Setting}

\label{main-section-settings}

\paragraph{Notations.}

For integers $N_1 \le N_2$ and $\vv \in \mathbb{R}^N$, 
we write $\vv_{N_1:N_2} = (v_{N_1},\dots,v_{N_2})^\top$.  
For a matrix $A$, $\|A\|_0$ denotes the number of nonzero entries and 
$\|A\|_\infty = \max_{i,j} |A_{i,j}|$.  
We write $\lambda$ for the Lebesgue measure on $\mathbb{R}^N$, and 
$f_\sharp \mu$ for the pushforward of $\mu$ by $f$.  
For a measurable space $\mathcal{X}$, 
$\mathcal{P}(\mathcal{X})$ denotes the set of probability measures 
and $\mathcal{M}_+(\mathcal{X})$ the set of nonnegative measures.  
We use $(\Omega,\mathcal{F},\mathbb{P})$ for a probability space, 
$\|f\|_{L^p}$ and $\|f\|_\infty$ for the usual $L^p$ and essential sup norms, 
and $\mathbb{P}_X$ for the law of a random variable $X$.  
Expectations are written $\mathbb{E}[\cdot]$ or 
$\mathbb{E}_X[\cdot]$ with the law of $X$.

\paragraph{Our Regression Problem.}

We now formalize the informal recall-and-predict scenario from the introduction.

\begin{definition}
\label{main-definition-input-output}

\revisedstart
Let $\tokensp_0 \subset \mathbb{R}^{d_2}$ be a bounded token-content space and
let $\tokensp \subset \mathbb{R}^{d_1+d_2}$ denote the token space with the
decomposition $x = (v,z)$, where $v \in \mathbb{R}^{d_1}$ encodes a
document-level feature and $z \in \tokensp_0$ encodes token-level content.
\begin{description}[leftmargin=5mm]
\item[\textit{1. Mixture contexts and queries.}]
Each document $i \in [I]$ is associated with a document feature
$v^{(i)} \in \mathbb{S}^{d_1-1}$ and a content distribution
$\mu_0^{(i)} \in \calM_+(\tokensp_0)$. Informally, $\mu_0^{(i)}$ represents the distribution of token contents
(e.g., words or embeddings) appearing in document~$i$. \revisedend
The corresponding token distribution on $\calX$ is the product measure
\[
  \textstyle \mu_{v^{(i)}}^{(i)}
  \;\coloneq\;
  \delta_{v^{(i)}} \otimes \mu_0^{(i)}
  \;=\;
  (\embedding_{v^{(i)}})_\sharp \mu_0^{(i)},
\]
where $\embedding_{v^{(i)}}(z) \coloneq (v^{(i)}, z)$ and
$f_\sharp \mu$ denotes the pushforward of $\mu$ by $f$, so that $\mu_{v^{(i)}}^{(i)}$ is the joint distribution of the document
feature $v^{(i)}$ and the token content in document~$i$.
A \emph{context} is a mixture of these component measures,
\begin{equation}
  \nu \;\coloneq\; \frac{1}{I} \sum_{i=1}^I \mu_{v^{(i)}}^{(i)}
  \;\in\; \calP(\calX),
  \label{eq:mixture-context}
\end{equation}
\revised{which represents the token distribution of a whole dataset containing $I$
documents. Concretely, $\nu$ is the law obtained by first sampling a document
$i \in [I]$ at random and then a token $x = (v^{(i)},z)$ from that document.
}
Given such a mixture, a query $\query \in \tokensp$ is constructed so as to
indicate a \emph{distinguished} index $i^\star \in [I]$.
For concreteness, we take
\begin{equation}
    \query
  \;\coloneq\;
  \begin{bmatrix}
    v^{(i^\star)} \\[1mm] 0_{d_2}
  \end{bmatrix}
  = \embedding_{v^{(i^\star)}}(0_{d_2}) \in \mathbb{R}^{d_1+d_2},
  \label{eq:query-def}
\end{equation}
that is, the document feature $v^{(i^\star)}$ padded with zeros in the last
$d_2$ coordinates.%
\footnote{More general queries, e.g.\ with a nonzero content part, can be
treated as well; we fix the zero padding here for notational simplicity.}
\revisedstart
\item[\textit{2. Ground-truth recall-and-predict map.}]
The learning task is to predict a real-valued response
\begin{equation}
  y \;=\; F^\star(\nu,\query) + \xi,
  \qquad
  \xi \sim \calN(0,\sigma^2),
  \label{eq:regression-model}
\end{equation}
from the pair $(\nu,\query)$. 
The key structural assumption is that $F^\star$ depends on the context
$\nu$ only through the single component associated with the index
$i^\star$ selected by the query. \revisedend
Equivalently, there exists a (hidden) functional
$\tilde{F}^\star$ such that
\begin{equation}
  F^\star(\nu,\query)
  \;=\;
  \tilde{F}^\star\big(\mu_0^{(i^\star)}, \query\big),
  \qquad
  \nu = \textstyle\frac{1}{I}\sum_{i=1}^I \mu_{v^{(i)}}^{(i)},
  \label{eq:recall-predict-structure}
\end{equation}
so that the regression map \((\nu,\query) \mapsto F^\star(\nu,\query)\)
decomposes into the two conceptual stages
\begin{equation}\label{eq:pipeline-new}
  \big(\,\nu,\query\big)
  \;\xrightarrow{\;\text{recall } i^\star\;}\; \mu_{0}^{(i^\star)}
  \;\xrightarrow{\;\text{predict}\;}\; \tilde{F}^\star(\mu^{(i^\star)}_0,\query).
\end{equation}
\revised{
    For instance, $F^\star(\nu,\query)$ could be a sentiment score of
    document $i^\star$, or the probability that document $i^\star$ mentions
    an entity specified by the query.
}
\end{description}

\end{definition}
\revisedstart
In particular, $F^\star$ must first \emph{associate} the query with the
relevant component of the mixture context and then \emph{predict} a scalar
from the recalled component (e.g., in-context learning~\citep{brown2020language}).
This formalizes an ``associative memory'' task at the level of probability
measures.
\revisedend

\paragraph{Statistical Estimation Problem.}

Motivated by the recall-and-predict regression task described above, 
We now formulate the associated statistical estimation problem.
We observe $n$ i.i.d.\ samples
\begin{equation}
\mathcal{S}_n \coloneq \big\{ (\nu_t, \queryt, y_t) \big\}_{t=1}^n
\end{equation}
drawn from the joint distribution of $(\nu, \query, y)$.
Let $\mathcal{F}$ denote a hypothesis class of measurable functions
$F: \mathcal{P}(\tokensp_0) \times \mathbb{R}^{d_1+d_2} \to \mathbb{R}$.
Given the training data, we define the empirical risk minimizer
\begin{equation}
    \textstyle \hat{F} \;:=\; \argmin_{F \in \mathcal{F}}
    \; \hat{\mathbb{E}}_{n} \big[
      \big(y - F(\nu, \query)\big)^2
    \big],
\end{equation}
where $\hat{\mathbb{E}}_{n}$ denotes the empirical expectation
over the $n$ samples.
Given a hypothesis (regressor) $\hat{F}$, our goal is to learn $F^\star$
so as to minimize the squared $L^2$ loss
\begin{equation}
R(F^\star, \hat{F}) \;:=\;
\mathbb{E}_{\mathcal{S}_n}
\big[
  \big\|F^\star(\nu, \query) - \hat{F}(\nu, \query)\big\|_{L^2(\measureofinputs)}^2
\big].
\end{equation}

\revisedstart
\paragraph{RKHS viewpoint on content measures.}

Before stating the assumptions, we briefly recall how the kernel $K$ induces
a function space for modeling token distributions.
Given a positive definite kernel
$K : \tokensp_0 \times \tokensp_0 \to \mathbb{R}$ on the bounded
token-content domain $\tokensp_0 \subset \mathbb{R}^{d_2}$, there exists a
unique reproducing kernel Hilbert space (RKHS) (e.g., \citep{scholkopf2002learning}) $\mathcal{H}_0$ of functions
$f : \tokensp_0 \to \mathbb{R}$ such that
\[
  K(\cdot,x) \in \mathcal{H}_0
  \quad\text{and}\quad
  f(x) = \langle f, K(\cdot,x)\rangle_{\mathcal{H}_0}
  \quad \text{for all } x \in \tokensp_0.
\]
Intuitively, $\mathcal{H}_0$ is the class of “smooth” functions compatible
with $K$.  In the Mercer basis
\[
  \textstyle K(x,x') =  \sum_{j\ge1} \lambda_j e_j(x)e_j(x'),
\]
every $f \in \mathcal{H}_0$ can be written as
$f(x) = \sum_{j\ge1} b_j e_j(x)$ with finite RKHS norm $\|f\|_{\mathcal{H}_0}^2
  \;=\; \sum_{j\ge1} b_j^2/\lambda_j.$
Large coefficients $b_j$ in high-frequency directions (small $\lambda_j$)
are penalized heavily, so $\mathcal{H}_0$ favours functions whose energy is
concentrated on low-order eigen-components.
In our setting we represent each content measure $\mu_0^{(i)}$ by its
density $p_{\mu_0^{(i)}}$ on $\tokensp_0$, and we require these densities
to lie in a fixed ball of $\mathcal{H}_0$.
The rapid eigenvalue decay $\lambda_j \asymp \exp(-c j^{\alpha})$ corresponds
to a very smooth (Gaussian Kernel-type \citep{scholkopf2002learning}) regime in which the effective dimension of
this function class is small; this effective dimension will drive our
statistical rates.

\subsection{Assumptions and technical settings}
\label{main-subsection-technical-assumptions}

We now state the high-level assumptions used in our upper- and lower-bound
analyses. Full technical versions are deferred to \cref{appendix-subsection-technical-assumptions}.
Throughout this subsection, contexts and queries are generated as described
in the beginning of Section~\ref{main-section-settings}.

\begin{assumption}[Smooth kernel and regular content measures]
\label{assump:informal-rkhs}
Let $\tokensp_0 \subset \mathbb{R}^{d_2}$ be a bounded token-content domain and
let $K:\tokensp_0 \times \tokensp_0 \to \mathbb{R}$ be a positive definite kernel
with Mercer expansion~\citep{scholkopf2002learning}
\[
  K(x,x') = \sum_{j\ge1} \lambda_j e_j(x)e_j(x').
\]
We assume (i) The eigenvalues decay exponentially, i.e.,\(\lambda_j \asymp \exp(-c j^{\alpha})\) for some $c,\alpha>0$ (a Gaussian-type smoothness regime), (ii) Each content distribution $\mu_0^{(i)}$ is a finite measure on $\tokensp_0$ whose density lies in a fixed ball of the RKHS $\mathcal{H}_0$ induced by $K$.
Informally, the token distributions are very smooth and their effective
dimension is small, since most of the mass lies in low-order eigen-components. We focus on this exponentially decaying regime as a first step, to keep the
analysis transparent.
\end{assumption}

\begin{example}[Heat-kernel RKHS~\citep{grigor2006heat}]
    Consider $\tokensp_0 = [0,1]$ and the Laplace operator
    $\Delta = \frac{\mathrm{d}^2}{\mathrm{d}x^2}$ with Dirichlet boundary
    conditions. Its eigenfunctions and eigenvalues are
    $e_k(x) = \sqrt{2}\sin(k\pi x)$ and $\zeta_k = (k\pi)^2$ for $k \ge 1$.
    The heat kernel, the fundamental solution of the heat
equation describing how heat placed at $y$ at time $0$ spreads to $x$ by time $t$, is
    \[\textstyle
      K_t(x,y)
      = \sum_{k=1}^\infty e^{-\zeta_k t}\, e_k(x)e_k(y)
      = \sum_{k=1}^\infty e^{-\pi^2 k^2 t} e_k(x)e_k(y),
    \]
    so the Mercer eigenvalues satisfy
    $\lambda_k = e^{-\pi^2 k^2 t} \asymp \exp(-c k^2)$ and
    Assumption~\ref{assump:informal-rkhs} holds with $\alpha = 2$. More general constructions on compact manifolds are recalled in
    \cref{appendix-subsection-technical-assumptions}.
\end{example}

\begin{assumption}[Separated context vectors]
\label{assump:informal-separation}
The context vectors $v^{(i)} \in \mathbb{S}^{d_1-1}$ used to construct the
mixture \eqref{eq:mixture-context} are well separated: $\langle v^{(i)}, v^{(i')} \rangle \;\le\; 0,\;\text{for all } 1 \le i < i' \le I,$
and we assume $I \le d_1$.
This guarantees that different documents are sufficiently distinguishable for
the recall step\footnote{A similar separation/orthogonality structure is
used in theoretical analyses of factual extraction in transformers, e.g.,
\citet{ghosal2024understandingfinetuningfactualknowledge}}.
\end{assumption}

\begin{assumption}[Lipschitz ground-truth functional]
\label{assump:informal-lipschitz}
There exists a metric $\mathrm{d}_{\mathrm{prod}}$ on the space of pairs $(\mu_0,x)$,
induced by the RKHS structure above, such that the hidden functional $\tilde{F}^\star : \{\mu_0^{(i)}\} \times \mathbb{R}^{d_1+d_2} \to \mathbb{R}$
is $L$-Lipschitz:
\[
  \big|\tilde{F}^\star(\mu_0,x) - \tilde{F}^\star(\mu_0',x')\big|
  \;\le\;
  L\, \mathrm{d}_{\mathrm{prod}}\big((\mu_0,x),(\mu_0',x')\big)
\]
for all admissible inputs $(\mu_0,x)$ and $(\mu_0',x')$.
In the proofs, $\mathrm{d}_{\mathrm{prod}}$ will be the sum of an RKHS-induced
distance between densities and the Euclidean distance between queries;
see Appendix~\ref{appendix-subsection-technical-assumptions} for its precise form.
\end{assumption}

\begin{assumption}
    \label{assump:informal-prob}
    Each content distribution $\mu_0^{(i)}$ is a probability measure on $\tokensp_0$.
\end{assumption}

\begin{setting}[Probability setting for upper bound (Setting~1)]
\label{setting-probability-informal}
Contexts and queries are generated as in Section~\ref{main-section-settings}.
Assumptions~\ref{assump:informal-rkhs},
\ref{assump:informal-separation}, 
\ref{assump:informal-lipschitz} and \ref{assump:informal-prob} hold.
In other words, we consider smooth (Gaussian-type) content probability distributions in
an RKHS ball, well-separated context vectors, and an $L$-Lipschitz
ground-truth functional with respect to an RKHS-induced metric.
\end{setting}

\paragraph{Structured model for lower bound.}

For the minimax lower bound, we work with a simplified random model for the
content densities, following \citet{lanthaler2024operatorlearninglipschitzoperators}:
the density of $\mu_0$ is generated by random coefficients in the Mercer
expansion of $K$.

\begin{assumption}[Informal structural model for densities]
\label{assump:informal-struct}
Let $(\lambda_j, e_j)$ be the spectrum of $K$ as in
Assumption~\ref{assump:informal-rkhs}.
We assume that $\frac{\dd \mu_0}{\dd \lambda}(x)
  \;=\;
  \sum_{j\ge1} \lambda_j^{\Theta(1)} Z_j e_j(x)$,
where the coefficients $Z_j$ are independent, bounded random variables with
unit variance.
The full set of structural conditions is given in
Assumption~\ref{assumption-structural-for-lower-bound} in
\cref{appendix-subsection-technical-assumptions}.
\end{assumption}

\begin{setting}[Structured setting for lower bound (Setting~2)]
\label{setting-structured-informal}
Contexts and queries are generated as in Section~\ref{main-section-settings}.
Assumption~\ref{assump:informal-rkhs}, \ref{assump:informal-separation} and \ref{assump:informal-lipschitz} hold, and the
densities of the content measures follow the structural model in
Assumption~\ref{assump:informal-struct}.
In this setting we derive minimax lower bounds under random Mercer
coefficients.
\end{setting}
\revisedend

\subsection{Student Model: Measure-Theoretic Transformers}

We define our student model as a class of measure-theoretic Transformer architectures following \citet{furuya2025transformers}.

\paragraph{Measure-Theoretic Attention.}

Given a set of tokens $X = (x_\ell)_{\ell=1}^w \in \mathbb{R}^{\dimattn\times w}$ and a query $x\in \mathbb{R}^{\dimattn}$ that encodes information about some of the tokens,
a single unmasked attention head with parameters $\theta^{(h)} = (K^h,Q^h,V^h)$ in an ``in-context" form \citep{furuya2025transformers} computes
\begin{equation}
    \mathrm{SAttn}_{\theta^{(h)}}(X,x) 
    = \textstyle\sum_j\mathrm{Softmax}\!\left( \langle Q^hx, K^h X \rangle \right) V^h x_j,
\end{equation}
where $\mathrm{Softmax}((z_1,\dots,z_N)) \coloneq (\exp(z_i)/\sum_j \exp(z_j))_{i=1}^N$. A standard multi-head attention with $H$ heads is then
$\mathrm{MSAttn}_\theta(X) = \sum_{h=1}^H W^h \mathrm{SAttn}_{\theta^{(h)}}(X,x)$
with $W^h \in \mathbb{R}^{\dimattn \times \dimattn}$.
In the unmasked case, attention is permutation-equivariant in the token indices.
This allows us to represent the input set by its empirical measure, in particular, in the form of a mixture measure
\[
    \textstyle \nu_X = \frac{1}{I} \sum_{i=1}^I\Big(\frac{1}{w_i}\sum_{\ell_i=1}^{w_i} 
    \delta_{v^{(i)}} \otimes \delta_{u_\ell^{(i)}}\Big)
    \in \mathcal{P}(\mathbb{R}^{\dimattn})\quad \text{in the limit as}\quad w\to \infty,
\]
where $v^{(i)}\in \mathbb{R}^{d_1}$ (group-shared, possibly indicated by the query), $u_\ell^{(i)}\in \mathbb{R}^{d_2}$ (token-specific) for $i\in [1:I], \ell \in [1:w_i]$, $\sum_i w_i=w$
and rewrite attention in a measure-theoretic form, where the integral replaces the discrete sum in the limit.

\begin{definition}[Measure-theoretic attention layer~\citep{furuya2025transformers}]
    Let $\dimattn$ be the embedding dimension for the attention.  
    The \emph{measure-theoretic attention layer} $\attention_\theta:\mathcal{P}(\mathbb{R}^{\dimattn})\times\mathbb{R}^{\dimattn} \to \mathbb{R}^{\dimattn}$ is defined by
    \begin{equation}
        \attention_\theta(\nu,x) 
        = A x + \sum_{h=1}^H W^h \int \mathrm{Softmax}(\langle Q^h x, K^h y\rangle) \, V^h y \,\mathrm{d} \nu(y),
    \end{equation}
    where $\mathrm{Softmax}(\langle Q x, Ky\rangle) 
        \coloneq \exp\left(\langle Q x, Ky\rangle\right)
        /\int \exp\left(\langle Q x, Kz\rangle\right) \,\mathrm{d} \nu(z)$.
    Here $A \in \mathbb{R}^{\dimattn\times\dimattn}$ applies a learned linear transformation to the skip connection.  
    
\end{definition}

When $\nu$ is an empirical mixture measure $\nu_X$, this reduces to the standard discrete attention layer (we do not pursue the discrete case in this work). We expect that the query $x$ indicates tokens in the $i^*$-th component in $\nu_X$ based on the group-shared vector $v^{(i^*)}$ and the model can recall them.
We constrain these layers via the following bounded-parameter hypothesis class.

\begin{definition}[Attention hypothesis class]
    For constants $B_a,B_a',S_a,S_a',F>0$, define the class of $H$-head measure-theoretic attention layers
    \begin{align}
        &\mathcal{A}(\dimattn,H,B_a,B_a',S_a,S_a') \coloneq
        \big\{
            \attention_{\theta} \mid
            \max_{h}
            \left\{
                \|W^h\|_{\infty},\|Q^h\|_{\infty},\|K^h\|_{\infty},\|V^h\|_{\infty}
            \right\} \leq B_a, \\
        &\max_{h}
        \left\{
            \|W^h\|_{0},\|Q^h\|_{0},\|K^h\|_{0},\|V^h\|_{0}
        \right\} \leq S_a, \|A\|_\infty \leq B_a',\ \|A\|_0 \leq S_a',\
        \|\attention_\theta\|_{\infty} \leq F
        \big\},
    \end{align}
    where $\|M\|_{\infty}\coloneq \max_{i,j}|M_{ij}|$, $\|M\|_{0}$ is the number of non-zero entries, for a matrix $M$. We assume $W^h,Q^h,K^h,V^h$ are square matrices for simplicity.
    We write $\mathcal{A}(\dimattn,H,B_a,S_a)$ when $B_a'=B_a$ and $S_a'=S_a$.
\end{definition}

\paragraph{MLP Layer.}
In addition to attention, a Transformer block also includes a feedforward component. 
Since this part does not depend on the underlying measure $\mu$, we model it simply as a standard multilayer perceptron (MLP), defined below.

\begin{definition}[MLP hypothesis class]
Let $\ell \in \mathbb{N}$ be the depth and 
$\vp = (p_0, p_1, \dots, p_{L+1}) \in \mathbb{N}^{L+2}$ be the layer widths.  
A neural network with architecture $(\ell, \vp)$ is any function of the form
\begin{equation}
    f: \mathbb{R}^{p_0} \to \mathbb{R}^{p_{\ell+1}}, 
    \quad 
    \mathbf{x} \mapsto f(\mathbf{x}) 
    = W_\ell \sigma_{\mathbf{v}_\ell} W_{\ell-1} \sigma_{\mathbf{v}_{\ell-1}} 
      \cdots W_1 \sigma_{\mathbf{v}_1} W_0 \mathbf{x},
\end{equation}
where 
$W_i \in \mathbb{R}^{p_{i+1} \times p_i}$ is the weight matrix of layer $i$ and 
$\mathbf{v}_i \in \mathbb{R}^{p_i}$ is a shift (bias) vector applied through the activation  
$\sigma_{\mathbf{v}_i}(\mathbf{z}) \coloneqq \sigma(\mathbf{z} - \mathbf{v}_i)$  
with ReLU activation function $\sigma:\mathbb{R} \to \mathbb{R}$.
The hypothesis class of MLPs with architecture $(\ell, \mathbf{p})$ is denoted
    \begin{align}
        &\mathcal{F}(\ell,\vp,s,F) \coloneq
        \big\{
            f \text{ of the form above}
            \;|\; \\ 
            &\textstyle
            \max_{j}\{\|W_j\|_\infty, |v_j|_\infty\} \leq 1, \sum_j \|W_j\|_0 + |v_j|_0 \leq s, \ \|f\|_\infty \leq F
        \big\}.
    \end{align}
\end{definition}

\paragraph{Transformer Hypothesis Class via Composition.}
To formally describe a Transformer within our measure-theoretic framework, 
we introduce the notion of \emph{composition} for measure-theoretic mappings, following \citet{furuya2025transformers}.
This will allow us to view a multi-layer Transformer as a successive composition of 
attention and feedforward layers, in exact analogy with the standard architecture.

The key idea is that such a mapping $\Gamma$ acts simultaneously on a token $x$ 
and on its generating distribution $\mu$. 
Given $z \sim \mu$, applying $\Gamma$ produces a transformed token $\Gamma(\mu,z)$ 
and induces a new distribution on transformed tokens $\Gamma(\mu,z)$, namely $(\Gamma(\mu,\cdot))_{\sharp}\mu$. 
Thus, composing two mappings means successively applying these joint transformations 
at both the sample and distribution levels.

\begin{definition}[Composition of measure-theoretic mappings]
    \label{definition-composition-mappings}
    Let $\Gamma_1: \mathcal{P}(\mathbb{R}^{d_1^{(1)}}) \times \mathbb{R}^{d_1^{(1)}} \to \mathbb{R}^{d_1^{(2)}}$  
    and $\Gamma_2: \mathcal{P}(\mathbb{R}^{d_2^{(1)}}) \times \mathbb{R}^{d_2^{(1)}} \to \mathbb{R}^{d_2^{(2)}}$
    with $d_1^{(2)} = d_2^{(1)}$.  
    Their composition is defined as
    \begin{equation}
        (\Gamma_2 \diamond \Gamma_1)(\nu,x) \coloneq
        \Gamma_2(\mu_1, \Gamma_1(\nu,x)),
        \quad \text{where } \quad \mu_1 \coloneq (\Gamma_1(\nu,\cdot))_{\sharp} \nu.
    \end{equation}
\end{definition}
\begin{remark}
If we interpret $\nu$ as a limit of empirical measure $\nu_X = \lim_{w\to \infty}\tfrac{1}{w}\sum_{\ell=1}^w \delta_{x_\ell}$, 
then the composition $\Gamma_2 \diamond \Gamma_1$ acts by updating both the individual tokens 
and their empirical distribution. 
In this case, the construction is consistent with the standard layerwise composition in a Transformer: 
each layer maps the sequence of tokens $(x_1,\dots,x_w)$ to a new sequence, 
while the corresponding empirical distribution is updated accordingly.
\end{remark}

Building on the notion of measure-theoretic composition introduced above, 
we can now formally describe a Transformer as the successive composition of 
attention and feedforward layers. 
The following definition specifies the corresponding hypothesis class.
\begin{definition}[Transformer hypothesis class]
    For parameters $(d_j,H_j,B_{a,j},B'_{a,j},S_{a,j},S'_{a,j},\ell_j,{\vp}_j,s_j)_{j=1}^L$,  
    define $\mathrm{TF}$ as the set of mappings of the form
    \begin{equation}
        \attention_{\theta_L} \diamond \mlp_{\xi_L} \diamond \dots \diamond \attention_{\theta_1} \diamond \mlp_{\xi_1},
    \end{equation}
    where $\attention_{\theta_j} \in \mathcal{A}(d_j,H_j,B_{a,j},B'_{a,j},S_{a,j},S'_{a,j})$  
    and $\mlp_{\xi_j} \in \mathcal{F}(\ell_j,p_j,s_j)$.  
    MLP layers are independent of $\mu$ (i.e. $\mlp_{\xi_1}(\mu,x)\coloneq \mlp_{\xi_1}(x)$), and all intermediate outputs are assumed uniformly bounded.
\end{definition}

\section{Main Results}

\label{main-section-main-thoerms}


\subsection{Estimation error of measure-theoretic Transformers}
\label{main-subsection-upper-bound}

We begin with the generalization performance of measure-theoretic Transformers
in the recall-and-predict task.
Throughout this subsection we work under the Probability Setting
(Setting~\ref{setting-probability-informal}), where each content measure
$\mu_0^{(i)}$ has a smooth RKHS density on $\tokensp_0$ and the number of
mixture components is not too large.

\begin{theorem}[Sub-polynomial convergence; informal version of \cref{appendix-theorem-subpoly-overview}]
\label{main-theorem-upper-informal}
Let $F^\star(\nu,\query) = \tilde F^\star(\mu_0^{(i^\star)}, \query)$ be a
Lipschitz recall-and-predict map as in Section~\ref{main-section-settings},
and assume that the eigenvalues of the underlying kernel satisfy
$\lambda_j \asymp \exp(-c j^\alpha)$ for some $\alpha>0$.
Suppose that either
(i) the number of mixture components satisfies
$I \le d_1 \lesssim (\log n)^{1/(\alpha+1)}$, or
(ii) $\tilde F^\star$ does not depend on $\query$ and $I \le d_1 = n^{o(1)}$.
\revised{Then, for a suitable choice of architecture parameters
(defining a depth-$2$ measure-theoretic Transformer class $\mathrm{TF}_n$),}
any empirical risk minimizer $\hat F_n$ over $\mathrm{TF}_n$ satisfies
\[
  R(F^\star,\hat F_n)
  \;\lesssim\;
  \exp\!\big\{-\Omega\big((\log n)^{\alpha/(\alpha+1)}\big)\big\}
\]
under Setting~\ref{setting-probability-informal}.
\end{theorem}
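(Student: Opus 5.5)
The proof will follow the standard ERM oracle inequality for least squares with sub-Gaussian noise over a uniformly bounded class (Schmidt-Hieber type). For any $\delta>0$,
\[
R(F^\star,\hat F_n)\lesssim \inf_{F\in\mathrm{TF}_n}\|F-F^\star\|_{L^2(\measureofinputs)}^2+\frac{F^2+\sigma^2}{n}\log\covering(\delta,\mathrm{TF}_n,\|\cdot\|_\infty)+(F+\sigma)\delta .
\]
The work is therefore split into an \emph{approximation} step, where we construct an explicit depth-$2$ Transformer realizing recall-then-predict, and a \emph{complexity} step, where we bound the sup-norm covering number of $\mathrm{TF}_n$ through its parameter count and Lipschitz constants. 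A truncation level $J$ couples the two, and optimizing over $J$ yields the rate.

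\textbf{Approximation.} The first block, $\attention_{\theta_1}\diamond\mlp_{\xi_1}$, performs recall and feature extraction.
\begin{itemize}
\item The MLP maps a token $(v,z)$ to $(v,\tilde e_1(z),\dots,\tilde e_J(z))$, where $\tilde e_j$ are ReLU approximations of the first $J$ Mercer eigenfunctions.
\item One attention head uses $Q,K$ that act as $\beta\,\mathrm{Id}$ on the $v$-block. The score is then $\beta\langle v^{(i^\star)},v^{(i)}\rangle$, which equals $\beta$ for $i=i^\star$ and is $\le 0$ otherwise by Assumption~\ref{assump:informal-separation}.
\item Consequently the softmax mass on wrong components is at most $(I-1)e^{-\beta}$. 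The output $V y$ integrates the features, so $\int \tilde e_j\,\dd\mu_0^{(i^\star)}$ is recovered up to error $O(Ie^{-\beta})$.
\end{itemize}
Since $p_{\mu_0^{(i^\star)}}=\sum_j b_j e_j$ and $\int e_j\,\dd\mu_0=b_j$ (orthonormality of the $e_j$ in $L^2$), the first $J$ recovered coordinates are exactly the truncated Mercer coefficients. The skip connection $A$ carries $\query$ forward.

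The final MLP then approximates the Lipschitz map $(b_{1:J},\query)\mapsto\tilde F^\star$. The truncated tail contributes error $\lesssim(\sum_{j>J}\lambda_j)^{1/2}\lesssim e^{-cJ^\alpha/2}$, controlled by the RKHS ball. The remaining task is approximating a Lipschitz function on the bounded set $\{|b_j|\lesssim\sqrt{\lambda_j}\}\times\inputsetofquery$ of dimension $J+d_1+d_2$.

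A key observation is that this set is highly anisotropic: coordinate $j$ has range $e^{-cj^\alpha/2}$. To reach accuracy $\epsilon=e^{-cJ^\alpha/2}$, a grid needs only about $\prod_{j\le J}(\sqrt{\lambda_j}/\epsilon)\approx\exp(c\sum_{j\le J}(J^\alpha-j^\alpha)/2)=\exp(O(J^{\alpha+1}))$ cells, plus $\epsilon^{-d_1}$ for the query. A piecewise-linear ReLU network of that size, with $O(\log(1/\epsilon))$ depth, achieves the accuracy; the tail argument above ensures the inputs remain inside the grid domain.

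\textbf{Complexity and balancing.} The covering number obeys $\log\covering\lesssim S\log(\text{Lip}\cdot B/\delta)$, where $S$ is the total sparsity. The main technical ingredient here is a Lipschitz bound for softmax attention in its parameters, uniform over measures $\nu$; with bounded weights and $\beta\le B_a$ this is $e^{O(B_a^2)}$, so only polylog factors enter.

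Setting $S\approx \exp(O(J^{\alpha+1}))\cdot\mathrm{poly}(J)$, the risk is
\[
R\lesssim e^{-cJ^\alpha}+I^2e^{-2\beta}+\frac{e^{CJ^{\alpha+1}}\mathrm{polylog}(n)}{n}.
\]
Choosing $J\asymp(\log n)^{1/(\alpha+1)}$ makes the variance $n^{-\Omega(1)}$ and the bias $\exp(-\Omega((\log n)^{\alpha/(\alpha+1)}))$. We take $\beta\asymp\log n$.

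The query term $\epsilon^{-d_1}$ is absorbed precisely when $d_1\log(1/\epsilon)\lesssim J^{\alpha+1}$, i.e.\ $d_1\lesssim(\log n)^{1/(\alpha+1)}$, which is case (i). In case (ii) $\tilde F^\star$ ignores $\query$, so the MLP has no $\epsilon^{-d_1}$ factor. The only remaining $d_1$-dependence is through the attention parameter count, which is $\mathrm{poly}(d_1)=n^{o(1)}$ and is harmless.

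\textbf{Main obstacle.} I expect the hardest step to be the anisotropic approximation with the stated grid count $\exp(O(J^{\alpha+1}))$. This requires propagating the error from the attention outputs through a Lipschitz MLP under the metric $\mathrm{d}_{\mathrm{prod}}$, and controlling how errors in the ReLU eigenfunction approximations of $e_j$ accumulate over all $j\le J$. I would first try to approximate the $e_j$ to accuracy $\epsilon\sqrt{\lambda_J}$ with networks of size polylog in $1/\epsilon$, exploiting the analyticity implied by the exponential eigen-decay.
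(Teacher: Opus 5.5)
Your outline matches the paper's proof in its main steps. These are: the Schmidt-Hieber oracle inequality, an MLP producing $(v,z,e_{1:J}(z))$, and a softmax head with $Q,K$ supported on the $v$-block that uses $\langle v^{(i)},v^{(i')}\rangle\le 0$. Then come a skip connection carrying $\query$, an MLP on the truncated coefficients, entropy $\exp(O(J^{\alpha+1}))$, and $J\asymp(\log n)^{1/(\alpha+1)}$ with the same $d_1$ bookkeeping for cases (i) and (ii). The one step that fails as written is your justification of the first MLP. Exponential eigen-decay does \emph{not} imply that the eigenfunctions are analytic, or regular in any controlled way. For any uniformly bounded continuous $L^2$-orthonormal system $(e_j)$, $\sum_j e^{-cj^\alpha}e_j(x)e_j(x')$ is a kernel with exactly this spectrum, while the regularity of $e_j$ may deteriorate arbitrarily fast in $j$. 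In that case nothing bounds the size of ReLU approximations of $e_1,\dots,e_J$, and your entropy count breaks. The paper does not derive this property. It is an explicit hypothesis of the technical setting behind the informal one (\cref{assumption-regularity}): the $e_j$ are uniformly bounded and analytic on $[-M,M]^{d_2}$, with absolutely convergent power series and $\tokensp_0\subset[-M+\delta,M-\delta]^{d_2}$. This hypothesis is fed into \citet{E2018Exponential} (\cref{lemma-first-mlp-analytic-e2018}) to get the first MLP of \cref{corollary-first-mlp-analytic}. You need to import that assumption rather than try to prove it.

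Conversely, the step you single out as the main obstacle is not needed. Since $\log\epsilon^{-1}\asymp J^\alpha$, the isotropic approximation of the $O(1)$-Lipschitz truncated target on a $(J+d)$-dimensional box, with $N\simeq\epsilon^{-(J+d)}$ \citep{Schmidt-Hieber2020Nonparametric}, already costs only $\exp(O(J\log\epsilon^{-1}))=\exp(O(J^{\alpha+1}))$ parameters. Your anisotropic grid improves only the constant in the exponent (by roughly $\alpha/(\alpha+1)$), which the rate does not see; the paper uses the isotropic version (\cref{corollary-second-mlp-all}). Likewise, eigenfunction accuracy $\epsilon\sqrt{\lambda_J}$ is more than necessary. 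Because $\gammafunc<0$, the truncated target is $O(1)$-Lipschitz in the Euclidean norm of the coefficients, so coefficient errors of order $\epsilon/\sqrt J$ suffice, provided the second MLP is accurate on a slightly enlarged box. On the complexity side, the paper sums per-layer entropies via a composition lemma (\cref{lemma-composition-mappings}). That lemma uses Lipschitz continuity of attention in its input measure (in $W_1$) and in its query (\cref{lemma-first-attention-lipschitz}). Your global parameter-Lipschitz argument needs the same ingredient, because perturbing the first MLP changes the pushforward measure the attention sees. It also needs the input-Lipschitz constant $\prod_i p_i$ of the second MLP. Both are $\exp(\mathrm{polylog}\,n)$ in case (i) and $\exp(n^{o(1)})$ in case (ii), the latter through the factor $e^{O(S_a^2B_a^2)}$ with $S_a\ge d_1$. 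They therefore enter only logarithmically, and the two routes give the same bound. Your single head, with $V,W$ projecting onto the feature block, is also enough: the paper's $D$ heads all share the same $Q,K$.
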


\textbf{Statistically unifying associative recall and infinite-token regimes.}
Prior work studied (i) \emph{associative recall in Transformers}~ \citep{ramsauer2020} and (ii) \emph{infinite-token / infinite-dimensional} inputs modeled as measures~\citep{vuckovic2020mathematical}. 
\cref{main-theorem-upper-informal} integrate these threads by giving a \emph{statistical} theory of measure-level associative recall: a measure-theoretic Transformer with learned softmax attention can \emph{recall-and-predict at the level of measures}—sparsely isolating the query-relevant component of $\nu$ and basing the prediction on the recalled measure, in contrast to the universality or approximation results~\citep{geshkovski2024measure,furuya2025transformers}. 

\revisedstart
\textbf{Informal interpretation: effective dimension.}
The spectral decay $\lambda_j \asymp \exp(-c j^\alpha)$ means that only the
first few Mercer modes carry substantial signal.
After the recall step, our Transformer effectively aggregates the first
$D$ Mercer coefficients
\[
  \textstyle b_j \;\approx\; \int e_j \,\mathrm{d}\mu_0^{(i^\star)},\quad j=1,\dots,D,
\]
so an infinite-dimensional measure is compressed into the $D$-dimensional
vector $b = (b_1,\dots,b_D)$.
Learning a Lipschitz function of $b$ from $n$ samples behaves like a
$D$-dimensional problem (c.f., \citet{Schmidt-Hieber2020Nonparametric}), with estimation error roughly
\[
  \text{Error of $D$-dim. problem} \;\approx\; n^{-\Theta(1/D)}
  \;\simeq\; \exp\!\big(-\Theta((\log n)/D)\big),
\]
while truncating the Mercer expansion after $D$ modes incurs a bias of order
$\exp(-c D^\alpha)$.
Balancing these terms yields an effective dimension
$D_{\mathrm{eff}}(n) \asymp (\log n)^{1/(\alpha+1)}$ and the sub-polynomial
rate
\[
  R(F^\star,\hat F_n)
  \;\approx\;
  \exp\!\big(-\Theta((\log n)^{\alpha/(\alpha+1)})\big)
\]
stated in \cref{main-theorem-upper-informal}.
In this sense, the estimator behaves as if it were fitting only
$D_{\mathrm{eff}}(n)$ degrees of freedom, despite each component being an
infinite-dimensional measure. As a minimal sanity check, \cref{appendix-section-synthetic-measure-exp} presents a synthetic experiment whose convergence rate is consistent.

\paragraph{Mechanism: softmax attention as measure-valued associative memory.}

Given a mixture context
$\nu = I^{-1}\sum_{i=1}^I \mu_{v^{(i)}}^{(i)}$ and a query
$\query = (v^{(i^\star)},0)$, our depth-$2$ Transformer works as follows.
An initial MLP embeds each token $y = (v^{(i)},z)$ into a feature vector
$h(y)$ that contains the first $D$ Mercer features $(e_j(z))_{j=1}^D$.
Softmax attention then computes scores $\langle Q h(\query), K h(y)\rangle$
and is parameterized so that these scores are large mainly when the document tag
$v^{(i)}$ matches $v^{(i^\star)}$ and small otherwise (\cref{fig:recall_geometric_for_main}).

\begin{figure}
    \centering
    \includegraphics[width=0.7\linewidth]{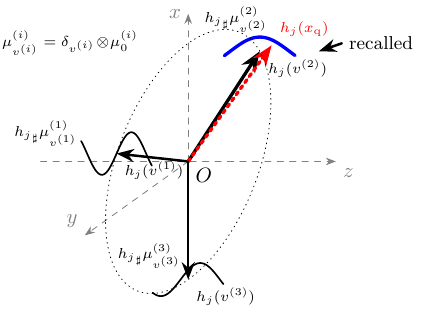}
    \caption{{A geometric illustration of how query $\query$ and components $\mu^{(i)}_{v^{(i)}}$ are mapped by the (simplified) first layer $h_j((v,z)) = (v,e_j(z))\in\mathbb{R}^{d_1+d_2}$, where $e_j$ is the $j$th Mercer eigenfunction. The product of the first $d_1$ indices of $h_j(\query)$ and $h_j(y) \sim {h_j}_\sharp \mu_{v^{(i)}}^{(i)}$ tells whether $y=(v,z)$ is sampled from $\mu^{(i^*)}_{v^{(i^*)}}$ or not.}}
    \label{fig:recall_geometric_for_main}
\end{figure}

Softmax attention then computes scores $\langle Q h(\query), K h(y)\rangle$
so that, after normalization, the weights concentrate on samples from
$\mu_{v^{(i^\star)}}^{(i^\star)}$. Writing $w_{\query}(y)$ for the resulting attention weight on token $y$,
the value path computes, for each $j$,
\[
  \textstyle \hat b_j
  \;\approx\;
  \int e_j(z)\,w_{\query}(y)\,\mathrm{d}\nu(y)
  \;\approx\;
  \int e_j(z)\,\mathrm{d}\mu^{(i^\star)}_0(z),
  \qquad j = 1,\dots,D,
\]
yielding a $D$-dimensional descriptor
$b = (\hat b_1,\dots,\hat b_D)$ of the recalled component measure.
A final MLP maps $(b,\query)$ to the scalar prediction
$\tilde F^\star(\mu_0^{(i^\star)},\query)$.
In this way, softmax attention \emph{filters} the mixture down to the
relevant component and \emph{integrates} its Mercer features, so that
predicting from an infinite-dimensional measure reduces to learning a
function of $D$ summary statistics.
This near one-hot, query-dependent filtering is beyond the limitations of
frozen kernels~\citep{zhou2024rkhs} and linear attentions~\citep{kim2024incontext}.

\revisedend

    We next present our generalization result for transformers in the recall-and-predict task.
    
    \begin{theorem}[Sub-Polynomial Convergence, A Simplified Version of \cref{theorem-subpoly,theorem-poly-capacity-beyond}]
        \label{main-theorem-subpoly}
        Let $\tilde{F}^\star \in \mathrm{Lip}_1(\inputsetforlipschitz\times \inputsetofquery,\dd_{\mathrm{prod}})$ and assume one of the following cases:
        
        (i) the number of mixture components is bounded as $I \leq d_1 \lesssim (\ln n)^{\frac{1}{\alpha+1}}$;
        
        (ii) the hidden target function $\tilde{F}^\star$ is independent of $\query$ and $I \leq d_1 \simeq n^{o(1)}$.
    
        Let $\hat{F}$ be the empirical risk minimizer with the transformer class $\mathrm{TF}(\epsilon)$ as the set of mappings $\mathrm{Attn}_{\theta_2} \diamond \mathrm{MLP}_{\xi_2} \diamond \mathrm{Attn}_{\theta_1} \diamond \mathrm{MLP}_{\xi_1}$ such that,
        $H_1,B_{a,1},\ell_2=(\log \epsilon^{-1})^{O(1)}$,
        $\|\vp_{2}\|_\infty,s_2\lesssim\exp(O((\mathrm{log}\epsilon^{-1})^{1+\alpha^{-1}}))$,
        ${\dimattn}_2,S_{a,2}',H_2,B_{a,2}'=1$, $S_{a,2},B_{a,2}=0$ in both cases, and 
        
        in (i): 
        $\ell_1,\|\vp_{1}\|_\infty,s_1,{\dimattn}_1,S_{a,1}=(\log \epsilon^{-1})^{O(1)}$,
        ;
        in (ii): 
        $\ell_1,\|\vp_{1}\|_\infty, s_1,{\dimattn}_1,S_{a,1} 
        \lesssim\exp(O((\mathrm{log}\epsilon^{-1})^{1+\alpha^{-1}}))$.
        Then in \cref{setting-probability}, 
            \begin{equation}
            R(F^\star, \hat{F}) \lesssim \exp(-\Omega((\ln n)^{\frac{\alpha}{\alpha+1}})),
        \end{equation}
        where $\alpha$ is the decay rate of the eigenvalues of the underlying kernel of $\rkhs_0$.
    \end{theorem}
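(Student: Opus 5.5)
Our plan follows the standard oracle-inequality route for ERM over a sparse, bounded-parameter network class, as in \citet{Schmidt-Hieber2020Nonparametric}. We bound
\[
R(F^\star,\hat F)\;\lesssim\;\inf_{F\in\mathrm{TF}(\epsilon)}\|F-F^\star\|_{L^2(\measureofinputs)}^2+\frac{\log\covering(\delta,\mathrm{TF}(\epsilon),\|\cdot\|_\infty)+1}{n}+\delta .
\]
This uses Gaussian noise and uniform boundedness of the outputs (guaranteed by the $F$-clipping in the class definitions). The two terms are then controlled separately, and at the end we choose $\epsilon$ as a function of $n$.

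\textbf{Approximation.} We construct an explicit element of $\mathrm{TF}(\epsilon)$ following the mechanism described after \cref{main-theorem-upper-informal}.
\begin{enumerate}
\item The first MLP$_{\xi_1}$ maps $y=(v,z)$ to $(v,\tilde e_1(z),\dots,\tilde e_D(z),1)$. Here $\tilde e_j$ are ReLU approximations of the Mercer eigenfunctions to accuracy $\epsilon$, and $D\asymp(\log\epsilon^{-1})^{1/\alpha}$ is chosen so that the truncation tail $\sum_{j>D}\lambda_j^{\cdot}\lesssim \exp(-cD^\alpha)\le\epsilon$.
\item Attention layer $1$ uses $Q^h,K^h$ acting as $\beta\cdot$identity on the $v$-block, with inverse temperature $\beta\asymp\log(I/\epsilon)$. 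By \cref{assump:informal-separation}, $\langle v^{(i^\star)},v^{(i)}\rangle\le 0$ for $i\ne i^\star$ while $\langle v^{(i^\star)},v^{(i^\star)}\rangle=1$. Hence the softmax mass outside component $i^\star$ is at most $(I-1)e^{-\beta}\le\epsilon$.
\item The value matrices read off the $\tilde e_j$ coordinates, and the skip matrix $A$ carries $\query$ forward. The output is then $(\hat b_1,\dots,\hat b_D,\query)$ with $|\hat b_j-\int e_j\,d\mu_0^{(i^\star)}|\lesssim\epsilon$. This step uses \cref{assump:informal-prob}, so that the normalization is by a probability mass.
\item The second MLP$_{\xi_2}$ approximates the Lipschitz map $(b,\query)\mapsto \tilde F^\star$. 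Via \cref{assump:informal-lipschitz}, $\tilde F^\star$ is Lipschitz in the truncated RKHS coordinates up to an $\epsilon$ error. A ReLU approximation of a Lipschitz function on a $(D+d_1+d_2)$-dimensional cube to accuracy $\epsilon$ needs size $\epsilon^{-O(D)}=\exp(O((\log\epsilon^{-1})^{1+1/\alpha}))$, which matches the stated $\|\vp_2\|_\infty,s_2$.
\item The trivial second attention layer ($\dimattn_2=1$, $B_{a,2}=0$, $A=1$) is just the identity.
\end{enumerate}
In case (i) the query dimension $d_1\lesssim(\log n)^{1/(\alpha+1)}$ is absorbed into the effective dimension. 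In case (ii) the final MLP ignores $\query$. The price is that the first layer must keep $I\le d_1$ tag coordinates, which only enters the complexity polynomially through $d_1=n^{o(1)}$.

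\textbf{Complexity.} We bound the covering number of compositions by a Lipschitz-in-parameters argument. Softmax attention over measures is Lipschitz in $(W,Q,K,V,A)$ uniformly over $\nu$, with constant $\exp(\mathrm{poly}(B_a,\dimattn))$, because the scores are bounded. Pushforward composition (\cref{definition-composition-mappings}) preserves Lipschitzness. The sup-norm on the first-layer output then controls the second layer through its input Lipschitz constant. Counting sparse parameters gives
\[
\log\covering\lesssim (s_1+s_2+H_1S_{a,1})\cdot\mathrm{polylog}(n)\,\cdot(\log\epsilon^{-1})^{O(1)} .
\]
Using $s_2\lesssim\exp(O((\log\epsilon^{-1})^{1+1/\alpha}))$, and choosing $\epsilon$ with $(\log\epsilon^{-1})^{1+1/\alpha}=c_0\log n$ for a small constant $c_0$, the variance term becomes $n^{-1+O(c_0)}$. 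The bias term is $\epsilon^2=\exp(-\Theta((\log n)^{\alpha/(\alpha+1)}))$, which dominates and yields the claimed rate.

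\textbf{Main obstacle.} We expect the hardest step to be the parameter-Lipschitz bound for softmax attention together with the ReLU eigenfunction approximation. Large $\beta$ and many heads $H_1$ could make the Lipschitz constants exponential in $B_{a,1}$. Our plan is to keep $B_{a,1}=(\log\epsilon^{-1})^{O(1)}$ so that $e^{B_{a,1}}$ enters only inside a logarithm of the covering number. We also need $\|e_j\|_\infty$ and the Lipschitz constants of $e_j$ for $j\le D$ to grow at most like $\exp(\mathrm{poly}(D))$, so that $\tilde e_j$ costs $(\log\epsilon^{-1})^{O(1)}$ parameters. This regularity should follow from the technical assumptions in \cref{appendix-subsection-technical-assumptions}. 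If it does not, we would replace $e_j$ by a polynomial or Fourier basis spanning the same truncated space.
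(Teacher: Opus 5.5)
Your proposal is correct and follows essentially the same route as the paper. The shared ingredients are: the Schmidt-Hieber oracle inequality; the same MLP $\to$ softmax-attention $\to$ MLP $\to$ skip construction, with sharp selection coming from the nonpositive tag inner products; a composition-lemma covering bound in which the large Lipschitz constant of the second MLP forces super-polynomially fine resolution on the first two layers at only polylogarithmic cost; and the choice $(\log\epsilon^{-1})^{1+1/\alpha}\asymp c_0\log n$.

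One small correction concerns the size of the first MLP. The $(\log\epsilon^{-1})^{O(1)}$ bound comes from the analyticity (absolutely convergent power series) assumption, via the E--Wang exponential ReLU approximation. Growth bounds on $\|e_j\|_\infty$ and on the Lipschitz constants of $e_j$ would only give polynomially many parameters in $\epsilon^{-1}$; that is enough for the rate but does not meet the stated $s_1$. Also, your fallback of switching to a polynomial or Fourier basis would not preserve the Mercer-tail truncation bound.
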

\end{comment}

\subsection{Minimax Optimality (the Lower Bound).}

Next, we demonstrate the lower bound in Structured Setting (\cref{setting-structured-informal}).
We show that the exponent of the obtained upper bound is essentially tight, albeit in \cref{setting-structured-informal} whose technical assumptions are different from \cref{setting-probability-informal} .

\begin{theorem}[Minimax Lower Bound]
    \label{main-theorem-minimax-lower-bound}
    In \cref{setting-structured-informal}, any $L^2(\measureofinputs(\rkhs_0))$-estimator $\hat{F}$ satisfies
    \begin{equation}
        \textstyle\sup_{\tilde{F}^\star \in \mathcal{F}^\star}
        R(\hat{F}, F^\star) \gtrsim
        \exp\!\Big(-O((\ln n)^{\tfrac{\alpha}{\alpha+1}})\Big)
    \end{equation}
     where $F^\star(\nu,\query) \coloneq \tilde{F}^\star(\mu_0^{(i^*)},\query) $, $\alpha$ is the decay rate of the eigenvalues of the kernel of $\rkhs_0$.
\end{theorem}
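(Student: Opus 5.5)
The plan is to reduce to a finite-dimensional nonparametric regression problem over the Mercer coefficients of the recalled density, and then use a standard Fano/Varshamov–Gilbert packing argument in the style of \citet{lanthaler2024operatorlearninglipschitzoperators}. Because the recall step only makes the problem harder, we discard it. Fix $I=1$, or more generally let $\tilde F^\star$ ignore $\query$. Then $F^\star(\nu,\query)=\tilde F^\star(\mu_0^{(i^\star)})$, and the input law $\measureofinputs$ pushes forward to the law of the random coefficient vector $Z=(Z_j)_{j\ge1}$ from Assumption~\ref{assump:informal-struct}. A lower bound for estimating Lipschitz functionals $G(Z)$ in $L^2(\mathbb{P}_Z)$ from $n$ noisy samples therefore transfers directly, since every estimator on $(\nu,\query)$ induces one on $Z$.

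\textbf{Step 1: a finite-dimensional subproblem.} Fix a truncation level $D$ and consider the first $D$ coordinates of $Z$. Under the RKHS-induced metric $\dd_{\mathrm{prod}}$, the density-coefficient vector $(\lambda_j^{\Theta(1)}Z_j)_{j\le D}$ measured in the $\rkhs_0^{\gammafunc}$-type norm behaves, up to constants, like a weighted Euclidean norm with weights $\lambda_j^{\Theta(1)}$. The smallest weight is $\exp(-cD^\alpha)$, up to the constant in the exponent. Each $Z_j$ is bounded with unit variance, so the law of $(Z_1,\dots,Z_D)$ places non-negligible mass on a product set of side $\asymp 1$. That is all we need: we only need $\mathbb{P}_Z$ to charge each cell of a grid with probability $\gtrsim m^{-D}$. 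If the $Z_j$ might be discrete, we will rely on the full structural Assumption~\ref{assumption-structural-for-lower-bound} for a density lower bound, or we pick grid cells adapted to the support.

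\textbf{Step 2: packing construction.} Partition the cube into $m^D$ cells of side $1/m$. On each cell $k$, place a bump
\[
\phi_k(Z)=h\,\psi\bigl(m(Z_{1:D}-c_k)\bigr),
\]
where $\psi$ is a fixed $1$-Lipschitz bump. We choose the height $h\asymp e^{-cD^\alpha}/m$ so that $G_\omega=\sum_k\omega_k\phi_k$, with $\omega\in\{0,1\}^{m^D}$, is $L$-Lipschitz with respect to $\dd_{\mathrm{prod}}$; the Lipschitz constant is controlled through the smallest weight $\lambda_D^{\Theta(1)}$. Varshamov–Gilbert then yields $2^{m^D/8}$ hypotheses with pairwise $L^2(\mathbb{P}_Z)$ separation $\gtrsim h^2$. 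The KL divergence between data laws is at most $n\|G_\omega-G_{\omega'}\|^2_{L^2}/(2\sigma^2)\lesssim n h^2$. Fano's inequality therefore gives risk $\gtrsim h^2$, provided $nh^2\lesssim m^D$.

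\textbf{Step 3: optimization.} Take $m=2$, or a constant. The constraint then reads $n e^{-2cD^\alpha}\lesssim 2^D$. Rather than this crude form, the right balance comes from using $m$ growing. The risk is $\approx e^{-2cD^\alpha}m^{-2}$ subject to $n\,e^{-2cD^\alpha}m^{-2}\lesssim m^D$, i.e.\ $D\log m+2cD^\alpha\gtrsim\log n$. Choose $\log m\asymp D^{\alpha}$ and $D\asymp(\log n)^{1/(\alpha+1)}$. Then $D\log m\asymp D^{\alpha+1}\asymp\log n$, and the risk is
\[
\exp\bigl(-O(D^\alpha)\bigr)=\exp\bigl(-O((\ln n)^{\alpha/(\alpha+1)})\bigr),
\]
which is the claimed bound.

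\textbf{Main obstacle.} The delicate step is Step 1/2: showing that the bump functions are genuinely Lipschitz with respect to the specific metric $\dd_{\mathrm{prod}}$ on densities, and that $\mathbb{P}_Z$ gives each of the $m^D$ grid cells mass $\gtrsim m^{-D}$. Without that, the $L^2$ separation collapses. I would first try to assume, or extract from Assumption~\ref{assumption-structural-for-lower-bound}, that the $Z_j$ have densities bounded below on an interval. The product structure then makes the cell masses explicit, and the weighted-norm comparison gives the Lipschitz constant $\lesssim h\,m\,\lambda_D^{-\Theta(1)}$, which is what fixes the choice of $h$.
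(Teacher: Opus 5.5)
Your overall architecture matches the paper's. You discard the recall step, embed a $D$-dimensional Lipschitz class whose Lipschitz constant blows up by $\lambda_D^{-\Theta(1)}$, and balance with $D\asymp(\ln n)^{1/(\alpha+1)}$ and $\log m\asymp D^\alpha$; the latter is exactly the paper's choice $\epsilon\asymp e^{-\Theta(d^\alpha)}$ in \cref{lemma-packing-lipschitz-parallel-lanthaler}. Replacing the Yang--Barron argument (\cref{lemma-yang-barron}) with local Fano is a leaner route. You need no covering-entropy upper bound (\cref{lemma-covering-lipschitz-based-on-boissard}). You do not even need the estimator-level reduction of \cref{lemma-reduction-to-regression}: for hypotheses of the form $G_\omega(Z)$, both the $L^2(\mathbb{P}_{\nu,x_{\mathrm q}})$ separation and the KL divergence between the full data laws can be computed directly under the pushforward to $Z$. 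For the same reason, you never need to set $I=1$, which would change the setting.

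The genuine gap is the step you flag, and your proposed fix does not close it. \cref{assumption-structural-for-lower-bound} gives only an \emph{upper} bound $\sup_j\|\rho_j\|_\infty\le R$ on the coefficient densities. Nothing gives a lower bound, so cells of a fixed grid in $Z$-space can carry tiny or zero mass. Assuming $\rho_j\ge\rho_{\min}$ on an interval of length $\ell$ would not rescue your argument either. It yields cell masses $\gtrsim(\rho_{\min}\ell)^D m^{-D}$ with $\rho_{\min}\ell\le 1$, and equality holds only for the uniform law. Your Varshamov--Gilbert separation would then inherit an $e^{-\Theta(D)}$ factor. At $D\asymp(\ln n)^{1/(\alpha+1)}$ this factor is not negligible against $e^{-\Theta(D^\alpha)}$ when $\alpha<1$, so the exponent degrades.

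The missing idea, which is the content of the paper's embedding $\iota_d$ (\cref{lemma-isometry-lipschitz-to-rkhs-lipschitz}), is to place the bumps in quantile coordinates $U_j=\Phi_j(Z_j)$, where $\Phi_j$ is the CDF of $Z_j$. Two facts make this work:
\begin{itemize}
\item Because $Z_j$ has a density, $(U_1,\dots,U_D)$ is exactly uniform on $[0,1]^D$, so each of the $m^D$ cells has mass exactly $m^{-D}$.
\item Because $\|\rho_j\|_\infty\le R$, each $\Phi_j$ is $R$-Lipschitz, so the Lipschitz constant with respect to $\mathrm{d}_{\mathrm{prod}}$ grows only by the factor $R$.
\end{itemize}
So the density \emph{upper} bound is precisely what is needed, not a lower bound. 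You should also take the bump to be $\psi(u)=(\tfrac12-\|u\|_\infty)_+$. Each bump's squared $L^2$ mass is then $\asymp h^2m^{-D}D^{-2}$, which is a harmless polynomial loss; a Euclidean or product bump would lose a factor exponential in $D$. With these two changes, your Fano calculation goes through and reproduces the stated rate.
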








\textbf{Optimality of Transformers.} Taken together with \cref{main-theorem-upper-informal}, our information-theoretic lower bound in \cref{main-theorem-minimax-lower-bound} shows that the statistical rate of empirical risk minimization over transformers achieves the minimax rate
$(\ln n)^{\tfrac{\alpha}{\alpha+1}}$
up to multiplicative constants in the exponent. 
Equivalently, no method can improve the $n$–dependence beyond the stated exponent (up to universal constants), so the learned-softmax Transformer attains the best-possible sample complexity for this problem class.
The optimality continues to hold for a fixed or slowly growing number of mixture components \(I\), confirming that the learned softmax attention provides the right inductive bias for measure-level recall.
This minimax optimality of softmax Transformers is consistent with statistical results for simple infinite-dimensional regression~\citep{takakura2023approximation} and with finite-dimensional in-context learning scenarios that require retrieval~\citep{nishikawa2025nonlinear}.

\textbf{Technical Contribution––Minimax Lower Bound.}
We first reduce associative recall to infinite-dimensional Lipschitz regression by observing that estimating from the mixed input $\nu$ is no easier than from the pure measure $\mu_0^{(i^*)}$.
A truncation of Mercer coefficients plus anisotropic rescaling—modifying prior rescaling arguments~\citep{lanthaler2024operatorlearninglipschitzoperators} to more general geometry with exponential decay—makes the induced geometry essentially isotropic, letting us embed the classical $d$-dimensional Lipschitz class and import standard packing bounds.
Combining these bounds with the classical result~\citep{yangbarron1999} yields a rate matching our upper bound.

\section{Conclusion and Discussion}

\label{main-section-conclusions}

We introduced the concept of measure-theoretic associative memory (recall) and established that learned softmax attention can realize sharp recall even in infinite-dimensional, measure-valued settings—something beyond the reach of frozen kernels and difficult for linear attention. Our analysis further shows that the statistical efficiency of Transformers extends beyond finite-dimensional contexts, offering a principled explanation of their recall ability. While the present results focus on exponentially decaying spectra under smooth eigenfunctions, they open the door to broader regimes: extending the rates to polynomial decay and incorporating eigenfunction smoothness into the analysis represent natural next steps toward a more complete theory.

\section*{LLM Usage Statement}
Large language models are used for two purposes: to proofread and polish English writing, to help us find related works.
We did not use any LLM assistant for designing the problem settings and constructing the proofs.

\subsubsection{Acknowledgments}
RK and TS were partially supported by JSPS KAKENHI (24K02905) and JST CREST (JPMJCR2115). This research is supported by the National Research Foundation, Singapore, Infocomm Media Development Authority under its Trust Tech Funding Initiative, and the Ministry of Digital Development and Information under the AI Visiting Professorship Programme (award number AIVP-2024-004). Any opinions, findings and conclusions or recommendations expressed in this material are those of the author(s) and do not reflect the views of National Research Foundation, Singapore, Infocomm Media Development Authority, and the Ministry of Digital Development and Information.
RK was supported by the FY 2024 Self-directed Research Activity Grant of the University of Tokyo’s International Graduate Program ``Innovation for Intelligent World" (IIW).

\bibliography{iclr2026_conference,bib_associative_memory,bib_statistical_analysis,bib_measure}
\bibliographystyle{iclr2026_conference}

\appendix

\section{Preliminaries and Notational Remarks}

We begin by fixing notation and conventions used throughout the paper.  
This includes basic vector and matrix operations, measure-theoretic notation,  
and standard identifications between measures and their densities.

For integers $N_1\leq N_2$ and a vector $\vv \in \mathbb{R}^N$, we define
$\vv_{N_1:N_2} \coloneq (v_{N_1},\dots,v_{N_2})^\top$. 
For a matrix $A$, we define $\|A\|_0$ as the number of nonzero entries and $\|A\|_\infty \coloneq \max_{i,j} |A_{i,j}|$. We denote $\lambda$ as the Lebesgue measure on $\mathbb{R}^N$, for a integer $N$.
$f_\sharp \mu(\cdot) \coloneq \mu(f^{-1}(\cdot))$ denotes a pushforward of a measure $\mu$ by a mapping $f$. For a measurable space $\tokensp$, we write $\mathcal{P}(\tokensp) := \{\, \mu \;\mid\; \mu \text{ is a probability measure on } \tokensp\}$, $\mathcal{M}_+(\tokensp) := \{\, \mu \;\mid\; \mu \text{ is a nonnegative measure on } \tokensp \,\}$.

Unless otherwise specified, we write $(\Omega, \mathcal{F}, \mathbb{P})$ 
for an underlying probability space.  
For a measurable function $f : \Omega \to \mathbb{R}$ and $1 \le p < \infty$, 
the $L^p(\mathbb{P})$ norm is defined as $\|f\|_{L^p(\mathbb{P})} \;\coloneq\;
    \Big( \int_\Omega |f|^p \, d\mathbb{P} \Big)^{1/p},$
while the $L^\infty(\mathbb{P})$ norm is given by $\|f\|_{L^\infty(\mathbb{P})} \;\coloneq\;
    \operatorname*{ess\,sup}_{\omega \in \Omega} |f(\omega)|.$
When the underlying measure is clear from context, we simply write 
$\|f\|_{L^p}$ and $\|f\|_\infty$.

For a random variable $X : \Omega \to \mathcal{X}$,  
we denote its distribution (the pushforward of $\mathbb{P}$ under $X$) by $\mathbb{P}_X(\cdot) \;\coloneq\; \mathbb{P}(X \in \cdot).$
Expectations with respect to $\mathbb{P}$ are denoted by 
$\mathbb{E}[\,\cdot\,]$, and if $X$ is a random variable with law 
$\mathbb{P}_X$, we also write $\mathbb{E}_{X}[\,f(X)\,]
    \quad \text{for } \int f(x)\, d\mathbb{P}_X(x).$

\begin{remark}[Identification of measures and densities]
\label{remark_of_measures_and_densities}
Let $\lambda$ be a reference measure on $X$ (e.g., the Lebesgue measure).  
Given $f \in \mathcal{H}$ and a constant $c \in \mathbb{R}$, we define a probability measure $\mu$ by
\begin{equation}
\frac{\dd \mu}{\dd \lambda}(x) \coloneq f(x) + c,
\end{equation}
where $c$ is chosen so that $f + c \ge 0$ $\lambda$-a.e.\ and $\mu(X) = \int_X \big(f(x) + c\big)\, dx = 1$. In this case, we write $\mu \in \mathcal{H}$ by identifying $\mu$ with its density $f + c$. When sampling $\mu_0 \in \inputsetforlipscitz$, please note that we choose $c$ with no randomness and let $c=0$ for simplicity, throughout this paper. 
\end{remark}

\begin{definition}[Mercer expansion]
Let $\mathcal{H}_0$ be a reproducing kernel Hilbert space (RKHS) on a domain $\tokensp$ with reproducing kernel $K:\tokensp \times \tokensp \to \mathbb{R}$.  
By Mercer's theorem, $K$ admits the decomposition
\[
    K(x,x') = \sum_{j=1}^\infty \lambda_j e_j(x) e_j(x'),
\]
where $(\lambda_j)_{j\ge1}$ are the (non-negative, non-increasing) Mercer eigenvalues and $(e_j)_{j\ge1}$ are the corresponding $L^2$-orthonormal eigenfunctions.  
For any $\mu \in \mathcal{H}_0$, its representation in the eigenbasis is
\[
    p_\mu = \sum_{j=1}^\infty b_j e_j, 
    \quad \text{with coefficients } \; b_j = \langle p_\mu, e_j \rangle_{L^2}.
\]
\end{definition}

\begin{definition}[Generalized RKHS norm]
Let $\mathcal{H}_0$ be an RKHS with orthonormal basis $\{e_j\}_{j\ge1}$ in $L^2$ and associated eigenvalues $\{\lambda_j\}_{j\ge1}$.  
For a measure $\mu$ whose associated function in $\mathcal{H}_0$ admits the expansion
\begin{equation}
p_\mu = \sum_{j\ge1} b_j e_j, \quad \text{where} \quad \frac{\dd \mu}{\dd \lambda}(x) = p_\mu(x) + \mathrm{constant}.
\end{equation}
we define, for $a \in \mathbb{R}$, the \emph{generalized norm}
\begin{equation}
\|\mu\|_{\mathcal{H}_0^a}^2 \;\coloneq\; \|p_\mu\|_{\mathcal{H}_0^a}^2 
:= \sum_{j\ge1} \lambda_j^{-a} b_j^2.
\end{equation}
Special cases include the case $a=0$: $\|\cdot\|_{\mathcal{H}_0^0}$ coincides with the $L^2$ norm; $a=1$: $\|\cdot\|_{\mathcal{H}_0^1}$ is the standard RKHS norm; $a=-1$: $\|\cdot\|_{\mathcal{H}_0^{-1}}$ is the MMD norm.
\end{definition}

\begin{definition}[Metric balls]
Let $(X, \dd)$ be a metric space.  
For $x \in X$ and $\epsilon > 0$, the (closed) metric ball of radius $\epsilon$ centered at $x$ is
\[
    B(x, \dd, \epsilon) := \{\, y \in X \mid \dd(x,y) \le \epsilon \,\}.
\]
When $\epsilon=1$, we simply write $B(x, \dd)$ and refer to it as the \emph{unit ball}.
\end{definition}

\begin{definition}[Lipschitz functions]
Let $(X, \dd)$ be a metric space and $A \subset X$ be a fixed domain.  
A function $f : A \to \mathbb{R}$ is said to be \emph{$L$-Lipschitz} on $A$ with respect to $\dd$ if
\[
    |f(x) - f(x')| \le L \, \dd(x,x') \quad \forall x, x' \in A.
\]
The set of all such functions is denoted by $\mathrm{Lip}_L(A, \dd)$.
\end{definition}

\subsection*{Preliminaries on Metric Entropy}

For the subsequent proofs, we will make repeated use of standard notions 
from metric entropy.  
In particular, coverings, packings, and their associated numbers 
provide a convenient way to quantify the complexity of hypothesis classes.  
We therefore collect the relevant definitions and basic lemmas here.

\begin{definition}[$\epsilon$-covering]
Let $(X,\dd)$ be a metric space, $A \subset X$, and $\epsilon > 0$.  
A finite set $\{x_1,\dots,x_N\} \subset X$ is called an \emph{$\epsilon$-covering} of $A$ (with respect to $\dd$) if
\begin{equation}
    A \subset \bigcup_{i=1}^N B(x_i, \dd, \epsilon),
\end{equation}
where $B(x_i,\dd,\epsilon)$ denotes the closed metric ball of radius $\epsilon$ centered at $x_i$.
\end{definition}

\begin{definition}[$\epsilon$-packing]
Let $(X,\dd)$ be a metric space, $A \subset X$, and $\epsilon > 0$.  
A finite set $\{x_1,\dots,x_N\} \subset A$ is called an \emph{$\epsilon$-packing} of $A$ (with respect to $\dd$) if
\begin{equation}
    \dd(x_i, x_j) > \epsilon \quad \text{for all } i \neq j.
\end{equation}
Equivalently, the metric balls $B(x_i,\dd,\epsilon/2)$ are pairwise disjoint.
\end{definition}

\begin{definition}[Covering number]
The \emph{covering number} of $A$ at scale $\epsilon$ with respect to $\dd$ is
\begin{equation}
    \mathcal{N}(A, \epsilon)_{\dd} \;:=\;
    \min \big\{\, N \mid \exists \text{$\epsilon$-covering of $A$ of size $N$} \,\big\}.
\end{equation}
\end{definition}

\begin{definition}[Packing number]
The \emph{packing number} of $A$ at scale $\epsilon$ with respect to $\dd$ is
\begin{equation}
    \mathcal{M}(A, \epsilon)_{\dd} \;:=\;
    \max \big\{\, M \mid \exists \text{$\epsilon$-packing of $A$ of size $M$} \,\big\}.
\end{equation}
\end{definition}

\begin{lemma}[Covering–packing equivalence]
For any metric space $(X,\dd)$, any $A \subset X$, and $\epsilon > 0$, one has
\begin{equation}
    \mathcal{M}(A, 2\epsilon)_{\dd} \;\le\; \mathcal{N}(A, \epsilon)_{\dd} \;\le\; \mathcal{M}(A, \epsilon)_{\dd}.
\end{equation}
In particular, the covering number and the packing number are equivalent up to constant factors in the scale parameter.
\end{lemma}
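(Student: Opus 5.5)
The statement immediately above is the covering–packing equivalence, and both inequalities follow from maximality and disjointness arguments. No earlier result of the paper is needed.

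\emph{Right inequality.} Take a maximal $\epsilon$-packing $\{x_1,\dots,x_M\}\subset A$ with $M=\mathcal{M}(A,\epsilon)_{\dd}$. If $M=\infty$ there is nothing to prove, so assume $M$ is finite; a maximum-size packing is then maximal with respect to inclusion. For any $y\in A$ not in the set, the set $\{x_1,\dots,x_M,y\}$ is not an $\epsilon$-packing, so $\dd(y,x_i)\le\epsilon$ for some $i$. Points of the packing trivially lie in their own balls. Hence the packing is itself an $\epsilon$-covering of $A$, and $\mathcal{N}(A,\epsilon)_{\dd}\le M$.

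\emph{Left inequality.} Let $\{y_1,\dots,y_N\}$ be an $\epsilon$-covering with $N=\mathcal{N}(A,\epsilon)_{\dd}$, and let $\{x_1,\dots,x_M\}$ be any $2\epsilon$-packing of $A$. Each $x_k\in A$ lies in some ball $B(y_{j(k)},\dd,\epsilon)$; this defines a map $k\mapsto j(k)$. Suppose $j(k)=j(k')$ with $k\ne k'$. The triangle inequality then gives
\[
\dd(x_k,x_{k'})\le \dd(x_k,y_{j(k)})+\dd(y_{j(k)},x_{k'})\le 2\epsilon,
\]
which contradicts $\dd(x_k,x_{k'})>2\epsilon$. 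So the map is injective, giving $M\le N$. Taking the supremum over all $2\epsilon$-packings yields $\mathcal{M}(A,2\epsilon)_{\dd}\le\mathcal{N}(A,\epsilon)_{\dd}$.

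The only subtle point is the bookkeeping of conventions: closed balls, strict inequality in the packing definition, and covering centers allowed in $X$ while packing points lie in $A$. These conventions are exactly what make both the maximality step and the injectivity step close.
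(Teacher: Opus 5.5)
Your proof is correct. A maximum-size $\epsilon$-packing is inclusion-maximal and hence an $\epsilon$-covering, since the closed-ball convention matches the strict packing inequality. Any $2\epsilon$-packing injects into the centers of an $\epsilon$-covering by the triangle inequality, and the infinite cases are trivial. The paper states this lemma as a standard fact without proof, and your argument is the standard one it implicitly relies on.
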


\begin{lemma}[Monotonicity under metric domination]
Let $\dd$ and $\dd'$ be two metrics on $X$, and let $c>0$ such that $\dd(x,y) \le c\,\dd'(x,y)$ for all $x,y \in X$.  
Then, for any $A \subset X$ and $\epsilon > 0$,
\begin{equation}
    \mathcal{N}(A, \epsilon)_{\dd} \;\le\;\mathcal{N}(A, \epsilon)_{c\,\dd'}= \mathcal{N}(A, c^{-1}\epsilon)_{\dd'}.
\end{equation}
In words: if $\dd$ is dominated by $c\dd'$, then $\epsilon$-coverings with respect to $c\dd'$ are also $\epsilon$-coverings with respect to $\dd$, hence covering under $\dd$ is no harder.
\end{lemma}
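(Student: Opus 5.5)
The plan is to compare the two covering numbers directly, by transferring a single optimal covering from one metric to the other. No entropy estimates are needed, only the inclusion of metric balls that the domination $\dd\le c\,\dd'$ provides. The statement has two parts, an inequality and an equality, and I would treat them separately.

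\emph{The inequality.} First note that $c\,\dd'$ is a metric on $X$, since $c>0$, so $\mathcal{N}(A,\epsilon)_{c\,\dd'}$ is well defined. If this number is infinite there is nothing to prove. Otherwise, let $\{x_1,\dots,x_N\}\subset X$ be an $\epsilon$-covering of $A$ with respect to $c\,\dd'$, with $N=\mathcal{N}(A,\epsilon)_{c\,\dd'}$. Take any $y\in A$ and pick $i$ with $c\,\dd'(x_i,y)\le\epsilon$. Domination then gives $\dd(x_i,y)\le c\,\dd'(x_i,y)\le\epsilon$. In ball notation this is the inclusion $B(x_i,c\,\dd',\epsilon)\subset B(x_i,\dd,\epsilon)$. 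Hence the same centers form an $\epsilon$-covering of $A$ with respect to $\dd$. Taking the minimum over $\dd$-coverings yields $\mathcal{N}(A,\epsilon)_{\dd}\le N$.

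\emph{The equality.} For every center $x$ the balls coincide as sets:
\[
B(x,c\,\dd',\epsilon)=\{y: c\,\dd'(x,y)\le\epsilon\}=\{y:\dd'(x,y)\le c^{-1}\epsilon\}=B(x,\dd',c^{-1}\epsilon).
\]
So a finite set is an $\epsilon$-covering of $A$ for $c\,\dd'$ exactly when it is a $c^{-1}\epsilon$-covering for $\dd'$. The two families of admissible coverings are therefore identical, and so are their minimal sizes. This gives $\mathcal{N}(A,\epsilon)_{c\,\dd'}=\mathcal{N}(A,c^{-1}\epsilon)_{\dd'}$.

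The only point needing care is bookkeeping rather than a real obstacle. The centers are allowed to lie in $X$ rather than in $A$, and this convention is the same for both metrics, so the transferred covering is admissible in both definitions. The infinite case is handled by the convention that the inequality is trivial when the right-hand side is infinite.
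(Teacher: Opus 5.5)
Your proof is correct and matches the paper's argument, which the paper states only in the ``in words'' sentence of the lemma: domination gives $B(x,c\,\dd',\epsilon)\subset B(x,\dd,\epsilon)$, so the same centers cover $A$ under both metrics, and the equality is the rescaling identity $B(x,c\,\dd',\epsilon)=B(x,\dd',c^{-1}\epsilon)$. Your handling of the infinite case and of centers lying in $X$ is extra care that the paper omits.
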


\subsection{Technical Version of \cref{main-subsection-technical-assumptions}}

\label{appendix-subsection-technical-assumptions}
Now we will introduce two technical settings for the data generation. 
We begin with the assumptions required for establishing the estimation upper bound,
and then turn to alternative structural assumptions that are used for deriving the lower bound.

\subsubsection{Probability Setting for Upper Bound.}

\begin{assumption}[Common Assumptions: The RKHS Structure, the Regularity, and the Lipshictzness: Technical Version of Assumption~\ref{assump:informal-rkhs},\ref{assump:informal-separation},\ref{assump:informal-lipschitz}]
\label{assumption-regularity}
Fix an integer $i^* \in \{1,\dots,I\}$.  
A query vector $\query$, an input measure $\nu = \sum_i \delta_{v^{(i)}} \otimes \mu^{(i)}_0$, and an output $y = F^\star(\nu,\query)+\xi$ are generated as follows:

\begin{itemize}[leftmargin=15pt]
\item \textit{RKHS setting:} We assume the density of $\mu_0 \sim \measureofmuzeroistar$ is in a space $\rkhs_0$ ignoring a constant. $\rkhs_0$ is an RKHS on a bounded domain $\tokensp_0 \subset [-M,M]^{d_2}$ with Mercer decomposition (e.g., \citep{scholkopf2002learning})
\begin{equation}
    K(x,x') = \sum_{j=1}^\infty \lambda_j e_j(x) e_j(x'), \quad \lambda_j \simeq \exp(-c j^\alpha),
\end{equation}
where $c,\alpha > 0$ and eigenfunctions $(e_j)_{j\ge1}$ are $L^2$-orthonormal. $\mu_0$ is nonnegative and $\frac{\dd \mu_0}{\dd \lambda} - c'$ lies in the metric ball $\inputsetforlipscitz$ with $\gammaball > 0$, where $\lambda$ is the Lebesgue measure and $c'$ is a global constant without any randomness. Let $c'=0$ for simplicity, throughout this paper. The density is infinite-dimensional, but the informative content is sharply concentrated in low-order components, with high-frequency contributions decaying exponentially.  Similar RKHS structures can be found in \citet{suzuki2020, nishikawa2022, zhou2024rkhs,liuzhou2025}.



\item \textit{Smoothness of eigenfunctions in $\rkhs_0$:} 
(a) $(e_j)_{j\ge1}$ are uniformly bounded, and analytic on $[-M,M]^{d_2}$;
(b) $\tokensp_0 \subset [-M+\delta, M-\delta]^{d_2}$ for some $\delta>0$;
(c) Each $e_j$ admits an absolutely convergent power series $e_j(x) = \sum_{\vk \in \mathbb{N}^{d_2}} a_{\vk} x^{\vk}$ on $[-M,M]^{d_2}$.
The three conditions are required to focus on the sample complexity regarding the decay rate $\alpha$.

\item \textit{Distinguishability of sampled context vectors:}  
    The vectors $(\mathbb{S}^{d_1-1})^{\otimes I}\ni (v^{(i)})_{i=1}^I \sim \measureofv$ satisfy $\langle v^{(i)},v^{(i')}\rangle \leq 0, \quad 1 \le i < i' \le I$.
    To satisfy this, we also require $I \leq d_1$. This ensures that contexts are sufficiently distinguishable for recall.

\item \textit{Lipschitzness of the target functional:}  
Remember that the output is generated as $y := \tilde{F}^\star\big(\mu_0^{(i^*)}, \query\big) + \xi, 
\quad \xi \sim \mathcal{N}(0,\sigma^2)$. The hidden functional $\tilde{F}^\star: \inputsetforlipscitz \times \inputsetofquery \to \mathbb{R}$ is assumed to be Lipschitz:
Let the metric on the product set $\inputsetforlipscitz \times \inputsetofquery$ be
\begin{equation}
    \textstyle\dd_{\mathrm{prod}}\big((\mu_0,x),(\mu_0',x')\big)
:= \|\mu_0 - \mu_0'\|_{\rkhs_0^{\gammafunc}} + \|x - x'\|_2,\quad \gammafunc<0,
\end{equation}
where $\mu_0, \mu_0' \in \inputsetforlipscitz$ and $x, x' \in \inputsetofquery$. 
We write $\mu_0 \in \rkhs_0$ by identifying $\mu_0$ with its density in $\rkhs_0$
up to an additive constant (see \cref{remark_of_measures_and_densities}).
For $a\in\mathbb{R}$, the $\mathcal{H}_0^a$-norm on $\rkhs_0$ is defined as $\|\mu_0\|_{\mathcal{H}_0^a}^2 \;\coloneq\; \|\frac{\dd\mu_0}{\dd \lambda}\|_{\mathcal{H}_0^a}^2 
:= \sum_{j\ge1} \lambda_j^{-a} b_j^2$ where $\frac{\dd\mu_0}{\dd \lambda} = \sum_j b_j e_j$.
We assume $\tilde{F}^\star$ is in $\mathrm{Lip}_L(\inputsetforlipscitz \times \inputsetofquery, \dd_{\mathrm{prod}})$, a set of $L$-Lipschitz functionals with respect to $\dd_{\mathrm{prod}}$.
\end{itemize}
\end{assumption}

These assumptions specify the analytic and structural conditions of the RKHS and the distinguishability of contexts, which will be imposed throughout the analysis.
We provide an example for our assumptions:

\begin{example}[Rapid eigenvalue decay in \cref{assumption-regularity}]
    It is known~\citep{grigor2006heat} that on a compact Riemannian manifold $\mathcal{M}$ the Laplace‐Beltrami operator has a discrete spectrum satisfying Weyl’s law: the heat kernel expansion is $p_t(x,y) = \sum_{k=0}^\infty e^{-\Theta(k^{2/n}) t} \varphi_k(x)\,\varphi_k(y)$, where $\{\varphi_k\}_k$ are eigenfunctions, $n = \dim(\mathcal{M})$, and $t>0$.
    Settings with rapid eigenvalue decay have been investigated as a data structure~\citep{,nadler2005diffusion,coifman2006diffusion,xia2024spectral}. Gaussian kernels, despite being non-compactly supported, are also widely used in ML tasks~(e.g. \citet{scholkopf2002learning}).
\end{example}

Next, we restrict the input measures to be probability measures:

\begin{assumption}[Probability assumption for \cref{setting-probability}: Technical Version of \cref{assump:informal-prob}]
    \label{assumption-probability}
    Let $\mathcal{P}(X)$ denote the set of Borel probability measures on a measurable set $X$. For $\mu_0 \sim \measureofmuzeroistar$, $\mu_0 \in \mathcal{P}(\tokensp_0)$ almost surely (with probability $1$).
\end{assumption}

Based on these assumptions, we can summarize the probabilistic setting for our upper bound analysis:


\begin{setting}[Probability Setting for Upper Bound: Technical Version of \cref{setting-probability-informal}]
    \label{setting-probability} \cref{assumption-regularity} and \cref{assumption-probability} are satisfied. In short: $\mu_0 \sim \measureofmuzero$ is constrained as a probability measure whose density is in RKHS ball.
\end{setting}

\subsubsection{Structured Setting for Lower Bound.}


For the minimax lower bound, we relax the probability constraint in \cref{assumption-probability}
and instead impose structural conditions (following \citet{lanthaler2024operatorlearninglipschitzoperators}) on the coefficients of the Mercer expansion:

\begin{assumption}[Structural assumptions for \cref{setting-structured}: Technical Version of \cref{assump:informal-struct}]
\label{assumption-structural-for-lower-bound}
Let $(\Omega,\mathbb{P})$ be a probability space.  
Instead of constraining $\mu_0\sim \measureofmuzeroistar $ to be a probability measure, $\mu_0$ is generated in \cref{main-definition-input-output} as follows:
\begin{itemize}[leftmargin=15pt]
\item \textit{Random RKHS element:}  
    The ``density" $p_{\mu_0}:\Omega \to \rkhs_0$ associated with $\mu_0$ has the expansion
    \begin{equation}
        \frac{\dd \mu_0}{\dd \lambda}(\omega)(\cdot)\coloneq p_{\mu_0}(\omega)(\cdot) = \sum_{j=1}^\infty \lambda_j^{\gammadist} Z_j(\omega) e_j(\cdot),
        \quad \omega \in \Omega,
    \end{equation}
    where $(e_j)_{j\ge1}$ is an $L^2$-orthonormal basis on $\tokensp_0$, $\gammadist>0$, and 
    $\lambda_1^{\gammadist} \ge \lambda_2^{\gammadist} \ge \dots \ge 0$ are summable.
    
\item \textit{Random coefficients:}  
    The variables $Z_j:\Omega \to \mathbb{R}$ are jointly independent, satisfy $\mathbb{E}[|Z_j|^2] = 1, \quad Z_j \sim \rho_j(z)\,dz$,
    and obey the uniform bounds $\sup_{j} \|\rho_j\|_{\infty} \le R, \quad 
        \lambda_1^{\gammadist/2} \le R$ for some $R>0$.
\end{itemize}

\end{assumption}

\begin{example}
    \cref{assumption-regularity} and \cref{assumption-structural-for-lower-bound} are compatible:
    Take $\rho_j = \frac{1}{2}\mathbbm{1}_{\{-1,1\}}$ and $\lambda_j\leq A \exp(-cj^{\alpha})$. Then, $\|\mu_0\|_{\rkhs_0^{\gammaball}}\leq\sum_j \lambda_j^{\gammadist-\gammaball}$ always converges if $\gammadist>\gammaball$ and  $\|\mu_0\|_{\rkhs_0^{\gammaball}} = O(A^{\gammadist - \gammaball})$. Therefore, there exists some $A > 0$ such that $\|\mu_0\|_{\rkhs_0^{\gammaball}} \leq 1$ a.s.
\end{example}


Finally, we summarize the corresponding setting:


\begin{setting}[Structured Setting for Lower Bound: Technical Version of \cref{setting-structured-informal}]
    \label{setting-structured}
    \cref{assumption-regularity} and \cref{assumption-structural-for-lower-bound} are satisfied. In short: the density is sampled in the form of Mercer expansion $\dd\mu_0/\dd \lambda \coloneq \sum_j \lambda_j^{\gammadist} Z_j e_j$ where $Z_j$ are independent r.v.s. and $\mu_0$ may not be the probability measure.
\end{setting}

\section{Estimation Error Analysis (Upper Bound)}

\label{section-approximation}

\label{section-upper-bound}

The overarching goal of this section is to derive a statistical upper bound 
for transformer-based estimators in \cref{setting-probability}.
Specifically, we establish that the empirical risk minimizer achieves a convergence rate of the form
\begin{equation}
    R(\hat{F},F^\star) \;\lesssim\; \exp\!\big(-\Omega((\log n)^{\alpha/(\alpha+1)})\big).
\end{equation}
This rate can be regarded as the infinite-dimensional analogue of the classical 
$n^{-\Theta(1/d)}$ risk bound for $d$-dimensional regression, 
where the effective dimension scales as $d \sim (\log n)^{1/(\alpha+1)}$. 
In particular, although the associative recall task requires handling measure-valued components, our analysis demonstrates that its statistical complexity 
coincides with that of a pure infinite-dimensional regression problem, whose minimax lower bound will be shown in \cref{section:minimax-lower-bound}.
The subsequent subsections establish this result step by step, 
through successive approximation bounds for the individual network layers.

\subsection{Proof Sketch for the Estimation Upper Bound}

In this part, we will explain how to prove the following theorem:
\begin{theorem}[Sub-Polynomial Convergence Corresponding to \cref{main-theorem-upper-informal}, A Simplified Version of \cref{theorem-subpoly,theorem-poly-capacity-beyond}]
    \label{appendix-theorem-subpoly-overview}
    Let $\tilde{F}^\star \in \mathrm{Lip}_1(\inputsetforlipschitz\times \inputsetofquery,\dd_{\mathrm{prod}})$ and assume one of the following cases:
    
    (i) the number of mixture components is bounded as $I \leq d_1 \lesssim (\ln n)^{\frac{1}{\alpha+1}}$;
    
    (ii) the hidden target function $\tilde{F}^\star$ is independent of $\query$ and $I \leq d_1 \simeq n^{o(1)}$.

    Let $\hat{F}$ be the empirical risk minimizer with the transformer class $\mathrm{TF}(\epsilon)$ as the set of mappings $\mathrm{Attn}_{\theta_2} \diamond \mathrm{MLP}_{\xi_2} \diamond \mathrm{Attn}_{\theta_1} \diamond \mathrm{MLP}_{\xi_1}$ such that,
    $H_1,B_{a,1},\ell_2=(\log \epsilon^{-1})^{O(1)}$,
    $\|\vp_{2}\|_\infty,s_2\lesssim\exp(O((\mathrm{log}\epsilon^{-1})^{1+\alpha^{-1}}))$,
    ${\dimattn}_2,S_{a,2}',H_2,B_{a,2}'=1$, $S_{a,2},B_{a,2}=0$ in both cases, and 
    
    in (i): 
    $\ell_1,\|\vp_{1}\|_\infty,s_1,{\dimattn}_1,S_{a,1}=(\log \epsilon^{-1})^{O(1)}$,
    ;
    in (ii): 
    $\ell_1,\|\vp_{1}\|_\infty, s_1,{\dimattn}_1,S_{a,1} 
    \lesssim\exp(O((\mathrm{log}\epsilon^{-1})^{1+\alpha^{-1}}))$.
    Then in \cref{setting-probability}, 
        \begin{equation}
        R(F^\star, \hat{F}) \lesssim \exp(-\Omega((\ln n)^{\frac{\alpha}{\alpha+1}})),
    \end{equation}
    where $\alpha$ is the decay rate of the eigenvalues of the underlying kernel of $\rkhs_0$.
\end{theorem}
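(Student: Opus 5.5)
Our plan is the standard oracle-inequality route for ERM with squared loss and Gaussian noise. The risk is bounded by approximation plus complexity:
\[
R(F^\star,\hat F)\lesssim \inf_{F\in\mathrm{TF}(\epsilon)}\|F-F^\star\|_{L^2(\measureofinputs)}^2+\frac{\log\covering(\mathrm{TF}(\epsilon),\delta)_{\|\cdot\|_\infty}}{n}+\delta ,
\]
as in \citet{Schmidt-Hieber2020Nonparametric}. This holds because every element of $\mathrm{TF}(\epsilon)$ is uniformly bounded by $F$. We then construct a network achieving approximation error $\epsilon$, bound the sup-norm covering number of the class, and choose $\epsilon$ as a function of $n$.

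\textbf{Approximation (recall-and-predict construction).} Fix a truncation level $D\asymp(\log\epsilon^{-1})^{1/\alpha}$, so that $\lambda_D^{-\gammafunc}$-weighted tails are at most $\epsilon$.

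The first block $\mathrm{MLP}_{\xi_1}$ maps $y=(v,z)$ to $h(y)=(v,\tilde e_1(z),\dots,\tilde e_D(z),1)$. Here each $\tilde e_j$ is a ReLU approximation of $e_j$ with sup error $\epsilon$. It is obtained from the absolutely convergent power series (Assumption~\ref{assumption-regularity}(c)) by truncating to degree $(\log\epsilon^{-1})^{O(1)}$ and using ReLU approximations of monomials. The interior margin $\delta$ gives geometric decay of the tails.

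The attention $\attention_{\theta_1}$ uses $Q^h,K^h$ acting only on the $v$-coordinates with scale $\beta$, so the score is $\beta\langle v^{(i^\star)},v^{(i)}\rangle$. By the separation assumption this equals $\beta$ on the correct component and is $\le0$ otherwise. The softmax weight on the wrong components is then at most $(I-1)e^{-\beta}$, and $\beta=B_{a,1}=O(\log(I/\epsilon))$ suffices. The value matrices copy the $\tilde e_j$ coordinates. The output is therefore $\hat b_j=\int\tilde e_j\,\dd\mu_0^{(i^\star)}+O(\epsilon)$, where $b_j=\langle p_{\mu_0},e_j\rangle_{L^2}$ because $\mu_0$ is a probability density in $\rkhs_0$. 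The skip connection $A$ keeps $\query$.

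$\mathrm{MLP}_{\xi_2}$ then approximates the finite-dimensional map $G(b_{1:D},x):=\tilde F^\star(\sum_{j\le D}b_je_j,x)$. By the definition of $\dd_{\mathrm{prod}}$, $G$ is Lipschitz in the weighted norm $\sum_j\lambda_j^{-\gammafunc}b_j^2$. We rescale coordinates $b_j\mapsto\lambda_j^{-\gammafunc/2}b_j$, which can be absorbed in the value matrix, and then apply the ReLU approximation of Lipschitz functions in dimension $D+d$ (with $d=d_1+d_2$ in case (i) and $d=0$ in case (ii)). This needs width and size $\epsilon^{-O(D+d)}=\exp(O((\log\epsilon^{-1})^{1+1/\alpha}))$. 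The truncation error $\|\mu_0-\sum_{j\le D}b_je_j\|_{\rkhs_0^{\gammafunc}}\le\lambda_D^{(\gammaball-\gammafunc)/2}$ is $\le\epsilon$ because $\gammafunc<0<\gammaball$. The last attention, with $\dimattn=1$ and trivial heads, is just the identity readout. Summing the errors through Lipschitz propagation gives total approximation error $O(\epsilon)$.

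\textbf{Complexity and balancing.} Next we bound $\log\covering$ by a parameter-counting argument. Each layer is Lipschitz in its parameters in sup-norm over the input set. For attention, softmax Lipschitzness is controlled by $\exp(O(B_{a,1}\dimattn\cdot\text{scale}))$, which is polynomial in $\epsilon^{-1}$ and so only contributes logarithmically. The result is $\log\covering\lesssim S\log(\text{depth}\cdot B\cdot\text{width}/\delta)$, with $S$ the total sparsity, giving $S=\exp(O((\log\epsilon^{-1})^{1+1/\alpha}))$ up to polylog factors. In case (i) the dimension $d_1\lesssim(\log n)^{1/(\alpha+1)}$ fits inside this exponent. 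In case (ii) the query is not needed by $\mathrm{MLP}_{\xi_2}$, so $d_1$ enters only polynomially through the first layer, which is allowed to be of size $n^{o(1)}$.

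Choosing $\log\epsilon^{-1}\asymp(\log n)^{\alpha/(\alpha+1)}$ makes $(\log\epsilon^{-1})^{1+1/\alpha}\asymp\log n$ with a small constant, so $S/n\le n^{-c}$. With $\delta=1/n$ the risk is dominated by $\epsilon^2=\exp(-\Omega((\log n)^{\alpha/(\alpha+1)}))$.

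\textbf{Main obstacle.} We expect the hardest step to be the covering bound for the measure-theoretic attention composed with $\mathrm{MLP}_{\xi_1}$. Perturbations of $\xi_1$ change the pushforward measure $\mu_1$, and the softmax normalizer must be controlled uniformly over all $\nu$. Our first attempt is to bound the normalizer below by $\exp(-\|Q\|\|K\|\cdot\sup\|h\|^2)$, which is polynomial in $\epsilon^{-1}$ given the chosen $B_{a,1}$. We would then show the attention output is Lipschitz in $\sup_y|h(y)-h'(y)|$ with such a constant, so that the pushforward issue reduces to sup-norm closeness of the token maps.
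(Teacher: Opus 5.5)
Your proposal is correct and follows essentially the paper's route: the Schmidt-Hieber oracle inequality, the same depth-2 construction, a covering bound, and the balancing choice $\log\epsilon^{-1}\asymp(\log n)^{\alpha/(\alpha+1)}$. The construction matches the paper's four layers: analytic ReLU approximation of the first $D\asymp(\log\epsilon^{-1})^{1/\alpha}$ Mercer eigenfunctions, softmax scores on the $v$-block giving near one-hot recall, a Lipschitz ReLU approximator in dimension $D+d$, and a trivial skip-connection attention. Your handling of the pushforward obstacle, bounding the change from $h_\sharp\nu$ to $h'_\sharp\nu$ by $\sup_y|h(y)-h'(y)|$, is the content of the paper's composition lemma.

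One minor imprecision does not affect the result. For a generic member of the attention class the softmax constant is $\exp(O(S_{a,1}^2B_{a,1}^2))$, which is quasi-polynomial rather than polynomial in $\epsilon^{-1}$. Separately, the input-Lipschitz constant of $\mathrm{MLP}_{\xi_2}$ can be as large as $\exp(\tilde O((\log\epsilon^{-1})^{2+2/\alpha}))$, which you do not mention. Both constants enter the entropy only through their logarithms, so the balancing is unaffected.
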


We will provide a proof sketch for the first case: (i) the number of mixture components is bounded as $I \leq d_1 \lesssim (\ln n)^{\frac{1}{\alpha+1}}$.

To derive a statistical rate we must calibrate the size of the measure-theoretic transformer hypothesis class. Concretely, we choose an architecture that grants an $\epsilon$-approximation of $F^\star$ while keeping the covering entropy $V(\delta):=\log\covering(\mathrm{TF}(\epsilon);\delta)_\infty$ minimal; the risk bound then follows by balancing the approximation error $\epsilon$ with the estimation term governed by $V(\cdot)$. 
In short, the general theory in \citet{Schmidt-Hieber2020Nonparametric} asserts that the excess risk can be bounded by the sum of approximation terms and a complexity term: for any $\delta>0$, the $L^2$-risk $R$ is bounded by
\begin{equation}
    R(F^\star,\hat{F}) \lesssim \textstyle\inf_{\hat{F}}\|\hat{F}-F^\star\|^2_{L^2(\measureofinputs)} + \delta + \frac{V(\delta)}{n} 
\end{equation}
Here the first term quantifies how well the architecture approximates $F^\star$, while the second reflects the statistical price of searching over a class of size $V(\delta)$. 
Our proof thus first controls $V(\cdot)$ layer-wise and then selects $\epsilon$ to realize the optimal trade-off.

Our estimation bound relies on two main ingredients:
(i) an approximation strategy for representing the target functional $F^\star(\nu,\query) = \tilde{F}^\star(\mu^{(i^\star)}_{0},\query)$
via a depth-$L=2$ transformer, and
(ii) covering entropy bounds for each component of the architecture,
combined through a composition lemma for measure-theoretic mappings.

\paragraph{Step 1: Composition lemma (\cref{subsection-step-1}).}
We first state a generic result for the covering number of compositions
of measure-theoretic maps.

\begin{lemma}[Composition lemma]
    Let $\mathcal{G}_i$, $i=1,2$ be sets of maps $\Gamma_i:
    \mathcal{P}(\tokensp_i^{(1)}) \times \tokensp_i^{(1)}
    \to \tokensp_i^{(2)}$
    such that $\tokensp_1^{(2)} \subset \tokensp_2^{(1)}$,
    $\covering(\mathcal{G}_i;\epsilon)_\infty \lesssim N_i$,
    and any $\Gamma_2\in\mathcal{G}_2$ is $(L_{2,1},L_{2,2})$-Lipschitz with respect to
    the $1$-Wasserstein and Euclidean metrics. Then,
    \begin{equation}
        \covering(\{\Gamma_2\diamond \Gamma_1; \Gamma_i\in\mathcal{G}_i,i=1,2\};\epsilon)_{\infty}
        \lesssim
        \covering(\mathcal{G}_2;\tfrac{\epsilon}{2})_\infty \cdot
        \covering(\mathcal{G}_1;\tfrac{\epsilon}{2(L_{2,1}+L_{2,2})})_\infty.
    \end{equation}
\end{lemma}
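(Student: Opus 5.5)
We take a minimal $\epsilon/(2(L_{2,1}+L_{2,2}))$-cover $\{\Gamma_1^{(a)}\}$ of $\mathcal{G}_1$ and a minimal $\epsilon/2$-cover $\{\Gamma_2^{(b)}\}$ of $\mathcal{G}_2$, both in the sup norm over all pairs $(\nu,x)$. We then show that the product family $\{\Gamma_2^{(b)}\diamond\Gamma_1^{(a)}\}$ is an $\epsilon$-cover of the composed class. Its cardinality is the product of the two covering numbers, which gives the claim. If the cover centers need not lie in $\mathcal{G}_i$, we pass to internal covers at the cost of a factor $2$ in the scale, which is absorbed into $\lesssim$.

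Fix $\Gamma_1\in\mathcal{G}_1$ and $\Gamma_2\in\mathcal{G}_2$. Choose $\Gamma_1^{(a)}$ with $\sup_{\nu,x}\|\Gamma_1(\nu,x)-\Gamma_1^{(a)}(\nu,x)\|_2\le\eta:=\epsilon/(2(L_{2,1}+L_{2,2}))$, and choose $\Gamma_2^{(b)}$ with $\|\Gamma_2-\Gamma_2^{(b)}\|_\infty\le\epsilon/2$. Write $\mu_1=(\Gamma_1(\nu,\cdot))_\sharp\nu$ and $\mu_1^{(a)}=(\Gamma_1^{(a)}(\nu,\cdot))_\sharp\nu$. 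We split with the triangle inequality:
\begin{align}
|(\Gamma_2\diamond\Gamma_1)(\nu,x)-(\Gamma_2^{(b)}\diamond\Gamma_1^{(a)})(\nu,x)|
&\le |\Gamma_2(\mu_1,\Gamma_1(\nu,x))-\Gamma_2(\mu_1^{(a)},\Gamma_1^{(a)}(\nu,x))| \\
&\quad+ |\Gamma_2(\mu_1^{(a)},\Gamma_1^{(a)}(\nu,x))-\Gamma_2^{(b)}(\mu_1^{(a)},\Gamma_1^{(a)}(\nu,x))|.
\end{align}
The second term is at most $\epsilon/2$ by the sup-norm cover. This needs $\mu_1^{(a)}$ and $\Gamma_1^{(a)}(\nu,x)$ to lie in the domain of $\Gamma_2$, which holds because $\tokensp_1^{(2)}\subset\tokensp_2^{(1)}$ and the cover elements may be taken inside $\mathcal{G}_1$. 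By the Lipschitz hypothesis, the first term is at most $L_{2,1}W_1(\mu_1,\mu_1^{(a)})+L_{2,2}\|\Gamma_1(\nu,x)-\Gamma_1^{(a)}(\nu,x)\|_2$.

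The key step is the Wasserstein bound for pushforwards. The coupling $(\Gamma_1(\nu,\cdot),\Gamma_1^{(a)}(\nu,\cdot))_\sharp\nu$ has marginals $\mu_1$ and $\mu_1^{(a)}$, so
\[
W_1(\mu_1,\mu_1^{(a)})\le\int\|\Gamma_1(\nu,y)-\Gamma_1^{(a)}(\nu,y)\|_2\,\mathrm{d}\nu(y)\le\eta .
\]
Hence the first term is at most $(L_{2,1}+L_{2,2})\eta=\epsilon/2$, and the total error is at most $\epsilon$ uniformly in $(\nu,x)$.

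The only delicate point is that sup-norm closeness of $\Gamma_1$ to $\Gamma_1^{(a)}$ must hold on the support of $\nu$, not just at the query. This is exactly why the covering is taken in $\|\cdot\|_\infty$ over all token inputs. With that, the coupling argument above goes through, and everything else is bookkeeping.
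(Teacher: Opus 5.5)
Your proof is correct and is essentially the paper's argument: the same triangle-inequality split through the intermediate map $\Gamma_2\diamond\Gamma_1^{(a)}$, with the joint Wasserstein/Euclidean Lipschitz bound on the first piece and the sup-norm cover of $\mathcal{G}_2$ on the second. You go slightly beyond the paper in two ways. First, you justify $W_1(\mu_1,\mu_1^{(a)})\le\eta$ via the explicit coupling $(\Gamma_1(\nu,\cdot),\Gamma_1^{(a)}(\nu,\cdot))_\sharp\nu$, which the paper uses tacitly. Second, you handle the domain issue with internal covers; note, though, that the resulting factor-$2$ change of scale (needed only for $\mathcal{G}_1$) is absorbed only in the same loose sense in which the paper itself uses $\lesssim$ and $\tilde\Omega(\epsilon)$, not as a literal multiplicative constant.
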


The proof follows the standard finite-dimensional composition argument:
approximate each map $\Gamma_i$ by the nearest covering element,
and bound the difference of the composed maps using the Lipschitz constants.

\paragraph{Step 2: Approximation by a depth-$2$ transformer.}
We approximate $F^\star(\nu,x) = \tilde{F}^\star(\mu_0^{(i^*)},\query)$ using the following architecture, focusing on the first $D \gtrsim (\log \epsilon^{-1})^{\alpha^{-1}}$ Mercer coefficients:
\begin{enumerate}
    \item \textbf{First MLP layer (\cref{subsection-first-mlp}).}
    Construct $\mlp_{\xi_1}$ to augment the input $(x_1,x_2)$ with
    evaluations $(e_i(x_2))_{i=1}^D$ of an analytic basis $\{e_i\}$ up to $O(\epsilon_1)$ error.
    Analyticity implies
    $\log \covering(\mathcal{F}_1;\epsilon_1)_\infty
    \lesssim \mathrm{poly}\log \epsilon_1^{-1} \cdot \mathrm{poly}D$.
    \item \textbf{First attention layer (\cref{subsection-first-attention}}
    Apply $\attention_{\theta_1}$ to $\mlp_{\xi_1}(\mu)$ to compute
    empirical means $\int e_j(y_2)\,\dd\mu_0(y_2)$ up to
    $O(\epsilon_1+\epsilon_2)$ error. We approximate a one-hot selection of measures and compute as, informally,
    \begin{equation}
        \int \underbrace{\mathrm{Softmax}(\mathrm{MLP}_1(\query)^\top Q^\top K \;\mathrm{MLP}_1(y))}_{\mathbbm{1}[y \sim \mu_{v^{(i^*)}}^{(i^*)}]}\mathrm{MLP}_1(y)\underbrace{\dd \nu(y)}_{\propto \dd \sum_i \mu_{v^{(i)}}^{(i)}}
        \simeq \int \mathrm{MLP}_1(y) \dd \mu_{v^{(i^*)}}^{(i^*)}(y).
    \end{equation}
    By the construction of the first layer $\mathrm{MLP}_1$ and the product decomposition of $\mu_{v^{(i^*)}}^{(i^*)}$, the RHS approximates $\int e_j(y_2)\,\dd\mu_0(y_2)$. Under sparsity constraints on the attention matrices,
    \[
    \log \covering(\mathcal{A}_1(\dimattn,H,B_a,S_a);\epsilon_2)_\infty
    \lesssim \mathrm{poly}D\cdot\mathrm{poly}\log (I\epsilon_2^{-1}).
    \]
    \item \textbf{Second MLP layer (\cref{subsection-second-mlp}).}
    Approximate a Lipschitz map on $\mathbb{R}^{D+d_1+d_2}$ whose inputs are retained Mercer coefficients $(\int e_j(y_2)\,\dd\mu_0(y_2))_{j=1}^D$ and query $\query$.
    We also show $D \gtrsim (\log \epsilon^{-1})^{\alpha^{-1}}$ (this is $\epsilon$, not $\epsilon_1$) is sufficient to extract the features.
    This layer may have a large Lipschitz constant
    $L_{\mlp,2} \lesssim
    e^{-O(c_\epsilon (\log \epsilon^{-1})^{\frac{2+2\alpha}{\alpha}})}$ for $c_\epsilon \lesssim \mathrm{poly}\log\log\epsilon^{-1}$,
    which constrains $\epsilon_1\vee\epsilon_2$ to be super-polynomially small via the measure-theoretic composition discussed in \cref{lemma-composition-mappings}.
    \item \textbf{Second attention layer (\cref{subsection-second-attention}).}
    Implemented as a (one-dimensional) skip connection, with $O(1)$ Lipschitz constant,
    added to fit the formal hypothesis set definition.
\end{enumerate}

\paragraph{Step 3: Bounding the covering entropy (\cref{subsection-approximation-final}).}
Applying the composition lemma recursively over the four layers yields
\begin{align}
    \log \covering(\mathrm{TF};\epsilon)_\infty
    &\lesssim
    \log \covering(\mathcal{A}(d+D,1,0,1,0,d+D);\epsilon)_\infty
    + \log \covering(\mathcal{F}(\ell_2,p_2,s_2);\tilde{\Omega}(\epsilon))_\infty \\
    &\quad + \log \covering(\mathcal{A}(\dimattn,H,B_a,S_a);
        \tilde{\Omega}(L_{\mlp,2}^{-1}\epsilon))_\infty \\
    &\quad + \log \covering(\mathcal{F}(\ell_1,p_1,s_1);
        \tilde{\Omega}(L_{\mlp,2}^{-1}(L_{\attention,1,W^1}+L_{\attention,1,\|\|_2})^{-1}\epsilon))_\infty.
\end{align}
This yields (also carefully bounding the term with respect to $d_1$ omitted above):

\begin{lemma}
    The covering entropy of the transformer class satisfies
    \begin{equation}
        \log \covering(\mathrm{TF};\epsilon)_\infty
        \lesssim
        \exp
        \left(
            O((\log \epsilon^{-1})^{(1+\min{(\alpha,\beta)})/\min{(\alpha,\beta)}})
        \right)
    \end{equation}
\end{lemma}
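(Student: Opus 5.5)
The plan is to bound each of the four logarithmic covering terms in the recursive composition bound of Step~3 separately and show that the largest one is $\exp(O((\log\epsilon^{-1})^{(1+\alpha)/\alpha}))$. Here $\beta$ is whatever analyticity/smoothness exponent governs the first MLP, so $\min(\alpha,\beta)$ enters only through the choice of $D$. For each class I use the standard sparse-network entropy estimate of \citet{Schmidt-Hieber2020Nonparametric}: a class with depth $\ell$, sparsity $s$, width $\|\vp\|_\infty$ and weights bounded by $B$ satisfies $\log\covering(\cdot;\delta)_\infty\lesssim s\,\ell\log(\ell B\|\vp\|_\infty\delta^{-1})$. For the attention classes $\mathcal{A}(\dimattn,H,B_a,S_a)$ I first prove the analogue. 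Softmax attention is Lipschitz in each parameter entry on bounded inputs, with constant $\mathrm{poly}(\dimattn,H,B_a)\,e^{O(\dimattn^2 B_a^2 F^2)}$ coming from the exponential of the logits. Discretizing the at most $4HS_a+S_a'$ nonzero entries (after choosing their support, costing $\log\binom{\dimattn^2}{S_a}$) then gives $\log\covering\lesssim HS_a\log(\dimattn H B_a\delta^{-1}\cdot\mathrm{Lip})$.

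Next I plug in the architecture sizes of \cref{appendix-theorem-subpoly-overview}. The second attention layer is a fixed one-dimensional skip map with $O(1)$ parameters, so its entropy is $O(\log\epsilon^{-1})$. The second MLP has $s_2,\|\vp_2\|_\infty\lesssim\exp(O((\log\epsilon^{-1})^{1+1/\alpha}))$ and $\ell_2=\mathrm{polylog}$. Its entropy at scale $\tilde\Omega(\epsilon)$ is therefore $\exp(O((\log\epsilon^{-1})^{(1+\alpha)/\alpha}))\cdot\mathrm{polylog}$, which is the dominant term. The remaining two terms are evaluated at the shrunken scales $L_{\mlp,2}^{-1}\epsilon$ and $L_{\mlp,2}^{-1}(L_{\attention,1,W^1}+L_{\attention,1,\|\|_2})^{-1}\epsilon$. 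These scales enter only logarithmically: $\log(L_{\mlp,2})=O(c_\epsilon(\log\epsilon^{-1})^{(2+2\alpha)/\alpha})$, and the attention Lipschitz constants have logarithm $\mathrm{poly}(D,\log\epsilon^{-1})$ because $H_1,B_{a,1},\dimattn_1$ are polylogarithmic in case (i). In case (i) the first attention and first MLP entropies are thus $\mathrm{polylog}(\epsilon^{-1})\cdot(\log\epsilon^{-1})^{O(1)}$, which is quasi-polynomial in $\log\epsilon^{-1}$ and hence absorbed. In case (ii) their sizes are themselves $\exp(O((\log\epsilon^{-1})^{1+1/\alpha}))$, and multiplying by a polylogarithmic factor keeps them of the same order. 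The $d_1$ terms coming from embedding the query are handled the same way: $d_1\lesssim(\log n)^{1/(\alpha+1)}$ in case (i), and $d_1=n^{o(1)}$ in case (ii), where it enters only through widths already of exponential size.

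Finally, I combine the terms by adding the four logarithms. The polylogarithmic and $c_\epsilon=\mathrm{poly}\log\log\epsilon^{-1}$ factors are absorbed into the $O(\cdot)$ in the exponent, since $\exp(a)\cdot\mathrm{poly}(a)\le\exp(O(a))$. This gives the claimed bound with $\min(\alpha,\beta)$ replacing $\alpha$ wherever $D$ was set by the slower of the two decays.

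The main obstacle is the attention entropy bound. The softmax parameter-Lipschitz constant grows like $e^{\Theta(B_a^2\dimattn^2)}$, and the composition lemma also requires Lipschitz continuity of the second-stage maps in $W_1$ on the pushforward measure. I will control this by uniform boundedness of intermediate outputs, using the bound $F$. I also need to check that the exponential constant stays $\exp(\mathrm{polylog})$, so that its logarithm remains polylogarithmic rather than inflating the exponent.
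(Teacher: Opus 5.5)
Your skeleton matches the paper's. The composition lemma gives four log-covering terms at scales $\epsilon$, $\tilde\Omega(\epsilon)$, $\tilde\Omega(L_{\mlp,2}^{-1}\epsilon)$ and $\tilde\Omega(L_{\mlp,2}^{-1}(L_{\attention,1,W^1}+L_{\attention,1,\|\|_2})^{-1}\epsilon)$, and the shrunken scales cost only logarithmic factors. The attention entropy comes from discretizing the $O(S_aH)$ nonzero entries with an $\exp(O(S_a^2B_a^2B_xB_y))$ Lipschitz factor, and the second MLP dominates. The gap is in how $\beta$ enters. In the paper $\beta$ is not a smoothness exponent of the first MLP; that layer approximates the analytic $e_j$ and contributes only polylogarithmically. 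Instead $\beta$ is defined by the query dimension, $I\le d_1\simeq(\ln\epsilon^{-1})^{1/\beta}$. Moreover, $D$ is fixed by the Mercer truncation alone, $D\simeq(\ln\epsilon^{-1})^{1/\alpha}$ (\cref{lemma-truncation-d}). So your closing step ``replace $\alpha$ by $\min(\alpha,\beta)$ wherever $D$ was set by the slower decay'' has nothing to act on.

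Where $\min(\alpha,\beta)$ actually arises is the second MLP. It must approximate the Lipschitz map $\bar F_D(b_1,\dots,b_D,v)$ of \cref{corollary-truncated-F}, whose input dimension is $\bar D\simeq D+d_1\simeq(\ln\epsilon^{-1})^{1/\min(\alpha,\beta)}$, because the query $v\in\mathbb{S}^{d_1-1}$ is an argument. Taking $N=\epsilon_3^{-\bar D}$ in \cref{lemma-schmidt-hieber-approx-based-on-analytic} gives width and sparsity $\exp(O((\ln\epsilon^{-1})^{1+1/\min(\alpha,\beta)}))$, and $\log L_{\mlp,2}=O(c_\epsilon(\ln\epsilon^{-1})^{(2+2\min(\alpha,\beta))/\min(\alpha,\beta)})$. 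So your assertion that $d_1$ ``enters only through widths already of exponential size'' fails when $\beta<\alpha$: then $d_1\gg D$, and $d_1$ changes the exponent itself. By importing the sizes $\exp(O((\log\epsilon^{-1})^{1+1/\alpha}))$ from \cref{appendix-theorem-subpoly-overview}, you implicitly assume $d_1\lesssim D$, i.e.\ $\beta\ge\alpha$. That is exactly the case where $\min(\alpha,\beta)=\alpha$, so the statement's content is not covered.

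To close the gap, redo the second-MLP count with $\bar D=D+d$ and propagate the new $\log L_{\mlp,2}$ into $\epsilon_2$ and $\epsilon_1$. Then check, as the paper does, that with $d=d_1+d_2$ polylogarithmic, the first-attention entropy $\mathrm{poly}(\log\epsilon^{-1}+\log I)\,d^3$ and the first-MLP entropy stay polylogarithmic and hence negligible. Your case-(ii) remarks belong to a separate lemma with exponent $(1+\alpha)/\alpha$. There $\log\epsilon_1^{-1}\simeq d_1^2\cdot\mathrm{polylog}$ is not polylogarithmic, but the conclusion survives, since a product of two $\exp(O(a))$ factors is still $\exp(O(a))$.
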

where $\alpha$ is the decay rate of RKHS $\rkhs_0$ and $d_1 \simeq (\ln\epsilon^{-1})^{\beta^{-1}}$.

\paragraph{Step 4: From covering entropy to risk bound (\cref{subsection-approximation-final}).}
Applying the regression bound shown in \citet{Schmidt-Hieber2020Nonparametric},
with $V(\delta)=\log \covering(\mathrm{TF};\delta)_\infty$,
and choosing
\[
\epsilon \simeq \exp\left(
    - \Theta(\log n)^{\frac{\min{(\alpha,\beta)}}{\min{(\alpha,\beta)}+1}}
\right),
\]
where $d_1 \simeq (\ln\epsilon^{-1})^{\beta^{-1}}$, we obtain the sub-polynomial convergence rate:

\begin{theorem}[Sub-polynomial convergence, a generalized version of \cref{main-theorem-upper-informal}]
    In Probability Setting (\cref{setting-probability}),
    \[
        \underset{F^\star \in \mathcal{F}^\star}{\sup}R(\hat{F},F^\star) \lesssim
        \exp\left(
            -\Omega\big((\log n)^{\frac{\min{(\alpha,\beta)}}{\min{(\alpha,\beta)}+1}}\big)
        \right),
    \]
    where the eigenvalues are $\lambda_j \simeq \exp(-c j^\alpha)$ and the number of mixture components is $I \leq d_1 \lesssim (\ln n)^{\beta^{-1}\cdot \min{(\alpha,\beta)}/(\min{(\alpha,\beta)}+1)}$.
\end{theorem}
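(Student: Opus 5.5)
The plan is to follow the four-step template of the sketch, tracking the parameter $\beta$ defined by $d_1 \simeq (\ln\epsilon^{-1})^{\beta^{-1}}$ alongside $\alpha$. Throughout, write $\kappa \coloneq \min(\alpha,\beta)$.

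\textbf{Approximation.} For a target accuracy $\epsilon$, truncate the Mercer expansion at $D \simeq (\log\epsilon^{-1})^{1/\alpha}$.
\begin{itemize}
\item \emph{Truncation error.} Since $\gammafunc<0$ and the densities lie in the ball of $\rkhs_0^{\gammaball}$, the tail satisfies $\sum_{j>D}\lambda_j^{-\gammafunc}b_j^2 \le \lambda_D^{\gammaball-\gammafunc} \lesssim \exp(-cD^\alpha)\le\epsilon^2$. By the Lipschitz assumption on $\tilde F^\star$ with respect to $\dd_{\mathrm{prod}}$, this tail costs at most $O(\epsilon)$ in the output.
\item \emph{First MLP.} Construct $\mlp_{\xi_1}$ to output $(v,z,e_1(z),\dots,e_D(z))$ to accuracy $\epsilon_1$, using the analytic power-series assumption to get polylogarithmic size.
\item \emph{First attention layer.} Use one head whose score is $B\langle v^{(i^\star)},v\rangle$. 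By the separation assumption $\langle v^{(i)},v^{(i')}\rangle\le 0$, tokens from the selected component receive score $B$ and all others receive score $\le 0$. The softmax mass leaking to wrong components is then at most $(I-1)e^{-B}$, so $B\simeq\log(I/\epsilon_2)$ suffices. With identity-like value matrices, the output is $\int e_j\,\dd\mu_0^{(i^\star)}$ up to $O(\epsilon_1+\epsilon_2)$, and the query part $\query$ passes through the skip matrix $A$.
\item \emph{Second MLP.} Extend the Lipschitz map on the $(D+d_1+d_2)$-dimensional set of (truncated coefficients, query) and approximate it by the Schmidt-Hieber ReLU construction. This has width and sparsity $\epsilon^{-O(D+d_1)}=\exp(O((\log\epsilon^{-1})^{1+1/\kappa}))$, because $D+d_1\lesssim(\log\epsilon^{-1})^{1/\kappa}$.
\item \emph{Second attention.} Take it to be the trivial skip connection.
\end{itemize}

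\textbf{Complexity.} Apply the composition lemma of Step 1 three times. The covering radii for the inner layers shrink by the Lipschitz constants of the outer layers, i.e.\ by factors $L_{\mlp,2}^{-1}$ and the attention Lipschitz constants. The attention Lipschitz constants are polynomial in $B$, $H$, and $\dimattn$.

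The main obstacle is the Wasserstein-Lipschitzness of the first attention layer and the potentially huge $L_{\mlp,2}$. Since $L_{\mlp,2}=\exp(\mathrm{poly}\log\epsilon^{-1})$, the inner covering radii are super-polynomially small. However, the inner classes have only polylog (case (i)) entropy per log of the radius, so their contribution is $\mathrm{poly}(D,d_1)\cdot\mathrm{poly}\log\epsilon^{-1}$. This is dominated by the second-MLP entropy $s_2\log(\ell_2 p_2/\delta)\lesssim\exp(O((\log\epsilon^{-1})^{1+1/\kappa}))$, which gives the stated entropy lemma. To bound the attention Lipschitz constants, I would use the Lipschitz estimates for measure attention on bounded supports (cf.\ \citet{castin2024smooth}).

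\textbf{Risk.} Plug into the oracle inequality of \citet{Schmidt-Hieber2020Nonparametric}:
\[
R\lesssim \epsilon^2 + \delta + \frac{V(\delta)}{n}, \qquad \delta=1/n .
\]
Set $\log\epsilon^{-1}= c_0(\log n)^{\kappa/(\kappa+1)}$ with $c_0$ small. Then $V\le\exp(C c_0^{(1+\kappa)/\kappa}\log n)\le n^{1/2}$, so the variance term is at most $n^{-1/2}$. The total bound is $\exp(-\Omega((\log n)^{\kappa/(\kappa+1)}))$.

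Finally, the constraint $d_1\simeq(\ln\epsilon^{-1})^{1/\beta}$ translates, via $\log\epsilon^{-1}\simeq(\log n)^{\kappa/(\kappa+1)}$, into $d_1\lesssim(\ln n)^{\beta^{-1}\kappa/(\kappa+1)}$, exactly as in the statement. The supremum over $\mathcal F^\star$ follows because every constant in the argument is uniform over the Lipschitz class.
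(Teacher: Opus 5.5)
Your proposal is correct and follows the paper's proof essentially step for step. The shared steps are: the analytic first MLP; an attention layer whose $QK$ block acts only on the $v$-coordinates with score scale $\simeq\log(I/\epsilon_2)$ to isolate $\mu_0^{(i^\star)}$; truncation at $D\simeq(\log\epsilon^{-1})^{1/\alpha}$; a Schmidt-Hieber second MLP on $D+d_1$ inputs whose entropy $\exp(O((\log\epsilon^{-1})^{1+1/\kappa}))$ dominates after the composition lemma; and the oracle inequality with $\log\epsilon^{-1}\simeq c_0(\log n)^{\kappa/(\kappa+1)}$.

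One side remark is wrong, though harmless. The $W_1$-Lipschitz constant of the attention class is exponential, not polynomial, in the score scale: the paper's bound is $H\exp(O(S_a^2B_a^2B_xB_y))$, i.e.\ $(I/\epsilon_2)^{O(d^2)}$, and this is also what the \citet{castin2024smooth} estimates you cite give. This constant enters the first-MLP covering radius only logarithmically, so that layer's entropy stays $\mathrm{poly}(d,D,\log\epsilon^{-1})$ and your conclusion is unchanged.
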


\subsection{Step 1: Composition Lemma.}
\label{subsection-step-1}

\begin{lemma}[Composition Lemma. Restated]
    \label{lemma-composition-mappings}
    Let $\mathcal{G}_i$, $i=1,2$ be sets of $\Gamma_i$, which are maps from $\mathcal{P}(\tokensp_i^{(1)})\times\tokensp_i^{(1)}$ to $\tokensp_i^{(2)}$ such that $\tokensp_1^{(2)} \subset \tokensp_2^{(1)}$, $\covering(\mathcal{G}_i;\epsilon)_\infty \lesssim N_i$, and any $\Gamma_2\in\mathcal{G}_2$ is $(L_{2,1},L_{2,2})$-Lipschitz with respect to $1$-Wasserstein distance and Euclidean distance. Then, we have
    \begin{equation}
        \covering(\{\Gamma_2\diamond \Gamma_1\mid \Gamma_i \in \mathcal{G}_i\};\epsilon)_{\infty} \lesssim \covering(\mathcal{G}_2;\frac{\epsilon}{2})_\infty \cdot\covering(\mathcal{G}_1;\frac{\epsilon}{2(L_{2,1}+L_{2,2})})_\infty.
    \end{equation}
\end{lemma}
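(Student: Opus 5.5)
The plan is the standard product-cover argument, adapted to the measure-theoretic composition of \cref{definition-composition-mappings}. Write $\epsilon_1 \coloneq \epsilon/(2(L_{2,1}+L_{2,2}))$ and $\epsilon_2 \coloneq \epsilon/2$. Fix a minimal $\epsilon_1$-cover $\{\Gamma_1^{(a)}\}_a$ of $\mathcal{G}_1$ and a minimal $\epsilon_2$-cover $\{\Gamma_2^{(b)}\}_b$ of $\mathcal{G}_2$, both in the sup norm over $\mathcal{P}(\tokensp_i^{(1)})\times\tokensp_i^{(1)}$. We show that $\{\Gamma_2^{(b)}\diamond\Gamma_1^{(a)}\}_{a,b}$ is an $\epsilon$-cover of the composed class. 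Its cardinality is the product of the two covering numbers, which gives the claim. If the covering elements are not themselves in $\mathcal{G}_i$ (external cover), the usual factor-of-two change in the radius is absorbed into $\lesssim$. Alternatively, one can take internal covers so that $\Gamma_2^{(b)}$ inherits the Lipschitz property; this is what we assume.

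\textbf{Key estimate.} Given $\Gamma_1,\Gamma_2$, pick $a,b$ with $\|\Gamma_1-\Gamma_1^{(a)}\|_\infty\le\epsilon_1$ and $\|\Gamma_2-\Gamma_2^{(b)}\|_\infty\le\epsilon_2$. Set $\mu_1=(\Gamma_1(\nu,\cdot))_\sharp\nu$ and $\mu_1'=(\Gamma_1^{(a)}(\nu,\cdot))_\sharp\nu$. For any $(\nu,x)$, split by the triangle inequality:
\begin{align}
|\Gamma_2(\mu_1,\Gamma_1(\nu,x))-\Gamma_2^{(b)}(\mu_1',\Gamma_1^{(a)}(\nu,x))|
&\le |\Gamma_2(\mu_1,\Gamma_1(\nu,x))-\Gamma_2(\mu_1',\Gamma_1^{(a)}(\nu,x))| \\
&\quad + |\Gamma_2(\mu_1',\Gamma_1^{(a)}(\nu,x))-\Gamma_2^{(b)}(\mu_1',\Gamma_1^{(a)}(\nu,x))|.
\end{align}
The second term is at most $\epsilon_2=\epsilon/2$; this uses that $\mu_1'$ and $\Gamma_1^{(a)}(\nu,x)$ lie in the domain of $\Gamma_2$, which follows from $\tokensp_1^{(2)}\subset\tokensp_2^{(1)}$. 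By the Lipschitz assumption, the first term is at most $L_{2,1}W_1(\mu_1,\mu_1')+L_{2,2}\|\Gamma_1(\nu,x)-\Gamma_1^{(a)}(\nu,x)\|_2$. The Euclidean part is at most $\epsilon_1$.

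\textbf{Main step: the Wasserstein bound.} The one genuinely measure-theoretic point is controlling $W_1(\mu_1,\mu_1')$. Use the coupling $(\Gamma_1(\nu,\cdot),\Gamma_1^{(a)}(\nu,\cdot))_\sharp\nu$ of the two pushforwards. Then
\[
W_1(\mu_1,\mu_1')\le\int\|\Gamma_1(\nu,y)-\Gamma_1^{(a)}(\nu,y)\|_2\,\dd\nu(y)\le\epsilon_1 .
\]
So the first term is at most $(L_{2,1}+L_{2,2})\epsilon_1=\epsilon/2$, and the total error is at most $\epsilon$ uniformly in $(\nu,x)$.

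\textbf{Care points.} The only subtle matter is the norm convention. The sup norm on vector-valued maps should be taken as the Euclidean norm of the output, so that the coupling bound holds with the same $\epsilon_1$. If it is instead the coordinatewise max, a dimension factor $\sqrt{d_1^{(2)}}$ appears and is absorbed into $\lesssim$.
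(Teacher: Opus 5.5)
Your proposal is correct and follows the paper's proof. It uses the same triangle-inequality split: Lipschitzness of the original $\Gamma_2$ handles the first term, and the sup-norm cover of $\mathcal{G}_2$ evaluated at the perturbed input $\bigl((\Gamma_1^{(a)}(\nu,\cdot))_\sharp\nu,\Gamma_1^{(a)}(\nu,x)\bigr)$ handles the second, giving the bound $(L_{2,1}+L_{2,2})\epsilon_1+\epsilon_2$. Your one addition is an explicit justification, via the synchronous coupling, of $W_1\bigl((\Gamma_1(\nu,\cdot))_\sharp\nu,(\Gamma_1^{(a)}(\nu,\cdot))_\sharp\nu\bigr)\le\epsilon_1$, which the paper uses without comment. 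Incidentally, you never need $\Gamma_2^{(b)}$ itself to be Lipschitz; you only need $\Gamma_1^{(a)}$ to take values in $\tokensp_2^{(1)}$.
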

\begin{proof}
    First, remember that, for standard one-dimensional function classes $\mathcal{F},\mathcal{G}$ with covering numbers $N_f,N_g$,
    \begin{equation}
        \covering(\{f\circ g\mid f\in\mathcal{F},g\in\mathcal{G}\};\epsilon) \lesssim \covering(\mathcal{F};\epsilon)\cdot\covering(\mathcal{G};\epsilon / L_f)
    \end{equation}
    where $L_f$ is the upperbound of Lipschitz constants of $\forall f \in \mathcal{F}$.
    For measure-theoretic mappings $\Gamma_1 \in \mathcal{G}_1$ and $\Gamma_2 \in \mathcal{G}_2$, take $\Gamma_1^i$ and $\Gamma_2^j$ be the ($\epsilon_1$ and $\epsilon_2$ nearest covering elements (i.e. $\sup_{\mu,x\in \mathcal{P}(\tokensp_i^{(1)})\times \tokensp_i^{(1)}}|\Gamma_1(\mu,x)-\Gamma_1^i(\mu,x)|\leq \epsilon$). Then we bound the difference of compositions as
    \begin{align}
        &|(\Gamma_2 \diamond  \Gamma_1)(\mu,x)-(\Gamma_2^j \diamond  \Gamma_1^i)(\mu,x)|\\
        &=|(\Gamma_2 \diamond  \Gamma_1)(\mu,x)-(\Gamma_2 \diamond  \Gamma_1^i)(\mu,x)|
        +
        |(\Gamma_2 \diamond  \Gamma_1^i)(\mu,x)-(\Gamma_2^j \diamond  \Gamma_1^i)(\mu,x)|\\
        &\leq L_{2,1} W_1(\Gamma_1(\mu)_\sharp \mu, \Gamma_1^i(\mu)_\sharp \mu)
        +L_{2,2}\|\Gamma_1^i(\mu,x)-\Gamma_1(\mu,x)\|_{2}
        +\epsilon_2\\
        &\leq (L_{2,1}+L_{2,2})\epsilon_1 + \epsilon_2.
    \end{align}
\end{proof}
\subsection{Step 2-1: First MLP Layer}
\label{subsection-first-mlp}

We begin by formalizing the approximation properties of the first MLP layer, which is responsible for embedding both the input tokens and auxiliary analytic features into a higher-dimensional representation. This layer plays a crucial role in ensuring that subsequent attention and MLP layers can operate on a sufficiently expressive feature space.

\begin{lemma}[\citet{E2018Exponential}]
    \label{lemma-first-mlp-analytic-e2018}
    Let $f$ be an analytic function over $[-M,M]^{d_2}$ such that $\tokensp_0 \subset [-M+\delta,M-\delta]^{d_2}$ for some $\delta>0$ and the power series $f(\vx) = \sum_{\vk\in\mathbb{N}^{d_2}} a_{\vk} \vx^{\vk}$ is absolutely convergent over $[-M,M]^{d_2}$. Then, a deep ReLU network $\hat{f}$ with depth $O((\log \epsilon^{-1})^{2d_2})$ and width $d_2+4$ (independent of $\epsilon$) satisfies
    \begin{equation}
        \sup_{x\in\tokensp_0}|f(x)-\hat{f}(x)| \lesssim \epsilon.
    \end{equation}
\end{lemma}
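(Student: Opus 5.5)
The lemma is attributed to \citet{E2018Exponential}, so the plan reconstructs that argument: truncate the power series, then emulate each retained monomial with a deep, narrow ReLU network built from the Yarotsky squaring construction.

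\textbf{Step 1: truncate the power series.} Absolute convergence on $[-M,M]^{d_2}$ together with the margin $\delta$ gives geometric decay on $\tokensp_0$. For $\vx\in\tokensp_0$ we have $|x_i|\le M-\delta$, so a monomial of total degree $|\vk|=k$ satisfies
\[
|a_{\vk}\vx^{\vk}|\le |a_{\vk}|M^{k}\,\big(1-\tfrac{\delta}{M}\big)^{k}.
\]
Since $\sum_{\vk}|a_{\vk}|M^{|\vk|}<\infty$, the tail beyond total degree $N$ is at most $C\rho^N$ with $\rho=1-\delta/M<1$. Choosing $N\asymp\log\epsilon^{-1}$ makes the truncated polynomial $P_N$ $\epsilon/2$-close to $f$ uniformly on $\tokensp_0$. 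Alternatively, one can bound each coordinate degree by $N$; then $P_N$ has at most $(N+1)^{d_2}$ monomials.

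\textbf{Step 2: emulate $P_N$ with a narrow network.} The basic ingredient is the Yarotsky sawtooth construction: $x^2$ on a bounded interval is approximated to accuracy $\eta$ by a ReLU network of depth $O(\log\eta^{-1})$ and constant width. Via $xy=\tfrac12((x+y)^2-x^2-y^2)$, this also gives approximate multiplication.

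Each monomial $\vx^{\vk}$ of degree at most $d_2N$ is obtained by at most $d_2N$ successive multiplications. Each multiplication is carried to accuracy $\eta$, and the errors grow at most by a factor $\mathrm{poly}(M)^{N}$. It therefore suffices to take $\eta=\epsilon\,e^{-O(N)}/(N+1)^{d_2}$, so each multiplication costs depth $O(\log\eta^{-1})=O(\log\epsilon^{-1})$. One monomial then needs depth $O((\log\epsilon^{-1})^2)$.

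To keep the width constant, the monomials are evaluated sequentially rather than in parallel. The network uses $d_2$ channels that carry the input $\vx$ forward unchanged (ReLU can pass a bounded input via a shift), one channel that accumulates the partial sum $\sum a_{\vk}\vx^{\vk}$, and about three working channels for the current product. This gives width $d_2+4$. Chaining the $(N+1)^{d_2}$ monomials in series gives total depth
\[
(N+1)^{d_2}\cdot O((\log\epsilon^{-1})^2)=O((\log\epsilon^{-1})^{d_2+2}),
\]
which lies within the claimed $O((\log\epsilon^{-1})^{2d_2})$ for $d_2\ge2$. Incremental products that reuse previously computed ones give further slack. For $d_2=1$ the bound is recovered with a Horner-type scheme, which needs only $N$ multiplications.

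\textbf{Main obstacle.} The delicate part is error control under composition with the exact width budget of $d_2+4$. Approximate multiplication needs its inputs confined to a known box. The plan is to clip each intermediate product to $[-M^{k},M^{k}]$ with two ReLUs, which can only reduce the error, and to rescale inputs to $[0,1]$ before squaring. The coefficient sizes $|a_{\vk}|$ are absorbed because $\sum|a_{\vk}|M^{|\vk|}<\infty$, so the accumulated error is at most
\[
\eta\cdot\sum_{\vk}|a_{\vk}|\,(\text{growth})\lesssim\epsilon .
\]
Summing the truncation error from Step 1 and the emulation error from Step 2 gives $\sup_{\tokensp_0}|f-\hat f|\lesssim\epsilon$.
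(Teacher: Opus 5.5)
Your reconstruction follows the route of the cited source; the paper gives no proof of its own, only the reference to \citet{E2018Exponential}. You truncate the power series using the margin $\delta$ and $\sum_{\vk}|a_{\vk}|M^{|\vk|}<\infty$, then emulate the retained monomials by Yarotsky-type approximate multiplication in a deep narrow network that carries $\vx$ along. The truncation estimate and the choice of $\eta$ are fine. So is the depth count $O((\log\epsilon^{-1})^{d_2+2})$, with Horner for $d_2=1$, and it lies within the stated $O((\log\epsilon^{-1})^{2d_2})$.

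The step that does not hold as written is the width bookkeeping for $d_2\ge 2$. Consider multiplying the stored partial monomial $y$ by a carried coordinate $x_j$ via polarization. Here $y$ enters two of the squared quantities, e.g.\ $(x_j+y)^2$ and $y^2$, and the squaring passes run sequentially, so $y$ must be held through at least one full pass. That pass, in the standard sawtooth implementation, needs two ReLUs per layer: $\sigma(g)$ to carry the iterate and $\sigma(g-\tfrac12)$ for the fold. It also needs one neuron to accumulate the new product. That is four working neurons, not three. Together with your separate collector and the $d_2$ carried coordinates, your layout gives width $d_2+5$.

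For $d_2=1$ your Horner scheme avoids this, because the Horner state doubles as the collector, giving exactly $1+4$. For $d_2\ge 2$ you would need an additional idea to save that neuron, or simply take the constant $d_2+4$ from \citet{E2018Exponential}. Nothing downstream depends on it: \cref{corollary-first-mlp-analytic} only uses width $p_{1,j}\lesssim D+d$.
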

Here, the notation $\vx^{\vk}$ denotes the multivariate monomial $\prod_{i=1}^{d_2} x_i^{k_i}$, and the absolute convergence condition ensures that the power series uniformly converges on $[-M,M]^{d_2}$, enabling uniform approximation on the interior domain $\tokensp_0$.


As a corollary of \cref{lemma-first-mlp-analytic-e2018}, we obtain the following result:
\begin{corollary}
    \label{corollary-first-mlp-analytic}
    Let $d = d_1+d_2$. For a function $f$ such that
    \begin{equation}
        f:[-M,M]^{d_1+d_2} \to \mathbb{R}^{d+D},\quad 
        f(x_1,x_2)=\begin{bmatrix}
            x_1\\
            x_2\\
            e_1(x_2)\\
            \vdots\\
            e_D(x_2)
        \end{bmatrix},
    \end{equation}
    where $e_j$ are defined in \cref{assumption-regularity}.
    That is, $f$ preserves the first $d_1+d_2$ coordinates $(x_1, x_2)$ and augments them with $D$ analytic feature functions $e_j$ that depend only on $x_2$.
    There exists a network $\hat{f} \in \mathcal{F}(\ell_1,\vp_1,s_1,\infty)$, where $\ell_1 \lesssim (\log \epsilon_1^{-1})^{2d_2}$, $p_{1,j} \lesssim D+d$, $s_1 \lesssim d+(\log\epsilon_1^{-1})^{2d_2}\cdot d_2^2 \cdot D$, such that
    \begin{equation}
        \|f-\hat{f}\|_\infty \lesssim \epsilon_1.
    \end{equation}
\end{corollary}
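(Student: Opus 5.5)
We build $\hat f$ by running in parallel an exact identity channel for the first $d$ coordinates and $D$ copies of the network supplied by \cref{lemma-first-mlp-analytic-e2018}, one for each eigenfunction $e_j$, $j=1,\dots,D$. By \cref{assumption-regularity}, each $e_j$ is analytic on $[-M,M]^{d_2}$, admits an absolutely convergent power series there, and $\tokensp_0\subset[-M+\delta,M-\delta]^{d_2}$. So \cref{lemma-first-mlp-analytic-e2018} gives, for each $j$, a ReLU network $\hat e_j$ of depth $O((\log\epsilon_1^{-1})^{2d_2})$ and width $d_2+4$ with $\sup_{\tokensp_0}|e_j-\hat e_j|\lesssim\epsilon_1$. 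Since $\hat f$ must output both $x$ and the $\hat e_j(x_2)$, we stack these networks and the identity channel block-diagonally, with a common depth $\ell_1$.

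\textbf{Identity channel.} Each coordinate $t\in[-M,M]$ is carried through $\ell_1$ ReLU layers exactly via $t=\sigma(t)-\sigma(-t)$. This uses two neurons per coordinate per layer, i.e. width $2d$. The first layer splits $t$ into $(\sigma(t),\sigma(-t))$, the intermediate layers propagate each part with identity weights and zero bias, and the last layer recombines them. The parameter count is $O(d\,\ell_1)$.

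\textbf{Feature channels.} Each $\hat e_j$ reads only the $x_2$ coordinates of the input, which we arrange through the block structure of $W_0$. If the networks $\hat e_j$ have different depths, we pad the shorter ones to depth $\ell_1$ by appending identity layers $t=\sigma(t)-\sigma(-t)$ on their scalar outputs. This costs $O(\ell_1)$ extra nonzeros per $j$.

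\textbf{Size of the combined network.}
\begin{itemize}
\item Depth: $\ell_1\lesssim(\log\epsilon_1^{-1})^{2d_2}$.
\item Width: $2d+D(d_2+4)$. Treating $d_2$ as a constant, this is $\lesssim D+d$.
\item Sparsity: a dense layer of width $d_2+4$ has $O(d_2^2)$ nonzeros, so each $\hat e_j$ contributes $O(d_2^2(\log\epsilon_1^{-1})^{2d_2})$. Summing over $j$ gives $D\,d_2^2(\log\epsilon_1^{-1})^{2d_2}$, which is the stated bound on $s_1$.
\end{itemize}
The identity channel adds $O(d\,\ell_1)$. The stated bound lists this term only as $d$, which presumes the identity can be passed cheaply. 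If needed, we absorb this term by noting that $d_2^2D(\log\epsilon_1^{-1})^{2d_2}$ dominates under the regime of interest. Failing that, the $s_1$ bound is relaxed by this lower-order term. The sup-norm error is $\max_j\sup|e_j-\hat e_j|\lesssim\epsilon_1$ because the identity coordinates are exact.

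\textbf{Main obstacle: parameter bound.} The class $\mathcal{F}(\ell_1,\vp_1,s_1,\infty)$ requires $\max\{\|W_j\|_\infty,|v_j|_\infty\}\le1$. The network of \citet{E2018Exponential} may use weights larger than $1$, for instance from polynomial coefficients $a_{\vk}$ or from scaling by $M$. We handle this with the standard rescaling trick. A weight of size $B$ is split into a product of $\lceil\log_2B\rceil$ factors of size at most $2$, and each factor of $2$ is realized with unit weights via $2t=\sigma(t)+\sigma(t)-\sigma(-t)-\sigma(-t)$. Large weights are thus absorbed into additional depth and width of order $\log B$.

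Since these weights are polynomial in $\epsilon_1^{-1}$, the overhead is only an extra $\log\epsilon_1^{-1}$ factor. That factor is absorbed into the $(\log\epsilon_1^{-1})^{2d_2}$ bounds on $\ell_1$ and $s_1$ up to constants. The input bound $[-M,M]$ is handled the same way, by first scaling by $1/M$.
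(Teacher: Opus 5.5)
Your construction is the paper's: an exact pass-through of $(x_1,x_2)$ via $t=\sigma(t)-\sigma(-t)$, placed in parallel with $D$ copies of the \citet{E2018Exponential} network (one per $e_j$). This gives width $O(d+D)$ and $O(d_2^2\ell_1)$ nonzeros per feature channel. Where you depart from the paper, you are the more careful one.

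The paper's proof charges the pass-through only $O(1)$ depth and $O(d)$ parameters. Inside one network of depth $\ell_1$, however, it must traverse every layer. In fact, any depth-$\ell_1$ network that reproduces $x_1$ to sup-accuracy below $M$ needs at least $d_1$ nonzero rows in every weight matrix (fix $x_2$ and apply Borsuk--Ulam to the hidden-layer map on $\partial[-M,M]^{d_1}$). So what this construction actually certifies is $s_1\lesssim(d+d_2^2D)(\log\epsilon_1^{-1})^{2d_2}$.

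Your fallback of absorbing the $d\ell_1$ term works only when $d\lesssim d_2^2D$. It is not lower order in general, for instance in the query-independent regime where $d_1$ may be far larger than $D$. So state the relaxed bound rather than claim the original one.

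The paper also silently ignores the unit bound on weights and biases in $\mathcal{F}(\ell_1,\vp_1,s_1,\infty)$. Your doubling trick repairs this, but check that the total overhead is $O(\ell_1)$ layers. One way is to rescale every layer at once using positive homogeneity of ReLU, then restore the scale with $\sum_j\lceil\log_2 B_j\rceil$ doubling layers at the output. Your phrase ``an extra $\log\epsilon_1^{-1}$ factor'' would not be absorbed up to constants if that factor multiplied $\ell_1$.

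Finally, the error guarantee holds only for $x_2\in\tokensp_0$, not on all of $[-M,M]^{d_2}$. That is all that is used downstream.
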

\begin{proof}
    For the first $d_1$ indices, we simulate $x_{1,i} = -\mathrm{ReLU}(-\ve_i \ve_i^\top \cdot x_1)+\mathrm{ReLU}(\ve_i \ve_i^\top \cdot x_1)$.
    This requires $O(1)$ depth, $O(d_1)$ width, and $O(d_1)$ parameters.
    The $i+d_1$-th ($i=1,\dots,d_2$) indices require $O(1)$ depth, $O(d_2)$ width, and $O(d_2)$ parameters.
    We refer to \cref{lemma-first-mlp-analytic-e2018} for the rest of indices.
\end{proof}
In other words, each analytic component $e_j$ can be uniformly approximated by a ReLU network of logarithmic number of parameters, and the concatenated mapping $f$ can be represented by a block-structured network with parameter bounds as stated.

To evaluate the covering number, we use the following lemma:
\begin{lemma}[\cite{Schmidt-Hieber2020Nonparametric}]
    Let $V = \prod_{i=0}^\ell (p_i+1)$. Then,
    \begin{equation}
        \log \covering(\mathcal{F}(\ell, p, s, F);\delta)_\infty
        \lesssim (s+1)\log (\delta^{-1} \ell V).
    \end{equation}
\end{lemma}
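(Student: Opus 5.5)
The plan follows the argument of \citet{Schmidt-Hieber2020Nonparametric}. The covering is built directly in parameter space, in two steps. First, show that the network output depends Lipschitz-continuously on its parameters, uniformly over the (bounded) input domain. Second, discretize the sparse parameter vectors on a grid and count the grid points. The truncation $\|f\|_\infty\le F$ can be ignored: composing with the clipping map $t\mapsto (t\wedge F)\vee(-F)$ is $1$-Lipschitz, so it does not increase sup-norm distances.

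\textbf{Step 1 (parameter Lipschitzness).} Take two networks $f,f'$ with the same architecture $(\ell,\vp)$ and all weights and shifts in $[-1,1]$. Suppose every parameter of $f'$ differs from the corresponding one of $f$ by at most $\eta$. Assume inputs lie in a bounded box, which after rescaling we take to be $[0,1]^{p_0}$. Write $f=W_\ell\sigma_{\vv_\ell}\cdots$ and let $A_k$ denote the map up to layer $k$.
\begin{itemize}
\item Since entries are bounded by $1$ and ReLU is $1$-Lipschitz with $\sigma(0)=0$, induction gives $\|A_k(\vx)\|_\infty\le \prod_{i<k}(p_i+1)$.
\item Telescoping over layers, replace one layer's parameters at a time. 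Swapping layer $k$ costs at most $\eta\,(p_k+1)\,\|A_k(\vx)\|_\infty$. This error is then propagated through the remaining layers, which multiplies it by at most $\prod_{i>k}p_i$.
\item Summing over $k=0,\dots,\ell$ gives $\|f-f'\|_\infty\le \eta(\ell+1)V$ with $V=\prod_{i=0}^\ell(p_i+1)$.
\end{itemize}
This bookkeeping is the only delicate part. The main obstacle is getting the product structure exactly as $V$, rather than picking up extra powers, so that the final logarithm is $\log(\delta^{-1}\ell V)$.

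\textbf{Step 2 (discretization and counting).} Set $\eta=\delta/((\ell+1)V)$ and round every nonzero parameter to the grid $\eta\mathbb{Z}\cap[-1,1]$, keeping zeros as zeros. Rounding preserves the sparsity pattern. By Step 1, the rounded network is within $\delta$ of the original in sup-norm, so the rounded networks form a $\delta$-cover.

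To count them, let $T$ be the total number of parameters, with $T\le \sum_k (p_k+1)p_{k+1}\le(\ell+1)V^2$.
\begin{itemize}
\item The number of sparsity patterns with at most $s$ nonzeros is $\sum_{j\le s}\binom{T}{j}\le (T+1)^{s}$.
\item Each nonzero entry takes at most $2/\eta+1$ grid values.
\end{itemize}
Hence
\[
\log\covering(\mathcal{F}(\ell,\vp,s,F);\delta)_\infty\le s\log(T+1)+s\log\!\big(2\delta^{-1}(\ell+1)V+1\big).
\]
Using $T+1\le 2(\ell+1)V^2$, both terms are $\lesssim (s+1)\log(\delta^{-1}\ell V)$, since $\log V^2=2\log V$ and constants are absorbed. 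This gives the stated bound.
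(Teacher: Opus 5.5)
Your proposal is correct and is essentially the proof of Lemma~5 in \citet{Schmidt-Hieber2020Nonparametric}, which the paper cites without reproducing: sup-norm Lipschitz dependence on the parameters via layerwise telescoping, rounding the nonzero parameters to an $\eta$-grid, and counting sparsity patterns and grid values. One small caveat concerns your inequality $T\le(\ell+1)V^2$. It is not automatic under the paper's $V=\prod_{i=0}^{\ell}(p_i+1)$, because that product omits the output width $p_{\ell+1}$, whereas Schmidt-Hieber's $V$ runs up to $p_{\ell+1}$ precisely for this counting step. It does hold as soon as $p_{\ell+1}\le V$, e.g.\ for scalar outputs, so nothing changes for the paper's purposes.
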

The covering number is
\begin{align}
    \log \covering(\mathcal{F}(\ell_1,p_1,s_1);\epsilon_1)_\infty \lesssim& (d+(\log\epsilon_1^{-1})^{2d_2}\cdot D) \cdot (\log \epsilon_1^{-1}+\log \log \epsilon_1^{-1}+(\log\epsilon^{-1})^{2d_2}\log (d+D) ))\\
    \lesssim & C_{D,\epsilon_1} (d+D(\log \epsilon_1^{-1})^{4d_2}),
\end{align}
where $C_{D,\epsilon_1} \lesssim \mathrm{poly}\log D + \mathrm{poly}\log d + \mathrm{poly}\log \log \epsilon_1^{-1}$.

\subsection{Step 2-2: First Attention Layer}
\label{subsection-first-attention}


In the preceding section, we have constructed an MLP layer capable of approximating
the basis functions $e_j$ (Assumption~\ref{assumption-regularity}) with high accuracy.
The role of the first attention layer is now to process an input mixture measure
$\nu_f$, identify the \emph{associated measure} corresponding to a given query
component $y_{f,i}$, and then output the concatenation of the raw query coordinates
with the integrals of $e_j$ against that associated measure.
Formally, this operation is realized by the mapping $\phi_2$ defined below.

We now analyze the approximation properties and complexity of the first attention
layer in our architecture.
Recall that the attention operator $\attention_\theta$ has already been defined in
the measure-theoretic form
\[
    \attention_\theta:
    \mathcal{P}(\mathbb{R}^{\dimattn}) \times \mathbb{R}^{\dimattn}
    \to \mathbb{R}^{\dimattn},
\]
where the first argument is a probability measure over token representations and
the second argument is the query vector.
The following lemmas show that, under appropriate structural assumptions on the
input measures and functions:
\begin{enumerate}
    \item the target mapping $\phi_2$ can be realized to accuracy $\epsilon_2$ by a
    member of the attention class $\mathcal{A}(\dimattn,H,B_a,S_a)$ (\cref{lemma-first-attention-approximation});
    \item such attention mappings are Lipschitz continuous with an explicit bound in
    terms of the model parameters (\cref{lemma-first-attention-lipschitz});
    \item the $\epsilon_2$-covering number of $\mathcal{A}$ admits an upper bound
    in the parameter regime above (\cref{lemma-first-attention-covering-number}).
\end{enumerate}
We present these results in turn.

Before presenting Lemma~\ref{lemma-first-attention-approximation}, we clarify the role of the mapping $\phi_2$ in the composition-of-maps view (cf. \cref{definition-composition-mappings}). In our construction, the first MLP layer $\phi_1$ approximates the Mercer features:
\[
    \phi_1(\mu,x)\ \simeq\
    \begin{bmatrix}
        x_1 \\ \vdots \\ x_d \\
        e_1(x_{d_1+1:d}) \\ \vdots \\ e_D(x_{d_1+1:d})
    \end{bmatrix},
\]
where $\{e_j\}_{j\ge1}$ is the Mercer (RKHS) eigenbasis on $\tokensp_0\subset\mathbb{R}^{d_2}$.
Accordingly, the push-forward measure after $\phi_1$ is
\begin{equation}
    \mu_1 := \big(\phi_1(\mu,\cdot)\big)_{\sharp}\mu \ \in\ \mathcal{P}(\mathbb{R}^{d+D}),
\end{equation}
and the composition rule yields
\begin{equation}
    (\phi_2 \diamond \phi_1)(\mu,x) \ =\ \phi_2\big(\mu_1,\,\phi_1(\mu,x)\big).
\end{equation}
In the present setting, $\phi_2$ preserves the first $d$ coordinates and replaces the last $D$ coordinates by the (component-wise) integrals of the associated measure against the Mercer basis:
\begin{equation}
    \phi_2\big(\mu_1,\,\phi_1(\mu,x)\big) \ =\
    \begin{bmatrix}
        x_1 \\ \vdots \\ x_d \\
        \displaystyle \int e_1\,\mathrm d\mu_0^{(i^*)} \\
        \vdots \\
        \displaystyle \int e_D\,\mathrm d\mu_0^{(i^*)}
    \end{bmatrix},
\end{equation}
where $i^*$ denotes the index of the associated component selected by the query.
Moreover, if $\mu_0^{(i^*)}$ admits a (Borel) density w.r.t.\ a reference measure $\lambda$, say
\(
    \frac{\mathrm d\mu_0^{(i^*)}}{\mathrm d\lambda} \,=\, f_{\mu_0^{(i^*)}}
\)
with $f_{\mu_0^{(i^*)}}\in L^2(\lambda)$ (cf. \cref{remark_of_measures_and_densities}),
then, writing the Mercer expansion
\(
    f_{\mu_0^{(i^*)}} \,=\, \sum_{j\ge1} b_j e_j,
\)
we have, up to the immaterial constant term,
\begin{equation}
    \int e_j \,\mathrm d\mu_0^{(i^*)}
    \;=\;
    \int e_j\,f_{\mu_0^{(i^*)}}\,\mathrm d\lambda
    \;=\;
    \langle e_j, f_{\mu_0^{(i^*)}}\rangle_{L^2(\lambda)}
    \;=\; b_j.
\end{equation}
Hence $\phi_2$ produces, in its last $D$ coordinates, the (truncated) Mercer coefficients of the associated density.

\cref{lemma-first-attention-approximation} shows that, under our structural assumptions on the input measures
and the transformation $f$, the target mapping $\phi_2$ can be uniformly approximated
to accuracy $\epsilon_2$ by an attention mechanism with bounded parameters.

\paragraph{Intuitions of \cref{lemma-first-attention-approximation}.}
The key point is in the structure of the first attention layer: For a fixed $j$, to extract the \textit{associated} Mercer coefficient $b_j = \int e_j\,\dd\mu_0^{(i^*)}$, we construct QK-matrix $W_{QK}^j$ such that, with tokens mapped by the (simplified) first MLP layer $(x_1,x_2)\mapsto \psi_j(x_1,x_2)=(x_1,e_j(x_2))$,
\begin{equation}
    \psi_j(\query)^\top W_{QK}^j \psi_j(y) 
    =
    \begin{cases}
        \gg 1 & \text{if}\quad i=i^*;\\
        \leq 0 & \text{if}\quad i\neq i^*,\\
    \end{cases}
    \quad \text{for}\quad y \sim \mu_{v^{(i)}}^{(i)} = \delta_{v^{(i)}} \otimes \mu_0^{(i)}.
\end{equation}
\begin{figure}[htbp]
\begin{center}
\includegraphics[width=70mm]{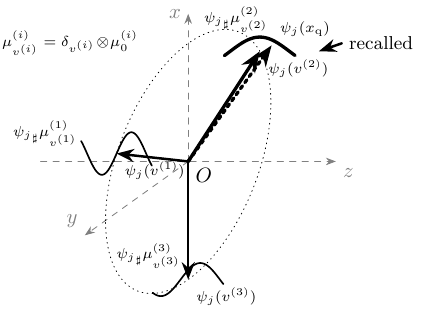}
\caption{Geometric sketch of associative recall. Components $\mu_{v^{(i)}}^{(i)}$ are separated along a feature axis via the first MLP layer $f_1$; the query maps to $\psi_j(x_q)$ and aligns with anchor $\psi_j(v^{(i^*)}) \coloneq \psi_j(({v^{(i^*)}}^\top,\boldsymbol{0}^\top)^\top)$, thereby recalling $\mu_0^{(i^*)}$. The pushforward ${\psi_j}_\sharp\mu^{(i^*)}_{v^{(i^*)}}$ provides features used by $F^\star(\mu^{(i^*)},x_q)$.}
\label{fig:recall-proof}
\end{center}
\end{figure}

Then, the softmax value will be
\begin{equation}
        \mathrm{Softmax}(\psi_j(\query)^\top W_{QK}^j \psi_j(y) )
    \simeq \mathbbm{1}[i=i^*].
\end{equation}

The construction is simple: take $W_{QK}^j \propto \sum_{k=1}^{d_1} \ve_k \ve_k^\top $ such that $\psi_j(\query)^\top W_{QK}^j \psi_j(y) \propto \langle {v^{(i^*)}}, {v^{(i)}}\rangle$, and multiply a large scalar, where $\ve_k$ is a $k$-th one-hot vector, since $\psi$ preserves the first $d$ coordinates: $\psi(\query) = [{v^{(i^*)}}^\top;\ast]^\top$, $\psi(y) = [{v^{(i)}}^\top;\ast]^\top$, and $({v^{(i)}})_{i}$ are distinguishable (i.e. $\langle {v^{(i)}}, {v^{(j)}}\rangle \leq 0$ for $i\neq j$), as described in \cref{fig:recall-proof}.
Then, with tokens mapped by the first MLP layer $f_1$, we have the $(d+j)$-th output of the $j$th head is given by
\begin{align}
    &\ve_{d+1}^\top \underbrace{W}_{\ve_{d+1}\ve_{d+1}^\top} \int \underbrace{\mathrm{Softmax}(\psi_j(\query^\top) W_{QK}^j \psi_j(y))}_{\simeq \mathbbm{1}[i=i^*]} \underbrace{V}_{\ve_{d+1} \ve_{d+1}^\top}\psi_j(y)\dd \nu(y)\\
    \simeq&\int \mathbbm{1}[i=i^*] e_j(y_{d_1+1:d})\dd 
    \left(
        \sum_i \delta_{v^{(i)}} \otimes \mu_0^{(i)}
    \right)(y)\\
    \simeq &\int e_j(y) \dd \mu_0^{(i^*)}(y),
\end{align}
where we take the parameters as $W=V = \ve_{d+1} \ve_{d+1}^\top$. Finally, the second MLP layer maps $((\int e_j\,d\mu_0^{(i^*)})_{i=1}^D,v^{(i^*)})$ to $y$, and the second attention acts as a skip connection.

\begin{lemma}
    \label{lemma-first-attention-approximation}
    Let $\mathcal{P}_f$ denote the set of probability measures of the form
    \[
        \nu_f = \frac{1}{I}\sum_{i=1}^I f_\sharp \mu_i 
        = f_\sharp \left(\frac{1}{I}\sum_{i=1}^I \mu_i\right),
    \]
    where the measures $\mu_i$ and the mapping $f$ satisfy the following conditions:
    \begin{itemize}
        \item $\mu_i$ is a probability measure supported on a bounded subset of $\mathbb{R}^{d}$, where $d=d_1+d_2$, and admits the product form
        \[
            \mu_i = \delta_{v_i} \otimes \tilde{\mu}_i,
        \]
        with $v_i \in \mathbb{S}^{d_1} \subset [-B_x,B_x]^{d_1}$ satisfying $\langle v_i, v_j\rangle \leq 0$ for all $i\neq j$, and where $\tilde{\mu}_i$ is a probability measure supported on a bounded subset of $\mathbb{R}^{d_2}$.
        \item $f:\mathbb{R}^{d} \to \mathbb{R}^{d+D}$ is given by
        \begin{equation}
            f(x) = 
            \begin{bmatrix}
                x_1\\
                \vdots\\
                x_d\\
                \tilde{f}(x_{d_1+1},\dots,x_{d})
            \end{bmatrix},
        \end{equation}
        where $\tilde{f}:\mathbb{R}^{d_2}\to\mathbb{R}^{D}$ is a bounded function.
        \item $I \leq d_1$ and $\mathrm{supp}(f_\sharp \mu_i) \subset [-B_y,B_y]^{d+D}$ for all $i$.
    \end{itemize}
    Moreover, for each $i$, define $x_i \coloneq [v_i^\top,0^\top]^\top \in \mathbb{S}^{d_1}\times \{0_{d_2}\}$ and set $y_{f,i} \coloneq f(x_i)$.

    Define the mapping $\phi_2:\mathcal{P}_f \times \mathbb{R}^{d+D}\to \mathbb{R}^{d+D}$ by
    \begin{equation}
        \phi_2(\nu_f,y_{f,i}) \coloneq 
        \begin{bmatrix}
            y_{f,i,1}\\
            \vdots\\
            y_{f,i,d}\\
            \int \dd (\tilde{f}_{1})_\sharp\tilde{\mu}_i\\
            \vdots\\
            \int \dd (\tilde{f}_{D})_\sharp\tilde{\mu}_i
        \end{bmatrix},
    \end{equation}
    where $\tilde{f}_j$ denotes the $j$-th coordinate function of $\tilde{f}$.

    Then, there exists an attention operator $\hat{\attention} \in \mathcal{A}(\dimattn,H,B_a,S_a)$ such that
    \begin{equation}
        \sup_{\nu_f \in \mathcal{P}_f,\,y_{f,i}:v_i\in\mathbb{R}^{d}}
        \left\|
            \phi_2(\nu_f,y_{f,i}) - \hat{\attention}(\nu_f,y_{f,i})
        \right\|_{\infty} \leq \epsilon_2,
    \end{equation}
    where $\dimattn = d+D$, $H=D$, $B_a \lesssim \sqrt{\log (I\cdot\epsilon_2^{-1})}$, and $S_a = d$.
\end{lemma}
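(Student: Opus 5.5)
We give an explicit construction following the intuition sketched before the lemma. Set the skip matrix $A \coloneq \sum_{k=1}^{d} \ve_k\ve_k^\top$, so that the skip path reproduces the first $d$ coordinates of $y_{f,i}$. We use $H=D$ heads. For head $j$ we take:
\begin{itemize}
\item $Q^j \coloneq \beta\sum_{k=1}^{d_1}\ve_k\ve_k^\top$ and $K^j \coloneq \beta\sum_{k=1}^{d_1}\ve_k\ve_k^\top$, with a scalar $\beta>0$ to be tuned;
\item $V^j \coloneq \ve_{d+j}\ve_{d+j}^\top$ and $W^j \coloneq \ve_{d+j}\ve_{d+j}^\top$.
\end{itemize}
The skip path contributes zero to the last $D$ coordinates, since $(y_{f,i})_{d+1:d+D}=\tilde f(0)$ is killed by $A$. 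Head $j$ contributes only to coordinate $d+j$, so the output splits coordinatewise.

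Since $f$ preserves the first $d_1$ coordinates, the score for the query $y_{f,i^*}$ and a token $y\in\mathrm{supp}(f_\sharp\mu_i)$ is $\beta^2\langle v_{i^*},v_i\rangle$. This is constant on each component: it equals $\beta^2$ for $i=i^*$ (because $v_i\in\mathbb{S}$) and is at most $0$ for $i\neq i^*$ by the separation assumption. The softmax density with respect to $\nu_f$ is therefore piecewise constant in $i$. Writing $s_i=\beta^2\langle v_{i^*},v_i\rangle$, the weight on component $i$ is $w_i = e^{s_i}/\sum_{k}e^{s_k}$; the $1/I$ factors cancel between numerator and denominator. The $(d+j)$-th output is then exactly
\[
\sum_{i=1}^I w_i\int \tilde f_j\,\dd\tilde\mu_i .
\]

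The error against the target $\int\tilde f_j\,\dd\tilde\mu_{i^*}$ is at most $2B_y(1-w_{i^*})$, using $|\tilde f_j|\le B_y$. We bound
\[
1-w_{i^*}\le \frac{(I-1)}{e^{\beta^2}+I-1}\le I e^{-\beta^2}.
\]
Choosing $\beta^2=\log(2B_yI\epsilon_2^{-1})$ makes the error at most $\epsilon_2$, uniformly over $\nu_f\in\mathcal P_f$ and all queries. This gives $B_a\lesssim\sqrt{\log(I\epsilon_2^{-1})}$, with $B_y$ absorbed as a constant. All matrices have entries bounded by $\max(1,\beta)$.

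For sparsity, each of $Q^j,K^j$ has $d_1\le d$ nonzeros, $V^j,W^j$ have one, and $A$ has $d$, so $S_a=d$. Boundedness of the output follows from $|\text{output}|\le B_y$ coordinatewise.

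The main obstacle is the uniformity of the softmax-concentration bound, namely ensuring that the gap between the $i^*$ score and the others does not degrade. This is where the hypotheses $\|v_i\|=1$ and $\langle v_i,v_{i'}\rangle\le0$ are essential: they give a gap of at least $\beta^2$ independent of $d_1$ and of the content measures. If the parameterization requires $\|Q^j\|_\infty,\|K^j\|_\infty$ bounds on the individual factors, splitting $\beta$ evenly between $Q$ and $K$ as above yields exactly the claimed square-root dependence.
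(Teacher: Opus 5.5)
Your proposal is correct and is essentially the paper's construction: the same $Q^h=K^h$ acting as a scaled identity on the first $d_1$ coordinates with scale $\sqrt{\log(I\epsilon_2^{-1})}$, the same $W^h=V^h=\ve_{d+h}\ve_{d+h}^\top$, and the same skip matrix $A=\mathrm{diag}(I_d,0)$. Your bookkeeping through the component weights $w_i$ is slightly cleaner than the paper's. It handles the general case $\langle v_i,v_{i'}\rangle\le 0$ directly, whereas the paper's displayed normalization implicitly assumes orthogonality. It also gives $\beta^2=\log(2B_yI\epsilon_2^{-1})$ instead of the paper's $\log(I^3\epsilon_2^{-1})$.
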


\begin{proof}
Fix an arbitrary $h \in \{1,\dots,H=D\}$.  
We first specify the attention weight matrices as follows:
\begin{equation}
    W^h = V^h = \ve_{d+h} \ve_{d+h}^\top,
\end{equation}
where $d = d_1 + d_2$ and $\ve_{d+h}$ denotes the $(d+h)$-th standard basis vector in $\mathbb{R}^{d+D}$.  
Similarly, define
\begin{equation}
    Q^h = K^h = c
    \begin{bmatrix}
        I_{d_1} & O\\
        O & O_{d_2+D}
    \end{bmatrix},
\end{equation}
for a sufficiently large constant $c \gtrsim \sqrt{\log (I^3 \cdot\epsilon_2^{-1})} \gtrsim \sqrt{\log (I \cdot\epsilon_2^{-1})}$.

The corresponding attention weight for a query $x_i$ and a key $f(y)$ is given by
\begin{align}
    &\mathrm{Softmax}\big(\langle Q^h x_i,\, K^h f(y) \rangle\big) \\
    &= \frac{\exp\big(c^2 \langle x_{i,1:d_1},\, y_{1:d_1} \rangle\big)}{\int \exp\big(c^2 \langle x_{i,1:d_1},\, y_{1:d_1} \rangle\big) \, \mathrm{d}\nu_f(y)} \\
    &= \frac{\exp\big(c^2 \langle x_{i,1:d_1},\, y_{1:d_1} \rangle\big)}{I^{-1} \sum_{i'=1}^I \int \exp\big(c^2 \langle x_{i,1:d_1},\, y_{1:d_1} \rangle\big) \, \mathrm{d}(\delta_{v_{i'}} \otimes \tilde{\mu}_{i'})(y)} \\
    &= \frac{\exp\big(c^2 \langle x_{i,1:d_1},\, y_{1:d_1} \rangle\big)}{I^{-1} \sum_{i'=1}^I \exp\big(c^2 \langle v_{i'},\, y_{1:d_1} \rangle\big)} \\
    &= \frac{\exp(c^2)}{\frac{1}{I} \left(\exp(c^2) + (I-1)\right)} \, \mathbbm{1}_{\{y_{1:d_1} = v_i\}} \quad (v_i \perp v_j, \; i \neq j)\\
    &= I \cdot \mathbbm{1}_{\{y_{1:d_1} = v_i\}} + O(I^{-1}\epsilon_2),
\end{align}
where the indicator function $\mathbbm{1}_{\{y_{1:d_1} = v_i\}}$ arises because the keys $y_{1:d_1}$ take values in the finite set $\{v_1,\dots,v_I\}$ with mutually nonpositive inner products.  
The last equality follows from the choice $c \gtrsim \sqrt{\log (I\cdot \epsilon_2^{-1})}$, which ensures exponential separation of the correct key from the others.

Next, applying the value and output projection matrices, we obtain
\begin{align}
    &W^h \int \mathrm{Softmax}\big(\langle Q^h x_i,\, K^h y\rangle\big) V^h y \,\mathrm{d}\nu_f(y) \\
    &= \frac{1}{I} \sum_{i'=1}^I W^h \int \mathrm{Softmax}\big(\langle Q^h x_i,\, K^h f(y)\rangle\big) V^h f(y) \,\mathrm{d}\mu_{i'}(y) \\
    &= \frac{1}{I} \sum_{i'=1}^I \int \left(I \cdot \mathbbm{1}_{\{y_{1:d_1} = v_i\}} + O(I^{-1}\epsilon_2)\right) 
    \, \ve_{d+h} \left( \tilde{f}_h (y_{d_1+1:d}) + O(\epsilon_1) \right) 
    \,\mathrm{d}(\delta_{v_{i'}} \otimes \tilde{\mu}_{i'})(y) \\
    &\quad \left(\text{where } \ve_{d+h}^\top \tilde{f} = \tilde{f}_h\right) \\
    &= \frac{1}{I} \sum_{i'=1}^I 
        \left(
            \int \left(I \cdot \mathbbm{1}_{\{y_{1:d_1} = v_i\}} + O(I^{-1}\epsilon_2)\right) \,\mathrm{d}\delta_{v_{i'}}(y_{1:d_1})
        \right)
        \left(
            \int \tilde{f}_h (y_{d_1+1:d}) \,\mathrm{d}\tilde{\mu}_{i'}(y_{d_1+1:d})
        \right) \ve_{d+h} \\
    &= \left(\int \tilde{f}_h(y_{d_1+1:d})\,\mathrm{d} \tilde{\mu}_i(y_{d_1+1:d})\right) \ve_{d+h} + O(\epsilon_2).
\end{align}

Finally, to incorporate the skip connection over the first $d$ coordinates, let
\begin{equation}
    A = 
    \begin{bmatrix}
        I_{d} & O\\
        O & O
    \end{bmatrix}.
\end{equation}
Applying $A$ to the input vector $y_{f,i}$ yields
\begin{equation}
    A y_{f,i} =
    \begin{bmatrix}
        y_{f,i,1}\\
        \vdots\\
        y_{f,i,d}\\
        0_{D}
    \end{bmatrix}.
\end{equation}

Combining the attention output for each head $h$ with this skip connection reproduces the target mapping $\phi_2$ up to an error of order $O(\epsilon_2)$ in the $\ell_\infty$ norm. This establishes the desired approximation property.
\end{proof}

\begin{remark}[Why do we need a softmax attention?]
    \label{remark-why-softmax}
    We informally demonstrate how linear attentions struggle with one-hot selection of densities without orthogonality.
    The main problem is that the context vectors $(v^{(i)})_{i}$ may have a negative correlation. For example, we consider $I=2$ and
    \begin{equation}
        v^{(1)} = -v^{(2)}.
    \end{equation}
    If we only have access to a linear attention, with the same QK matrices in the lemma, 
    \begin{align}
        \mathrm{LinAttn}\big(\langle Q^h f(x_i),\, K^h f(y) \rangle\big) \simeq \langle v^{(i)}, v^{(i^*)}\rangle=
        \begin{cases}
            1&\text{if}\quad i=i^*;\\
            -1 &\text{if}\quad i\neq i^*.\\
        \end{cases}
    \end{align}
    This implies that it is hard for linear attentions to extract \textit{only} the $i^*$-th measure through integration 
    $\int \mathrm{LinAttn}(\langle Q^h x_i,\, K^h y \rangle)  V^h y \dd \nu_f (y)$. 
    See \citet{han2024bridging,fan2025rectifying} for empirical discussions; 
    see also \citet{kim2024incontext}, where strong assumptions such as relaxed sparsity and orthogonality of recall candidates were required to bypass this difficulty.
\end{remark}

Thus, the first attention layer is expressive enough to implement the ``association and extraction''
operation: given a mixture, it can select the relevant component and compute the
$e_j$-integrals needed for downstream processing.
We next turn to the \emph{stability} of such an operator with respect to perturbations
in both the measure and the query vector.

The following lemma establishes a Lipschitz property of $\attention_\theta$ in both arguments. This lemma is inspired by \citet{vuckovic2020mathematical}.
This quantitative stability will be essential for the subsequent covering number analysis.

The attention operator $\attention_\theta$ computes a weighted average of values using a softmax over inner products $\langle Q^h x, K^h y\rangle$.  
To bound its change when $(\mu,x)$ varies, we split the effect of $\mu$ and $x$.

For the measure part, Kantorovich--Rubinstein duality expresses the $1$-Wasserstein distance $W_1$ (the standard Wasserstein metric) as the supremum of expectation differences over $1$-Lipschitz functions, allowing us to control the change via the Lipschitz constant of the softmax kernel.  

For the query part, we directly bound the kernel’s Lipschitz dependence on $x$ and apply sparsity of the output projection.  
Combining both yields the stated Lipschitz bound.

\begin{lemma}
    \label{lemma-first-attention-lipschitz}
    Let $\attention_\theta \in \mathcal{A}(\dimattn, H, B_a, S_a)$ be the attention operator as defined in~\cref{main-section-settings}.
    Assume that the query inputs satisfy $\|x_1\|_\infty,\|x_2\|_\infty \leq B_x$, and that for each $i\in\{1,2\}$, every $y \in \mathrm{supp}(\mu_i)$ satisfies $\|y\|_\infty \leq B_y$.
    Then $\attention_\theta$ is Lipschitz in the joint variable $(\mu,x)$ in the sense that
    \begin{align}
        &\|\attention_\theta(\mu_1,x_1) - \attention_\theta(\mu_2,x_2)\|_\infty \notag \\
        \lesssim &\;H \exp(O( S_a^2 B_a^2 B_x B_y)) \cdot \big( W_1(\mu_1,\mu_2) + \|x_1 - x_2\|_2 \big).
    \end{align}
    Moreover, if $\dimattn \lesssim D$, $H \lesssim D$, $B_a \lesssim \sqrt{\log (I\epsilon_2^{-1})}$, and $S_a \lesssim d$, $B_x, B_y \lesssim 1$, then the Lipschitz constant is bounded by $D\,\exp(O(d^2\log(I \epsilon_2^{-1})))$.
\end{lemma}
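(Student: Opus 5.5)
The skip term $Ax$ is linear in $x$ with $\|A\|_\infty\le B_a$ and $\|A\|_0\le S_a$. It therefore contributes at most $S_a B_a\|x_1-x_2\|_2$ and can be absorbed into the final constant. The work is in one head $h$, since the $H$ heads are then handled by the triangle inequality, which gives the factor $H$. For a fixed head, write $a(x,y)=\langle Q^h x,K^h y\rangle$ and $g(x,y)=\exp(a(x,y))$. The head output is
\[
\Phi(\mu,x)=\frac{\int g(x,y)\,V^h y\,\mathrm{d}\mu(y)}{\int g(x,y)\,\mathrm{d}\mu(y)}.
\]
By sparsity, $Q^h$ and $K^h$ each have at most $S_a$ nonzero entries, each bounded by $B_a$. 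Hence $\|Q^h x\|_2\le S_aB_aB_x$ and $\|K^h y\|_2\le S_aB_aB_y$, so $|a|\le S_a^2B_a^2B_xB_y$. The kernel $g$ is therefore trapped in $[e^{-\kappa},e^{\kappa}]$ with $\kappa:=S_a^2B_a^2B_xB_y$.

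Next I bound the Lipschitz constants of $g$. In $y$, $\nabla_y a=(K^h)^\top Q^h x$, so $\mathrm{Lip}_y(g)\le e^{\kappa}S_a^2B_a^2B_x$. In $x$, symmetrically, $\mathrm{Lip}_x(g)\le e^{\kappa}S_a^2B_a^2B_y$. The same bounds, multiplied by $\|V^h y\|\le S_aB_aB_y$ and corrected by $\|V^h\|$ for the $y$-dependence of $V^h y$, hold for the numerator integrand $g\,V^h y$. All these constants are of the form $\mathrm{poly}(S_a,B_a,B_x,B_y)\,e^{\kappa}$, and the polynomial prefactor can be absorbed into $\exp(O(\kappa))$.

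I then split the difference as $\Phi(\mu_1,x_1)-\Phi(\mu_2,x_2)=[\Phi(\mu_1,x_1)-\Phi(\mu_2,x_1)]+[\Phi(\mu_2,x_1)-\Phi(\mu_2,x_2)]$. For the measure part, write $\Phi=N/Z$ and use
\[
|N_1/Z_1-N_2/Z_2|\le |N_1-N_2|/Z_1+|N_2|\,|Z_1-Z_2|/(Z_1Z_2).
\]
Kantorovich--Rubinstein duality, applied to the Lipschitz-in-$y$ integrands, gives $|N_1-N_2|,|Z_1-Z_2|\lesssim e^{\kappa}\mathrm{poly}\cdot W_1(\mu_1,\mu_2)$. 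The lower bound $Z\ge e^{-\kappa}$ then yields a total factor $e^{3\kappa}\mathrm{poly}=\exp(O(\kappa))$. The query part is analogous, using the pointwise Lipschitz bound in $x$ integrated against a fixed $\mu_2$. Finally, $W^h$ is $S_a$-sparse with entries bounded by $B_a$, which contributes only a polynomial factor in $\ell_\infty$. Summing over heads gives $H\exp(O(S_a^2B_a^2B_xB_y))(W_1+\|x_1-x_2\|_2)$.

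For the specialization, substitute $S_a\lesssim d$, $B_a^2\lesssim\log(I\epsilon_2^{-1})$, $B_x,B_y\lesssim1$ and $H\lesssim D$. This gives $\kappa=O(d^2\log(I\epsilon_2^{-1}))$ and hence the constant $D\exp(O(d^2\log(I\epsilon_2^{-1})))$. I expect the main obstacle to be the denominator: the normalizer can be as small as $e^{-\kappa}$, and this is why the constant is exponential rather than polynomial. The care needed is to make sure the $1/Z^2$ term only costs $e^{2\kappa}$, which is still $\exp(O(\kappa))$, and that no hidden dimension factor enters. Such a factor would only be $\dimattn$-dependent, since the $\ell_2$ versus $\ell_\infty$ conversions cost $\sqrt{\dimattn}$, and it is absorbed because $\dimattn\lesssim D$ and the $H$ prefactor already carries $D$.
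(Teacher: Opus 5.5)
Your proposal is correct and follows essentially the same route as the paper: a per-head triangle inequality, the sparsity bound $|\langle Q^h x, K^h y\rangle|\le S_a^2B_a^2B_xB_y$, a numerator/denominator split controlled by the lower bound on the normalizer, Kantorovich--Rubinstein duality for the measure part, a pointwise Lipschitz estimate in $x$ for the query part, and a separate $S_aB_a$ bound for the skip term. Two immaterial points: your crude count actually gives $e^{4\kappa}$ rather than $e^{3\kappa}$, as in the paper, unless you use $|N_2|/Z_2\le S_aB_aB_y$. Also, absorbing the polynomial prefactors into $\exp(O(\kappa))$ tacitly needs a regime such as $B_xB_y\gtrsim 1$; this imprecision is shared by the paper's own statement, and your specialization to $D\exp(O(d^2\log(I\epsilon_2^{-1})))$ is unaffected by it.
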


\begin{proof}
    \textbf{Bounding the difference in $\mu$.}\quad
    We first bound the difference in the $\mu$-variable while keeping the query fixed.
    By (P-i) the matrix sparsity bound $\|A v\|_\infty \leq s b \|v\|_\infty$ when $A$ has at most $s$ nonzero entries per row and each entry bounded by $b$, we have
    \begin{align}
        &\|\attention_\theta(\mu_1,x_1)-\attention_\theta(\mu_2,x_1)\|_\infty\\
        \leq&
        \sum_h S_a^2 B_a^2\left\|\int\frac{y\exp\qty(\langle Q^h x, K^hy\rangle)}{\int \exp\qty(\langle Q^h x_1, K^hz\rangle) \dd \mu_1(z)}\dd\mu_1-\int\frac{y\exp\qty(\langle Q^h x, K^hy\rangle)}{\int \exp\qty(\langle Q^h x_1, K^hz\rangle) \dd \mu_2(z)}\dd\mu_2
        \right\|_\infty
    \end{align}
    Next, Inserting intermediate terms to align denominators and numerators, we obtain,
    \begin{align}
        &\|\attention_\theta(\mu_1,x_1)-\attention_\theta(\mu_2,x_1)\|_\infty\\
        \leq&\sum_h S_a^2 B_a^2\left\|\int\frac{y\exp\qty(\langle Q^h x, K^hy\rangle)}{\int \exp\qty(\langle Q^h x_1, K^hz\rangle) \dd \mu_1(z)}\dd\mu_1-\int\frac{y\exp\qty(\langle Q^h x, K^hy\rangle)}{\int \exp\qty(\langle Q^h x_1, K^hz\rangle) \dd \mu_2(z)}\dd\mu_1
        \right.\\
        &+
        \left.\int\frac{y\exp\qty(\langle Q^h x_1, K^hy\rangle)}{\int \exp\qty(\langle Q^h x_1, K^hz\rangle) \dd \mu_2(z)}\dd\mu_1-\int\frac{y\exp\qty(\langle Q^h x, K^hy\rangle)}{\int \exp\qty(\langle Q^h x_1, K^hz\rangle) \dd \mu_2(z)}\dd\mu_2
        \right\|_{\infty}\\
        \leq&\underbrace{\sum_h S_a^2 B_a^2\left\|\int\frac{y\exp\qty(\langle Q^h x_1, K^hy\rangle)}{\int \exp\qty(\langle Q^h x_1, K^hz\rangle) \dd \mu_1(z)}\dd\mu_1-\int\frac{y\exp\qty(\langle Q^h x, K^hy\rangle)}{\int \exp\qty(\langle Q^h x_1, K^hz\rangle) \dd \mu_2(z)}\dd\mu_1
        \right\|_\infty}_{(i)}\\
        &+
        \underbrace{\sum_h S_a^2 B_a^2
        \left\|\int\frac{y\exp\qty(\langle Q^h x_1, K^hy\rangle)}{\int \exp\qty(\langle Q^h x_1, K^hz\rangle) \dd \mu_2(z)}\dd\mu_1-\int\frac{y\exp\qty(\langle Q^h x, K^hy\rangle)}{\int \exp\qty(\langle Q^h x_1, K^hz\rangle) \dd \mu_2(z)}\dd\mu_2
        \right\|_\infty}_{(ii)}.
    \end{align}
    
    \paragraph{Bounding the term (i).}\quad
    We have
    \begin{align}
        (i)\leq&\sum_h S_a^2 B_a^2\left\|\int y\exp\qty(\langle Q^h x_1, K^hy\rangle)\dd\mu_1\right\|_\infty\\
        &\times \left|
        \frac{1}{\int \exp\qty(\langle Q^h x_1, K^hz\rangle) \dd \mu_1(z)}-\frac{1}{\int \exp\qty(\langle Q^h x_1, K^hz\rangle) \dd \mu_2(z)}
        \right|\\
        \leq&\sum_h S_a^2 B_a^2\left\|\int y\exp\qty(\langle Q^h x_1, K^hy\rangle)\dd\mu_1\right\|_\infty\\
        &\times \left(
            \min\left\{
                \int \exp\qty(\langle Q^h x_1, K^hz\rangle) \dd \mu_1(z),
                \int \exp\qty(\langle Q^h x_1, K^hz\rangle) \dd \mu_2(z)
            \right\}
        \right)^{-2}\\
        &\times \left|
        \int \exp\qty(\langle Q^h x_1, K^hz\rangle) \dd \mu_1(z)-\int \exp\qty(\langle Q^h x_1, K^hz\rangle) \dd \mu_2(z)
        \right|\quad \text{(by (P-iii))}\\
        \lesssim& \sum_h S_a^2 B_a^2 B_y \exp(3S_a^2 B_a^2 B_x B_y) \left|
        \int \exp\qty(\langle Q^h x_1, K^hz\rangle) \dd (\mu_1-\mu_2)(z)
        \right| \quad \text{(by (P-ii))},
    \end{align}
    where we used: 
    
    (P-i) the matrix sparsity bound $\|A v\|_\infty, \|A v\|_1 \leq s b \|v\|_\infty$ when $A$ has at most $s$ nonzero entries per row and each entry bounded by $b$;
    
    (P-ii) $|\langle Q^h x_1, K^h y\rangle| \leq S_a B_a B_x \cdot S_a B_a B_y$. Indeed, since $\|Q^h\|_0,\|K^h\|_0 \le S_a$ (total number of nonzero entries) and $|Q^h_{jk}|,|K^h_{jk}| \le B_a$, while $\|x_1\|_\infty \le B_x$ and $\|y\|_\infty \le B_y$, we have
    \begin{align}
    &|\langle Q^h x_1, K^h y\rangle|\\
    \le& \|Q^h x_1\|_\infty \,\|K^h y\|_1\\
    \le& \Big(\max_j \sum_k |Q^h_{jk}|\,|x_{1,k}|\Big)\, \sum_{j,k} |K^h_{jk}|\,|y_k|\\
    \le& (S_a B_a B_x)\,(S_a B_a B_y).
    \end{align}
    Here the bound $\|Q^h x_1\|_\infty \le S_a B_a B_x$ follows because each coordinate of $Q^h x_1$ is a sum of at most $S_a$ terms, each of magnitude at most $B_a B_x$; similarly, $\|K^h y\|_1 \le \sum_{j,k} |K^h_{jk}|\,|y_k| \le S_a B_a B_y$ since there are at most $S_a$ nonzero matrix entries in total;
    
    (P-iii) the bound
    \[
        \left| \frac{1}{\alpha_1} - \frac{1}{\alpha_2} \right| \leq A^{-2} |\alpha_1 - \alpha_2|, \quad A < \min(\alpha_1,\alpha_2).
    \]
    The RHS is bounded as
    \begin{align}
        ((i)\lesssim )&\sum_h S_a^2 B_a^2 B_y \exp(3S_a^2 B_a^2 B_x B_y) \left|
        \int \exp\qty(\langle Q^h x_1, K^hz\rangle) \dd (\mu_1-\mu_2)(z)
        \right|\\
        \lesssim& H S_a^4 B_a^4 B_x B_y  \exp(4 S_a^2B_a^2B_x B_y) W_1(\mu_1,\mu_2)
    \end{align}
    using the Kantorovich--Rubinstein duality 
    \[
        W_1(\mu_1,\mu_2) = \sup_{\mathrm{Lip}(\phi)\leq 1} \int \phi \, \mathrm d(\mu_1 - \mu_2),
    \]
    and the fact that $y \mapsto \exp(\langle Q^h x_1, K^h y\rangle)$ is $S_a^2 B_a^2 B_x \exp(S_a^2 B_a^2 B_x B_y)$-Lipschitz on $[-B_y,B_y]^{\dimattn}$ because  
    \begin{align}    
        &\left|\exp\qty(\langle Q^h x_1, K^h y_3 \rangle) - \exp\qty(\langle Q^h x_1, K^h y_4\rangle)\right|\\
        \leq& \exp( S_a^2 B_a^2 B_x B_y) \left| 
            \langle Q^h x_1, K^h (y_3-y_4) \rangle 
        \right|\\
        \leq& S_a^2 B_a^2 B_x \exp( S_a^2 B_a^2 B_x B_y) \|y_3 - y_4\|_2.
    \end{align}
    for $y_3,y_4\in [-B_y,B_y]^{\dimattn}$.

    \textbf{Bounding the term (ii).}\quad We have
    \begin{align}
        (ii)\lesssim&\sum_h S_a^2 B_a^2
        \left|\frac{1}{\int \exp\qty(\langle Q^h x_1, K^hz\rangle) \dd \mu_2(z)}\right|
        \left\|\int y\exp\qty(\langle Q^h x_1, K^hy\rangle)\dd(\mu_1-\mu_2)(y)
        \right\|_\infty\\
        \lesssim&\sum_h S_a^2 B_a^2 \exp(S_a^2B_a^2 B_x B_y)
        \max_{i=1,\dots,\dimattn}\left|\int y_i\exp\qty(\langle Q^h x_1, K^hy\rangle)\dd(\mu_1-\mu_2)(y)
        \right|\\
        \lesssim& H (1+S_a^2 B_a^2 B_x B_y) S_a^2 B_a^2 \exp(2S_a^2B_a^2 B_x B_y)W_1(\mu_1,\mu_2)
    \end{align}
    using the Kantorovich--Rubinstein duality 
    \[
        W_1(\mu_1,\mu_2) = \sup_{\mathrm{Lip}(\phi)\leq 1} \int \phi \, \mathrm d(\mu_1 - \mu_2),
    \]
    and the fact that $y \mapsto y_i\exp(\langle Q^h x_1, K^h y\rangle)$ is $(1+S_a^2 B_a^2 B_x B_y)\exp(S_a^2 B_a^2 B_x B_y)$-Lipschitz on $[-B_y,B_y]^{\dimattn}$.

    Finally, we have
    \begin{align}
        &\|\attention_\theta(\mu_1,x_1)-\attention_\theta(\mu_2,x_1)\|_\infty\\
        \lesssim& H S_a^2 B_a^2 \left( S_a^2 B_a^2 B_x B_y  \exp(4 S_a^2B_a^2B_x B_y)+ (1+S_a^2 B_a^2 B_x B_y)\exp(2S_a^2B_a^2 B_x B_y)\right) \\
        &\times W_1(\mu_1,\mu_2).
    \end{align}
    \textbf{Bounding the difference in $x$.} \quad
    Next, we bound the difference in the query $x$.
     For fixed $\mu_1$, using similar interpolation and Lipschitz estimates in $x$,
    \begin{align}
        &\|\attention_\theta(\mu_1,x_1)-\attention_\theta(\mu_1,x_2)\|_\infty\\
        \leq&
        \sum_h S_a^2 B_a^2 \left\|\int\frac{y\exp\qty(\langle Q^h x_1, K^hy\rangle)}{\int \exp\qty(\langle Q^h x_1, K^hz\rangle) \dd \mu_1(z)} - \frac{y\exp\qty(\langle Q^h x_2, K^hy\rangle)}{\int \exp\qty(\langle Q^h x_2, K^hz\rangle) \dd \mu_1(z)}\dd\mu_1(y)
        \right\|_\infty\\
        &+ S_aB_a \|x_1-x_2\|_\infty\\
        \leq&\sum_h S_a^2 B_a^2\left|\int\frac{y\exp\qty(\langle Q^h x_1, K^hy\rangle)}{\int \exp\qty(\langle Q^h x_1, K^hz\rangle) \dd \mu_1(z)}\dd\mu_1-\int\frac{y\exp\qty(\langle Q^h x_1, K^hy\rangle)}{\int \exp\qty(\langle Q^h x_2, K^hz\rangle) \dd \mu_1(z)}\dd\mu_1(y)
        \right.\\
        &+
        \left.\int\frac{y\exp\qty(\langle Q^h x_1, K^hy\rangle)}{\int \exp\qty(\langle Q^h x_2, K^hz\rangle) \dd \mu_1(z)}\dd\mu_1-\int\frac{y\exp\qty(\langle Q^h x_2, K^hy\rangle)}{\int \exp\qty(\langle Q^h x_2, K^hz\rangle) \dd \mu_1(z)}\dd\mu_1(y)
        \right|\\
        &+ S_aB_a \|x_1 - x_2\|_2\\
        \lesssim& (S_aB_a \vee H S_a^4 B_a^4 B_x B_y  \exp(4 S_a^2B_a^2B_x B_y)) \|x_1-x_2\|_2.
    \end{align}
\end{proof}

By combining approximation and stability, we can control the complexity of the
attention class via its covering number, as stated next.

We now bound the $\epsilon_2$-covering number of $\mathcal{A}(\dimattn,H,B_a,S_a)$
in the parameter regime of interest.

\begin{lemma}
    \label{lemma-first-attention-covering-number}
    The $\epsilon_2$-covering number of $\mathcal{A}(\dimattn,H,B_a,S_a)$ is bounded by
    \begin{align}
         &\covering(\mathcal{A}(\dimattn,H,B_a,S_a);\epsilon_2)_\infty\\
\lesssim&\left(
            \dimattn^{2}
            \cdot
            \exp\left(
                O(\log (H)+S_a^2 B_a^2 B_x B_y)
            \right)
            \left(
                \epsilon_2^{-1}+1
            \right)
         \right)^{O(S_a H)}.
     \end{align}
    Furthermore, if $\dimattn \lesssim d+ D$, $H=O(D)$, $B_a \lesssim \sqrt{\log (I \epsilon_2^{-1})}$, $S_a \lesssim d$, and $B_x,B_y = O(1)$, then the covering entropy is
    \begin{equation}
        \log \covering(\mathcal{A}(\dimattn,H,B_a,S_a),\epsilon_2)
        \lesssim C_D \cdot D d^3 (\log I + \log \epsilon_2^{-1})^2 \cdot \log \epsilon_2^{-1}
    \end{equation}
    where $C_{d,D} \lesssim \mathrm{poly}(\log D+\log d)$.
\end{lemma}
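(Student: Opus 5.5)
The plan is a standard parameter-discretization argument, with the Lipschitz dependence on the parameters playing the role that \cref{lemma-first-attention-lipschitz} plays for the inputs. First, fix the sparsity pattern. Each of the $4H$ head matrices $(W^h,Q^h,K^h,V^h)$ and the skip matrix $A$ has at most $S_a$ nonzero entries among $\dimattn^2$ positions. The number of admissible supports is therefore at most $\binom{\dimattn^2}{S_a}^{4H+1}\le (\dimattn^{2})^{O(S_aH)}$, which accounts for the $\dimattn^2$ factor inside the bracket.

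Second, for a fixed support, I would show that the map $\theta\mapsto\attention_\theta(\mu,x)$ is Lipschitz in $\|\cdot\|_\infty$ over parameters, uniformly in $(\mu,x)$ with $\|x\|_\infty\le B_x$ and $\mathrm{supp}\,\mu\subset[-B_y,B_y]^{\dimattn}$. Changing a single head's $W^h$ or $V^h$ by $\eta$ entrywise changes the output by $O(S_a^2B_aB_y\eta)$, since the softmax weights integrate to one. Changing $Q^h$ or $K^h$ by $\eta$ perturbs the logit $\langle Q^hx,K^hy\rangle$ by $O(S_a^2B_aB_xB_y\eta)$, using the estimate (P-ii) from the proof of \cref{lemma-first-attention-lipschitz}. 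The quotient bound (P-iii) then turns this into a change of order $\exp(O(S_a^2B_a^2B_xB_y))\eta$ in the normalized weights, because both numerator and denominator lie in $[e^{-S_a^2B_a^2B_xB_y},e^{S_a^2B_a^2B_xB_y}]$. Changing $A$ contributes $O(S_aB_x\eta)$. Summing over the $H$ heads gives a parameter-Lipschitz constant $L_\theta\lesssim H\exp(O(S_a^2B_a^2B_xB_y))=\exp(O(\log H+S_a^2B_a^2B_xB_y))$, absorbing the polynomial prefactors into the exponent.

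Third, I would grid each of the $O(S_aH)$ free entries in $[-B_a,B_a]$ with mesh $\eta=\epsilon_2/L_\theta$. This yields at most $(2B_aL_\theta\epsilon_2^{-1}+1)^{O(S_aH)}$ points per support. Combined with the support count, this gives the first display, with $B_a$ absorbed into the exponential. The constraint $\|\attention_\theta\|_\infty\le F$ only shrinks the class; if one wants the centers inside the class, it costs at most a factor $2$ in $\epsilon_2$.

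For the second display, I would substitute the stated regime into the first one: $S_aH\lesssim dD$, $\log \dimattn\lesssim\log(d+D)$, $\log H\lesssim\log D$, and $S_a^2B_a^2\lesssim d^2\log(I\epsilon_2^{-1})$. Taking logarithms gives
\[
\log\covering\lesssim dD\big(\log(d+D)+d^2\log(I\epsilon_2^{-1})+\log\epsilon_2^{-1}\big),
\]
which is dominated by $C_{d,D}\,Dd^3(\log I+\log\epsilon_2^{-1})^2\log\epsilon_2^{-1}$, a deliberately loose bound.

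The main obstacle is the second step: controlling the softmax normalization uniformly over all measures, including those with small mass near the maximizing key. This is resolved by the two-sided exponential bound on the denominator, since the logits are uniformly bounded. That bound is exactly where the $\exp(S_a^2B_a^2B_xB_y)$ factor, and hence the $\log^2$ dependence, enters.
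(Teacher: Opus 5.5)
Your proposal is correct and is essentially the paper's proof: enumerate the supports of the $4H+1$ sparse matrices, grid each free entry at mesh $\exp(-O(\log H+S_a^2B_a^2B_xB_y))\,\epsilon_2$, and justify this mesh with a telescoping parameter-perturbation bound (the paper's terms (i)--(vii)), which controls the softmax denominator through the uniform two-sided bound on the logits. Your substitution in the stated regime gives the sharper bound $dD\bigl(\log(d+D)+d^2\log(I\epsilon_2^{-1})+\log\epsilon_2^{-1}\bigr)$, which is dominated by the stated one.
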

\begin{proof}
    We define a $\Omega(\bar{\epsilon})$-covering set of $\mathcal{A}(\dimattn,H,B_a,S_a)$ as a set of mappings whose parameters can be constructed as follows:
    \begin{itemize}
        \item For each matrix $W^h, Q^h, K^h, V^h$ in each $h \in \lbrace1,\dots,H\rbrace$,
        \begin{enumerate}
            \item Choose $S_a$ matrix entries among $O(\dimattn^2)$ entries.
            \item For each matrix entry,
            \begin{itemize}
                \item Set its value from $\lbrace (j\cdot\tilde{\epsilon}-1) \cdot B_a\mid j=0,\dots,\ceil{2\tilde{\epsilon}^{-1}} \rbrace$ where
                $\tilde{\epsilon}\gtrsim \exp\left(
                -C_2(\log (H) + S_a^2 B_a^2 B_x B_y)
            \right)
            \bar{\epsilon}$ where $C_2$ is a sufficiently large constant.
            \end{itemize}
        \end{enumerate}
        \item Set the value of chosen $S_a$ entries in $A$ from $\lbrace (j\cdot\tilde{\epsilon}-1) \cdot B_a\mid j=0,\dots,\ceil{2\tilde{\epsilon}'} \rbrace$ where $\tilde{\epsilon}' \simeq (S_aB_a)^{-1}\bar{\epsilon}$.
    \end{itemize}

    Let us prove that the above set of mappings is a $\Omega(\bar{\epsilon})$-covering. 
    It is clear that for $W^h$ $h=1,\dots,H$, there exist matrices $\hat{W}^h$ in the $\bar{\epsilon}$-covering set such that
    \begin{equation}
        \|W^h - \hat{W}^h\|_{\infty}\lesssim H^{-1}\exp\left(
                - C_3 (S_a^2 B_a^2 B_x B_y)
            \right)
            \bar{\epsilon}, \quad \|W^h - \hat{W}^h\|_{0} \leq 2 S_a
    \end{equation}
    where $C_3$ is a sufficiently large constant. Similar inequalities hold true for $Q^h, K^h, V^h$, and $A$. Let $\hat{\theta}=(\hat{A},(\hat{W}^h,\hat{Q}^h,\hat{K}^h,\hat{V}^h)_h)$. Then, for all $\mu \in \mathcal{P}([-B_y,B_y]^{\dimattn})$ and $x\in[-B_x,B_x]^{\dimattn}$,
    \begin{align}
        &\|\attention_\theta(\mu,x)-\attention_{\hat{\theta}}(\mu,x)\|_\infty\\
        \leq&
        \sum_h \left\|W^h\int\frac{V^h y\exp\qty(\langle Q^h x, K^hy\rangle)}{\int \exp\qty(\langle Q^h x, K^hz\rangle) \dd \mu(z)}\dd\mu-\hat{W}^h\int\frac{\hat{V}^h y\exp\qty(\langle \hat{Q}^h x, \hat{K}^hy\rangle)}{\int \exp\qty(\langle \hat{Q}^h x, \hat{K}^hz\rangle) \dd \mu(z)}\dd\mu
        \right\|_\infty\\
        &+ \|(A-\hat{A})x\|_\infty\\
        \lesssim&
        \sum_h \underbrace{\left\|W^h\int\frac{V^hy\exp\qty(\langle Q^h x, K^hy\rangle)}{\int \exp\qty(\langle Q^h x, K^hz\rangle) \dd \mu(z)}\dd\mu-\hat{W}^h\int\frac{V^h y\exp\qty(\langle Q^h x, K^hy\rangle)}{\int \exp\qty(\langle Q^h x, K^hz\rangle) \dd \mu(z)}\dd\mu\right\|_\infty}_{(i)}\\
        &+\sum_h \underbrace{S_aB_a\left\|\int\frac{V^h y\exp\qty(\langle Q^h x, K^hy\rangle)}{\int \exp\qty(\langle Q^h x, K^hz\rangle) \dd \mu(z)}\dd\mu-
        \int\frac{V^h y\exp\qty(\langle \hat{Q}^h x, K^hy\rangle)}{\int \exp\qty(\langle Q^h x, K^hz\rangle) \dd \mu(z)}\dd\mu\right\|_\infty}_{(ii)}\\
        &+\sum_h\underbrace{S_aB_a\left\|\int\frac{V^h y\exp\qty(\langle \hat{Q}^h x, K^hy\rangle)}{\int \exp\qty(\langle Q^h x, K^hz\rangle) \dd \mu(z)}\dd\mu-
        \int\frac{V^h y\exp\qty(\langle \hat{Q}^h x, \hat{K}^hy\rangle)}{\int \exp\qty(\langle Q^h x, K^hz\rangle) \dd \mu(z)}\dd\mu\right\|_\infty}_{(iii)}\\
        &+\sum_h \underbrace{S_aB_a\left\|\int\frac{V^h y\exp\qty(\langle \hat{Q}^h x, \hat{K}^hy\rangle)}{\int \exp\qty(\langle Q^h x, K^hz\rangle) \dd \mu(z)}\dd\mu-
        \int\frac{V^h y\exp\qty(\langle \hat{Q}^h x, \hat{K}^hy\rangle)}{\int \exp\qty(\langle \hat{Q}^h x, K^hz\rangle) \dd \mu(z)}\dd\mu\right\|_\infty}_{(iv)}\\
        &+\sum_h \underbrace{S_aB_a\left\|\int\frac{V^h y\exp\qty(\langle \hat{Q}^h x, \hat{K}^hy\rangle)}{\int \exp\qty(\langle \hat{Q}^h x, K^hz\rangle) \dd \mu(z)}\dd\mu-
        \int\frac{V^h y\exp\qty(\langle \hat{Q}^h x, \hat{K}^hy\rangle)}{\int \exp\qty(\langle \hat{Q}^h x, \hat{K}^hz\rangle) \dd \mu(z)}\dd\mu\right\|_\infty}_{(v)}\\
        &+\sum_h \underbrace{S_aB_a\left\|\int\frac{V^h y\exp\qty(\langle \hat{Q}^h x, \hat{K}^hy\rangle)}{\int \exp\qty(\langle \hat{Q}^h x, \hat{K}^hz\rangle) \dd \mu(z)}\dd\mu-
        \int\frac{\hat{V}^h y\exp\qty(\langle \hat{Q}^h x, \hat{K}^hy\rangle)}{\int \exp\qty(\langle \hat{Q}^h x, \hat{K}^hz\rangle) \dd \mu(z)}\dd\mu\right\|_\infty}_{(vi)}\\
        &+ \underbrace{\|(A-\hat{A})x\|_\infty}_{(vii)}\\
        \lesssim \bar{\epsilon}.
    \end{align}
    Each term is bounded as follows:
    \begin{itemize}
        \item[(i).] Let $\vw \coloneq \int\frac{V^hy\exp\qty(\langle Q^h x, K^hy\rangle)}{\int \exp\qty(\langle Q^h x, K^hz\rangle) \dd \mu(z)}\dd\mu.$ Then, $(i)\leq \|(W^h-\hat{W}^h)\vw \|_\infty \lesssim (2S_a) \cdot H^{-1} \exp\left(
                - C_3 (S_a^2 B_a^2 B_x B_y)
            \right)
            \bar{\epsilon} \cdot  \exp\left(
                O(S_a^2 B_a^2 B_x B_y)
            \right) \lesssim H^{-1} \bar{\epsilon}$ by (P-i,ii);
        \item[(ii).] The second term is bounded by
        \begin{align}
            (ii)\lesssim&
            S_aB_a\left\|\sup_{\vy\in\mathrm{supp}(\mu)}\left|\frac{V^h y_i}{\int \exp\qty(\langle Q^h x, K^hz\rangle) \dd \mu(z)}\right.\right.\\
            &\left.\left.\times
            \qty(
                \exp\qty(\langle Q^h x, K^hy\rangle)-\exp\qty(\langle \hat{Q}^h x, K^hy\rangle)
            )
        \right|\right\|_\infty\\
        \lesssim&
            S_aB_a \exp\left(
                O(S_a^2 B_a^2 B_x B_y)
            \right) 
            \sup_{\vy\in\mathrm{supp}(\mu)}\left|
                \langle Q^h x, K^hy\rangle-\langle \hat{Q}^h x, K^hy\rangle
            \right|\\
        \lesssim&
            S_aB_a \exp\left(
                O(S_a^2 B_a^2 B_x B_y)
            \right) 
            \|(Q^h -\hat{Q^h})x\|_\infty
            \sup_{\vy\in\mathrm{supp}(\mu)}\left|
                \|K^hy\|_1
            \right|\\
        \lesssim& \exp\left(
                O(S_a^2 B_a^2 B_x B_y)
            \right) \cdot  H^{-1} \exp\left(
                - C_3 (S_a^2 B_a^2 B_x B_y)
            \right)
            \bar{\epsilon} \\
        \lesssim& H^{-1} \bar{\epsilon}.
        \end{align}
        Note that the second inequality is derived by (P-ii,P-iv) and the fourth inequality is supported by (P-i).
        \item[(iii).] The third term can be bounded in the same way as (ii).
        \item[(iv).] The fourth term is bounded by
        \begin{align}
            (iv)\lesssim&S_aB_a\left\|\int V^h y\exp\qty(\langle \hat{Q}^h x, \hat{K}^hy\rangle)\dd\mu\right\|_\infty\\
        &\times \left|
            \frac{1}{\int \exp\qty(\langle Q^h x, K^hz\rangle) \dd \mu(z)} -
        \frac{1}{\int \exp\qty(\langle \hat{Q}^h x, K^hz\rangle) \dd \mu(z)}
        \right|\\
        \lesssim&S_aB_a\left\|\int V^h y\exp\qty(\langle \hat{Q}^h x, \hat{K}^hy\rangle)\dd\mu\right\|_\infty\\
        &\times \left(\min\left\{
            \int \exp\qty(\langle Q^h x, K^hz\rangle) \dd \mu(z)^{-2},
            \int \exp\qty(\langle \hat{Q}^h x, K^hz\rangle) \dd \mu(z)
        \right\}\right)^{-2}\\
        &\times \left|
            \int \exp\qty(\langle \hat{Q}^h x, K^hz\rangle)-\exp\qty(\langle \hat{Q}^h x, K^hz\rangle) \dd \mu(z)
        \right| \quad (\text{by (P-iii)})\\
        \lesssim& \exp\left(
                O(S_a^2 B_a^2 B_x B_y)
            \right) \cdot \sup_{z\in\mathrm{supp}(\mu)}\left|
             \langle Q^h x, K^hz\rangle - \langle \hat{Q}^h x, K^hz\rangle
        \right|\\
        \lesssim& H^{-1} \bar{\epsilon};
        \end{align}
        \item[(v).] The fifth term is bounded in the similar way as (iv).
        \item[(vi).] The sixth term is bounded in the same way as (i).
        \item[(vii).] It is easily bounded by $H^{-1} \bar{\epsilon}$ using (P-i).
    \end{itemize}
    Please note that (P-i) $\|A x\|_\infty, \|Ax\|_{1} \leq sb\|x\|_\infty$ where the number of non-zero entries in a matrix $A$ is bounded by $s$, and the absolute value of each entry is bounded by $b$, (P-ii) $|\langle Q^h x, K^h y\rangle| \leq S_a B_a B_x \cdot S_a B_a B_y$ because each coordinate of $Q^h x$ is a sum of at most $S_a$ terms, each bounded by $B_a B_x$, and similarly each coordinate of $K^h y$ is bounded by $S_a B_a B_y$, (P-iii)  $1/\alpha_1-1/\alpha_2 = (\alpha_1-\alpha_2)/(\alpha_1\alpha_2) \leq A^{-2} |\alpha_1-\alpha_2|$ when $A < \alpha_1,\alpha_2$, and (P-iv) $y\mapsto \exp(y)$ and $y\mapsto y \exp(y)$ are $\exp(O(B))$-Lipschitz over $[-B,B]$.
    
    By the construction rule of the covering set, the covering number is bounded by
    \begin{align}
         &\covering(\mathcal{A}(\dimattn,H,B_a,S_a);\bar{\epsilon})_\infty\\
         \lesssim&
         \left(
            \binom{\dimattn^2}{S_a}
            \tilde{\epsilon}^{-S_a}
         \right)^{4H+1}\\
        \lesssim&
         \left(
            \binom{\dimattn^2}{S_a}
            \cdot
            \exp\left(
                O(S_a \log (H)+S_a^3 B_a^2 B_x B_y)
            \right)
            \left(
                \bar{\epsilon}^{-1}+1
            \right)^{S_a}
         \right)^{4H+1}.
     \end{align}
\end{proof}

Together, these results give a complete characterization of the first attention layer
in the measure-theoretic setting: it can accurately realize $\phi_2$, does so in a stable manner,
and has a covering number that scales favorably with $D$ and $\epsilon_2$.

\subsection{Step 2-3: Second MLP Layer}
\label{subsection-second-mlp}

Having established in the previous subsections that the first MLP layer can approximate the Mercer basis functions $e_j$ and that the attention mechanism can extract the corresponding coefficients $\int e_j \,\dd\mu^{(i^*)}$ associated with the relevant component measure, we now turn to the next stage of the architecture. 

In this step, the inputs to the model are effectively reduced to the finite collection of Mercer coefficients $(b_1,\dots,b_D)$ together with the query vector $x$. The statistical problem is thus transformed into the approximation of a Lipschitz function defined over a $(D+O(1))$-dimensional domain.

Our goal in this section is twofold: first, to determine the appropriate truncation dimension $D$ that balances approximation error against complexity, and second, to establish approximation results for Lipschitz functions of $D$ variables using neural networks.

\subsubsection{Determining the Dimension $D$}

As discussed above, after the first MLP and attention layers, the effective representation of the input measure $\mu_0$ is reduced to its Mercer coefficients with respect to the kernel eigenbasis $\{e_i\}_{i\geq 1}$. 
In practice, however, only a finite number of coefficients can be retained. 
Thus, a key question is: how many terms $D$ should be kept in the truncated expansion so that the approximation error remains negligible while the statistical complexity of the model is controlled?
The following lemma quantifies the truncation error when approximating $\mu_0$ by its projection onto the first $D$ eigenfunctions.

\begin{lemma}
    \label{lemma-truncation-d}
    Let $\mu_0 \in \rkhs_0$ with Mercer expansion 
    \[
        \frac{\dd \mu_0}{\dd \lambda} = \sum_{i=1}^\infty b_i e_i,
    \]
    where $\{e_i\}$ are the Mercer eigenfunctions associated with kernel eigenvalues $\{\lambda_i\}$ and $\lambda$ is the Lebesgue measure. Define the truncated approximation 
    \[
        \tilde{\mu}_0 = \sum_{i=1}^D b_i e_i.
    \]
    If $\gammafunc < 0$ and $\gammaball > 0$, then the truncation error in the $\gammafunc$-norm is bounded as
    \begin{equation}
        \|\mu_0 - \tilde{\mu}_0\|_{\gammafunc} \;\leq\; \lambda_{D+1}^{\frac{-\gammafunc+\gammaball}{2}}.
    \end{equation}
\end{lemma}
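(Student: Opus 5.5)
The plan is to compute the tail norm directly in the Mercer coordinates and then pull out the worst eigenvalue. By the definition of the generalized norm, the residual $\mu_0-\tilde{\mu}_0=\sum_{i>D} b_i e_i$ has squared $\rkhs_0^{\gammafunc}$-norm
\begin{equation}
    \|\mu_0-\tilde{\mu}_0\|_{\rkhs_0^{\gammafunc}}^2=\sum_{i\ge D+1}\lambda_i^{-\gammafunc}b_i^2
    =\sum_{i\ge D+1}\lambda_i^{\gammaball-\gammafunc}\,\lambda_i^{-\gammaball}b_i^2 .
\end{equation}
Since $\gammaball>0>\gammafunc$, the exponent $\gammaball-\gammafunc$ is positive. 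Because the Mercer eigenvalues are non-increasing, this gives $\lambda_i^{\gammaball-\gammafunc}\le\lambda_{D+1}^{\gammaball-\gammafunc}$ for every $i\ge D+1$.

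Factoring this out, the sum is bounded by $\lambda_{D+1}^{\gammaball-\gammafunc}\sum_{i\ge D+1}\lambda_i^{-\gammaball}b_i^2\le\lambda_{D+1}^{\gammaball-\gammafunc}\|\mu_0\|_{\rkhs_0^{\gammaball}}^2$. The standing assumption (\cref{assumption-regularity}) places $\mu_0$ in the unit ball $\inputsetforlipscitz$, so the last factor is at most $1$. Taking square roots then yields $\|\mu_0-\tilde{\mu}_0\|_{\gammafunc}\le\lambda_{D+1}^{(-\gammafunc+\gammaball)/2}$, which is the claim.

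The only delicate point is interpretive rather than computational. The statement writes $\mu_0\in\rkhs_0$ and uses the notation $\|\cdot\|_{\gammafunc}$, and I will read this as $\|\cdot\|_{\rkhs_0^{\gammafunc}}$ with the unit-radius ball constraint of \cref{assumption-regularity}. The additive constant from \cref{remark_of_measures_and_densities} is set to zero, so the coefficients $b_i$ are exactly those entering the ball constraint.

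If the radius were some $R$ instead of $1$, the bound would simply acquire a factor $R$. Combined with $\lambda_{D+1}\simeq\exp(-c(D+1)^\alpha)$, this shows that $D\gtrsim(\log\epsilon^{-1})^{1/\alpha}$ suffices for truncation error $\epsilon$, which is how the lemma feeds into the choice of $D$ in Step 2-3.
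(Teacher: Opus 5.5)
Your proof is correct and is essentially the paper's argument: both factor $\lambda_{D+1}^{\gammaball-\gammafunc}$ out of the tail sum $\sum_{i\ge D+1}\lambda_i^{-\gammafunc}b_i^2$ and then invoke the unit-ball constraint $\|\mu_0\|_{\rkhs_0^{\gammaball}}\le 1$. The lemma statement omits that constraint, but the paper's proof also uses it, so your reading is the right one. The only cosmetic difference is that you use a single monotonicity step with the positive exponent $\gammaball-\gammafunc$, whereas the paper splits it into two comparisons that use the signs of $\gammafunc$ and $\gammaball$ separately.
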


\begin{proof}
    The LHS is bounded by
    \begin{align}
        &\|\mu_0 - \tilde{\mu}_0\|_{\gammafunc}\\
        =&\sqrt{\sum_{i\geq D+1}\lambda_i^{-\gammafunc} b_i^2}\\
        =&\sqrt{\lambda_{D+1}^{-\gammafunc+\gammaball}}\sqrt{\sum_{i\geq D+1}\frac{\lambda_i^{-\gammafunc}}{\lambda_{D+1}^{-\gammafunc+\gammaball}} b_i^2}\\
        \leq&\sqrt{\lambda_{D+1}^{-\gammafunc+\gammaball}}
        \sqrt{\sum_{i\geq D+1} \lambda_i^{-\gammaball} b_i^2 } \quad \quad (\text{by}\quad \lambda_{D+1}\geq \lambda_i,\;i\geq D+1\quad\text{and}\quad -\gammafunc > 0)\\
        \leq&\sqrt{\lambda_{D+1}^{-\gammafunc+\gammaball}}
        \quad \quad (\text{by}\quad\sqrt{\sum_i \lambda_i^{-\gammaball}b_i^2} \leq 1)
    \end{align}
    where we used $\frac{\lambda_i^{-\gammafunc}}{\lambda_{D+1}^{-\gammafunc}} \leq 1$ for $i\geq D+1$ in the first inequality, $\lambda_i^{-\gammaball} \geq \lambda_{D+1}^{-\gammaball}$ for $i\geq D+1$ and that $\mu_0$ is in the ball in the last inequality.
\end{proof}

Having controlled the truncation error of the Mercer expansion, 
we next turn to the regularity of the target regression function $F^\star$. 
In particular, $F^\star$ is assumed to be Lipschitz continuous with respect to the product metric consisting of the $\gammafunc$-weighted RKHS distance on measures and the standard Euclidean distance on the query variable. Formally, there exists $L>0$ such that
\begin{equation}
    \big| F^\star(\mu,x) - F^\star(\nu,y) \big|
    \;\leq\;
    L \left(
        \|\mu - \nu\|_{\rkhs_0^{\gammafunc}}
        + \|x-y\|_2
    \right),
    \quad \forall \mu,\nu \in \inputsetforlipscitz,\; x,y \in \mathbb{R}^{d_1}.
\end{equation}
This Lipschitz property ensures that once the infinite-dimensional measure $\mu$ is replaced by its truncated $D$-dimensional approximation, the induced error on $F^\star$ can be directly bounded. 
The following corollary makes this reduction explicit.

\begin{corollary}
    \label{corollary-truncated-F}
    Define the truncated regression function
    \begin{equation}
        \bar{F}_D(\mu_0,v) \;\coloneq\; F^\star\!\left(\sum_{i=1}^D b_i e_i,\,x\right),
        \quad \frac{\dd \mu_0}{\dd \lambda} = \sum_{i=1}^\infty b_i e_i, \quad x=\begin{bmatrix} v \\ 0 \end{bmatrix}.
    \end{equation}
    If $b_i=0$ for $i\geq D+1$, we simply write $\bar{F}_D(b_1,\dots,b_D,v)$.
    Then,
    \begin{equation}
        \sup_{\mu_0 \in \inputsetforlipscitz}
        \big| \bar{F}_D(\mu_0,v) - F^\star(\mu_0,v)\big|
        \;\lesssim\; \lambda_{D+1}^{\frac{-\gammafunc+\gammaball}{2}}.
    \end{equation}
    Moreover, $\bar{F}_D$ is Lipschitz with respect to the coefficients $(b_1,\dots,b_D)$ and query $v$, satisfying
    \begin{align}
        \big|\bar{F}_D(b,v) - \bar{F}_D(c,v')\big|
        \;\leq&\;
        L \sqrt{\max_{i\geq 1} \lambda_i^{-\gammafunc}} \,\|b-c\|_2
        + L \|v-v'\|_2\\
        \lesssim& O(1) \cdot \left\|
        \begin{bmatrix}
            b-c\\
            v-v'
        \end{bmatrix}
        \right\|_2
    \end{align}
    where $b=(b_1,\dots,b_D)$ and $c=(c_1,\dots,c_D)$. Note that $\gammafunc < 0$, so the multiplicative factor in front of $\|b-c\|_2$ is $\Theta(1)$.
\end{corollary}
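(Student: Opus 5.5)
The corollary follows from \cref{lemma-truncation-d} together with the Lipschitz assumption on $F^\star$ (i.e.\ on $\tilde F^\star$ with respect to $\dd_{\mathrm{prod}}$). There are two claims, and I would prove them in turn.

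\emph{Uniform truncation bound.} Fix $\mu_0 \in \inputsetforlipscitz$ and a query $x=(v,0)$. Write $\tilde\mu_0=\sum_{i\le D} b_i e_i$ for the truncation. First check that $\tilde\mu_0$ is an admissible input, i.e.\ that it lies in the same ball. This holds because its $\rkhs_0^{\gammaball}$-norm satisfies $\sum_{i\le D}\lambda_i^{-\gammaball}b_i^2\le\sum_i\lambda_i^{-\gammaball}b_i^2\le 1$, so truncation can only shrink the norm. Hence the Lipschitz inequality may be applied to the pair $(\mu_0,x),(\tilde\mu_0,x)$. The query parts coincide, so only the measure term survives:
\begin{equation}
|\bar F_D(\mu_0,v)-F^\star(\mu_0,v)|\le L\,\|\mu_0-\tilde\mu_0\|_{\rkhs_0^{\gammafunc}}\le L\,\lambda_{D+1}^{\frac{-\gammafunc+\gammaball}{2}} ,
\end{equation}
where the last step is \cref{lemma-truncation-d}. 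Taking the supremum over $\mu_0$ gives the first claim, with the constant $L$ absorbed into $\lesssim$.

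\emph{Lipschitz bound in the coefficients.} Take $b,c\in\mathbb{R}^D$ with $\sum_{i\le D}b_ie_i$ and $\sum_{i\le D}c_ie_i$ in the ball, and queries $v,v'$. Applying the Lipschitz assumption gives
\begin{equation}
|\bar F_D(b,v)-\bar F_D(c,v')|\le L\Big(\sum_{i\le D}\lambda_i^{-\gammafunc}(b_i-c_i)^2\Big)^{1/2}+L\|(v,0)-(v',0)\|_2 .
\end{equation}
Bound each weight by $\lambda_i^{-\gammafunc}\le\max_{i}\lambda_i^{-\gammafunc}$. This maximum is finite and equal to $\lambda_1^{-\gammafunc}=\lambda_1^{|\gammafunc|}$: since $\gammafunc<0$ and the $\lambda_i$ are non-increasing, $\lambda_i^{-\gammafunc}$ is non-increasing as well. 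This yields the stated inequality. Finally, $\sqrt{\lambda_1^{|\gammafunc|}}=O(1)$, and $\|b-c\|_2+\|v-v'\|_2\le\sqrt2\,\|(b-c,v-v')\|_2$, which gives the $O(1)$ joint Lipschitz constant.

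\emph{Main obstacle.} The only delicate point is bookkeeping of domains: the Lipschitz property of $F^\star$ is assumed only on $\inputsetforlipscitz\times\inputsetofquery$. The truncation stays inside the ball, as checked above. The identification of measures with densities up to the fixed constant $c'=0$ (\cref{remark_of_measures_and_densities}) means $\tilde\mu_0$ need not be a probability measure. I would therefore note that the bound is applied to the extension of $\tilde F^\star$ to the whole RKHS ball, which is exactly the domain on which \cref{assumption-regularity} states the Lipschitz property.
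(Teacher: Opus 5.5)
Your proposal is correct and takes essentially the paper's route. The paper gives no separate proof, but the text preceding the corollary indicates exactly this: apply the $\mathrm{d}_{\mathrm{prod}}$-Lipschitz assumption to the density and its $D$-term truncation and invoke \cref{lemma-truncation-d}, then for the coefficient-Lipschitz claim bound each weight $\lambda_i^{-\gammafunc}$ by $\lambda_1^{-\gammafunc}$. Your explicit check that the truncation stays in $\inputsetforlipscitz$ is a detail the paper leaves implicit; it is what makes the Lipschitz assumption applicable, since that assumption is stated on the whole ball and not only on probability densities.
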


\subsubsection{Approximating Finite-Dimensional Lipschitz functions}

Once the Mercer expansion has been truncated to $O(d_1+D)$ coefficients, the infinite-dimensional regression problem reduces to approximating a Lipschitz function $f:[0,1]^{O(d_1+D)} \to \mathbb{R}$ with Lipschitz constant $K$. We now recall quantitative results on the approximation of such functions by deep ReLU networks.

\begin{lemma}[\cite{Schmidt-Hieber2020Nonparametric}]
    \label{lemma-schmidt-hieber-approx-based-on-analytic}
    For any function $f \in \mathrm{Lip}_L([0,1]^{\bar{D}})$ and any integers $m\geq 1$ and $N\geq \exp(\Omega(\bar{D}))$ . There exists a network
    \begin{equation}
        \tilde{f} \in \mathcal{F}(\ell,(\bar{D},6(D+1)N,\dots,6(D+1)N,1),s,\infty)
    \end{equation}
    with depth
    \begin{equation}
        \ell \simeq (m +1)(1+\log(\bar{D}+1))
    \end{equation}
    and number of parameters
    \begin{equation}
        s \lesssim (\bar{D}+1)^{3+\bar{D}}N(m+6),
    \end{equation}
    such that
    \begin{equation}
        \|\tilde{f}-f\|_{L^\infty} \lesssim (L+1)(1+\bar{D}^2)6^{\bar{D}} N 2^{-m} + L N^{-1/\bar{D}}.
    \end{equation}
\end{lemma}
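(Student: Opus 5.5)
The plan is to specialize the proof of Theorem~5 in \citet{Schmidt-Hieber2020Nonparametric} to Hölder smoothness $\beta=1$. At this smoothness the local Taylor polynomials reduce to constants, so the construction becomes a ReLU realization of piecewise-multilinear interpolation on a grid. Set $M \coloneq \lfloor N^{1/\bar{D}}\rfloor$ and let $G \coloneq \{0,1/M,\dots,1\}^{\bar{D}}$ be the grid. The condition $N \geq \exp(\Omega(\bar{D}))$ ensures $M\ge 2$ and $|G|=(M+1)^{\bar{D}}\lesssim 2^{\bar{D}}N$. Define the tensor-product hat functions $\phi_{\vx_\ell}(\vx)\coloneq\prod_{j=1}^{\bar{D}}\sigma(1-M|x_j-x_{\ell,j}|)$ for $\vx_\ell\in G$. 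These form a partition of unity on $[0,1]^{\bar{D}}$, and $\phi_{\vx_\ell}(\vx)\neq 0$ only if $\|\vx-\vx_\ell\|_\infty<1/M$.

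\textbf{Step 1: the interpolant.} Consider $P f(\vx)\coloneq\sum_{\vx_\ell\in G} f(\vx_\ell)\phi_{\vx_\ell}(\vx)$. By the partition-of-unity property and the support condition,
\[
|Pf(\vx)-f(\vx)|\le\sum_\ell \phi_{\vx_\ell}(\vx)|f(\vx_\ell)-f(\vx)|\le L\sqrt{\bar{D}}/M\lesssim L N^{-1/\bar{D}}.
\]
The dimension-dependent constant is absorbed, or avoided altogether by measuring the Lipschitz property in $\|\cdot\|_\infty$. This yields the second error term.

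\textbf{Step 2: ReLU realization.} Each univariate factor $\sigma(1-M|t-a|)$ can be written exactly with $O(1)$ ReLU units, since $|t-a|=\sigma(t-a)+\sigma(a-t)$. The factor $M$ is produced with weights bounded by $1$ at the cost of $O(\log M)$ extra depth. The $\bar{D}$-fold product is then replaced by the multiplication network $\mathrm{Mult}_m^{\bar{D}}$ of Schmidt-Hieber's Lemma~A.3. That network is built as a binary tree of pairwise products, each approximating $xy$ on $[0,1]^2$ by $m$ sawtooth iterations of $x^2$. It has depth $\simeq(m+5)\lceil\log_2\bar{D}\rceil$ and error at most $3^{\bar{D}}2^{-m}$.

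\textbf{Step 3: output layer and assembly.} The $|G|$ product networks are stacked in parallel, which is where the width $6(\bar{D}+1)N$ comes from. The last layer multiplies each output by $f(\vx_\ell)$ and sums. Because weights must be bounded by $1$, I first replace $f$ by $(f-f(0))/(2L\sqrt{\bar{D}}+1)$, whose range lies in $[-1/2,1/2]$, and undo the rescaling and the shift through additional depth-$O(\log(L+1))$ scaling layers. This is the source of the $(L+1)$ factor. Summing the per-term multiplication errors over $|G|\lesssim 6^{\bar{D}}N$ grid points gives the first error term, $(L+1)(1+\bar{D}^2)6^{\bar{D}}N2^{-m}$. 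Counting parameters gives $s\lesssim(\bar{D}+1)^{3+\bar{D}}N(m+6)$.

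\textbf{Main obstacle.} I expect the bookkeeping in Step 3 to be the hardest part, more than any conceptual difficulty. The issue is keeping all weights in $[-1,1]$ while still representing the scale factors $M$ and $L$ and the coefficients $f(\vx_\ell)$, without inflating the depth beyond $(m+1)(1+\log(\bar{D}+1))$. To handle it, I would follow Schmidt-Hieber's device of computing the construction for the rescaled function on $[0,1]$ and then attaching an affine output rescaling realized by repeated doubling layers, checking that this adds only lower-order depth and parameters.
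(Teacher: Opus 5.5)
The paper gives no argument beyond citing Schmidt-Hieber's Theorem~5 at H\"older smoothness $\beta=1$. Your route (a hat-function partition of unity realized through $\mathrm{Mult}_m^{\bar D}$) is therefore the intended one. However, two steps of your bookkeeping do not produce the architecture the statement asserts.

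\textbf{Width.} With $M=\lfloor N^{1/\bar D}\rfloor$ you only get $|G|=(M+1)^{\bar D}\lesssim 2^{\bar D}N$, and the loss is real. For $N=k^{\bar D}$ one has $|G|=(1+1/k)^{\bar D}N=e^{\Theta(\bar D)}N$ whenever $k$ stays bounded, which $N\ge\exp(\Omega(\bar D))$ permits. Stacking $|G|$ copies of $\mathrm{Mult}_m^{\bar D}$, each of width $6\bar D$, therefore gives width $6\bar D|G|$, not $6(\bar D+1)N$. Take instead, as Schmidt-Hieber does, the largest $M$ with $(M+1)^{\bar D}\le N$. Then $M+2>N^{1/\bar D}$, so $1/M<3N^{-1/\bar D}$ as soon as $M\ge 1$ (which is what $N\ge 2^{\bar D}$ guarantees), and your Step~1 still yields the $LN^{-1/\bar D}$ term.

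\textbf{Depth.} Producing the slope $M$ by doubling costs $\Theta(\log M)=\Theta(\bar D^{-1}\log N)$ layers. Undoing your rescaling by $2L\sqrt{\bar D}+1$ costs $\Theta(\log(L+1))$ more. The claimed depth $(m+1)(1+\log(\bar D+1))$ depends on neither $N$ nor $L$, so for fixed $m,\bar D$ and $N\to\infty$ your network is too deep. The ``repeated doubling layers'' in your last paragraph are precisely what has to be avoided.

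The missing device is to spend width instead of depth:
use the factors $(1/M-|x_j-x_j^\ell|)_+$, which need only weights $\pm1$ and biases in $[0,1]$, so that the $\mathrm{Mult}$ outputs approximate $M^{-\bar D}\phi_{\vx_\ell}$;
keep the coefficients shifted and scaled into $[0,1]$ so that all intermediate values are nonnegative;
and multiply by the accumulated integer factors only at the end, in $O(1)$ layers, via $Cx=\sum_{i=1}^{C}\sigma(x)$ for $x\ge 0$.
That last step costs width $C$, which fits because $M^{\bar D}\le N$ and the coefficient scale is $O(N)$. This is where Schmidt-Hieber's condition $N\ge (K+1)e^{\bar D}$ enters; the paper silently specializes it to $K=O(1)$. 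That final multiplication by $M^{\bar D}\le N$ is where the factor $N$ in the first error term really comes from. Only the $2^{\bar D}$ active cells contribute error, since $\mathrm{Mult}$ vanishes when one of its inputs does.

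Finally, both your step of undoing the shift by $f(0)$ and the statement itself need a sup-norm bound such as $\|f\|_\infty\lesssim L+1$. Outputs of networks of this width and depth are bounded in terms of $\bar D$, $N$ and $\ell$ alone, so with merely $f\in\mathrm{Lip}_L$ a large constant function is a counterexample. Schmidt-Hieber's theorem is stated for the H\"older ball $\mathcal{C}^1([0,1]^{\bar D},K)$, which includes $\|f\|_\infty\le K$ and measures the Lipschitz constant in $|\cdot|_\infty$. You should make that assumption explicit.
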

This lemma shows that deep ReLU networks can approximate any Lipschitz function on $[0,1]^{O(D)}$ with an explicit trade-off between network depth, width, and approximation error. The next remark connects this general result to our Mercer–RKHS setting.

    
   

\begin{lemma}[\cite{Schmidt-Hieber2020Nonparametric}]
    Let $V = \prod_{i=0}^\ell (p_i+1)$. Then,
    \begin{equation}
        \log \covering(\mathcal{F}(\ell, p, s, F);\delta)_\infty
        \lesssim (s+1)\log (\delta^{-1} \ell V).
    \end{equation}
\end{lemma}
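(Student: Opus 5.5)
The plan is the standard parameter-discretization argument of \citet{Schmidt-Hieber2020Nonparametric}. First I show that, on a bounded input domain (say $[0,1]^{p_0}$; a general bounded domain only changes constants inside the logarithm), the realized network is Lipschitz in its parameters with constant of order $\ell V$. Then I discretize the at most $s$ active parameters on a grid and count the grid points. Truncation at level $F$ (clipping the output to $[-F,F]$) is $1$-Lipschitz, so it does not increase $\|\cdot\|_\infty$ distances and can be ignored throughout.

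\emph{Step 1: parameter Lipschitzness.} Take two networks $f,f'$ with the same architecture $(\ell,\vp)$. Assume all entries satisfy $\max_j\{\|W_j\|_\infty,|v_j|_\infty\}\le 1$ and differ entrywise by at most $\eta$.
\begin{itemize}
\item For each $k$, let $A_k^+$ denote the map given by the first $k$ layers. Let $A_k^-$ denote the map given by the remaining layers.
\item By induction, using $\|W\|_\infty\le 1$, $|v|_\infty\le 1$ and the fact that ReLU is $1$-Lipschitz with $\sigma(0)=0$, I get $\|A_k^+ x\|_\infty\le \prod_{j<k}(p_j+1)$.
\item Similarly, $A_k^-$ is Lipschitz in $\|\cdot\|_\infty$ with constant at most $\prod_{j> k}p_j$.
\item I then telescope, replacing the parameters of one layer at a time. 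The error introduced at layer $k$ is at most $\eta(p_k\|A_k^+x\|_\infty+1)\le \eta\prod_{j\le k}(p_j+1)$. This error is propagated through $A_k^-$.
\item Each of the $\ell+1$ summands is therefore at most $\eta V$, so $\|f-f'\|_\infty\lesssim \eta\,\ell V$.
\end{itemize}
This step, namely getting the product bounds on the partial networks right so that the total is $\ell V$ rather than something larger, is the main technical point. Everything else is counting.

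\emph{Step 2: counting.} The total number of parameter slots is $T=\sum_j(p_jp_{j+1}+p_{j+1})\le V$. A network in the class is specified by two pieces of data:
\begin{itemize}
\item the support of its nonzero parameters, for which there are at most $\sum_{r\le s}\binom{T}{r}\le (T+1)^s\le (V+1)^s$ choices;
\item the values on that support, each lying in $[-1,1]$.
\end{itemize}
I round each value to the grid $\eta\mathbb{Z}\cap[-1,1]$, which has at most $2/\eta+1$ points. Rounding keeps the network inside the constraint class and perturbs each entry by at most $\eta$. By Step 1, the resulting finite family is an $O(\eta\ell V)$-cover.

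\emph{Step 3: choice of scale.} I choose $\eta\simeq \delta/(\ell V)$. Then
\[
\log \covering(\mathcal{F}(\ell,\vp,s,F);\delta)_\infty\le s\log(V+1)+s\log\big(2\ell V\delta^{-1}+1\big)\lesssim (s+1)\log(\delta^{-1}\ell V),
\]
which is the claimed bound.
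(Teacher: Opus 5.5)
Your argument is the standard proof of Lemma~5 in \citet{Schmidt-Hieber2020Nonparametric}, which the paper simply cites without giving a proof of its own. Step~1 and the grid counting are sound. The one step that fails as written is the unproved inequality $T\le V$. With the statement's $V=\prod_{i=0}^{\ell}(p_i+1)$, which omits the output width $p_{\ell+1}$, it is false: for $\ell=1$ and $\mathbf{p}=(1,1,P)$ one has $V=4$ but $T=P+2$. In that regime the claimed bound itself fails. Take the maps $x\mapsto\sigma(x)\,\mathbf{u}_i$, where $\mathbf{u}_i$ is the $i$-th standard basis vector of $\mathbb{R}^P$, realized with $W_0=1$, $v_1=0$, $W_1=\mathbf{u}_i$. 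They lie in $\mathcal{F}(1,\mathbf{p},2,1)$ and are pairwise at sup-distance $1$ on $[0,1]$. Hence $\log\mathcal{N}\ge\log P$ for any $\delta<1/2$, whereas the right-hand side $3\log(4\delta^{-1})$ does not depend on $P$.

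The repair is Schmidt-Hieber's own convention $V=\prod_{i=0}^{\ell+1}(p_i+1)$. Then $T\le\sum_{j=0}^{\ell}(p_j+1)p_{j+1}\le(\ell+1)2^{-\ell}V\le V$, and your Step~1 estimate is unaffected, since it already holds with the smaller product. Alternatively, keep the paper's $V$ but assume $p_{\ell+1}\lesssim\max_{j\le\ell}p_j$, which still gives $\log(T+1)\lesssim\log(\ell V)$. That is the situation in the paper's uses: scalar output in the second MLP, and output width $d+D$ of the order of the hidden widths in the first.

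A minor point: in the paper's class, $F$ enters as the constraint $\|f\|_\infty\le F$ rather than as clipping, so a rounded network need not stay in the class. This is harmless. The paper's definition of covering allows centers outside the covered set, so it suffices to cover $\mathcal{F}(\ell,\mathbf{p},s,\infty)\supseteq\mathcal{F}(\ell,\mathbf{p},s,F)$, which is what your construction does.
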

This bound shows that the covering entropy grows at most logarithmically with the resolution $\delta^{-1}$, once the architecture parameters $(\ell,p,s)$ are fixed. Applying our parameter selection yields the following implication.


Finally, for later use, we recall a useful estimate on the Lipschitz constant of a ReLU network in terms of its layer widths.

\begin{lemma}[From the proof of lemma 5 in \citet{Schmidt-Hieber2020Nonparametric}]
    The Lipschitz constant of NN (w.r.t. infinity norm) is $\prod_{i=0}^\ell p_i$. 
\end{lemma}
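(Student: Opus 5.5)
The plan is a layer-by-layer induction on the network's representation in the definition of $\mathcal{F}(\ell,\vp,s,F)$,
\[
f(\mathbf{x}) = W_\ell \sigma_{\mathbf{v}_\ell} W_{\ell-1} \sigma_{\mathbf{v}_{\ell-1}} \cdots W_1 \sigma_{\mathbf{v}_1} W_0 \mathbf{x},
\]
using only the entrywise constraint $\max_j \|W_j\|_\infty \le 1$. The target is the upper bound $\|f(\mathbf{x})-f(\mathbf{x}')\|_\infty \le \big(\prod_{i=0}^{\ell} p_i\big)\|\mathbf{x}-\mathbf{x}'\|_\infty$. The statement should be read as this upper bound, not as an equality.

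\textbf{Linear layers.} For $W_i \in \mathbb{R}^{p_{i+1}\times p_i}$ with all entries in $[-1,1]$, each output coordinate is a sum of at most $p_i$ terms. So
\[
\|W_i \mathbf{u}\|_\infty \le p_i \|\mathbf{u}\|_\infty ,
\]
which is the same elementary estimate as (P-i) in the proof of \cref{lemma-first-attention-lipschitz}, with sparsity level replaced by the row length $p_i$. Because $W_i$ is linear, this bound applies directly to the difference $\mathbf{u}-\mathbf{u}'$. Hence each linear map is $p_i$-Lipschitz in $\|\cdot\|_\infty$.

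\textbf{Activations.} The shifted activation $\sigma_{\mathbf{v}_i}(\mathbf{z}) = \sigma(\mathbf{z}-\mathbf{v}_i)$ acts coordinatewise. The shift cancels in differences, and the scalar ReLU is $1$-Lipschitz, so $\|\sigma_{\mathbf{v}_i}(\mathbf{z})-\sigma_{\mathbf{v}_i}(\mathbf{z}')\|_\infty \le \|\mathbf{z}-\mathbf{z}'\|_\infty$. In particular the bias bound $|\mathbf{v}_i|_\infty \le 1$ plays no role.

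\textbf{Induction.} Define $h_0(\mathbf{x}) = W_0\mathbf{x}$ and $h_{k}(\mathbf{x}) = W_k \sigma_{\mathbf{v}_k}(h_{k-1}(\mathbf{x}))$. Alternating the two facts gives
\[
\|h_k(\mathbf{x})-h_k(\mathbf{x}')\|_\infty \le \Big(\prod_{i=0}^k p_i\Big)\|\mathbf{x}-\mathbf{x}'\|_\infty .
\]
Taking $k=\ell$ yields the claim.

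There is no real obstacle here. The only point needing care is the index bookkeeping: the factor for $W_i$ is the input width $p_i$, not $p_{i+1}$. With that convention the product runs over $i=0,\dots,\ell$ and matches the $V=\prod_{i=0}^\ell(p_i+1)$-type quantities used earlier. If the sparsity $s$ were to be exploited, one could sharpen $p_i$ to the maximal row support of $W_i$, but this is not needed for the statement.
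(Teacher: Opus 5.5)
Your proof is correct and is the same argument the paper relies on: the paper gives no proof of its own and defers to the proof of Lemma 5 in \citet{Schmidt-Hieber2020Nonparametric}, where the bound is obtained by exactly this layer-by-layer induction. There, as in your proof, each weight matrix contributes its input width $p_i$ through the entrywise bound, and the shifted ReLU is $1$-Lipschitz coordinatewise. Reading ``is'' as ``is at most,'' as you do, is the intended meaning.
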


From the above lemmas, we have a specialized approximation results for our Mercer-RKHS setting:

\begin{corollary}[Specialization to Our Setting]
    \label{corollary-second-mlp-all}
    Under \cref{setting-probability} and assume $d_1 \simeq (\ln \epsilon^{-1})^{\beta^{-1}}$ so that
    \begin{equation}
        \bar{D} \sim \qty(\tfrac{-\gammafunc+\gammaball}{2c})^{-1/\alpha} (\ln \epsilon^{-1})^{1/\alpha}+d_1 \;\;\sim\; (\ln \epsilon^{-1})^{1/\mathrm{min}(\alpha,\beta)}.
    \end{equation}
    Letting
    \begin{equation}
        m = O\big(\bar{D} \cdot \log \epsilon_3^{-1}\big),
        \qquad 
        N = \epsilon_3^{-\bar{D}}
    \end{equation}
    in \cref{lemma-schmidt-hieber-approx-based-on-analytic}, there exists a ReLU network with depth 
    \begin{equation}
        \ell_2 \lesssim (\mathrm{poly} \log (d+D) \cdot (d+D) \cdot \log \epsilon_3^{-1},
    \end{equation}
    width
    \begin{equation}
        \|\vp_{2}\|_\infty \lesssim (d+D)\epsilon_3^{-d+D}
    \end{equation}
    and the number of parameters
    \begin{equation}
        s_2 \lesssim \tilde{O}\Big(
            \epsilon_3^{-O\big((\ln \epsilon^{-1})^{1/\mathrm{min}(\alpha,\beta)}\big)} 
            \cdot ((\ln \epsilon)^{\mathrm{min}(\alpha,\beta)})^{O((\ln \epsilon)^{1/\mathrm{min}(\alpha,\beta)})}
            \cdot (\mathrm{poly}\log (d+D+\epsilon_3^{-1})
        \Big)
    \end{equation}
    that approximates $\bar{F}_D$, which was defined in \cref{corollary-truncated-F}, within sup-norm error $\lesssim \epsilon$.  

    Moreover, the covering entropy of the corresponding hypothesis class satisfies
    \begin{equation}
        \log \covering \big(\mathcal{F}(\ell, p, s, F);\epsilon_3\big)_\infty
        \;\lesssim\; \epsilon_3^{-O((\ln \epsilon^{-1})^{1/\mathrm{min}(\alpha,\beta)})} \cdot \mathrm{poly}\log (\epsilon^{-1}\epsilon^{-1}_3),
    \end{equation}
    and the Lipschitz constant of the network (with respect to the $\ell_\infty$ norm) is bounded as
    \begin{equation}
        \prod_{i=0}^\ell p_i \;\lesssim\; \epsilon_3^{-O\big(c_\epsilon (\ln \epsilon^{-1})^{\frac{2}{\mathrm{min}(\alpha,\beta)}} \cdot  (\ln \epsilon_3^{-1})\big)},
        \qquad c_\epsilon \lesssim \mathrm{poly}\log\log \epsilon^{-1}.
    \end{equation}

    Instead of assuming $d_1 \lesssim (\log \epsilon^{-1})^{\beta^{-1}}$, if we assume that $F^\star$ is independent of $\query$, then, by adding one layer $\mathbb{R}^{d+D}\ni x \mapsto x_{d+1:d+D} = -\mathrm{ReLU}(-Ax)+\mathrm{ReLU}(Ax) \in \mathbb{R}^{D}$ where $A = [O_{D\times d};I_D]$,
    the ReLU network that approximates $\bar{F}_D$ with sup-error $\lesssim \epsilon_3$ is constructed with depth 
    \begin{equation}
        \ell_2 \lesssim (\mathrm{poly} \log (D) \cdot (D) \cdot \log \epsilon_3^{-1},
    \end{equation}
    width
    \begin{equation}
        \vp_{2} = (d+D, \underbrace{O(D\epsilon_3^{-D}),\dots,O(D\epsilon_3^{-D})}_{\text{$\ell-1$ times}})
    \end{equation}
    and the number of parameters
    \begin{equation}
        s_2 \lesssim \tilde{O}\Big(
            \epsilon_3^{-O\big((\ln \epsilon^{-1})^{1/\alpha}\big)} 
            \cdot ((\ln \epsilon)^{1/\alpha})^{O((\ln \epsilon)^{1/\alpha})}
            \cdot (\mathrm{poly}\log (D+\epsilon_3^{-1})+D
        \Big).
    \end{equation}
    The covering entropy is bounded as
        \begin{equation}
        \log \covering \big(\mathcal{F}(\ell, p, s, F);\epsilon_3\big)_\infty
        \;\lesssim\; \epsilon_3^{-O((\ln \epsilon^{-1})^{1/\alpha})} \cdot \mathrm{poly}\log (\epsilon^{-1} d \epsilon^{-1}_3),
    \end{equation}
    and the Lipschitz constant is
    \begin{equation}
        \prod_{i=0}^\ell p_i \;\lesssim\; d_1 \cdot \epsilon_3^{-O\big(c_\epsilon (\ln \epsilon^{-1})^{\frac{2}{\alpha}} \cdot  (\ln \epsilon_3^{-1})\big)},
        \qquad c_\epsilon \lesssim \mathrm{poly}\log\log \epsilon^{-1},
    \end{equation}
    where $D \simeq (\ln \epsilon^{-1})^{\alpha^{-1}}$.
\end{corollary}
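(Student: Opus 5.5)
The plan is to apply \cref{lemma-schmidt-hieber-approx-based-on-analytic} to a rescaled copy of $\bar F_D$, then read off the architecture size, covering entropy and Lipschitz constant from the two lemmas of \citet{Schmidt-Hieber2020Nonparametric} recalled above.

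\textbf{Step 1: reduce to a Lipschitz function on a cube.} By \cref{corollary-truncated-F}, $\bar F_D$ is $O(1)$-Lipschitz in $(b_1,\dots,b_D,v)$, and the inputs are bounded. Indeed, $|b_j|\le \lambda_j^{\gammaball/2}\le \lambda_1^{\gammaball/2}$ on the ball $\inputsetforlipschitz$, and $v\in\mathbb{S}^{d_1-1}$. So an affine rescaling onto $[0,1]^{\bar D}$, with $\bar D=D+d_1$, keeps the Lipschitz constant $O(1)$ up to a factor polynomial in $\bar D$.

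\textbf{Step 2: choose $D$.} By \cref{lemma-truncation-d} the truncation error is $\lambda_{D+1}^{(\gammaball-\gammafunc)/2}\simeq\exp(-c(\gammaball-\gammafunc)D^\alpha/2)$. This is at most $\epsilon$ once $D\sim(\tfrac{-\gammafunc+\gammaball}{2c})^{-1/\alpha}(\ln\epsilon^{-1})^{1/\alpha}$. Combined with $d_1\simeq(\ln\epsilon^{-1})^{1/\beta}$, this gives $\bar D\sim(\ln\epsilon^{-1})^{1/\min(\alpha,\beta)}$.

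\textbf{Step 3: apply the approximation lemma.} Take $N=\epsilon_3^{-\bar D}$, so that $N\ge e^{\Omega(\bar D)}$ and the term $LN^{-1/\bar D}=O(\epsilon_3)$. Take $m\simeq \bar D\log\epsilon_3^{-1}$ with a large enough constant, so that $(1+\bar D^2)6^{\bar D}N2^{-m}\le\epsilon_3$. Then:
\begin{itemize}
\item the depth is $\ell_2\simeq(m+1)(1+\log(\bar D+1))$, which is $\mathrm{poly}\log(d+D)\cdot(d+D)\log\epsilon_3^{-1}$;
\item the width is $6(\bar D+1)N$, which is $(d+D)\epsilon_3^{-\bar D}$;
\item the parameter count is $s_2\lesssim(\bar D+1)^{3+\bar D}N(m+6)$. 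Since $(\bar D+1)^{\bar D}=((\ln\epsilon^{-1})^{1/\min(\alpha,\beta)})^{O(\bar D)}$, this matches the stated form.
\end{itemize}
The resulting sup-norm error is $O(\epsilon_3)$. Adding the $\epsilon$ truncation error gives the claimed approximation of $F^\star$, with $\epsilon_3\le\epsilon$.

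\textbf{Step 4: covering entropy and Lipschitz constant.} For the covering entropy, the bound $(s_2+1)\log(\delta^{-1}\ell V)$ with $\log V\le(\ell_2+1)\log(\|\vp_2\|_\infty+1)$ gives $\log V=\mathrm{poly}(\bar D,\log\epsilon_3^{-1})$. Hence the entropy is dominated by $s_2=\epsilon_3^{-O(\bar D)}\cdot\mathrm{poly}\log$, as claimed.

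The Lipschitz constant $\prod_i p_i\le\|\vp_2\|_\infty^{\ell_2}$ is the step I expect to be delicate. We have $\log\|\vp_2\|_\infty\approx\bar D\log\epsilon_3^{-1}$ and $\ell_2\approx\bar D\log\epsilon_3^{-1}\,\mathrm{poly}\log\bar D$. Their product gives an exponent $\bar D^2(\log\epsilon_3^{-1})^2\,\mathrm{poly}\log\bar D$, i.e.\ $\epsilon_3^{-O(c_\epsilon(\ln\epsilon^{-1})^{2/\min(\alpha,\beta)}\ln\epsilon_3^{-1})}$ with $c_\epsilon=\mathrm{poly}\log\log\epsilon^{-1}$. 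The rescaling in Step 1 contributes only a $\mathrm{poly}(\bar D)$ factor, which is absorbed.

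\textbf{Query-independent case.} Here $\bar F_D$ depends only on $b\in\mathbb{R}^D$. I prepend the exact projection layer $x\mapsto\mathrm{ReLU}(Ax)-\mathrm{ReLU}(-Ax)$, with $A=[O_{D\times d};I_D]$. It has width $O(D)$, $O(D)$ parameters, and Lipschitz factor $O(1)$, except that the input width $d+D$ enters the product $\prod_i p_i$ and yields the extra factor $d_1$. I then repeat Steps 2–4 with $\bar D=D\sim(\ln\epsilon^{-1})^{1/\alpha}$. The covering entropy picks up only $\log d$ inside the $\mathrm{poly}\log$, since $d$ appears only in the first width.
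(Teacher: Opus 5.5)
Your proposal is correct and follows the route the paper intends; the paper gives no separate proof, only ``from the above lemmas.'' You fix $D$ via \cref{lemma-truncation-d}, apply \cref{lemma-schmidt-hieber-approx-based-on-analytic} with $N=\epsilon_3^{-\bar D}$ and $m\simeq \bar D\log\epsilon_3^{-1}$, read off the entropy and the $\prod_i p_i$ Lipschitz bound from the remaining Schmidt-Hieber lemmas, and use the projection layer for the query-independent case.

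One step is left tacit, both by you and by the paper. $\bar F_D$ is only defined on the admissible coefficient ellipsoid times the query set, not on the whole cube. Before invoking the approximation lemma you should extend it, e.g.\ by a McShane extension followed by clipping. This preserves its $O(\sqrt{\bar D})$ Lipschitz constant in $\ell_\infty$, so none of the bounds change.
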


\begin{remark}
    The original lemma of \citet{Schmidt-Hieber2020Nonparametric} is stated for functions on $[0,1]^{\bar{D}}$.  
    In our setting, the domain is $[-O(1),O(1)]^{\bar{D}}$.  
    A simple rescaling maps $[-O(1),O(1)]$ to $[0,1]$, and this transformation only modifies the Lipschitz constant by a fixed multiplicative factor.  
    Therefore, the approximation and covering results above remain valid up to universal constants.
\end{remark}

This corollary consolidates the consequences of parameter selection in our setting: the effective input dimension $\bar{D}$ grows like $(\ln \epsilon^{-1})^{1/\alpha}$, the network size scales sub-exponentially in $1/\epsilon$, the covering entropy is controlled by $\epsilon^{-O((\ln \epsilon^{-1})^{1/\alpha})}$, and the Lipschitz constant grows at most quasi-polynomially in $\epsilon^{-1}$, when $\epsilon_3^{-1} \simeq \mathrm{poly}\log \epsilon^{-1} \cdot \epsilon^{-1}$

\subsection{Step 2-4: Second Attention Layer}
\label{subsection-second-attention}

Recall that the attention hypothesis class is parameterized as
\[
    \mathcal{A}(\dimattn,H,B_a,B_a',S_a,S_a'),
\]
where $\dimattn$ is the embedding dimension, $H$ is the number of heads,
$B_a,B_a'$ are bounds on the operator norms of the weight matrices,
and $S_a,S_a'$ are sparsity constraints.

In the present step, we only implement the skip connection of a scalar. We specialize to the case
\[
    \dimattn = 1, 
    \quad H=1, 
    \quad B_a=0,\; B_a'=1, 
    \quad S_a=0,\; S_a'=1.
\]
That is, the second attention layer belongs to the class
\[
    \mathcal{A}(1,1,0,1,0,1).
\]
This particular choice corresponds to a degenerate attention operator that
is independent of the input measure and simply implements a skip connection
acting as the identity on vectors, thereby ensuring consistency with the
formal definition of the overall transformer class.

\begin{lemma}
    \label{lemma-second-attention-covering}
    The $\epsilon_4$-covering number of $\mathcal{A}(d+D,1,0,1,0,d+D)$ satisfies
    \begin{equation}
        \log \covering(\mathcal{A}(1,1,0,1,0,1);\delta)_\infty
        = O\big(\mathrm{poly}\log \epsilon_4^{-1}\big).
    \end{equation}
\end{lemma}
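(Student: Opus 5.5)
The plan is to observe that the class in question is degenerate and then count its free parameters directly. With $B_a=0$ and $S_a=0$, every head matrix $W^h,Q^h,K^h,V^h$ must vanish. The multi-head term $\sum_h W^h\int \mathrm{Softmax}(\cdots)V^h y\,\dd\nu(y)$ is therefore identically zero, whatever the input measure. Every element of the class reduces to the skip connection $x\mapsto Ax$.

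The constraints on $A$ are $\|A\|_\infty\le B_a'=1$ and $\|A\|_0\le S_a'$. In the scalar case $\mathcal{A}(1,1,0,1,0,1)$ this leaves $A=a\in[-1,1]$, a single real parameter. If one reads the statement with dimension $d+D$ and sparsity $d+D$, then $A$ has at most $d+D$ nonzero entries, each in $[-1,1]$.

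The covering will follow the recipe of \cref{lemma-first-attention-covering-number}, with the attention part absent:
\begin{enumerate}
\item Choose the support of $A$. This contributes a factor $\binom{\dimattn^2}{S_a'}$, which equals $1$ in the scalar case.
\item Quantize each nonzero entry on the grid $\{j\tilde\epsilon-1 : j=0,\dots,\lceil 2\tilde\epsilon^{-1}\rceil\}$, with $\tilde\epsilon\simeq (S_a' B_x)^{-1}\epsilon_4$.
\end{enumerate}
The error estimate is the bound labelled (P-i):
\[
\|(A-\hat A)x\|_\infty \le S_a'\,\tilde\epsilon\,B_x \lesssim \epsilon_4,
\]
uniformly over $\nu$ and over $x$ with $\|x\|_\infty\le B_x=O(1)$. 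Here we use that all intermediate outputs are assumed uniformly bounded.

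Counting the cover gives
\[
\log\covering \le \log\binom{\dimattn^2}{S_a'} + S_a'\log(3\tilde\epsilon^{-1}).
\]
In the scalar case this is $O(\log\epsilon_4^{-1})$. In the $(d+D)$-dimensional reading it is $O\big((d+D)\log((d+D)\epsilon_4^{-1})\big)$. Since $d+D$ is itself polylogarithmic in $\epsilon^{-1}$, and $\epsilon_4$ will be tied to $\epsilon$ in the final step, both are $O(\mathrm{poly}\log\epsilon_4^{-1})$.

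For later use in \cref{lemma-composition-mappings}, note that the map $x\mapsto Ax$ is independent of $\nu$ and is $O(1)$-Lipschitz in $x$. Hence $L_{2,1}=0$ and $L_{2,2}=O(1)$ in the scalar case.

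I expect no real obstacle; the argument is bookkeeping. The one point needing care is reconciling the dimension mismatch between $\mathcal{A}(d+D,1,0,1,0,d+D)$ and $\mathcal{A}(1,1,0,1,0,1)$ in the statement. I would prove the general-dimension bound and specialize it to the scalar case.
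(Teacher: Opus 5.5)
Your argument is correct. Since the paper omits this proof, it is the natural argument left implicit there: with $B_a=S_a=0$ the heads vanish, and quantizing $A$ as in the $A$-part of \cref{lemma-first-attention-covering-number} gives $O(\log\epsilon_4^{-1})$ for the scalar class $\mathcal{A}(1,1,0,1,0,1)$, which is the class actually used (the main theorem fixes ${\dimattn}_2=S'_{a,2}=1$).

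One caveat on your side remark: $d+D$ is polylogarithmic in $\epsilon^{-1}$ only in regime (i). In regime (ii), $d_1$ may be as large as $\exp(O((\log\epsilon^{-1})^{(\alpha+1)/\alpha}))$, so the $(d+D)$-dimensional reading would not give a polylog bound there. This is another reason to rely only on the scalar case.
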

\begin{proof}
    omitted.
\end{proof}

\begin{lemma}
    \label{lemma-second-attention-lipschitz}
    Every attention operator $\attention \in \mathcal{A}(1,1,0,1,0,1)$ is
    $O(1)$-Lipschitz with respect to the Euclidean norm.
\end{lemma}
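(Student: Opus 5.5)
The plan is to write out explicitly what an element of $\mathcal{A}(1,1,0,1,0,1)$ computes and read the Lipschitz bound off that formula. By the definition of the measure-theoretic attention layer, such an operator has the form
\[
\attention_\theta(\nu,x) = A x + W^1 \int \mathrm{Softmax}(\langle Q^1 x, K^1 y\rangle)\, V^1 y \,\mathrm{d}\nu(y),
\]
with $A,W^1,Q^1,K^1,V^1\in\mathbb{R}^{1\times 1}$. The constraints $S_a=0$ and $B_a=0$ force $W^1=Q^1=K^1=V^1=0$. Either constraint alone already gives this: $\|\cdot\|_0\le 0$ means every entry is zero, and $\|\cdot\|_\infty\le 0$ means the same.

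Once these matrices vanish, the softmax weight is identically $1$, since $\exp(0)/\int \exp(0)\,\mathrm{d}\nu = 1$ for any probability measure $\nu$. The attention term is then multiplied by $W^1=0$, so it contributes nothing. This holds regardless of $\nu$, and the integral is well defined because $\nu$ is a probability measure. Hence $\attention_\theta(\nu,x) = A x$ with $A$ a scalar satisfying $|A|\le B_a'=1$ (the sparsity bound $\|A\|_0\le S_a'=1$ places no further restriction).

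It follows that for any $(\nu_1,x_1),(\nu_2,x_2)$,
\[
|\attention_\theta(\nu_1,x_1)-\attention_\theta(\nu_2,x_2)| = |A|\,|x_1-x_2| \le \|x_1-x_2\|_2 .
\]
This is also at most $0\cdot W_1(\nu_1,\nu_2)+1\cdot\|x_1-x_2\|_2$. The operator is therefore $(0,1)$-Lipschitz in the sense required by \cref{lemma-composition-mappings}, and in particular $O(1)$-Lipschitz with respect to the Euclidean norm.

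I expect no genuine obstacle. The only point needing care is to confirm that the degenerate parameters make the softmax normalization harmless, which holds because the denominator equals $\nu(\mathbb{R})=1>0$. If the intended reading allows a nonzero head (say $S_a=1$), I would instead invoke \cref{lemma-first-attention-lipschitz} with $S_a,B_a,H,B_x,B_y=O(1)$. That lemma yields a Lipschitz constant of $\exp(O(1))=O(1)$, so the conclusion holds in either case.
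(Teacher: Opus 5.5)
Your argument is correct. The paper omits this proof but describes the class as a degenerate, measure-independent skip connection, and your observation is exactly that: $B_a=S_a=0$ forces $W^1=Q^1=K^1=V^1=0$, which leaves $\attention_\theta(\nu,x)=Ax$ with $|A|\le 1$. This gives the $(0,1)$-Lipschitz bound that \cref{lemma-composition-mappings} requires, so the fallback via \cref{lemma-first-attention-lipschitz} is not needed.
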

\begin{proof}
    omitted.
\end{proof}

\subsection{Deriving an Estimation Error Upper Bound}
\label{subsection-approximation-final}

We now combine the approximation bounds established in the previous subsections
to derive an estimation error guarantee for transformer-type architectures.  
Let $\mathrm{TF}$ denote the hypothesis class consisting of transformer models
with the architecture and parameter constraints described in \cref{main-section-settings}.  

\begin{lemma}[Approximation by transformers]
    \label{lemma:tf-approximation}
    In \cref{setting-probability}, for all $\tilde{F}^\star \in \mathrm{Lip}_L(\inputsetforlipscitz,\metricforlipschitz)$, there exists $\hat{F}\in\mathrm{TF}$ such that, for
    any input of the form
    \[
        (\nu, \query) \quad\text{where}\quad
        \nu = \tfrac{1}{I}\sum_{i=1}^I \mu^{(i)}_{v^{(i)}}, \quad
        \query = \embedding_{v^{(i^*)}}(0_{d_2})
    \]
    with $\mu^{(i)}_{v^{(i)}}$ generated according to
    \cref{main-definition-input-output},
    \begin{equation}
        \label{eq:tf-approximation}
        |F^\star(\nu,\query) - \hat{F}(\nu,\query)| \lesssim \epsilon,
    \end{equation}
    where the parameters $(d_j,H_j,B_{a,j},B'_{a,j},S_{a,j},S'_{a,j},\ell_j,{\vp}_j,s_j)_{j=1}^2$ of the hypothesis set $\mathrm{TF}$ are defined as in \cref{lemma-first-attention-approximation} for $d_1,H_1,B_{a,1},B'_{a,1},S_{a,1},S'_{a,1}$, \cref{corollary-first-mlp-analytic} for $\ell_1,{\vp}_1,s_1$, \cref{lemma-second-attention-covering} for $d_2,H_2,B_{a,2},B'_{a,2},S_{a,2},S'_{a,2}$, \cref{corollary-second-mlp-all} for $\ell_2,{\vp}_2,s_2$, respectively. The effective dimension $D$ in them are determined in \cref{corollary-truncated-F}. Determination of $\epsilon_i$, $i=1,\dots,4$ are deferred to \cref{lemma-transformers-covering}.
\end{lemma}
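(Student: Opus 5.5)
We will build $\hat F=\attention_{\theta_2}\diamond\mlp_{\xi_2}\diamond\attention_{\theta_1}\diamond\mlp_{\xi_1}$ explicitly, layer by layer, from the constructions of Step 2. Then we control the error by a telescoping argument through the composition rule of \cref{definition-composition-mappings}.

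First we fix $D$ using \cref{lemma-truncation-d} and \cref{corollary-truncated-F}. We take $D\simeq(\log\epsilon^{-1})^{1/\alpha}$, so that $\lambda_{D+1}^{(-\gammafunc+\gammaball)/2}\lesssim\epsilon$. This gives $\sup|\bar F_D-F^\star|\lesssim\epsilon$, and $\bar F_D$ is $O(1)$-Lipschitz in $(b_1,\dots,b_D,v)$. By \cref{corollary-first-mlp-analytic}, $\mlp_{\xi_1}=\hat f_1$ satisfies $\|\hat f_1-f\|_\infty\lesssim\epsilon_1$, where $f(x)=(x,e_1(x_2),\dots,e_D(x_2))$.

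For the first attention layer we apply \cref{lemma-first-attention-approximation} to the exact pushforward $\nu_f=f_\sharp\nu$ with $\tilde f=(e_j)_{j\le D}$. This yields an $\epsilon_2$-accurate output $(\query,b_1,\dots,b_D)$, since $\int e_j\,\dd\mu_0^{(i^*)}=b_j$ under \cref{assumption-probability} and \cref{remark_of_measures_and_densities}. The network actually receives $(\hat f_1)_\sharp\nu$ and $\hat f_1(\query)$, not $\nu_f$ and $f(\query)$. Since $W_1((\hat f_1)_\sharp\nu,f_\sharp\nu)\le\|\hat f_1-f\|_\infty\sqrt{d+D}$ and $\|\hat f_1(\query)-f(\query)\|_2\lesssim\sqrt{d+D}\,\epsilon_1$, \cref{lemma-first-attention-lipschitz} bounds the resulting discrepancy by $L_{\attention,1}\sqrt{d+D}\,\epsilon_1$, where $L_{\attention,1}\lesssim D\exp(O(d^2\log(I\epsilon_2^{-1})))$. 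This is the mechanism of \cref{lemma-composition-mappings}, applied here to a single approximant rather than a cover.

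Next, $\mlp_{\xi_2}$ is the network of \cref{corollary-second-mlp-all}, which approximates $\bar F_D$ (after rescaling to a unit cube) within $\epsilon_3$. Its output at the perturbed input differs from $\bar F_D(b,v)$ by at most $\epsilon_3+L_{\mlp,2}(\epsilon_2+L_{\attention,1}\sqrt{d+D}\,\epsilon_1)$. Alternatively, we use the $O(1)$ Lipschitz constant of $\bar F_D$ itself, which bounds the same quantity by $\epsilon_3+O(1)\cdot(\epsilon_2+L_{\attention,1}\sqrt{d+D}\,\epsilon_1)$. We prefer this route: compare $\hat f_2(z')$ to $\bar F_D(z')$ (error $\epsilon_3$), then $\bar F_D(z')$ to $\bar F_D(z)$ (error $O(\|z-z'\|)$). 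It avoids the huge $L_{\mlp,2}$ altogether, provided $z'$ stays in the domain where $\hat f_2$ approximates $\bar F_D$. We ensure this by having $\hat f_2$ approximate a Lipschitz extension of $\bar F_D$ to a slightly enlarged cube. The final attention layer is the identity skip connection from \cref{lemma-second-attention-lipschitz}, which contributes no error.

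Summing the contributions gives
\[|F^\star-\hat F|\lesssim\epsilon+\epsilon_3+\epsilon_2+L_{\attention,1}\sqrt{d+D}\,\epsilon_1.\]
We then choose $\epsilon_3=\epsilon_2=\epsilon$ and $\epsilon_1=\epsilon/(L_{\attention,1}\sqrt{d+D})$. Under case (i), $I\le d_1\lesssim(\log\epsilon^{-1})^{1/\beta}$, so $\log\epsilon_1^{-1}$ is polylogarithmic in $\epsilon^{-1}$. The architecture sizes of \cref{corollary-first-mlp-analytic} therefore stay within the stated classes; case (ii) is analogous, with the projection layer of \cref{corollary-second-mlp-all}.

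The main obstacle is the first attention step. \cref{lemma-first-attention-approximation} is stated for the exact feature map $f$, while the actual input is perturbed both in the measure and in the query. The softmax separation must survive this perturbation, which holds because $\hat f_1$ reproduces the first $d$ coordinates exactly via the ReLU identity. In addition, the Lipschitz factor $\exp(O(d^2\log(I\epsilon_2^{-1})))$ must be absorbed into $\epsilon_1$ without inflating $\log\epsilon_1^{-1}$ beyond polylog. Exact reproduction of the $v$-coordinates makes the selection weights identical, so only the value coordinates incur $O(\epsilon_1)$ error. That gives a cleaner direct bound, which we try first.
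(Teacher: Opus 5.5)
Your proposal is correct. It builds the same network as the paper: the first MLP of \cref{corollary-first-mlp-analytic}, the selecting attention of \cref{lemma-first-attention-approximation}, truncation via \cref{corollary-truncated-F}, the second MLP of \cref{corollary-second-mlp-all}, and an identity skip on top. Your error bookkeeping, however, follows a different route.

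The paper gives no separate proof of this lemma. Its accuracies are the ones fixed in \cref{lemma-transformers-covering}, following the mechanism of \cref{lemma-composition-mappings}. There the attention error is pushed through the network's Lipschitz constant $L_{\mlp,2}$, and the first-MLP error through $L_{\mlp,2}L_{\attention,1}$. This forces $\epsilon_2$ and $\epsilon_1$ to be super-polynomially small in $\epsilon$.

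You differ in two ways.
\begin{itemize}
\item You propagate through the target's $O(1)$ Lipschitz constant, using $|\hat f_2(z')-\bar F_D(z)|\le\epsilon_3+O(\|z'-z\|)$ together with a Lipschitz extension of $\bar F_D$, so that $z'$ stays in the approximation domain.
\item You note that exact reproduction of the tag coordinates leaves the softmax weights unchanged. The first-layer error then enters only additively through the value path. The paper's proof of \cref{lemma-first-attention-approximation} tacitly uses exactly this when it writes $\tilde f_h+O(\epsilon_1)$.
\end{itemize}

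Each route has its merits. Yours shows that approximation alone needs only $\epsilon_1,\epsilon_2\simeq\epsilon$ (so, e.g., $B_{a,1}\simeq\sqrt{\log(I\epsilon^{-1})}$), and it never relies on the crude bound $L_{\mlp,2}\le\prod_i p_i$. The paper's route needs no extension of $\bar F_D$, since $\hat f_2$ is compared with itself. It also lets one choice of $\epsilon_i$ serve both this lemma and the covering bound. There the small radii are unavoidable anyway: the composition lemma must control arbitrary class members, which need not preserve the tags and for which only network Lipschitz constants are available. So your refinement does not change the final rate.

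Two small remarks.
\begin{itemize}
\item The statement ties the $\epsilon_i$ to \cref{lemma-transformers-covering}. Your direct-route bound $\epsilon+\epsilon_3+\epsilon_2+\epsilon_1$ is monotone in the $\epsilon_i$, so you can run it verbatim with the paper's smaller values. The network then lies in the stated class with no depth padding.
\item Measure the perturbation of $b$ in the weighted metric $\bigl(\sum_j\lambda_j^{-\gammafunc}(b_j-c_j)^2\bigr)^{1/2}$ rather than in $\ell_2$. Since $\sum_j\lambda_j^{-\gammafunc}<\infty$ for $\gammafunc<0$, sup-norm errors then cost only an $O(1)$ factor, with no $\sqrt{D}$ loss.
\end{itemize}
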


The above lemma shows that the transformer hypothesis class is sufficiently rich to approximate any Lipschitz target function $\tilde{F}^\star$ on the admissible input domain, with uniform accuracy $\epsilon$. Please note that the output of each layer is uniformly bounded (we can add a clipping ReLU layer for each layer).

To analyze the statistical performance of ERM (empirical risk minimizer) within this class, we next require an upper bound on its covering entropy.

\begin{lemma}[Covering entropy of transformers]
    \label{lemma-transformers-covering}
    The covering entropy of the transformer hypothesis class satisfies
    \begin{equation}
        \log \covering(\mathrm{TF};\epsilon)_\infty \lesssim \epsilon^{-O((\ln \epsilon^{-1})^{1/\mathrm{min}(\alpha,\beta)})}=\exp(O(\ln \epsilon^{-1})^{(1+\mathrm{min}(\alpha,\beta))/\mathrm{min}(\alpha,\beta)}))
    \end{equation}
    assuming that $I \leq d_1 \simeq (\ln \epsilon^{-1})^{\beta^{-1}}$ and $d_2 \simeq 1$.
\end{lemma}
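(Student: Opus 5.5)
We plan to apply the Composition Lemma (\cref{lemma-composition-mappings}) three times, peeling the architecture $\mathrm{Attn}_{\theta_2}\diamond\mathrm{MLP}_{\xi_2}\diamond\mathrm{Attn}_{\theta_1}\diamond\mathrm{MLP}_{\xi_1}$ from the outside in. Each outer layer forces the next inner layer to be covered at a finer scale. First, the second attention layer lies in $\mathcal{A}(1,1,0,1,0,1)$. It is $O(1)$-Lipschitz by \cref{lemma-second-attention-lipschitz} and does not depend on the measure, so $L_{2,1}=0$. Hence covering $\mathrm{Attn}_{\theta_2}$ at scale $\epsilon/2$ and $\mathrm{MLP}_{\xi_2}\diamond\cdots$ at scale $\Omega(\epsilon)$ suffices. \cref{lemma-second-attention-covering} contributes only $\mathrm{poly}\log\epsilon^{-1}$.

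Next, $\mathrm{MLP}_{\xi_2}$ is also measure-independent, with Lipschitz constant $L_{\mlp,2}\le\prod_i p_i$. We take $\epsilon_3\simeq\epsilon$ up to polylog factors. By \cref{corollary-second-mlp-all}, $L_{\mlp,2}\lesssim \exp\big(O(c_\epsilon(\ln\epsilon^{-1})^{1+2/\min(\alpha,\beta)})\big)$. The MLP's own entropy at scale $\epsilon_3$ is $\epsilon^{-O((\ln\epsilon^{-1})^{1/\min(\alpha,\beta)})}$, which is already the target bound. The inner block $\mathrm{Attn}_{\theta_1}\diamond\mathrm{MLP}_{\xi_1}$ must then be covered at scale $\epsilon_2\simeq L_{\mlp,2}^{-1}\epsilon$. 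This gives $\log\epsilon_2^{-1}\lesssim c_\epsilon(\ln\epsilon^{-1})^{1+2/\min(\alpha,\beta)}$, which is polylogarithmic in $\epsilon^{-1}$.

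For the innermost composition, \cref{lemma-first-attention-lipschitz} shows that $\mathrm{Attn}_{\theta_1}$ is $(L,L)$-Lipschitz in $(W_1,\|\cdot\|_2)$, with $L\lesssim D\exp(O(d^2\log(I\epsilon_2^{-1})))$, since $B_a\lesssim\sqrt{\log(I\epsilon_2^{-1})}$ and $S_a\lesssim d$. We therefore cover $\mathrm{MLP}_{\xi_1}$ at scale $\epsilon_1\simeq \epsilon_2/L$. This is the delicate point. Here $d=d_1+d_2\simeq(\ln\epsilon^{-1})^{1/\beta}$, so $\log\epsilon_1^{-1}\lesssim d^2\log(I\epsilon_2^{-1})$, which is still $\mathrm{poly}\log\epsilon^{-1}$. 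The $W_1$-perturbation of the pushforward is controlled by the sup-norm perturbation of the MLP, because $W_1((\Gamma)_\sharp\mu,(\Gamma')_\sharp\mu)\le\|\Gamma-\Gamma'\|_\infty$ for the same base measure; we use this in the composition lemma. Plugging in, \cref{lemma-first-attention-covering-number} gives $\log\mathcal{N}\lesssim C_D\,Dd^3(\log I+\log\epsilon_2^{-1})^3$. The first-MLP entropy bound gives $C_{D,\epsilon_1}(d+D(\log\epsilon_1^{-1})^{4d_2})$ with $d_2\simeq1$. Both are $\mathrm{poly}\log\epsilon^{-1}$, using $D\simeq(\ln\epsilon^{-1})^{1/\alpha}$ and $I\le d_1$.

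Summing the logarithms, the dominant term is the second-MLP entropy $\epsilon^{-O((\ln\epsilon^{-1})^{1/\min(\alpha,\beta)})}=\exp(O((\ln\epsilon^{-1})^{(1+\min(\alpha,\beta))/\min(\alpha,\beta)}))$. All other terms are polylogarithmic and are absorbed. The main obstacle we expect is bookkeeping the exponentially large Lipschitz constant $L_{\mlp,2}$ propagated inward. We must check that it only enters the inner entropies through $\log\epsilon_2^{-1}$ and $\log\epsilon_1^{-1}$, raised to fixed powers that depend on $d_2$ and the $d^3$ factor. We must also check that the attention Lipschitz constant's dependence $\exp(d^2\log\epsilon_2^{-1})$ stays logarithmic after taking logs, which holds because every appearance is inside a logarithm of the scale.
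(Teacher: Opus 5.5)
Your proposal is correct and follows essentially the same route as the paper: apply \cref{lemma-composition-mappings} from the outside in, with scales $\epsilon_4=\epsilon$, $\epsilon_3=\tilde\Omega(\epsilon)$, $\epsilon_2=\tilde\Omega(L_{\mlp,2}^{-1}\epsilon)$ and $\epsilon_1=\tilde\Omega(\epsilon_2/L_{\attention,1})$, and observe that every entropy except the second-MLP term stays polylogarithmic in $\epsilon^{-1}$. One minor slip: \cref{corollary-second-mlp-all} with $\epsilon_3\simeq\epsilon$ actually gives $\log L_{\mlp,2}\lesssim c_\epsilon(\ln\epsilon^{-1})^{2+2/\min(\alpha,\beta)}$, not $(\ln\epsilon^{-1})^{1+2/\min(\alpha,\beta)}$, but this is still polylogarithmic and does not change your conclusion.
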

\begin{proof}
    The claim follows from applying the composition lemma (\cref{lemma-composition-mappings}) for covering numbers.
    In particular,
    \begin{align}
        &\log \covering(\mathrm{TF};\epsilon)_\infty\\
        \lesssim&
        \log \covering
        \left(
            \mathcal{A}(1,1,0,1,0,1);\epsilon
        \right)_\infty
        +
        \log \covering
        \left(
            \mathcal{F}(\ell_2,p_2,s_2);\tilde{\Omega}(\epsilon)
        \right)_\infty\\
        &+
        \log \covering
        \left(
            \lbrace
                \attention_{\theta_1}\diamond \mlp_{\xi_1}
            \rbrace
            ;
            \Omega(L_{\mlp,2}^{-1}\epsilon)
        \right)\\
        \lesssim& \log \covering
        \left(
            \mathcal{A}(1,1,0,1,0,1); \underbrace{\epsilon}_{\eqcolon\epsilon_4}
        \right)_\infty\\
        &+\log \covering
        \left(
            \mathcal{F}(\ell_2,p_2,s_2);\underbrace{\tilde{\Omega}(\epsilon)}_{\eqcolon\epsilon_3}
        \right)_\infty
        +
        \log \covering
        \left(
            \mathcal{A}(\dimattn,H,B_a,H_a)
            ;\underbrace{\tilde{\Omega}(L_{\mlp,2}^{-1}\epsilon)}_{\eqcolon\epsilon_2}
        \right)_\infty\\
        &+\log \covering
        \left(
            \mathcal{F}(\ell_1,p_1,s_1);
            \underbrace{\tilde{\Omega}(L_{\mlp,2}^{-1}(L_{\attention,1,W^1}+L_{\attention,1,\|\|_2})^{-1}\epsilon)}_{\eqcolon\epsilon_1}
        \right)_\infty\\
        \lesssim& \log \covering
        \left(
            \mathcal{A}(d+D,1,0,1,0,d+D); \epsilon
        \right)_\infty\\
        &+\log \covering
        \left(
            \mathcal{F}(\ell_2,p_2,s_2);\tilde{\Omega}(\epsilon)
        \right)_\infty
        +
        \log \covering
        \left(
            \mathcal{A}(\dimattn,H,B_a,H_a);
            \exp(- O (c_\epsilon\ln \epsilon^{-1})^{\frac{2+2\mathrm{min}(\alpha,\beta)}{\mathrm{min}(\alpha,\beta)}})
        \right)_\infty\\
        &+\log \covering
        \left(
            \mathcal{F}(\ell_1,p_1,s_1);
            I^{-1}\cdot
            \exp(- O (c_\epsilon\ln \epsilon^{-1})^{\frac{2+2\mathrm{min}(\alpha,\beta)}{\mathrm{min}(\alpha,\beta)}})
        \right)_\infty,
    \end{align}
     where $c_\epsilon\lesssim \mathrm{poly} \log \log \epsilon^{-1}$. Here we used that
     
     (i) Letting $\epsilon_4 = \epsilon$, the second attention layer is $\tilde{O}(1)$-Lipschitz (\cref{lemma-second-attention-lipschitz});
     
     (ii) Letting $\epsilon_3 \gtrsim \tilde{\Omega}(\epsilon)$, the Lipschitz constant of the second MLP layer is bounded as $L_{\mlp,2} \lesssim  \exp({-O\big(c_\epsilon ((\ln \epsilon^{-1})^{2/\mathrm{min}(\alpha,\beta)}) \cdot  (\ln \epsilon_3^{-1})^2\big)})
     = \exp(c_\epsilon (\ln \epsilon^{-1})^{\frac{2+2\mathrm{min}(\alpha,\beta)}{\mathrm{min}(\alpha,\beta)}})$ ($c_\epsilon\lesssim \mathrm{poly} \log \log \epsilon^{-1}$) (\cref{corollary-second-mlp-all});
     
     (iii) Letting $\epsilon_2 \gtrsim \exp(- O (c_\epsilon\ln \epsilon^{-1})^{\frac{2+2\mathrm{min}(\alpha,\beta)}{\mathrm{min}(\alpha,\beta)}})$, the Lipschitz constants of the first attention layer are bounded as $L_{\attention,1,W^1}+L_{\attention,1,\|\|_2} \lesssim D\,\exp(O(d^2\log(I \epsilon_2^{-1}))) \lesssim I^{O(1)}\cdot \exp(O(c_\epsilon d^2 (\ln \epsilon^{-1})^{\frac{2+2\mathrm{min}(\alpha,\beta)}{\mathrm{min}(\alpha,\beta)}}))$ (\cref{lemma-first-attention-lipschitz});

     (iv) We have $\epsilon_1 \gtrsim  I^{-O(1)}\cdot \exp(- O (c_\epsilon d^2 (\ln \epsilon^{-1})^{\frac{2+2\mathrm{min}(\alpha,\beta)}{\mathrm{min}(\alpha,\beta)}}))$ for the first MLP layer.

    By \cref{lemma-second-attention-covering,corollary-second-mlp-all},
    \begin{equation}
        \log \covering
        \left(
            \mathcal{A}(1,1,0,1,0,1);\epsilon_4
        \right)_\infty
        \lesssim \mathrm{poly}\log \epsilon^{-1}
    \end{equation}
    By \cref{corollary-second-mlp-all},
    \begin{equation}
        \log \covering
        \left(
            \mathcal{F}(\ell_2,p_2,s_2);\epsilon_3
        \right)_\infty
        \lesssim  \epsilon^{-O((\ln \epsilon^{-1})^{1/\mathrm{min}(\alpha,\beta)})}.
    \end{equation}
    By \cref{lemma-first-attention-covering-number},
    \begin{equation} 
        \log \covering
        \left(
            \mathcal{A}(\dimattn,H,B_a,H_a);\epsilon_2
        \right)_\infty
        \lesssim \mathrm{poly}(\log \epsilon^{-1}+\log I) \cdot d^3.
    \end{equation}
    By \cref{corollary-first-mlp-analytic}
    \begin{equation}
        \log \covering
        \left(
            \mathcal{F}(\ell_1,p_1,s_1);\epsilon_1
        \right)_\infty
        \lesssim
        \mathrm{poly}\log \log I \cdot \mathrm{poly}\log \epsilon^{-1} \cdot (
        ((\log I)+\mathrm{poly}\log \epsilon^{-1}\cdot d)^{4d_2} \cdot D +d).
    \end{equation}
    Assuming that $I \leq d_1 \simeq (\log \epsilon)^{\beta^{-1}}$ and $d_2 = O(1)$, we have
    \begin{align}
        &\log \covering
        \left(
            \mathcal{A}({\dimattn}_1,H_1,B_{a,1},H_{a,1});\epsilon_2
        \right)_\infty
        +
        \log \covering
        \left(
            \mathcal{F}(\ell_1,p_1,s_1);\epsilon_1
        \right)_\infty\\
        &+\log \covering
        \left(
            \mathcal{F}(\ell_2,p_2,s_2);\epsilon_3
        \right)_\infty
        +
        \log \covering
        \left(
            \mathcal{A}(1,1,0,1,0,1);\epsilon_4
        \right)_\infty\\
        \lesssim \epsilon^{-O((\ln \epsilon^{-1})^{1/\mathrm{min}(\alpha,\beta)})}.
    \end{align}
\end{proof}

With these ingredients, we can invoke a general statistical learning bound for ERM.

\begin{lemma}[\cite{Schmidt-Hieber2020Nonparametric}]
    \label{lemma:schmidt-hieber-risk}
    Consider Gaussian regression, and let $\hat{F}$ be the empirical risk minimizer
    over a hypothesis class $\mathcal{F} \subset L^2(\measureofinputs)$.  
    Suppose $\|f\|_{L^\infty}\leq A$ for all $f\in \mathcal{F}$. Then, for any $\delta>0$,
    if $V(\delta)$ denotes the covering entropy of $\mathcal{F}$, it holds that
    \begin{equation}
        R(\bar{F}^\star,\hat{F})
        \;\lesssim\;
        \inf_{f\in\mathcal{F}}\|\hat{F}-\bar{F}^\star\|_{L^2(\measureofinputs)}^2
        + \frac{(A^2+\sigma^2)V(\delta)}{n} + (A+\sigma)\delta.
    \end{equation}
\end{lemma}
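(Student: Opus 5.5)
The plan is to reproduce the oracle inequality of Schmidt-Hieber (his Lemma~4) in our notation. The only structure we use is that the inputs $(\nu_t,\queryt)$ are i.i.d., that the noise is $\xi_t\sim\calN(0,\sigma^2)$ independent of the inputs, and that every $f\in\mathcal{F}$ satisfies $\|f\|_{L^\infty}\le A$. Nothing measure-theoretic is needed beyond measurability of the map $(\nu,\query)\mapsto f(\nu,\query)$, so the argument is exactly the finite-dimensional one with $(\nu,\query)$ treated as an abstract input $X$. We may also assume $\|\bar F^\star\|_\infty\lesssim A$, since otherwise the first term already dominates.

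\textbf{Step 1 (basic inequality).} Fix any $f^\circ\in\mathcal{F}$. Optimality of $\hat F$ for the empirical risk gives
\[
\hat{\mathbb{E}}_n\big[(\hat F-\bar F^\star)^2\big]\le \hat{\mathbb{E}}_n\big[(f^\circ-\bar F^\star)^2\big]+\frac{2}{n}\sum_{t}\xi_t\big(\hat F-f^\circ\big)(X_t).
\]
Taking expectations, the in-sample risk of $\hat F$ is bounded by $\|f^\circ-\bar F^\star\|_{L^2}^2$ plus twice the expected noise term $\mathbb{E}\big[\tfrac1n\sum_t\xi_t\hat F(X_t)\big]$. The $f^\circ$ part of the noise term has mean zero because $f^\circ$ is deterministic.

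\textbf{Step 2 (noise term via a $\delta$-net).} Let $\{f_1,\dots,f_N\}$ be a $\delta$-cover of $\mathcal{F}$ in sup-norm, with $\log N=V(\delta)$. Let $j^*$ be the index of a cover element within $\delta$ of $\hat F$. Replacing $\hat F$ by $f_{j^*}$ costs at most $\delta\,\mathbb{E}\big[\tfrac1n\sum|\xi_t|\big]\lesssim\sigma\delta$. Conditionally on the inputs, the normalized variables
\[
\frac{\sum_t\xi_t(f_j-\bar F^\star)(X_t)}{\sqrt{n}\,\sigma\,\big(\|f_j-\bar F^\star\|_n\vee\sqrt{V/n}\big)}
\]
are standard Gaussians (after the maximum with $\sqrt{V/n}$ they are sub-Gaussian with unit parameter), so their maximum over the $N$ indices is $O(\sqrt{V(\delta)})$ in expectation. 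Combining this with Cauchy--Schwarz gives a bound of the form
\[
\sigma\sqrt{\tfrac{V}{n}}\,\big(\hat R_n^{1/2}+\delta\big)+\frac{\sigma V}{n},
\]
where $\hat R_n$ denotes the expected in-sample risk $\mathbb{E}\,\hat{\mathbb{E}}_n[(\hat F-\bar F^\star)^2]$. Using $ab\le a^2/4+b^2$, the $\hat R_n^{1/2}$ factor is absorbed into the left-hand side. This yields
\[
\hat R_n\lesssim\|f^\circ-\bar F^\star\|_{L^2}^2+\frac{\sigma^2V}{n}+\sigma\delta .
\]

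\textbf{Step 3 (in-sample to population risk).} This transfer is where I expect the only real care is needed. I would introduce an independent ghost sample $X'_1,\dots,X'_n$ and bound
\[
\big|\mathbb{E}\|\hat F-\bar F^\star\|^2_{L^2}-\hat R_n\big| .
\]
Again $\hat F$ is replaced by its cover element $f_{j^*}$, at cost $O(A\delta)$. For each fixed $j$, the centered variables $g_j(X_t)-g_j(X'_t)$ with $g_j=(f_j-\bar F^\star)^2$ are bounded by $4A^2$ and have variance at most $8A^2\,\mathbb{E}g_j$. A Bernstein inequality combined with a union bound over the $N$ cover elements then gives
\[
\big|\mathbb{E}\|\hat F-\bar F^\star\|^2-\hat R_n\big|\le \tfrac12\,\mathbb{E}\|\hat F-\bar F^\star\|^2+O\Big(\frac{A^2V}{n}+A\delta\Big).
\]
The difficulty is that the variance term depends on the data-dependent index $j^*$, so it must be controlled through the normalization-by-$\sqrt{\mathbb{E}g_j}$ trick rather than a plain Hoeffding bound.

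\textbf{Conclusion.} Rearranging the bound from Step~3, combining it with Step~2, and taking the infimum over $f^\circ\in\mathcal{F}$ gives the stated inequality with terms $(A^2+\sigma^2)V(\delta)/n$ and $(A+\sigma)\delta$.
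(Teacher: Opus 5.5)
Your three steps are exactly the proof of Lemma~4 in \citet{Schmidt-Hieber2020Nonparametric}, which the paper cites instead of proving: the basic inequality, then a Gaussian maximal inequality over a sup-norm $\delta$-net with self-normalization and absorption, then a ghost sample with Bernstein and the normalization $r_j=\sqrt{V/n}\vee\|f_j-\bar F^\star\|_{L^2}$. Under that lemma's hypotheses your argument goes through. It also yields $\inf_{f\in\mathcal F}\|f-\bar F^\star\|^2_{L^2(\measureofinputs)}$ as the first term, which is what the statement's $\inf_f\|\hat F-\bar F^\star\|^2$ is meant to say.

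The one step that does not hold is your opening reduction: ``we may assume $\|\bar F^\star\|_\infty\lesssim A$, since otherwise the first term already dominates.'' Being large in sup norm does not make $\bar F^\star$ far from $\mathcal F$ in $L^2(\measureofinputs)$.

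\emph{Counterexample.} Take $\bar F^\star=M\mathbbm{1}_S$ with $M\gg A$ but $M^2\,\measureofinputs(S)\ll A^2$, and suppose $0\in\mathcal F$.
\begin{itemize}
\item Then $\inf_f\|f-\bar F^\star\|^2\le M^2\,\measureofinputs(S)\ll A^2$.
\item The only trivial bound available, $R\le 8A^2+2\inf_f\|f-\bar F^\star\|^2$, therefore does not imply the claim.
\end{itemize}

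\emph{Why Step~3 needs boundedness.} The Bernstein inputs are $|g_j(X)-g_j(X')|\le 4A^2$ and $\mathrm{Var}\le 8A^2\,\mathbb{E}g_j$ for $g_j=(f_j-\bar F^\star)^2$. Both fail when $\bar F^\star$ is unbounded. Step~2 is unaffected, since it is conditionally Gaussian. So your argument does not cover unbounded targets.

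\emph{The fix.} Make boundedness a hypothesis, not a reduction. Schmidt-Hieber's lemma assumes $\{f_0\}\cup\mathcal F\subset\{f:\|f\|_\infty\le F\}$, and the paper's restatement simply dropped this. In the paper's application it holds automatically: $F^\star$ is Lipschitz on a bounded domain, hence bounded, so one can take $A\ge\|F^\star\|_\infty$.
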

We are now ready to state the statistical rate achieved by transformer ERM.

\subsubsection{Sub-Polynomial Convergence Rate}

\begin{theorem}[Sub-polynomial convergence]
    \label{theorem-subpoly}
    Let $\hat{F}$ be the empirical risk minimizer whose hypothesis set is constructed in \cref{lemma:tf-approximation}.
    In \cref{setting-probability}, we have
    \begin{equation}
        R(F^\star, \hat{F}) \lesssim \exp(-\Omega((\ln n)^{\frac{\mathrm{min}(\alpha,\beta)}{\mathrm{min}(\alpha,\beta)+1}}))
    \end{equation}
    assuming that the number of mixture components is bounded as $I \leq d_1 \lesssim (\ln n)^{\frac{\beta^{-1}\min(\alpha,\beta)}{\min(\alpha,\beta)+1}}$ and $d_2 \simeq 1$.
\end{theorem}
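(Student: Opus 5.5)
The argument is a bias--variance balance. \cref{lemma:schmidt-hieber-risk} is applied to the class $\mathrm{TF}=\mathrm{TF}(\epsilon)$ of \cref{lemma:tf-approximation}, with $\epsilon$ chosen as a function of $n$. Write $\kappa\coloneq\min(\alpha,\beta)$.

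\textbf{Approximation term.} All layer outputs are uniformly bounded, by clipping as remarked after \cref{lemma:tf-approximation}, so $A=O(1)$. The term $\inf_{f\in\mathrm{TF}}\|f-F^\star\|^2_{L^2(\measureofinputs)}$ is bounded by the squared sup-norm error of \cref{lemma:tf-approximation}, which is $O(\epsilon^2)$. That error combines three sources:
\begin{itemize}
\item the Mercer truncation error of \cref{corollary-truncated-F}, $\lambda_{D+1}^{(-\gammafunc+\gammaball)/2}\lesssim\epsilon$ once $D\simeq(\ln\epsilon^{-1})^{1/\alpha}$;
\item the second-MLP error $\epsilon_3$ from \cref{corollary-second-mlp-all};
\item the propagated errors $(L_{\mlp,2}\text{-weighted})(\epsilon_1+\epsilon_2)$, controlled through \cref{lemma-composition-mappings} with $\epsilon_1,\epsilon_2$ as fixed in \cref{lemma-transformers-covering}.
\end{itemize}
The hypothesis $I\le d_1\lesssim(\ln n)^{\beta^{-1}\kappa/(\kappa+1)}$ has to be made compatible with the assumption $d_1\simeq(\ln\epsilon^{-1})^{\beta^{-1}}$ of \cref{lemma-transformers-covering}. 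This works once $\ln\epsilon^{-1}\simeq(\ln n)^{\kappa/(\kappa+1)}$. If $d_1$ is smaller, we enlarge the exponent to $\kappa$ anyway, so the bounds remain valid as upper bounds.

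\textbf{Complexity term.} Set $\delta=1/n$. \cref{lemma-transformers-covering} gives
\[
V(1/n)=\log\covering(\mathrm{TF};1/n)_\infty\lesssim\exp\bigl(O((\ln\epsilon^{-1})^{(1+\kappa)/\kappa})\bigr)\cdot\mathrm{poly}\log n .
\]
The covering bound is stated at scale $\epsilon$, not $1/n$. Every factor there is a power of the resolution times terms in $\epsilon$, so passing from $\epsilon$ to $1/n$ at most multiplies the exponent by $\ln n/\ln\epsilon^{-1}$. This inflation must be rechecked; I expect it to be absorbed as described in the obstacle paragraph below. The risk is therefore
\[
R\lesssim\epsilon^2+\frac{1}{n}\exp\bigl(C(\ln\epsilon^{-1})^{(1+\kappa)/\kappa}\bigr)+\frac1n .
\]

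\textbf{Choice of $\epsilon$.} Take $\ln\epsilon^{-1}=c_0(\ln n)^{\kappa/(\kappa+1)}$ with $c_0$ small enough that $C c_0^{(1+\kappa)/\kappa}\le 1/2$. Then the exponent is $C(\ln\epsilon^{-1})^{(1+\kappa)/\kappa}\le\tfrac12\ln n$, so the variance term is at most $n^{-1/2}$. The bias term is $\epsilon^2=\exp(-2c_0(\ln n)^{\kappa/(\kappa+1)})$, which dominates. This yields $R\lesssim\exp(-\Omega((\ln n)^{\kappa/(\kappa+1)}))$.

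\textbf{Main obstacle.} The delicate step is the variance term: the $\delta$-dependence of the entropy, together with the super-polynomially small $\epsilon_1,\epsilon_2$ forced by $L_{\mlp,2}$. The first thing I would try is to redo \cref{lemma-transformers-covering} at resolution $\delta=1/n$. The idea is to check that each layer's entropy depends on $\delta$ only through a factor $\log\delta^{-1}=\log n$ times a quantity depending on $\epsilon$ alone. This holds for the attention layers by \cref{lemma-first-attention-covering-number}, and for the MLPs by the $(s+1)\log(\delta^{-1}\ell V)$ bound, with $s,\ell,V$ fixed by $\epsilon$. If so, the extra cost is a $\mathrm{poly}\log n$ factor and the balance above is unchanged. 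Otherwise I would choose $\delta=\epsilon^2$, which makes $\delta$ comparable to the bias and leaves the entropy exactly as in \cref{lemma-transformers-covering}.
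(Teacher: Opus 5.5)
Your proposal is correct and follows the paper's proof. Both combine the Schmidt-Hieber oracle inequality with the entropy bound of \cref{lemma-transformers-covering}, and both take $\epsilon\simeq\exp(-c'(\ln n)^{\min(\alpha,\beta)/(\min(\alpha,\beta)+1)})$ with $c'$ small, so that the variance term is $n^{c''-1}$ for some $c''<1$. The paper evaluates the entropy at scale $\delta\simeq\epsilon$ (your fallback) and bounds the bias by $\epsilon$ rather than $\epsilon^2$; your $\delta=1/n$ variant also works, because the architecture is fixed by $\epsilon$ and each layer's entropy depends on $\delta$ only through $\log\delta^{-1}$.
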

\begin{proof}
    Let 
    $\epsilon \simeq \exp(-c'(\ln n)^{\frac{\mathrm{min}(\alpha,\beta)}{\mathrm{min}(\alpha,\beta)+1}})$
    for sufficiently small constant $c'>0$.
    Combining \cref{lemma:tf-approximation,lemma-transformers-covering} with \cref{lemma:schmidt-hieber-risk}, we obtain
    \begin{align}
        R(F^\star, \hat{F}) \lesssim& \epsilon + \frac{\exp(-O((\ln \epsilon^{-1})^{1/\mathrm{min}(\alpha,\beta)+1}))}{n}\\
        \lesssim& \epsilon + 
        \frac{
            \exp\qty(
                    O(c'^{(\mathrm{min}(\alpha,\beta)^{-1}+1)}(\ln n)))
                )
            }{n}\\
        \leq& \epsilon + \frac{n^{c''}}{n}\quad \text{for some $0<c''<1$,}\\
        \lesssim& \exp(-\Omega((\ln n)^{\frac{\mathrm{min}(\alpha,\beta)}{1+\mathrm{min}(\alpha,\beta)}})),
    \end{align}
    assuming that the number of mixture components is bounded as $I \leq d_1 \lesssim (\ln \epsilon^{-1})^{\beta^{-1}}$
\end{proof}

\begin{remark}[Interpretation of Theorem~\ref{theorem-subpoly}]
    A common statistical learning bound for nonparametric regression
    takes the form
    \[
        R(\hat{F},\bar{F}^\star) \;\lesssim\; n^{-\Theta(1/d)},
    \]
    where $d$ is the (effective) dimension of the problem.
    In our setting, however, the eigenvalue decay assumption
    $\lambda_j \simeq \exp(-c j^\alpha)$ implies that the effective dimension
    grows only as
    \[
        d \;\sim\; (\ln n)^{1/(\alpha+1)}.
    \]
    Consequently, the bound in Theorem~\ref{theorem-subpoly} can be interpreted
    as a direct analogue of the classical $n^{-\Theta(1/d)}$ rate,
    but with $d$ replaced by $(\ln n)^{1/(\alpha+1)}$.
    Importantly, this shows that the estimator bypasses the usual
    combinatorial difficulty of associative recall tasks.
    In our framework, each element to be recalled is not a finite symbol
    but rather a \emph{probability measure}, i.e.\ an infinite-dimensional
    object. Despite this intrinsic complexity, the analysis reveals that
    the statistical behavior is governed purely by the eigenvalue decay
    of the underlying kernel, leading to the rate characteristic of
    infinite-dimensional regression.
\end{remark}

\subsubsection{Beyond Logarithmic Capacity}
\label{subsection-beyond-logarithmic-capacity}

In \cref{theorem-subpoly}, we discussed the case that the number of components (the ``capacity" in terms of the associative memory) is bounded as
\begin{equation}
    I \leq d_1 \lesssim (\ln n)^{\frac{\beta^{-1}\min(\alpha,\beta)}{\min(\alpha,\beta)+1}},
\end{equation}
which is logarithmic with respect to not only the sample size $n$, but also the number of ``parameters", which we consider as the covering number, of our Transformer models. This is because the lipschitz functions over $\mathbb{S}^{d_1-1}$, not $\inputsetforlipscitz$, becomes too complex when $d_1$ is large. On the other hand, in \cref{subsection-first-attention}, we observed that the number of the actual parameters attention matrix is linear in $d_1 (\geq I)$.

Here we consider the following additional assumption:
\begin{assumption}
\label{assumption-function-independent-of-query}
    The target function $\tilde{F}^\star(\mu_0^{(i^*)},\query)$ is independent of $\query$ and only dependent of $\mu_0^{(i^*)}$.
    (i.e. we can write $\tilde{F}^\star(\mu_0^{(i^*)},\query) = \tilde{F}^\star(\mu_0^{(i^*)})$.)
\end{assumption}

Then, we have the polynomial ``capacity" even in the associative recall task with the infinite-dimensional measure-valued components:
\begin{lemma}
    Assume that 
    \begin{equation}
        I \leq d_1 \simeq \exp(O((\log \epsilon^{-1})^{\frac{\alpha+1}{\alpha}})), \quad d_2 = O(1).
    \end{equation}
    In \cref{setting-probability} and under the additional \cref{assumption-function-independent-of-query},
    \begin{equation}
        \log \covering(\mathrm{TF};\epsilon)_\infty \lesssim \exp(O(\ln \epsilon^{-1})^{(1+\alpha)/\alpha}))
    \end{equation}
\end{lemma}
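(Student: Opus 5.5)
The plan is to rerun the proof of \cref{lemma-transformers-covering}, but with the query-independent second MLP of \cref{corollary-second-mlp-all}. I then track where $d_1$ enters each of the four covering terms and check that none of them exceeds $\exp(O((\ln\epsilon^{-1})^{(1+\alpha)/\alpha}))$ under the stated growth of $d_1$. The starting point is the composition lemma (\cref{lemma-composition-mappings}), applied as before:
\begin{align}
\log\covering(\mathrm{TF};\epsilon)_\infty \lesssim{}& \log\covering(\mathcal{A}(1,1,0,1,0,1);\epsilon_4)+\log\covering(\mathcal{F}(\ell_2,p_2,s_2);\epsilon_3)\\
&+\log\covering(\mathcal{A}(\dimattn,H,B_a,S_a);\epsilon_2)+\log\covering(\mathcal{F}(\ell_1,p_1,s_1);\epsilon_1).
\end{align}
The tolerances are $\epsilon_4=\epsilon$, $\epsilon_3=\tilde\Omega(\epsilon)$, $\epsilon_2=\Omega(L_{\mlp,2}^{-1}\epsilon)$, and $\epsilon_1=\Omega(L_{\mlp,2}^{-1}(L_{\attention,1,W^1}+L_{\attention,1,\|\|_2})^{-1}\epsilon)$.

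\textbf{Second MLP.} Under \cref{assumption-function-independent-of-query}, the target reduces to $\bar F_D(b_1,\dots,b_D)$, with $D\simeq(\ln\epsilon^{-1})^{1/\alpha}$ by \cref{lemma-truncation-d} and \cref{corollary-truncated-F}. I use the second half of \cref{corollary-second-mlp-all}: a projection layer discards the first $d$ coordinates, and the remaining network only approximates a Lipschitz function on $[-O(1),O(1)]^D$. This gives covering entropy $\epsilon_3^{-O((\ln\epsilon^{-1})^{1/\alpha})}\mathrm{poly}\log(\epsilon^{-1}d\epsilon_3^{-1})$ and Lipschitz constant $L_{\mlp,2}\lesssim d_1\exp(c_\epsilon(\ln\epsilon^{-1})^{(2+2\alpha)/\alpha})$. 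With $\epsilon_3\simeq\epsilon$, the entropy term is $\exp(O((\ln\epsilon^{-1})^{(1+\alpha)/\alpha}))$, and $d$ enters only through a polylog factor. The key point is that $d_1$ now appears only linearly in the Lipschitz constant and polylogarithmically in the entropy, not in the exponent. In case (i), by contrast, $\bar D=D+d_1$ sits in the exponent.

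\textbf{First attention layer.} Taking $\epsilon_2\simeq \epsilon/L_{\mlp,2}$ gives $\log\epsilon_2^{-1}\lesssim \log d_1+c_\epsilon(\ln\epsilon^{-1})^{(2+2\alpha)/\alpha}$. By \cref{lemma-first-attention-covering-number}, with $H=D$ and $S_a\lesssim d$, the entropy is at most $C_{d,D}\,D d^3(\log I+\log\epsilon_2^{-1})^3$. Since $I\le d_1\le\exp(O((\ln\epsilon^{-1})^{(1+\alpha)/\alpha}))$, the factors $\log I$ and $\log\epsilon_2^{-1}$ are polynomial in $\ln\epsilon^{-1}$. The factor $d^3$ is at most $\exp(O((\ln\epsilon^{-1})^{(1+\alpha)/\alpha}))$. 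So this term is also of the target order.

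\textbf{First MLP.} I need the Lipschitz bound of \cref{lemma-first-attention-lipschitz}, namely $D\exp(O(d^2\log(I\epsilon_2^{-1})))$. Hence $\log\epsilon_1^{-1}\lesssim d^2\,\mathrm{poly}(\ln\epsilon^{-1})$. Plugging this into \cref{corollary-first-mlp-analytic} with $d_2=O(1)$ gives entropy
\begin{equation}
\mathrm{poly}\log\log I\cdot(d+D(\log\epsilon_1^{-1})^{4d_2})\cdot\mathrm{poly}\log\epsilon_1^{-1}\lesssim \mathrm{poly}(d)\,\mathrm{poly}(\ln\epsilon^{-1}).
\end{equation}
Since $\mathrm{poly}(d)=\exp(O(\log d))=\exp(O((\ln\epsilon^{-1})^{(1+\alpha)/\alpha}))$, this term is also bounded, and summing the four terms proves the claim.

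\textbf{Main obstacle.} The hard step is this last one. The factor $\exp(O(d^2\log(I\epsilon_2^{-1})))$ in the attention Lipschitz constant enters $\log\epsilon_1^{-1}$ with a $d^2$ coefficient, which is exponentially large in $(\ln\epsilon^{-1})^{(1+\alpha)/\alpha}$. I must check that it only contributes polynomially in $d$ to the final entropy, so the bound survives because $\log(\mathrm{poly}\,d)$ is absorbed. If the $S_a^2$ dependence in \cref{lemma-first-attention-lipschitz} were harmful, I would first sharpen it using the explicit construction of \cref{lemma-first-attention-approximation}: there $Q^h=K^h$ is diagonal on the first $d_1$ coordinates, so the score is $c^2\langle x_{1:d_1},y_{1:d_1}\rangle\le c^2$ for unit vectors. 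This would make the exponent $O(B_a^2)=O(\log(I\epsilon_2^{-1}))$ with no $d^2$ factor, restricting the class accordingly, for example via a norm constraint rather than only sparsity.
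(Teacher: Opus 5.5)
Your proposal is correct and follows essentially the same route as the paper. It reruns the composition-lemma bookkeeping of \cref{lemma-transformers-covering} with the query-independent second MLP of \cref{corollary-second-mlp-all}. As a result, $d_1$ enters the second-MLP Lipschitz constant only linearly, and the first-attention and first-MLP entropies only polynomially (through $d^3$, and through $(\log\epsilon_1^{-1})^{4d_2}$ with $\log\epsilon_1^{-1}\lesssim d^2\,\mathrm{poly}(\ln\epsilon^{-1})$), so $\mathrm{poly}(d)$ is absorbed into $\exp(O((\ln\epsilon^{-1})^{(1+\alpha)/\alpha}))$. As your own computation shows, the sharpening of the attention Lipschitz constant in your last paragraph is not needed.
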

\begin{proof}
    The main strategy of this lemma follows \cref{lemma-transformers-covering}.
    We only mention the differences from the preceding lemma.

    Let $\bar{D} = d+D \simeq d_1$ (consider the case $D\ll d_1$).
    
    (i) Letting $\epsilon_4 = \epsilon$, the second attention layer is $O(1)$-Lipschitz (\cref{lemma-second-attention-lipschitz});
     
     (ii) Letting $\epsilon_3 \gtrsim \tilde{\Omega}(\epsilon)$, the Lipschitz constant of the second MLP layer is bounded as $L_{\mlp,2} \lesssim d_1 \exp({-O\big(c_\epsilon D^{2} \cdot  (\ln \epsilon^{-1})^2\big)})$, ($c_\epsilon\lesssim \mathrm{poly} \log \log \epsilon^{-1}$) (modifying \cref{corollary-second-mlp-all} to ignore the first $d$ indices );
     
     (iii) Letting $\epsilon_2 \gtrsim d_1\exp({-O\big(c_\epsilon D^{2} \cdot  (\ln \epsilon^{-1})^2\big)})$, the Lipschitz constants of the first attention layer are bounded as $L_{\attention,1,W^1}+L_{\attention,1,\|\|_2} \lesssim D\,\exp(O(d^2\log(I \epsilon_2^{-1}))) \lesssim \exp(O(c_\epsilon d_1^2(\ln d_1)^{O(1)}(\ln \epsilon^{-1})^{2+2\alpha^{-1}}))$ (\cref{lemma-first-attention-lipschitz} and $I\leq d_1$);

     (iv) We have $\epsilon_1 \gtrsim  \exp(-O(c_\epsilon d_1^2(\ln d_1)^{O(1)}(\ln \epsilon^{-1})^{2+2\alpha^{-1}}))$ for the first MLP layer.

    By \cref{lemma-second-attention-covering,corollary-second-mlp-all},
    \begin{equation}
        \log \covering
        \left(
            \mathcal{A}(1,1,0,1,0,1);\epsilon_4
        \right)_\infty
        \lesssim \mathrm{poly}\log \epsilon^{-1}
    \end{equation}
    By modifying \cref{corollary-second-mlp-all} to ignore the first $d$ indices (corresponding to the query),
    \begin{equation}
        \log \covering
        \left(
            \mathcal{F}(\ell_2,p_2,s_2);\epsilon_3
        \right)_\infty
        \lesssim  \epsilon^{-O((\ln \epsilon^{-1})^{1/\alpha})}.
    \end{equation}
    By \cref{lemma-first-attention-covering-number},
    \begin{equation} 
        \log \covering
        \left(
            \mathcal{A}({\dimattn}_1,H_1,B_{a,1},H_{a,1});\epsilon_2
        \right)_\infty
        \lesssim \mathrm{poly}(\log (\epsilon^{-1}d_1)) \cdot d^3.
    \end{equation}
    By \cref{corollary-first-mlp-analytic}
    \begin{equation}
        \log \covering
        \left(
            \mathcal{F}(\ell_1,p_1,s_1);\epsilon_1
        \right)_\infty
        \lesssim
        \mathrm{poly}\log \log I \cdot \mathrm{poly}\log \epsilon^{-1} \cdot (
        ((\log I)+\mathrm{poly}\log \epsilon^{-1}\cdot d)^{4d_2} \cdot D +d).
    \end{equation}
    Assuming that $I \leq d_1 \simeq \exp(O((\log \epsilon)^{\frac{\alpha+1}{\alpha}}))$ and $d_2 = O(1)$, we have
    \begin{align}
        &\log \covering
        \left(
            \mathcal{A}({\dimattn}_1,H_1,B_{a,1},H_{a,1});\epsilon_2
        \right)_\infty
        +
        \log \covering
        \left(
            \mathcal{F}(\ell_1,p_1,s_1);\epsilon_1
        \right)_\infty\\
        &+\log \covering
        \left(
            \mathcal{F}(\ell_2,p_2,s_2);\epsilon_3
        \right)_\infty
        +
        \log \covering
        \left(
            \mathcal{A}(1,1,0,1,0,1);\epsilon_4
        \right)_\infty\\
        \lesssim& \epsilon^{-O((\ln \epsilon^{-1})^{1/\alpha})}.
    \end{align}
\end{proof}
In the same vein, we obtain the similar result as in \cref{theorem-subpoly}:
\begin{theorem}
    \label{theorem-poly-capacity-beyond}
    Let $\hat{F}$ be the empirical risk minimizer whose hypothesis set is constructed in \cref{lemma:tf-approximation}.
    Assume that 
    \begin{equation}
        I \leq d_1 \simeq \exp(o(\log n)) = n^{o(1)}
    \end{equation}
    and the target function $F^\star(\mu_0^{(i^*)},\query)$ is independent of $\query$ (\cref{assumption-function-independent-of-query}).
    In \cref{setting-probability}, we have
    \begin{equation}
        R(F^\star, \hat{F}) \lesssim \exp(-\Omega((\ln n)^{\frac{\alpha}{\alpha+1}})).
    \end{equation}
\end{theorem}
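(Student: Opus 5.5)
The proof will mirror that of \cref{theorem-subpoly}. We combine the approximation result \cref{lemma:tf-approximation} with the covering-entropy bound of the preceding lemma (the one valid under \cref{assumption-function-independent-of-query} with $I \le d_1 \simeq \exp(O((\log\epsilon^{-1})^{(\alpha+1)/\alpha}))$), and feed both into the risk bound \cref{lemma:schmidt-hieber-risk}.

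\emph{Approximation.} Since $\tilde F^\star$ does not depend on $\query$, we use the query-free variant of \cref{corollary-second-mlp-all}. A first linear/ReLU layer projects onto the last $D$ coordinates, so the second MLP only approximates a Lipschitz function of $(b_1,\dots,b_D)$ with $D\simeq(\ln\epsilon^{-1})^{1/\alpha}$. Together with \cref{lemma-truncation-d}, \cref{corollary-truncated-F}, \cref{corollary-first-mlp-analytic} and \cref{lemma-first-attention-approximation} (with $B_a\lesssim\sqrt{\log(I\epsilon_2^{-1})}$, which handles large $I$ only logarithmically), this yields some $\hat F\in\mathrm{TF}(\epsilon)$ with $\|\hat F-F^\star\|_\infty\lesssim\epsilon$. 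The intermediate accuracies $\epsilon_1,\epsilon_2,\epsilon_3,\epsilon_4$ are chosen as in that covering lemma. In particular, $\epsilon_1,\epsilon_2$ are super-polynomially small, of order $\exp(-\mathrm{poly}(d_1)\cdot\mathrm{poly}\log\epsilon^{-1})$, to absorb the Lipschitz constants from \cref{lemma-first-attention-lipschitz} and \cref{lemma-composition-mappings}.

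\emph{Entropy and balancing.} The preceding lemma gives $V(\delta)=\log\covering(\mathrm{TF};\delta)_\infty\lesssim\exp(O((\ln\epsilon^{-1})^{(1+\alpha)/\alpha}))$ whenever $d_1\lesssim\exp(O((\ln\epsilon^{-1})^{(1+\alpha)/\alpha}))$. We set $\epsilon\simeq\exp(-c'(\ln n)^{\alpha/(\alpha+1)})$ with $c'$ small and $\delta=\epsilon$. Then $(\ln\epsilon^{-1})^{(1+\alpha)/\alpha}=c'^{(1+\alpha)/\alpha}\ln n$, so $V(\epsilon)\le n^{c''}$ with $c''<1$ once $c'$ is small enough. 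Outputs are clipped, so $A=O(1)$, and \cref{lemma:schmidt-hieber-risk} gives
\[
R\lesssim\epsilon^2+n^{c''-1}+\epsilon\lesssim\exp(-\Omega((\ln n)^{\alpha/(\alpha+1)})).
\]

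\emph{Main obstacle.} The delicate point is verifying that the hypothesis $d_1=n^{o(1)}$ matches the admissible range of the entropy lemma, i.e.\ that $d_1$ enters the entropy only through polynomial factors. We will trace the $d_1$-dependence. In \cref{lemma-first-attention-covering-number}, $d_1$ appears as $d^3\,\mathrm{poly}\log(\epsilon_2^{-1}d_1)$. In the first MLP it appears as $d\cdot\mathrm{poly}\log$. The term $\log\epsilon_2^{-1}\sim d_1^2(\ln d_1)^{O(1)}(\ln\epsilon^{-1})^{2+2/\alpha}$ enters only polynomially, and so does $\log\epsilon_1^{-1}$ through \cref{corollary-first-mlp-analytic}. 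The second-MLP entropy, which is independent of $d_1$ after the projection, then dominates as long as $\log d_1=o((\ln\epsilon^{-1})^{(1+\alpha)/\alpha})=o(\ln n)$; this is exactly $d_1=n^{o(1)}$. With $d_1=n^{o(1)}$, all these terms are $n^{o(1)}\cdot\mathrm{poly}\log n$, so the extra contribution to $V$ is $n^{o(1)}$. This is negligible against $n$ and preserves the rate.
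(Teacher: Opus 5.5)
Your proposal is correct and takes the same approach as the paper, whose proof of this theorem is just ``same as \cref{theorem-subpoly}'': use the query-independent covering-entropy lemma (projection layer in the second MLP), take $\epsilon\simeq\exp(-c'(\ln n)^{\alpha/(\alpha+1)})$, and apply \cref{lemma:schmidt-hieber-risk}; your check that $d_1$ enters the entropy only polynomially is exactly what that lemma encodes. One bookkeeping remark: in the paper the $d_1^2$ factor sits in $\log\epsilon_1^{-1}$ (through the first-attention Lipschitz constant), while $\log\epsilon_2^{-1}$ is only $O(\log d_1+\mathrm{poly}\log\epsilon^{-1})$, but your overestimate is harmless because everything stays polynomial in $d_1$.
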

\begin{proof}
    The proof is the same as in \cref{theorem-subpoly}.
\end{proof}

\section{Minimax Lower Bound}
\label{section:minimax-lower-bound}

\subsection{Proof Sketch}

The goal of this section is to establish an information-theoretic minimax lower bound 
for the associative recall problem in \cref{setting-structured}.
Our proof strategy consists of two main steps: a reduction to a pure infinite-dimensional regression task, 
and the derivation of covering/packing entropy bounds for the corresponding Lipschitz functionals.

\paragraph{Step 1: Reduction from Infinite-Dimensional Regression (\cref{subsection-reduction-to-regression}).}
We first reduce a regression problem on measures to the associative recall problem.  
Let
\[
    \mathcal{F}^\star = \mathrm{Lip}_L(\inputsetforlipscitz \times \inputsetofquery, d_{\mathrm{prod}}),
    \qquad
    \bar{\mathcal{F}}^\star = \mathrm{Lip}_L(\inputsetforlipscitz, \|\cdot\|_{\rkhs_0^{\gammafunc}}),
\]
where $\mathcal{F}^\star$ is the full class of Lipschitz functions depending on both $(\mu,x)$,  
and $\bar{\mathcal{F}}^\star \subset \mathcal{F}^\star$ is the subclass depending only on $\mu$.  

Here, the key observation is that each $\nu$ can be written as an average of pushforward measures
\[
    \nu \;=\; \frac{1}{I}\sum_{i=1}^I \mu_{v^{(i)}}^{(i)}
    \;=\; \frac{1}{I}\sum_{i=1}^I (\embedding_{v^{(i)}})_{\sharp}\mu_0^{(i)}.
\]
Therefore, estimating $F(\mu_0^{(i^*)},x)$ with the ``noisy" input $\nu$ is at least as hard as estimating with the ``pure" input $\mu_0^{(i^*)}$.

Formally, the two observation models differ as follows:
\[
    \mathcal{S}_n = \{(\nu_t,(\query)_t,y_t)\}_{t=1}^n, 
    \quad y_t = \tilde{F}^\star(\mu_0^{(i^*)},(\query)_t) + \xi_t, \quad \tilde{F}^\star \in \mathcal{F}^\star,
    \]
\[
    \mathcal{U}_n = \{(\mu_0^{(i^*)},y_t)\}_{t=1}^n, 
    \quad y_t = \tilde{F}^\star(\mu_0^{(i^*)}) + \xi_t, \quad \tilde{F}^\star \in \bar{\mathcal{F}}^\star \subset \mathcal{F}^\star .
\]
That is, under $\mathcal{S}_n$ we observe outputs of a general Lipschitz function $\tilde{F}^\star \in \mathcal{F}^\star$, 
whereas under $\mathcal{U}_n$ the outputs are restricted to the subclass $\tilde{\mathcal{F}}^\star$.  

Consequently, remembering that $F^\star(\nu,x) \coloneq \tilde{F}^\star(\mu_0^{(i^*)},x)$, the minimax risk satisfies
\[
    \inf_{\hat{F}} \sup_{\tilde{F}^\star\in\mathcal{F}^\star}
    \E_{\mathcal{S}_n}\!\bigl[\|\hat{F}(\nu,x) - F^\star(\nu,x)\|^2\bigr]
    \;\;\geq\;\;
    \inf_{\hat{F}} \sup_{\tilde{F}^\star\in\tilde{\mathcal{F}}^\star}
    \E_{\mathcal{U}_n}\!\bigl[\|\hat{F}(\mu_0) - \tilde{F}^\star(\mu_0)\|^2\bigr].
\]

In words: the associative recall problem with dataset $\mathcal{S}_n$ and hypothesis class $\mathcal{F}^\star$ 
is at least as hard as the reduced regression problem with dataset $\mathcal{U}_n$ and restricted class $\tilde{\mathcal{F}}^\star$. 
This reduction allows us to focus on an \emph{infinite-dimensional regression} setting.

\paragraph{Step 2: Entropy Bounds for Lipschitz Functionals.}

The minimax lower bound is based on the general Gaussian regression minimax bound in \citet{yangbarron1999}: in short, 
\begin{equation}
    \log \mathcal{M}(\mathcal{F}^\star;\epsilon)_{L^2(\measureofmuzero)} 
    \;\simeq\; n \epsilon^2 
    \;\;\Rightarrow\;\;
    \inf_{\hat{F}} \sup_{F^\star} R(F^\star, \hat{F}) \;\gtrsim\; \epsilon^2,
\end{equation}
where $\mathcal{M}(\mathcal{F}^\star;\epsilon)_{L^2(\measureofmuzero)}$ denotes the $\epsilon$-packing number of the function class $\mathcal{F}^\star$ with respect to the $L^2(\measureofmuzero)$ metric.  
Thus, to obtain a lower bound it suffices to evaluate the packing entropy (i.e., the metric entropy $\log \mathcal{M}\; (\simeq \log \mathcal{N})$) of the set of Lipschitz functionals $G^\star$ under $L^2(\measureofmuzero)$.  
To apply the classical information-theoretic results, 
we derive both upper and lower bounds for the covering/packing entropy 
of the relevant Lipschitz functional class.

\subparagraph{Step 2-1: Upper Bound (\cref{subsection-lower-bound-upper-bound}).}
It is known (see \cite{boissard2011simple}) that for a Lipschitz class,
\begin{equation}
    \log \mathcal{N}(\mathrm{Lip}(A,d);\epsilon)_{L^\infty}
    \;\lesssim\; 
    \mathcal{N}(A;\epsilon)_{d}\cdot (\mathrm{poly}\log \epsilon^{-1}),
\end{equation}
where $A$ is the input set and $d$ is the underlying metric.  
Applying this principle, we show that
\begin{equation}
    \log \covering\big(\inputsetforlipscitz;\epsilon\big)_{\|\cdot\|_{\rkhs_0^{\gammafunc}}}
    \;\lesssim\; \exp\big( c_u\log \epsilon^{-1}\big)^{\frac{\alpha+1}{\alpha}},\quad  c_u < \infty.
\end{equation}
The proof relies on the following isometric transformation:  
for $f = \sum_{i=1}^\infty b_i e_i$ and parameters $a,b,c$,
\begin{equation}
    f \;\mapsto\; \phi_{a,b,c}(f) 
    := \sum_{i=1}^\infty \lambda_i^{\frac{c-b}{2}} b_i e_i,
\end{equation}
which is an isometric bijection 
from $(\ball(\mathcal{H}_0,\|\cdot\|_{\rkhs_0^a}),\|\cdot\|_{\rkhs_0^b})$ 
to $(\ball(\mathcal{H}_0,\|\cdot\|_{\rkhs_0^{a-b+c}}),\|\cdot\|_{\rkhs_0^c})$.  
We apply this with $a=\gammaball$, $b=\gammafunc$, and $c=0$.
Then, we employ a standard argument of covering an infinite-dimensional ellipsoid endowed with $\ell^2$ metric.

\subparagraph{Step 2-2: Lower Bound (\cref{subsection-lower-bound-lower-bound}).}
The key difficulty is that the Lipschitz constant is \emph{anisotropic}:  
differences along low-index directions (small $j$) are heavily penalized, 
while directions with larger $j$ are effectively much smoother. 
Formally, for $F$ in our class one has
\begin{equation}
    |F(\sum_j b_j e_j) - F(\sum_j c_j e_j)| 
    \;\leq\; L \sqrt{\sum_j \lambda_j^{-\gammafunc}(b_j - c_j)^2}, \quad \gammafunc<0,
\end{equation}
which clearly shows that directions with larger eigenvalues $\lambda_j$ (small $j$) contribute far more to the Lipschitz bound than those with smaller eigenvalues (large $j$). 
In other words, the geometry of the function class is highly distorted across coordinates.

To make this structure explicit, we construct a rescaling map that embeds the standard cube $[0,1]^d$ into our measure-input space. 
After rescaling each coordinate according to the eigenvalue decay $\{\lambda_j\}$, 
a Lipschitz function on $[0,1]^d$ becomes a function on $\inputsetforlipscitz$ with Lipschitz constant proportional to 
\[
    \lambda_d^{(-\gammafunc+\gammadist)/2}.
\]
This shows that our class contains an embedded copy of the $d$-dimensional Lipschitz ball, 
up to a rescaling factor.

Consequently, the packing entropy of our class is at least as large as that of 
$\mathrm{Lip}_1([0,1]^d,\|\cdot\|_{\ell^\infty})$ at resolution 
$R\epsilon / \lambda_d^{(-\gammafunc+\gammadist)/2}$. 
By combining this rescaling argument with known packing lower bounds for Lipschitz functions on $[0,1]^d$ 
and the Yang--Barron information-theoretic inequality, 
we obtain the desired minimax lower bound for the associative recall problem.

\begin{remark}
    In \cref{setting-structured}, we assume that the density $\tfrac{\dd \mu_0}{\dd \lambda}$ is \emph{nonnegative}, which can always be ensured by adding a sufficiently large constant shift. 
    We then relax the additional constraint that $\mu_0$ must be a probability measure, and instead only require that $\tfrac{\dd \mu_0}{\dd \lambda}$ belongs to a bounded ball in the ambient function space. 
    This modification does not affect the minimax difficulty of the problem, since the essential hardness arises from the \emph{infinite-dimensionality} of the domain, whereas the normalization constraint corresponds merely to a \emph{finite-dimensional} restriction.
\end{remark}

\subsection{Step 1: Reduction from Infinite-Dimensional Regression}
\label{subsection-reduction-to-regression}

By \cref{lemma-reduction-to-regression}, we will show that the associative recall problem is at least as hard as a Gaussian regression problem where the input variables are measures. Remember that a standard Gaussian regression problem is defined as follows: we observe i.i.d. random variables $(X_t,Y_t)_{t=1}^n$ such that
\begin{equation}
    Y_t = F^\star(X_t) + \xi_t, \quad t=1,\dots,n,
\end{equation}
where $\xi_t$ are i.i.d. Gaussian noise, independent of $X_t$. 
On the other hand, in our recall-and-predict problem, the observed input $\nu$ is noisy: $(\mu_{v^{(i)}}^{(i)})_{i\neq i^*}$ are mixed in $\nu$, but they are irrelevant to the output $y$.
We will show that we can obtain a better estimator when we ``eliminate" the noises in the inputs.
Formally, we obtain the following corollary:
\begin{corollary}
    \label{corollary-reduction-to-regression}
    Let 
    \[
        \mathcal{F}^\star \coloneq \mathrm{Lip}_L(\inputsetforlipscitz \times \inputsetofquery, d_{\mathrm{prod}}),
        \qquad
        \bar{\mathcal{F}}^\star \coloneq \mathrm{Lip}_L(\inputsetforlipscitz, \|\cdot\|_{\rkhs_0^{\gammafunc}}).
    \]
    Here, $\mathcal{F}^\star$ denotes the class of $L$-Lipschitz functions in both $(\mu,x)$, while $\bar{\mathcal{F}}^\star$ denotes the subclass of $L$-Lipschitz functions depending only on $\mu$. Note that $\tilde{\mathcal{F}}^\star \subset \mathcal{F}^\star$.

    Consider datasets 
    \[
    \mathcal{S}_n = \{(\nu_t,(\query)_t,y_t)\}_{t=1}^n, 
    \quad y_t = \tilde{F}^\star(\mu_0^{(i^*)},(\query)_t) + \xi_t, \quad \tilde{F}^\star \in \mathcal{F}^\star,
    \]
    \[
    \mathcal{U}_n = \{(\mu_0^{(i^*)},y_t)\}_{t=1}^n, 
    \quad y_t = \tilde{F}^\star(\mu_0^{(i^*)}) + \xi_t, \quad \tilde{F}^\star \in \bar{\mathcal{F}}^\star \subset \mathcal{F}^\star .
    \]
    sampled as in \cref{setting-structured}. Then, remembering that $F^\star(\nu,\query) = \tilde{F}^\star(\mu_0^{(i^*)},\query)$, we have
    \begin{align}
    &\underset{\mathcal{S}_n\mapsto \hat{F}\in L^2(\measureofinputs)}{\inf}
    \ \underset{\tilde{F}^\star\in\mathcal{F}^\star}{\sup}
    \E_{\mathcal{S}_n}\!\bigl[\|\hat{F}(\nu,x) - F^\star(\nu,x)\|_{L^2(\measureofinputs)}^2\bigr] \\
    \geq \ & \underset{\mathcal{U}_n\mapsto \hat{F}\in L^2(\measureofmuzero)}{\inf}
    \ \underset{\tilde{F}^\star\in\bar{\mathcal{F}}^\star}{\sup}
    \E_{\,\mathcal{U}_n}\!\bigl[\|\hat{F}(\mu_0) - \tilde{F}^\star(\mu_0)\|_{L^2(\measureofmuzeroistar)}^2\bigr].
    \end{align}
\end{corollary}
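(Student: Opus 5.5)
Two moves suffice. First, shrink the supremum on the left-hand side to the subclass $\bar{\mathcal{F}}^\star\subset\mathcal{F}^\star$. Second, show that any estimator built from $\mathcal{S}_n$ can be simulated by a randomized estimator built from $\mathcal{U}_n$ with the same risk. Since the infimum over randomized $\mathcal{U}_n$-estimators equals the infimum over deterministic ones for squared loss (by Jensen/convexity, averaging the randomization can only decrease the risk), this gives the inequality.

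\emph{Restriction to $\bar{\mathcal{F}}^\star$.} Each $G\in\bar{\mathcal{F}}^\star$ extends to $\tilde{F}^\star(\mu,x):=G(\mu)$, which is $L$-Lipschitz for $d_{\mathrm{prod}}$ because $|G(\mu)-G(\mu')|\le L\|\mu-\mu'\|_{\rkhs_0^{\gammafunc}}\le L\,d_{\mathrm{prod}}((\mu,x),(\mu',x'))$. Hence
\[
\sup_{\tilde F^\star\in\mathcal{F}^\star}(\cdot)\;\ge\;\sup_{G\in\bar{\mathcal{F}}^\star}(\cdot).
\]
For such targets $F^\star(\nu,\query)=G(\mu_0^{(i^*)})$, so the responses in $\mathcal{S}_n$ depend only on $\mu_0^{(i^*)}$.

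\emph{Simulation.} In \cref{setting-structured}, the remaining ingredients of $(\nu,\query)$ are $(v^{(i)})_i\sim\measureofv$, $i^*$, and the components $\mu_0^{(i)}$ for $i\neq i^*$. I take these to be drawn independently of $\mu_0^{(i^*)}$ and of $\xi$, with laws that do not depend on $G$; this is part of the data-generation model of \cref{main-definition-input-output}, which I read as given. Given $\mathcal{U}_n=\{(\mu_{0,t},y_t)\}$, the statistician draws these auxiliary variables with fresh independent randomness $\omega$. She then forms $\nu_t=\frac1I\sum_i(\embedding_{v_t^{(i)}})_\sharp\mu_{0,t}^{(i)}$ with $\mu_{0,t}^{(i^*_t)}:=\mu_{0,t}$, and $(\query)_t=\embedding_{v_t^{(i^*_t)}}(0)$. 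The simulated sample has exactly the law of $\mathcal{S}_n$ under target $G$.

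Given any $\mathcal{S}_n$-estimator $\hat F$, define the $\mathcal{U}_n$-estimator
\[
\check F(\mu_0):=\E\bigl[\hat F(\nu,\query)\mid\mu_0^{(i^*)}=\mu_0,\ \mathcal{S}_n\bigr],
\]
where the conditional expectation integrates over a fresh test draw of the auxiliary variables. This is computable from $\mathcal{U}_n$ and $\omega$. By Jensen, conditioning on $\mu_0^{(i^*)}$,
\[
\|\hat F(\nu,\query)-G(\mu_0^{(i^*)})\|^2_{L^2(\measureofinputs)}\;\ge\;\|\check F-G\|^2_{L^2(\measureofmuzero)}.
\]
This uses that the marginal of $\mu_0^{(i^*)}$ under $\measureofinputs$ is $\measureofmuzero$. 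Taking expectations over the sample and $\omega$, then the supremum over $G$ and the infimum over $\hat F$, and finally removing $\omega$ by convexity of the loss (replace $\check F$ by its $\omega$-average, whose risk is no larger), yields the stated inequality.

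\emph{Main obstacle.} The delicate point is checking measurability and independence. Specifically, the mixture construction must not leak information about $G$, and the test-point marginal must equal $\measureofmuzero$ exactly. If the setting allowed the other components to be correlated with $\mu_0^{(i^*)}$, I would instead simulate from their conditional law given $\mu_0^{(i^*)}$. That law is still $G$-free, so the argument goes through unchanged.
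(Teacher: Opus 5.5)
Your proposal is correct and is essentially the paper's argument. Lemma~\ref{lemma-reduction-to-regression} Rao--Blackwellizes $\hat F$ by averaging over the nuisance components $\mathcal{T}_n$ and the test-point $\nu$ given $\mu_0^{(i^*)}$; the vanishing Pythagorean cross term there is your conditional Jensen step, and your $\omega$-averaged $\check F$ coincides with their $\E_{\nu,\mathcal{T}_n}[\hat F_{\mathcal{S}_n}\mid\mu_0^{(i^*)},\query,\bar{\mathcal{S}}_n]$. The only difference is bookkeeping: you restrict to $\bar{\mathcal{F}}^\star$ first and integrate out the other components and the query in a single step, which also supplies the query-averaging details the paper omits.
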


In words: the estimation problem with query-dependent target functions is at least as hard as the restricted problem where the target depends only on $\mu$. Hence, establishing a lower bound for the latter suffices.

To prove \cref{corollary-reduction-to-regression}, we state the following lemma.

\begin{lemma}
    \label{lemma-reduction-to-regression}
    Let $\mathcal{S}_n = \{(\nu_t,x_t,y_t)\}_{t=1}^n$ and $\bar{\mathcal{S}}_n = \{(\mu_0^{(i^*)},x_t,y_t)\}_{t=1}^n$ be datasets sampled as in \cref{main-definition-input-output}.
    Then, for any estimator $\hat{F}: \mathcal{S}_n \mapsto \hat{F}$, there exists an estimator $\tilde{F}_1:\bar{\mathcal{S}}_n \mapsto \tilde{F}_1$ such that
    \begin{equation}
        \E_{\mathcal{S}_n}\!\bigl[\|\hat{F}(\nu,x) - F^\star(\nu,x)\|_{L^2(\measureofinputs)}^2\bigr] 
        \ \geq\ 
        \E_{\bar{\mathcal{S}}_n}\!\bigl[\|\tilde{F}_1(\mu_0,x) - \tilde{F}^\star(\mu_0,x)\|_{L^2(\measureofmuzeroandquery)}^2\bigr].
    \end{equation}

    Moreover, if $\tilde{F}^\star$ is independent of the query $x$, i.e.\ $\tilde{F}^\star(\mu,x)=\tilde{F}^\star_2(\mu)$, then there exists an estimator $\tilde{F}_2$ depending only on $\{(\mu_0^{(i^*)},y_t)\}_{t=1}^n$ such that
    \begin{equation}
        \E_{\mathcal{S}_n}\!\bigl[\|\hat{F}(\nu,x) - \tilde{F}^\star_2(\mu_0)\|_{L^2}^2\bigr] 
        \ \geq\ 
        \E_{\,\{(\mu_0^{(i^*)},y_t)\}_t}\!\bigl[\|\tilde{F}_2(\mu_0) - \tilde{F}^\star_2(\mu_0)\|_{L^2}^2\bigr].
    \end{equation}
\end{lemma}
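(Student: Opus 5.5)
The plan is a standard randomization (simulation) argument. The observation $\bar{\mathcal{S}}_n$ already contains the sufficient information. The extra components $(v^{(i)},\mu_0^{(i)})_{i\neq i^*}$ in $\nu$, and the features $v^{(i)}$ themselves, are generated independently of the response mechanism. So an estimator given only $\bar{\mathcal{S}}_n$ can itself resample the missing parts of the input and feed the resulting synthetic $\mathcal{S}_n$ into $\hat{F}$.

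\textbf{Step 1: simulate the training set.} Write the generation in \cref{main-definition-input-output} as a Markov kernel $\kappa$. Given $(\mu_0^{(i^*)},x)$, with $x=\query$ encoding $v^{(i^*)}$, the kernel $\kappa$ draws the remaining features $(v^{(i)})_{i\neq i^*}$ from the conditional law of $\measureofv$ given $v^{(i^*)}$, and draws the other content measures $\mu_0^{(i)}$, $i\neq i^*$, independently from $\measureofmuzero$. It then outputs $\nu=\frac1I\sum_i(\embedding_{v^{(i)}})_\sharp\mu_0^{(i)}$. Because $y_t=\tilde F^\star(\mu_0^{(i^*)},x_t)+\xi_t$ depends only on $(\mu_0^{(i^*)},x_t,\xi_t)$, applying $\kappa$ independently to each sample of $\bar{\mathcal{S}}_n$, with auxiliary randomness $U$, yields a dataset whose joint law equals that of $\mathcal{S}_n$. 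Here I take the index $i^*$ as fixed, as in \cref{assumption-regularity}, or as part of the generation.

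\textbf{Step 2: define the estimator.} Set
\[
\tilde F_1(\mu_0,x)\coloneq \E_{U,\,\nu\sim\kappa(\cdot\mid\mu_0,x)}\bigl[\hat F_{\kappa_U(\bar{\mathcal S}_n)}(\nu,x)\bigr].
\]
This averages over both the simulated training data and the simulated test context.

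\textbf{Step 3: compare risks by Jensen.} The test pair $(\nu,x)\sim\measureofinputs$ also arises as $\kappa$ applied to $(\mu_0,x)\sim\measureofmuzeroandquery$, and $F^\star(\nu,x)=\tilde F^\star(\mu_0,x)$ does not depend on the auxiliary draws. Conditional Jensen (convexity of the square) then gives
\[
(\tilde F_1-\tilde F^\star)^2(\mu_0,x)\le \E\bigl[(\hat F(\nu,x)-F^\star(\nu,x))^2\mid \bar{\mathcal S}_n,\mu_0,x\bigr].
\]
Taking expectations over the data, $\mu_0$ and $x$ yields the first inequality.

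\textbf{Step 4: the query-independent case.} When $\tilde F^\star=\tilde F^\star_2(\mu)$, the query and the $v^{(i)}$ carry no information about $y$. So $\kappa$ may also draw $x_t$, that is $v^{(i^*)}$ from its marginal, independently of $\mu_0^{(i^*)}$ and $y_t$. Averaging additionally over the test query $x$ gives $\tilde F_2(\mu_0)$ from $\{(\mu_0^{(i^*)},y_t)\}$ alone, and Jensen applies as before.

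\textbf{Main obstacle.} The delicate point is justifying that the simulated dataset has exactly the law of $\mathcal{S}_n$. This needs the independence structure to be explicit: $v$ independent of the $\mu_0^{(i)}$, the $\mu_0^{(i)}$ i.i.d., and $\xi$ independent of everything. It also requires the measurability conditions that make $\tilde F_1$ a well-defined estimator, including a measurable selection for $\hat F$. I would state these as properties of the generation in \cref{main-definition-input-output}. If the $\mu_0^{(i)}$ are not independent across $i$, I would instead sample from the conditional law given $\mu_0^{(i^*)}$, which suffices for the argument.
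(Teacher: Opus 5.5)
Your proposal is correct and is essentially the paper's argument. The paper defines $\tilde F_1(\mu_0^{(i^*)},\query)$ as the conditional expectation of $\hat F_{\mathcal{S}_n}(\nu,\query)$ over the test context $\nu$ and the nuisance components $\mathcal{T}_n=\{(\mu_0^{(i)})_t,(v^{(i)})_t\}_{i\neq i^*,\,t\le n}$, given $(\mu_0^{(i^*)},\query,\bar{\mathcal{S}}_n)$, which is exactly your simulated average. It then shows the cross term in the squared-loss expansion vanishes, which is the same fact as your conditional Jensen step; your Markov-kernel formulation just states the required independence structure more explicitly than the paper does.
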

\begin{proof}
    Remember that 
    \begin{equation}
        \nu = \frac{1}{I} \sum_i \mu_{v^{(i)}}^{(i)} = \frac{1}{I} \sum_i (\embedding_{v^{(i)}})_\sharp\mu_{0
}^{(i)},\quad  \mu_{0
}^{(i)} \overset{\text{i.i.d.}}{\sim} \measureofmuzeroistar, \quad (v^{(i)})_{i=1}^I \sim \measureofv.
    \end{equation}
    We want to eliminate the dependency of $\mu_{0}^{(i)}$ and $v^{(i)}$ for $i\in [1:I]\setminus\{ i^*\}$ from the original estimator and make a better estimate.
    We will construct a Bayes-estimated mapping 
    \begin{equation}
        \bar{\mathcal{S}}_n\longmapsto\lbrace(\mu_0^{(i^*)},\query)\mapsto \E_{\nu,\mathcal{T}_n}[\hat{F}_{\mathcal{S}_n}(\nu,\query)\mid\mu_0^{(i^*)},\query,\bar{\mathcal{S}}_n]\eqcolon \tilde{F}_1(\mu_0^{(i^*)},\query)\rbrace \in L^2(\measureofmuzeroandquery),
    \end{equation}
    where $\mathcal{T}_n \coloneq \{(\mu_{0}^{(i)})_t,(v^{(i)})_t\mid \text{for $\forall i\in [1:I]\setminus\{ i^*\}$ and $\forall t\in[1,n]$}\}$ and $\bar{\mathcal{S}}_n\coloneq ((\mu_0^{(i^*)})_t,(\query)_t,y_t)_{t=1}^n$. This estimator $\bar{\mathcal{S}}_n \mapsto \tilde{F}_1$ is well-defined as the mapping from $\bar{\mathcal{S}}_n$ to a function in $L^2(\measureofmuzeroandquery)$ because (i) we can deterministically construct $\mathcal{S}_n$ from $\bar{\mathcal{S}}_n$ and $\mathcal{T}_n$ because $\nu = \frac{1}{I} \sum_i \mu_{v^{(i)}}^{(i)} = \frac{1}{I} \sum_i (\embedding_{v^{(i)}})_\sharp\mu_{0
}^{(i)}$ and (ii) $\nu$ only has the randomness of the noises $\{(\mu_{0}^{(i)},v^{(i)})\}_{i\neq i^*}$, given $(\mu_0^{(i^*)},\query)$. 
Note that the above estimator does not use the oracle of sampling an input/output pair. 
In short, $\tilde{F}_1$ is not cheating in the context of the standard Gaussian regression: To take the expectation with respect to $\nu|\mu_0^{(i^*)}$ and $\mathcal{T}_n$, we are additionally observing only the \textit{noises} of the input $\nu$, which do not exist in the standard Gaussian regression.
    From now, we will explicitly write $\hat{F} = \hat{F}_{\mathcal{S}_n}$.
    The loss is lower bounded as
    \begin{align}
        &\E_{\mathcal{S}_n}[\|\hat{F}_{\mathcal{S}_n}(\nu,\query) - F^\star(\nu,\query)\|^2_{L^2(\measureofinputs)}]\\
        =&\E_{\mathcal{S}_n}[\|(\hat{F}_{\mathcal{S}_n}(\nu,\query) - \E_{\nu,\mathcal{T}_n}[\hat{F}_{\mathcal{S}_n}\mid\mu_0^{(i^*)},\query,\bar{\mathcal{S}}_n]) \\
        &+(\E_{\nu,\mathcal{T}_n}[\hat{F}_{\mathcal{S}_n}\mid\mu_0^{(i^*)},\query,\bar{\mathcal{S}}_n] - \tilde{F}^\star(\mu_0^{(i^*)},\query))\|^2_{L^2(\measureofinputs)}]\\
        =&\E_{\mathcal{S}_n}[\|\hat{F}_{\mathcal{S}_n}(\nu,\query) - \E_{\nu,\mathcal{T}_n}[\hat{F}_{\mathcal{S}_n}\mid\mu_0^{(i^*)},\query,\bar{\mathcal{S}}_n]\|^2_{L^2(\measureofinputs)}]\quad \dots \mathrm{(i)} \\
        &+ 2 \E_{\mathcal{S}_n}[\langle \hat{F}_{\mathcal{S}_n}(\nu,\query) - \E_{\nu,\mathcal{T}_n}[\hat{F}_{\mathcal{S}_n}\mid\mu_0^{(i^*)},\query,\bar{\mathcal{S}}_n],\\
        &\E_{\nu,\mathcal{T}_n}[\hat{F}_{\mathcal{S}_n}\mid\mu_0^{(i^*)},\query,\bar{\mathcal{S}}_n] - \tilde{F}^\star(\mu_0^{(i^*)},\query)\rangle_{L^2(\measureofinputs)}] \quad \dots \mathrm{(ii)} \\
        &+ \E_{\mathcal{S}_n}[\|(\E_{\nu,\mathcal{T}_n}[\hat{F}_{\mathcal{S}_n}\mid\mu_0^{(i^*)},\query,\bar{\mathcal{S}}_n] - \tilde{F}^\star(\mu_0^{(i^*)},\query))\|^2_{L^2(\measureofinputs)}]\\
        \geq&\E_{\mathcal{S}_n}[\|(\tilde{F}_{1,\bar{\mathcal{S}}_n}(\mu_0,\query) - \tilde{F}^\star(\mu_0,\query))\|^2_{L^2(\measureofmuzeroandquery)}].
    \end{align}
    For the term (i), this is greater than zero.
    As for (ii), we have
    \begin{align}
        (ii)=
        &\E_{\mathcal{S}_n}[\langle \hat{F}_{\mathcal{S}_n}(\nu,\query) - \E_{\nu,\mathcal{T}_n}[\hat{F}_{\mathcal{S}_n}\mid\mu_0^{(i^*)},\query,\bar{\mathcal{S}}_n],\\
        &\E_{\nu,\mathcal{T}_n}[\hat{F}_{\mathcal{S}_n}\mid\mu_0^{(i^*)},\query,\bar{\mathcal{S}}_n] - \tilde{F}^\star(\mu_0^{(i^*)},\query)\rangle_{L^2(\measureofinputs)}]\\
        =&\E_{\bar{\mathcal{S}_n}}[\E_{\mathcal{T}_n}[\langle \hat{F}_{\mathcal{S}_n}(\nu,\query) - \E_{\nu,\mathcal{T}_n}[\hat{F}_{\mathcal{S}_n}\mid\mu_0^{(i^*)},\query,\bar{\mathcal{S}}_n],\\
        &\E_{\nu,\mathcal{T}_n}[\hat{F}_{\mathcal{S}_n}\mid\mu_0^{(i^*)},\query,\bar{\mathcal{S}}_n] - \tilde{F}^\star(\mu_0^{(i^*)},\query)\rangle_{L^2(\measureofinputs)}\mid \bar{\mathcal{S}_n}]]\\
        =&\E_{\bar{\mathcal{S}_n}}[\langle \E_{\mathcal{T}_n}[\hat{F}_{\mathcal{S}_n}(\nu,\query)\mid \nu,\query,\bar{\mathcal{S}}_n] - \E_{\nu,\mathcal{T}_n}[\hat{F}_{\mathcal{S}_n}\mid\mu_0^{(i^*)},\query,\bar{\mathcal{S}}_n],\\
        &\E_{\nu,\mathcal{T}_n}[\hat{F}_{\mathcal{S}_n}\mid\mu_0^{(i^*)},\query,\bar{\mathcal{S}}_n] - \tilde{F}^\star(\mu_0^{(i^*)},\query)\rangle_{L^2(\measureofinputs)}]\\ 
        &(\text{$\E_{\nu,\mathcal{T}_n}[\hat{F}_{\mathcal{S}_n}\mid\mu_0^{(i^*)},\query,\bar{\mathcal{S}}_n]$ and $\tilde{F}^\star(\mu_0^{(i^*)},\query)$ are independent of $\mathcal{T}_n\mid \bar{\mathcal{S}}_n$ if $ \bar{\mathcal{S}}_n$ is given})\\
        =&\E_{\bar{\mathcal{S}_n}}[\int \Big\{\big(\int (\E_{\mathcal{T}_n}[\hat{F}_{\mathcal{S}_n}(\nu,\query)\mid\nu,\query,\bar{\mathcal{S}}_n] - \E_{\nu,\mathcal{T}_n}[\hat{F}_{\mathcal{S}_n}\mid\mu_0^{(i^*)},\query,\bar{\mathcal{S}}_n])\dd\mathbb{P}_{\nu}(\nu) \big) \\
        &\times \big(\E_{\nu,\mathcal{T}_n}[\hat{F}_{\mathcal{S}_n}\mid\mu_0^{(i^*)},\query,\bar{\mathcal{S}}_n] - \tilde{F}^\star(\mu_0^{(i^*)},\query)\big) \Big\} \dd \mathbb{P}_{\query}(\query)]\\
        =&\E_{\bar{\mathcal{S}_n}}[\int \Big\{\big(\int (\E_{\mathcal{T}_n}[\hat{F}_{\mathcal{S}_n}(\nu,\query)\mid\nu,\query,\bar{\mathcal{S}}_n] - \E_{\nu,\mathcal{T}_n}[\hat{F}_{\mathcal{S}_n}\mid\mu_0^{(i^*)},\query,\bar{\mathcal{S}}_n])\dd\mathbb{P}_{\nu\mid \mu_0^{(i^*)}}(\nu)\dd\mathbb{P}_{\mu_0^{(i^*)}}(\mu_0^{(i^*)}) \big) \\
        &\times \big(\E_{\nu,\mathcal{T}_n}[\hat{F}_{\mathcal{S}_n}\mid\mu_0^{(i^*)},\query,\bar{\mathcal{S}}_n] - \tilde{F}^\star(\mu_0^{(i^*)},\query)\big) \Big\} \dd \mathbb{P}_{\query}(\query)]\\
        =&0.
    \end{align}
    In the same vein, we can omit the dependence of $\query$ if $F^\star$ is independent of $\query$. We give an estimated mapping as 
    \begin{equation}
        \mu_0 \mapsto \E_{\query,((\query)_t)_{t=1}^n}[\tilde{F}_{1,\mathcal{S}_n}\mid\mu_0,((\mu_0^{(i^*)})_t,y_t)_t]
    \end{equation}
    that can be constructed only with the observation $((\mu_0^{(i^*)})_t,y_t)_t$. We omit the details for the second statement.
\end{proof}

\subsection{Step 2-1: Upper-Bound of the Entropy}
\label{subsection-lower-bound-upper-bound}

Thanks to \cref{corollary-reduction-to-regression}, the lower-bound analysis reduces to a Gaussian regression problem in which the inputs $\mu_0^{(i^*)} \sim \measureofmuzero$ are generated according to \cref{setting-structured}. 
The key step is to control the covering/packing numbers of the underlying function class so that we can apply the general minimax bound of \citet{yangbarron1999}.

\begin{lemma}[\citet{yangbarron1999}]
    \label{lemma-yang-barron}
    Let $\tilde{\mathcal{F}}^\star \coloneq \mathrm{Lip}_L(\inputsetforlipscitz,\|\cdot\|_{\rkhs_0^{\gammafunc}})$. 
    Consider the dataset $\mathcal{U}_n = \{(\mu_0^{(i^*)},y_t)\}_{t=1}^n$ generated as in \cref{setting-structured}. 
    Suppose there exist $\delta,\epsilon>0$ such that
    \begin{equation}
        \log_2 \mathcal{N}(\mathcal{F}^\star;\epsilon)_{L^2} \le \frac{n\epsilon^2}{2\sigma^2},
        \qquad 
        \log_2 \mathcal{M}(\mathcal{F}^\star;\delta)_{L^2} \ge \frac{2n\epsilon^2}{\sigma^2} + 2\log 2.
    \end{equation}
    Then
    \begin{equation}
        \inf_{\mathcal{U}_n \mapsto \hat{F}} \sup_{F^\star\in\tilde{\mathcal{F}}^\star} 
        \E_{\mathcal{U}_n}\!\left[ \|\hat{F}-F^\star\|_{L^2(\measureofmuzero)}^2 \right]
        \;\gtrsim\; \delta^2.
    \end{equation}
\end{lemma}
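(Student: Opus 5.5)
We would prove \cref{lemma-yang-barron} by the standard route: reduce estimation to a multiple hypothesis test, then apply Fano's inequality, bounding the mutual information through the covering number (the Yang--Barron device). Throughout we read $\mathcal{F}^\star$ in the hypotheses as the class $\tilde{\mathcal{F}}^\star$ over which the supremum is taken, with both metric numbers computed in $L^2(\measureofmuzero)$. We use the fact that in $\mathcal{U}_n$ the inputs $(\mu_0^{(i^*)})_t$ are i.i.d.\ from $\measureofmuzero$, with a law that does not depend on the target functional.

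\textbf{Step 1 (reduction to testing).} Take a maximal $\delta$-packing $\{F_1,\dots,F_M\}\subset\tilde{\mathcal{F}}^\star$ with $M=\mathcal{M}(\tilde{\mathcal{F}}^\star;\delta)_{L^2}$. Given any estimator $\hat F$, define the test $\hat J\coloneq\argmin_j\|\hat F-F_j\|_{L^2(\measureofmuzero)}$. If $\hat J\neq j$, the triangle inequality gives $\|\hat F-F_j\|\ge\delta/2$. Placing the uniform prior on $J\in[M]$ and bounding the supremum by the Bayes average, we obtain
\begin{equation}
\sup_{F^\star\in\tilde{\mathcal{F}}^\star}\E\|\hat F-F^\star\|^2_{L^2}\;\ge\;\frac{\delta^2}{4}\,\mathbb{P}(\hat J\neq J).
\end{equation}

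\textbf{Step 2 (Fano).} Fano's inequality gives
\begin{equation}
\mathbb{P}(\hat J\neq J)\ge 1-\frac{I(J;\mathcal{U}_n)+\log 2}{\log M}.
\end{equation}
It therefore suffices to show $I(J;\mathcal{U}_n)\le \tfrac12\log M-\log 2$. With the stated hypothesis on $\log_2\mathcal{M}$, this makes the error probability at least $1/2$, and hence the risk is at least $\delta^2/8$.

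\textbf{Step 3 (mutual information via covering).} This is the key step. For every distribution $Q$ we have $I(J;\mathcal{U}_n)\le\frac1M\sum_j \mathrm{KL}(P_j^{n}\,\|\,Q)$. Let $\{G_1,\dots,G_N\}$ be an $\epsilon$-cover of $\tilde{\mathcal{F}}^\star$ in $L^2(\measureofmuzero)$ with $N=\mathcal{N}(\tilde{\mathcal{F}}^\star;\epsilon)$; the centers need not lie in the class, which is harmless. Take $Q=\frac1N\sum_k P_{G_k}^{n}$. Bounding the mixture below by its single component $k(j)$ nearest to $F_j$ gives
\begin{equation}
\mathrm{KL}(P_j^n\|Q)\le\log N+\mathrm{KL}(P^n_{F_j}\|P^n_{G_{k(j)}}).
\end{equation}
The inputs have the same law under both hypotheses and the noise is Gaussian, so the chain rule yields
\begin{equation}
\mathrm{KL}(P^n_{F_j}\|P^n_{G_{k(j)}})=\frac{n\|F_j-G_{k(j)}\|^2_{L^2(\measureofmuzero)}}{2\sigma^2}\le\frac{n\epsilon^2}{2\sigma^2}.
\end{equation}
Combining this with the covering hypothesis gives $I(J;\mathcal{U}_n)\lesssim n\epsilon^2/\sigma^2$. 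The packing hypothesis $\log M\gtrsim 2n\epsilon^2/\sigma^2+2\log2$ then closes the Fano bound in Step 2.

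\textbf{Main obstacle.} The only delicate point is bookkeeping rather than ideas. The hypotheses are stated in $\log_2$ while KL and Fano are naturally in nats, so the constants $2$ and $2\log 2$ must be tracked consistently; any slack can be absorbed into the $\gtrsim$. One must also check that the packing and the covering are taken for the same class in the same $L^2(\measureofmuzero)$ norm that appears in the KL computation. Measurability of the nearest-point test $\hat J$ is routine, since the test ranges over finitely many hypotheses.
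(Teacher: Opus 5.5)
Your proposal is correct and is exactly the Yang--Barron argument (reduction to a test over a $\delta$-packing, then Fano's inequality, then a bound on $I(J;\mathcal{U}_n)$ via the uniform mixture over an $\epsilon$-cover); the paper only cites this result without reproducing it, and reading $\mathcal{F}^\star$ in the hypotheses as $\tilde{\mathcal{F}}^\star$ is the reading under which the lemma holds. On the constants you flagged: $\ln N\le\log_2 N$ makes your bound $I(J;\mathcal{U}_n)\le n\epsilon^2/\sigma^2$ valid in nats, but the $\log_2$ packing hypothesis only gives $\ln M\ge 2\ln 2\,(n\epsilon^2/\sigma^2+\ln 2)$, so Fano yields $\mathbb{P}(\hat J\neq J)\ge 1-\frac{1}{2\ln 2}\approx 0.28$ rather than $1/2$ --- still an absolute constant, which the $\gtrsim$ absorbs.
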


\begin{remark}
    For lower-bound analysis, we adopt the $L^2$ norm when defining covering and packing numbers.
\end{remark}

\paragraph{Goal of this subsection.}
To apply Lemma~\ref{lemma-yang-barron}, we need tight control of the covering entropy of the Lipschitz function class. 
Specifically, we aim to establish an upper bound on
\[
    \log \mathcal{N}\!\left( \mathrm{Lip}_1(\inputsetforlipscitz,\|\cdot\|_{\rkhs_0^{\gammafunc}});\epsilon \right)_{L^2(\measureofmuzero)}.
\]
A lower bound is deferred to \cref{subsection-lower-bound-lower-bound}.

By proposition B.2 in \citet{boissard2011simple}, the covering entropy of a (1-)Lipschitz class can be controlled by the covering entropy of its input set:
\begin{equation}
    \log \mathcal{N}(\mathrm{Lip}(A,d);\epsilon)_{L^\infty}
    \;\lesssim\; 
    \mathcal{N}(A;\epsilon)_d \cdot \mathrm{polylog}(\epsilon^{-1}),
\end{equation}
where $A$ is the input domain (under some weak assumptions) and $\dd$ is the underlying metric. 
Thus, our task reduces to bounding the entropy of the input set $A=\inputsetforlipscitz$ equipped with the metric $\|\cdot\|_{\rkhs_0^{\gammafunc}}$.

To this end, we establish an isometric correspondence between balls in weighted Hilbert spaces, which allows us to switch to the standard $L^2$ metric.
\begin{lemma}
    \label{lemma-isometry-from-mmd-to-l2}
    Let $a,b,c\in \mathbb{R}$.
    A mapping $\phi_{a,b,c}$:
    \begin{equation}
        f = \sum_{i=1}^\infty b_i e_i
        \mapsto 
        \phi(f) = \sum_{i=1}^\infty \lambda_i^{\frac{c-b}{2}} b_i e_i
    \end{equation}
    is an isometric bijection from $(\ball(\rkhs_0,\|\|_{\rkhs_0^a}),\|\|_{\rkhs_0^b})$ to $(\ball(\rkhs_0,\|\|_{\rkhs_0^{a-b+c}}), \|\|_{\rkhs_0^c})$.
\end{lemma}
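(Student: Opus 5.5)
We verify the claim coordinatewise in the Mercer basis, since every norm $\|\cdot\|_{\rkhs_0^a}$ is diagonal there. For $f=\sum_i b_i e_i$, write $\phi(f)=\sum_i \tilde b_i e_i$ with $\tilde b_i \coloneq \lambda_i^{\frac{c-b}{2}} b_i$. The map $\phi$ is linear and acts on coefficient sequences by a diagonal multiplier with strictly positive entries. We will assume $\lambda_i>0$ for all $i$, which holds under $\lambda_i\simeq\exp(-ci^\alpha)$. Hence $\phi$ is injective, and its inverse is the diagonal multiplier $\lambda_i^{\frac{b-c}{2}}$; this inverse is $\phi_{a',c,b}$ for the appropriate $a'$.

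\textbf{Step 1 (isometry of the metrics).} For $f,g$ with coefficients $b_i,b_i'$, we compute
\begin{equation}
\|\phi(f)-\phi(g)\|_{\rkhs_0^c}^2=\sum_i \lambda_i^{-c}\lambda_i^{c-b}(b_i-b_i')^2=\sum_i\lambda_i^{-b}(b_i-b_i')^2=\|f-g\|_{\rkhs_0^b}^2 .
\end{equation}
This uses only linearity of $\phi$, so distances are preserved exactly.

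\textbf{Step 2 (balls map onto balls).} For $f$ we have
\begin{equation}
\|\phi(f)\|_{\rkhs_0^{a-b+c}}^2=\sum_i \lambda_i^{-(a-b+c)}\lambda_i^{c-b} b_i^2 .
\end{equation}
Here we hit a problem with the exponent as stated. The sum above equals $\sum_i \lambda_i^{-a+2b-2c}\,b_i^2$ rather than $\sum_i\lambda_i^{-a}b_i^2$. If instead we require the norm indices to match, i.e. $\|\phi(f)\|_{\rkhs_0^{a'}}^2=\|f\|_{\rkhs_0^a}^2$, we need
\begin{equation}
-a'+(c-b)=-a, \qquad\text{i.e.}\qquad a'=a-b+c .
\end{equation}
Rechecking the coefficient: $\tilde b_i^2=\lambda_i^{c-b}b_i^2$, so $\lambda_i^{-a'}\tilde b_i^2=\lambda_i^{-a'+c-b}b_i^2$. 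With $a'=a-b+c$ this gives exponent $-a+b-c+c-b=-a$, so the norms agree and the statement is consistent. The slip above was expanding the exponent as $-(a-b+c)+c-b=-a+2b-2c$; the correct expansion is $-a+b-c+c-b=-a$.

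Thus $\|\phi(f)\|_{\rkhs_0^{a-b+c}}=\|f\|_{\rkhs_0^a}$. It follows that $\phi$ maps $\ball(\rkhs_0,\|\cdot\|_{\rkhs_0^a})$ into $\ball(\rkhs_0,\|\cdot\|_{\rkhs_0^{a-b+c}})$. The same computation applied to the inverse multiplier shows that every $g$ in the target ball is the image of $\phi^{-1}(g)$, which lies in the source ball. So $\phi$ is onto, hence a bijection of the balls.

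\textbf{Main obstacle: well-definedness.} The only delicate point is that the elements involved genuinely lie in the relevant coefficient spaces, so that all the sums are absolutely convergent. Membership in a ball with finite $\|\cdot\|_{\rkhs_0^a}$-norm guarantees this. We interpret $\ball(\rkhs_0,\cdot)$ as the set of formal expansions $\sum_i b_i e_i$ with finite corresponding weighted norm, and note this explicitly. Under that convention the metric $\|\cdot\|_{\rkhs_0^b}$ on the source ball may be infinite for some pairs, in which case the image distance is infinite as well, so the isometry still holds. In the intended application, $a=\gammaball>0>\gammafunc=b$, so $\lambda_i^{-b}\le\lambda_1^{-b}\lambda_i^{-a}\lambda_1^{a}$ and the distances are finite.
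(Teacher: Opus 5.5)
Your proof is correct and is essentially the paper's argument: the same coordinatewise Mercer-basis computation gives $\|\phi(f)\|_{\rkhs_0^c}=\|f\|_{\rkhs_0^b}$ (with linearity for distances) and $\|\phi(f)\|_{\rkhs_0^{a-b+c}}=\|f\|_{\rkhs_0^a}$, which is exactly the paper's chain of equivalences for the balls. Your remarks on positivity of $\lambda_i$ and on well-definedness fill in points the paper leaves implicit; just delete the erroneous intermediate claim in Step 2 that the exponent is $-a+2b-2c$, since, as you note yourself, $-(a-b+c)+(c-b)=-a$.
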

\begin{proof}
    First, for all $f\in\ball(\rkhs_0,\|\|_{\rkhs_0})$,
    \begin{equation}
        \|\phi(f)\|_{\rkhs_0^c}= \sum_i \lambda_i^{-c}\qty(\lambda_i^{\frac
        {c-b}{2}} b_i)^2
        = \sum_i \lambda_i^{-c+c-b} b_i^2
        =\|f\|_{\rkhs_0^{b}}.
    \end{equation}
    This implies that $\phi$ is isometry and injective. Next, $\phi$ is also a surjection because     
    \begin{align}
        &f = \sum_{i=1}^\infty b_i e_i \in \ball(\rkhs_0,\|\|_{\rkhs_0^a})\\
        \Leftrightarrow& \sum_{i} \lambda_i^{-a} b_i^2 \leq 1\\
        \Leftrightarrow&    \sum_{i} \lambda_i^{-a+b-c} (\lambda_i^{\frac{c-b}{2}} b_i)^2 \leq 1\\
        \Leftrightarrow& \phi(f) = \sum_{i=1}^\infty \lambda_i^{\frac{c-b}{2}} b_i e_i \in \ball(\rkhs_0,\|\|_{\rkhs_0^{a-b+c}})
    \end{align}
\end{proof}

Using the above isometry and eigenvalue decay properties, we obtain the following upper bound on the entropy of the input set:

\begin{lemma}
    \label{lemma-input-ball-entropy-upper-bound}
    The covering entropy of $\ball(\rkhs_0,\|\|_{\rkhs_0^{\gammaball}})$ endowed with the distance $\|\|_{\gammafunc}$ is upper bounded as
    \begin{equation}
    \log\covering({\inputsetforlipscitz};\epsilon)_{\|\|_{\rkhs_0^{\gammafunc}}} \lesssim \ln (\epsilon^{-1})^{\frac{\alpha+1}{\alpha}}.
    \end{equation}
\end{lemma}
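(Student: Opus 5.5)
We first apply \cref{lemma-isometry-from-mmd-to-l2} with $a=\gammaball$, $b=\gammafunc$, $c=0$. This identifies $(\inputsetforlipscitz,\|\cdot\|_{\rkhs_0^{\gammafunc}})$ isometrically with the ball $\ball(\rkhs_0,\|\cdot\|_{\rkhs_0^{\gammaball-\gammafunc}})$ under the $L^2$ metric. In coefficient form, this ball is the ellipsoid
\[
  E=\Big\{(c_j)_{j\ge1}: \textstyle\sum_j \lambda_j^{-s}c_j^2\le 1\Big\}\subset\ell^2,
  \qquad s\coloneq\gammaball-\gammafunc>0 .
\]
Its semi-axes are $a_j=\lambda_j^{s/2}\asymp\exp(-\tfrac{cs}{2}j^\alpha)$. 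It therefore suffices to bound the $\ell^2$ covering entropy of $E$ at scale $\epsilon$.

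\textbf{Step 1 (truncation).} Choose $D$ such that $a_{D+1}\le\epsilon/2$. Since $a_j\asymp\exp(-\tfrac{cs}{2}j^\alpha)$, this holds for $D\simeq\big(\tfrac{2}{cs}\ln(2\epsilon^{-1})\big)^{1/\alpha}\asymp(\ln\epsilon^{-1})^{1/\alpha}$. By the computation in \cref{lemma-truncation-d}, for every $c\in E$ the tail satisfies $\sum_{j>D}c_j^2\le a_{D+1}^2\sum_{j>D}\lambda_j^{-s}c_j^2\le(\epsilon/2)^2$. Hence any $\epsilon/2$-cover of the projected set $P_D E$ is an $\epsilon$-cover of $E$.

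\textbf{Step 2 (finite-dimensional cover).} The projection $P_D E$ is a $D$-dimensional ellipsoid contained in the box $\prod_{j\le D}[-a_j,a_j]\subset[-a_1,a_1]^D$, with $a_1\le\lambda_1^{s/2}=O(1)$. There are two ways to cover it. The first is a cubic grid of mesh $\epsilon/\sqrt{D}$ in each coordinate, which needs at most $\big(2a_1\sqrt D/\epsilon+1\big)^D$ points. The second is the standard volumetric bound $\covering\le(1+4a_1/\epsilon)^D$ for a set inside a Euclidean ball of radius $\sqrt{D}\,a_1$. Either way,
\[
  \log\covering(E;\epsilon)_{\ell^2}\lesssim D\big(\log\epsilon^{-1}+\log D\big)\lesssim(\ln\epsilon^{-1})^{1/\alpha}\cdot\ln\epsilon^{-1}=(\ln\epsilon^{-1})^{\frac{\alpha+1}{\alpha}},
\]
because $\log D=O(\log\log\epsilon^{-1})$ is dominated by $\log\epsilon^{-1}$.

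\textbf{Main obstacle.} The only delicate point is the bookkeeping in Step 1. The isometry changes the ball radius exponent to $\gammaball-\gammafunc$, so one must confirm that the decay constant $cs/2$ enters $D$ only multiplicatively, leaving the exponent $1/\alpha$ unchanged. One must also confirm that the $\sqrt{D}$ factor from the grid, or the radius $\sqrt{D}\,a_1$ in the volumetric bound, adds only a $\log D$ term. If the crude grid bound loses too much, we fall back on the volumetric estimate, which is clean since $P_DE$ sits in a ball of radius $O(1)$ in $\mathbb{R}^D$. Transferring the cover back through the isometry $\phi_{a,b,c}$ then gives the stated bound on $\log\covering(\inputsetforlipscitz;\epsilon)_{\|\cdot\|_{\rkhs_0^{\gammafunc}}}$.
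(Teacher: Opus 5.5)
Your proof is correct and follows essentially the same route as the paper. The isometry with $(a,b,c)=(\gamma_{\mathrm{b}},\gamma_{\mathrm{f}},0)$ reduces the problem to an $L^2$ ellipsoid with semi-axes $\lambda_j^{(\gamma_{\mathrm{b}}-\gamma_{\mathrm{f}})/2}$, truncation at $J_\epsilon\simeq(\ln\epsilon^{-1})^{1/\alpha}$ controls the tail, and a grid on the retained coordinates gives entropy of order $J_\epsilon\ln\epsilon^{-1}$.

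The only difference is in the grid. The paper uses a coordinate-wise grid with $O(1\vee\lambda_j^{(\gamma_{\mathrm{b}}-\gamma_{\mathrm{f}})/2}/\epsilon)$ points on axis $j$, which improves only the constant. You use a uniform grid or a volumetric bound instead. One small correction: your bound $(1+4a_1/\epsilon)^D$ needs $P_DE$ to lie in the Euclidean ball of radius $a_1$ (true since $\lambda_j\le\lambda_1$), not $\sqrt{D}\,a_1$ as you first wrote, though either radius yields the same order.
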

\begin{proof}
    By Lemma~\ref{lemma-isometry-from-mmd-to-l2}, letting $a=\gammaball,\,b=\gammafunc,\,c=0$, it is sufficient to show that
    \begin{equation}
        \log \covering(\ball_\phi;\epsilon)_{L^2} \lesssim \ln (\epsilon^{-1})^{\frac{\alpha+1}{\alpha}}
    \end{equation}
    where $\ball_\phi = \ball(\rkhs_0, \|\|_{\rkhs_0^{\gammaball-\gammafunc}})$.
    We construct a $J_\epsilon$-dimensional set $\mathcal{E}_{J_\epsilon}=\qty{(b_1,\dots,b_{J_\epsilon}) \mid f=\sum_i b_i e_i \in \ball_\phi}$ such that, for all $f=\sum_i b_i e_i \in \ball_\phi$,
    \begin{equation}
        \sum_{j\geq J_\epsilon+1} b_i^2 < \frac{1}{4}\epsilon^2.
    \end{equation}
    This can be satisfied if $J_\epsilon \simeq (c^{-1}(\gammaball-\gammafunc) \ln \epsilon^{-1})^{1/\alpha}$ because $\sum_{j\geq J_\epsilon+1} b_i^2 \leq \sum_{j\geq J_\epsilon+1} \lambda_{j} \lesssim \lambda_{J_\epsilon}$ with exponential decay and use $\lambda_j \simeq \exp(-cj^\alpha).$ 
    To construct a $\frac{1}{2}\epsilon$-covering on $\mathcal{E}_{J_\epsilon}$, we need at most $O(1\vee(\lambda_j^{\frac{\gammaball-\gammafunc}{2}}/\epsilon))$ patterns for each dimension, so the covering entropy is bounded as
    \begin{align}
        \log \covering(\epsilon) \lesssim& \sum_{j=1}^{J_\epsilon} \log \frac{\lambda_j^{\frac{\gammaball-\gammafunc}{2}}}{\epsilon}\\
        \lesssim&  \sum_{j=1}^{J_\epsilon}(-\qty(\frac{\gammaball-\gammafunc}{c})j^\alpha + \qty(\frac{\gammaball-\gammafunc}{c})^{-1}J_\epsilon^\alpha)\\
        \lesssim& \qty(\frac{\gammaball-\gammafunc}{c})^{-1}J_\epsilon^{\alpha+1}\\
        \simeq& \qty(\frac{\gammaball-\gammafunc}{c})^{-\alpha^{-1}}(\ln \epsilon^{-1})^{(\alpha+1)/\alpha}
    \end{align}
\end{proof}

Finally, results in \citet{boissard2011simple} and \cref{lemma-input-ball-entropy-upper-bound} yield the desired entropy bound for the Lipschitz class.

\begin{lemma}[Based on \citet{boissard2011simple}]
    \label{lemma-covering-lipschitz-based-on-boissard}
    The metric entropy of $\mathrm{Lip}_1(\inputsetforlipscitz);\metricforlipschitz)$ with respect to the Bochner $L^2(\measureofmuzero)$-norm satisfies
    \begin{equation}
        \log \covering(\mathrm{Lip}_1(\inputsetforlipscitz,\metricforlipschitz);\epsilon)_{\|\|_{L^2(\measureofmuzero)}} \lesssim \exp\qty(
            c_u(\ln \epsilon^{-1})^{\frac{\alpha+1}{\alpha}}
        )
    \end{equation}
    for some $c_u>0$.
\end{lemma}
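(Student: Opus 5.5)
We prove the bound by first controlling the sup-norm entropy of the Lipschitz class and then passing to the $L^2(\measureofmuzero)$ metric. Since $\measureofmuzero$ is a probability measure on the input set, $\|F-G\|_{L^2(\measureofmuzero)}\le \|F-G\|_{L^\infty}$. Hence any $\epsilon$-cover in sup norm is also an $\epsilon$-cover in the Bochner $L^2(\measureofmuzero)$ norm. Under \cref{setting-structured}, the support of $\measureofmuzero$ lies in $\inputsetforlipscitz$ (cf. the compatibility example). It therefore suffices to bound $\log\covering(\mathrm{Lip}_1(A,\dd);\epsilon)_{L^\infty}$ with $A=\inputsetforlipscitz$ and $\dd=\metricforlipschitz$.

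For this we use the classical Kolmogorov--Tikhomirov argument, which is what Proposition B.2 of \citet{boissard2011simple} formalizes.

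\textbf{Step 1: a finite net of the input set.} Take an $\epsilon/4$-net $\{a_1,\dots,a_N\}$ of $A$ in $\dd$. By \cref{lemma-input-ball-entropy-upper-bound},
\[
N=\covering(A;\epsilon/4)_{\dd}\le \exp\big(C(\ln \epsilon^{-1})^{(\alpha+1)/\alpha}\big).
\]

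\textbf{Step 2: bounded range.} $A$ is bounded in $\dd$: since $\gammafunc<0<\gammaball$, we have $\|\mu\|_{\rkhs_0^{\gammafunc}}\le \lambda_1^{(\gammaball-\gammafunc)/2}\|\mu\|_{\rkhs_0^{\gammaball}}$. After subtracting $F(a_1)$, whose value we quantize on a grid of $O(\epsilon^{-1})$ points, every $F$ takes values in an interval of length $O(1)$.

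\textbf{Step 3: encoding by grid values.} Encode $F$ by the vector $(\lfloor F(a_k)/(\epsilon/4)\rfloor)_{k\le N}$. Two functions with the same code differ by at most $\epsilon/4$ on the net. By the Lipschitz property they then differ by at most $\epsilon/4+2\cdot\epsilon/4\le \epsilon$ on all of $A$. Hence the number of codes realized gives an $\epsilon$-cover.

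\textbf{Step 4: counting codes.} A naive count gives $(O(\epsilon^{-1}))^N$ codes, so the log-entropy is at most $N\log(C\epsilon^{-1})$. This already suffices: $N\log\epsilon^{-1}\le \exp\big(C(\ln\epsilon^{-1})^{(\alpha+1)/\alpha}+\ln\ln\epsilon^{-1}\big)\le\exp\big(c_u(\ln\epsilon^{-1})^{(\alpha+1)/\alpha}\big)$ for $c_u>C$ and small $\epsilon$. The refinement that chains net points (neighbours' codes differ by $O(1)$), giving $N\log 3+\log\epsilon^{-1}$, is unnecessary here. Sup-norm entropy of a Lipschitz class is monotone in the Lipschitz constant, so rescaling handles the general constant.

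The only delicate point is Step 1's reliance on \cref{lemma-input-ball-entropy-upper-bound}, which in turn uses the isometry \cref{lemma-isometry-from-mmd-to-l2} and truncation at $J_\epsilon\simeq(\ln\epsilon^{-1})^{1/\alpha}$. I would recheck only that the input set used there coincides with the support of $\measureofmuzero$. If $\measureofmuzero$ charged points outside the ball, we would instead restrict to a slightly larger ball $B(\rkhs_0,\|\cdot\|_{\rkhs_0^{\gamma'}})$ with $\gamma'<\gammadist$; this changes only constants.
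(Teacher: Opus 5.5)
Your proof is correct and follows the paper's route. The paper likewise combines the input-set entropy bound of \cref{lemma-input-ball-entropy-upper-bound} with Proposition B.2 of \citet{boissard2011simple} to get the sup-norm entropy; you unpack that proposition as the explicit grid-coding count $N\log(C\epsilon^{-1})$. It then passes to $L^2(\measureofmuzero)$ via $\|\cdot\|_{L^2(\measureofmuzero)}\le\|\cdot\|_\infty$, using $\measureofmuzero(\inputsetforlipscitz)=1$.

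Two small points. First, quantizing $F(a_1)$ on $O(\epsilon^{-1})$ grid points tacitly requires the class to be uniformly bounded or pinned at a point, since unrestricted $\mathrm{Lip}_1$ contains all constants; the paper makes the same tacit assumption. Second, your fallback for $\measureofmuzero$ charging points outside the ball is unnecessary, because \cref{assumption-regularity} already places $\mu_0$ in $\inputsetforlipscitz$ almost surely.
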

\begin{proof}
    By \cref{lemma-input-ball-entropy-upper-bound}, $\log \covering (\inputsetforlipscitz;\epsilon)_{\metricforlipschitz} \lesssim \ln (\epsilon^{-1})^{\frac{\alpha+1}{\alpha}}$. Then, by Proposition B.2 in \citet{boissard2011simple} and $\epsilon^{-1} = \exp(\ln \epsilon^{-1}) = o(\exp\qty(
            (\ln \epsilon^{-1})^{\frac{\alpha+1}{\alpha}}
        ))$ for any $\alpha>0$, we have
    \begin{equation}
         \log \covering(\mathrm{Lip}_1(\inputsetforlipscitz,\metricforlipschitz);\epsilon)_{\|\|_{\infty}} \lesssim \exp\qty(
            c_u(\ln \epsilon^{-1})^{\frac{\alpha+1}{\alpha}}
        ).
    \end{equation}
    By the inequality $\sqrt{\measureofmuzero(\inputsetforlipscitz)}\|\cdot\|_\infty \geq \|\cdot\|_{L^2(\measureofmuzero)}$ and $\sqrt{\measureofmuzero(\inputsetforlipscitz)}=1$, we obtain the desired bound.
\end{proof}

\subsection{Step 2-2: Lower Bound of the Entropy}
\label{subsection-lower-bound-lower-bound}

To apply the general minimax bound in \cref{lemma-yang-barron}, We will provide the lower bound of the infinite-dimensional lipschitz class.

The main difficulty lies in the anisotropic nature of the Lipschitz constant:  
for $F$ in our class, one has
\begin{equation}
    |F(\sum_j b_j e_j) - F(\sum_j c_j e_j)| 
    \;\leq\; L \sqrt{\sum_j \lambda_j^{-\gammafunc}(b_j - c_j)^2}, 
    \qquad -\gammafunc > 0,
\end{equation}
which implies that the functional is significantly smoother in directions corresponding to high-index coefficients $e_j$.  
To capture this effect, we construct the following embedding:
\begin{equation}
    \iota_d \,:\, L^p([0,1]^d) \hookrightarrow L^p(\mu_{\rkhs_0}), \qquad 
    (\iota_d g)(f_\mu) := g\!\left(
        \Phi_1\!\left(\tfrac{f_{\mu,1}}{\lambda_1^{\gammadist/2}}\right),\dots,
        \Phi_d\!\left(\tfrac{f_{\mu,d}}{\lambda_d^{\gammadist/2}}\right)
    \right),
\end{equation}
where 
\[
    f_\mu = \sum_{j=1}^\infty \lambda_j^{\gammadist/2} Z_j e_j,
    \quad \dd \mu(x) = f_\mu(x)\,\dd x,
\]
with independent coefficients $Z_j \sim \rho_j$ 
and cumulative distribution functions $\Phi_j(z) = \int_{-\infty}^z \rho_j(u)\dd u$.  

We prove that
\[
    \iota_d\big(\mathrm{Lip}_1([0,1]^d)\big)
    \;\subset\; \mathrm{Lip}_{R / \lambda_d^{(-\gammafunc+\gammadist)/2}}
    \big(\inputsetforlipscitz,\metricforlipschitz\big),
\]
and hence obtain a packing lower bound
\begin{equation}
    \log \packing\!\big(\mathrm{Lip}_1(\inputsetforlipscitz,\metricforlipschitz);\epsilon\big)_{L^p}
    \;\;\geq\;\;
    \log \packing\!\left(
        \mathrm{Lip}_1([0,1]^d,\|\cdot\|_{\ell^\infty}),
        \tfrac{R\epsilon}{\lambda_d^{(-\gammafunc+\gammadist)/2}}
    \right)_{L^p}.
\end{equation}
Combining this rescaling argument with standard Lipschitz-packing lower bounds 
and the information-theoretic results of Yang--Barron 
yields the desired minimax lower bound for the associative recall problem.

First, we construct an embedding $\iota_d$.
This construction suggests that, after a suitable rescaling of coordinates, 
functions on $[0,1]^d$ can be embedded isometrically into our measure-input space. 
The next lemma formalizes this embedding and quantifies how the Lipschitz constant is rescaled.

\begin{lemma}[An extension of \citet{lanthaler2024operatorlearninglipschitzoperators}]
    \label{lemma-isometry-lipschitz-to-rkhs-lipschitz}
    Let $\measureofmuzero$ be a probability measure on $\inputsetforlipscitz$ in \cref{setting-structured}.
    For any $p \in [1,\infty)$ and $d\in\mathbb{N}$, there exists an isometric embedding
    \begin{equation}
        \iota_d\,:\,L^p([0,1]^d) \hookrightarrow L^p(\measureofmuzero),
    \end{equation}
    such that $\iota_d(\mathrm{Lip}_1([0,1]^d;\|\|_\infty)) \subset \mathrm{Lip}_{R/(\lambda_d^{(-\gammafunc+\gammadist)/2})}(\inputsetforlipscitz,\metricforlipschitz))$, where the Lipschitz norm on $[0,1]^d$ is defined with respect to the $\infty$-norm on $[0,1]^d$.
\end{lemma}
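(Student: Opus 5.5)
I will use the explicit map already sketched before the statement. Given $g\in L^p([0,1]^d)$, set
\begin{equation}
(\iota_d g)(\mu_0) \coloneq g\bigl(\Phi_1(Z_1),\dots,\Phi_d(Z_d)\bigr),
\end{equation}
where $Z_j$ is the $j$-th normalized Mercer coefficient of $\mu_0$,
\begin{equation}
Z_j = \lambda_j^{-\gammadist}\int e_j\,\frac{\dd\mu_0}{\dd\lambda}\,\dd\lambda,
\end{equation}
and $\Phi_j$ is the CDF of $\rho_j$. The coordinate is recovered directly from the expansion in \cref{assumption-structural-for-lower-bound}, rescaling by whatever power of $\lambda_j$ appears there.

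\textbf{Isometry.} Under $\measureofmuzero$ the $Z_j$ are independent with densities $\rho_j$. Since each $\rho_j$ is a density, $\Phi_j$ is continuous, so $\Phi_j(Z_j)\sim\mathrm{Unif}[0,1]$. By independence, the vector $(\Phi_1(Z_1),\dots,\Phi_d(Z_d))$ is uniform on $[0,1]^d$. Hence
\begin{equation}
\|\iota_d g\|_{L^p(\measureofmuzero)}^p=\int_{[0,1]^d}|g|^p,
\end{equation}
and linearity of $\iota_d$ is immediate. This step mirrors \citet{lanthaler2024operatorlearninglipschitzoperators} and needs only the change-of-variables formula.

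\textbf{Lipschitz transfer.} Take $g\in\mathrm{Lip}_1([0,1]^d;\|\cdot\|_\infty)$ and two inputs $\mu_0,\mu_0'\in\inputsetforlipscitz$ with normalized coefficients $Z,Z'$. Because $\sup_j\|\rho_j\|_\infty\le R$, each $\Phi_j$ is $R$-Lipschitz. Therefore
\begin{equation}
|\iota_d g(\mu_0)-\iota_d g(\mu_0')|\le \max_{j\le d}|\Phi_j(Z_j)-\Phi_j(Z'_j)|\le R\max_{j\le d}|Z_j-Z_j'|.
\end{equation}
I then convert back to the actual coefficient differences $b_j-b'_j$, using $|Z_j-Z'_j|=\lambda_j^{-\gammadist/2}|b_j-b'_j|$ in the normalization of the pre-statement sketch. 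Compare this with the target metric:
\begin{equation}
\|\mu_0-\mu_0'\|_{\rkhs_0^{\gammafunc}}\ge \lambda_j^{-\gammafunc/2}|b_j-b_j'|\quad\text{for each } j.
\end{equation}
This gives
\begin{equation}
|Z_j-Z_j'|\le \lambda_j^{(\gammafunc-\gammadist)/2}\,\|\mu_0-\mu_0'\|_{\rkhs_0^{\gammafunc}}.
\end{equation}
Since $\gammadist-\gammafunc>0$ and the $\lambda_j$ are nonincreasing, the worst case over $j\le d$ is $j=d$. The constant is therefore $R\,\lambda_d^{-(-\gammafunc+\gammadist)/2}$, which is the claimed class.

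\textbf{Main obstacle.} The delicate point is bookkeeping of the exponent of $\lambda_j$ in the coefficient model ($\lambda_j^{\gammadist}$ versus $\lambda_j^{\gammadist/2}$ appear in different places in the paper). I will fix the convention $b_j=\lambda_j^{\gammadist/2}Z_j$ consistent with the stated constant; the argument is otherwise insensitive to it.

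A secondary issue is that $\iota_d g$ must be defined on all of $\inputsetforlipscitz$, not only on the support of $\measureofmuzero$. The formula above already does this, since $\Phi_j$ is defined on all of $\mathbb{R}$ and takes values in $[0,1]$. So the Lipschitz bound holds on the whole ball, and it holds as an everywhere statement rather than almost surely.
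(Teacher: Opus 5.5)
Your proposal is correct and follows the paper's proof essentially step for step. It uses the same map $g\mapsto g\circ h_d$ with $h_d(\mu_0)=\bigl(\Phi_j(b_j/\lambda_j^{\gammadist/2})\bigr)_{j\le d}$, the uniformity and independence of the $\Phi_j(Z_j)$ for the isometry, and the $R$-Lipschitz CDFs for the Lipschitz transfer; the paper's proof also uses the $\lambda_j^{\gammadist/2}$ normalization despite the $\lambda_j^{\gammadist}$ in the assumption, so just state that convention consistently in your definition of $Z_j$. Your final comparison $\lambda_j^{-\gammafunc/2}|b_j-b_j'|\le\|\mu_0-\mu_0'\|_{\rkhs_0^{\gammafunc}}$ is actually the sound version of that step. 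The paper instead bounds the maximum by the weighted sum $\sum_j\lambda_j^{-\gammafunc/2}|b_j-b_j'|$, an $\ell^1$-type quantity that the $\ell^2$-type norm $\|\cdot\|_{\rkhs_0^{\gammafunc}}$ does not control.
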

\begin{proof}
    By \cref{assumption-structural-for-lower-bound}, $\mu \sim \measureofmuzero$ is sampled as
    \begin{equation}
        f_{\mu} = \sum_j \lambda_j^{\frac{\gammadist}{2}} Z_j e_j,\quad \frac{\dd \mu}{\dd \lambda} = f_\mu,
    \end{equation}
    where $Z_j, j\geq 1$ are independent and $Z_j \sim \rho_j(z)\dd z$. We define the cumulative distribution $\Phi_j(z) = \int_{-\infty}^z \rho_j \dd z$. We know that $F_j$ is Lipschitz, whose Lipschitz constant is bounded by $\sup_j\|\rho_j\|_{\infty} \leq R$. We define $f_{\mu,i} = \langle f_\mu, e_i\rangle$. We will show that the mapping
    \begin{equation}
        \iota_d \,:\, L^p([0,1]^d) \hookrightarrow L^p(\measureofmuzero),\quad (\iota_d g)(f_\mu) = g(\Phi_1(f_{\mu,1}/\lambda_1^{\frac{\gammadist}{2}}),\dots,\Phi_d(f_{\mu,d}/\lambda_d^{\frac{\gammadist}{2}}))
    \end{equation}
    is the isometric embedding that we want. For $g \in L^p([0,1]^d)$, the $L^p(\measureofmuzero)$-norm of $\iota_d g$ is equal to 
    \begin{align}
        \E_{f_{\mu}}|(\iota_d g)(f_\mu)|^p
        &= \E_{f_{\mu}}|g(\Phi_1(f_{\mu,1}/\lambda_1^{\frac{\gammadist}{2}}),\dots,\Phi_d(f_{\mu,d}/\lambda_d^{\frac{\gammadist}{2}}))|^p\\
        &= \E_{f_{\mu}}|g(\Phi_1(Z_1),\dots,\Phi_d(Z_d))|^p\\
        &= \int_{[0,1]^d} |g(x_1,\dots,x_d)|^p\dd x\quad (\Phi_j(Z_j) \sim \mathrm{Unif}[0,1])\\
        &= \|g\|^p_{L^p([0,1]^d)},
    \end{align}
    which shows that $\iota_d$ is isometric embedding.
    Next, we evaluate the image of $\iota_d$. A mapping
    \begin{equation}
        h_d : (\inputsetforlipscitz,\metricforlipschitz) \to ([0,1]^d,\|\|_\infty),\quad
        f_\mu \mapsto (\Phi_1(f_{\mu,1}/\lambda_1^{\frac{\gammadist}{2}}),\dots,\Phi_d(f_{\mu,d}/\lambda_d^{\frac{\gammadist}{2}}))
    \end{equation}
    is lipschitz because
    \begin{align}
        &\|h_d(f_{\mu_1})-h_d(f_{\mu_2})\|_\infty\\
        \leq& \max_j \qty{
            \qty|
                \Phi_j(f_{\mu_1,j}/\lambda_j^{\frac{\gammadist}{2}})
                -
                \Phi_j(f_{\mu_2,j}/\lambda_j^{\frac{\gammadist}{2}})
            |
        }\\
        \leq& \max_j \qty{
            \mathrm{Lip}(\Phi_j)\lambda_j^{\frac{-\gammadist}{2}}
            \qty|
                f_{\mu_1,j}
                -
                f_{\mu_2,j}
            |
        }\\
        \leq& \max_j \qty{
            \mathrm{Lip}(\Phi_j)\lambda_j^{\frac{\gammafunc-\gammadist}{2}}
            \cdot
        \lambda_j^{\frac{-\gammafunc}{2}}
        \qty|
                f_{\mu_1,j}
                -
                f_{\mu_2,j}
            |
        }
        \\
        \leq& \max_j \qty{
        \mathrm{Lip}(\Phi_j)\lambda_j^{\frac{\gammafunc-\gammadist}{2}}
        }
        \sum_j(\lambda_j^{-\frac{\gammafunc}{2}}|f_{\mu_1,j}-f_{\mu_2,j}|)
    \end{align}
    and thus
    \begin{equation}
        \mathrm{Lip}(h_d) \leq \frac{R}{\lambda_d^{\frac{-\gammafunc+\gammadist}{2}}}.
    \end{equation}
    Therefore, $\forall g \in \mathrm{Lip}_1([0,1]^d,\infty)$, Lipschitz constant of $\iota_d g$ is bounded as
    \begin{equation}
        \mathrm{Lip}(\iota_d g) = \mathrm{Lip}(g \circ h_d)\leq \mathrm{Lip}(g)\cdot \mathrm{Lip}(h_d) \leq \frac{R}{\lambda_d^{\frac{-\gammafunc+\gammadist}{2}}}.
    \end{equation}
    Furthermore, we also have $\|\iota_d g\|_{C(\ball(\rkhs_0,\|\|_{\rkhs_0^{\gammaball}})}\leq 1$.
\end{proof}

Lemma~\ref{lemma-isometry-lipschitz-to-rkhs-lipschitz} ensures that the Lipschitz class on $[0,1]^d$ can be viewed as a subclass of our infinite-dimensional Lipschitz class, up to a scaling factor depending on $\lambda_d$. 
This immediately yields a lower bound on the packing numbers of our class in terms of the well-studied packing numbers of $\mathrm{Lip}_1([0,1]^d)$.

\begin{corollary}
    \label{corollary-packing-reduction}
    Under the assumptions of Lemma~\ref{lemma-isometry-lipschitz-to-rkhs-lipschitz}, we have
    \begin{equation}
        \log \packing(\mathrm{Lip}_{1}(\inputsetforlipscitz,\metricforlipschitz);\epsilon)_{L^p(\measureofmuzero)}
        \gtrsim
        \log \packing
        \left(\mathrm{Lip}_{1}([0,1]^d,\|\|_{\infty}),\frac{R\epsilon}{\lambda_d^{\frac{-\gammafunc+\gammadist}{2}}}
        \right)_{L^p([0,1]^d)}
    \end{equation}
\end{corollary}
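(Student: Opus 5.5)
The plan is to push a packing of the finite-dimensional Lipschitz class forward through the embedding $\iota_d$ of \cref{lemma-isometry-lipschitz-to-rkhs-lipschitz}, which preserves $L^p$ distances, and then to rescale Lipschitz constants. Set $\kappa_d \coloneq R/\lambda_d^{(-\gammafunc+\gammadist)/2}$ and $\eta \coloneq \epsilon/\kappa_d = \lambda_d^{(-\gammafunc+\gammadist)/2}\epsilon/R$. Take a maximal $\eta$-packing $\{g_1,\dots,g_M\}$ of $\mathrm{Lip}_1([0,1]^d,\|\cdot\|_\infty)$ in $L^p([0,1]^d)$. Then
\[
\|g_k-g_l\|_{L^p([0,1]^d)}>\eta \quad (k\neq l), \qquad M=\packing(\mathrm{Lip}_1([0,1]^d),\eta)_{L^p}.
\]

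First, apply $\iota_d$. Since $\iota_d$ is linear and isometric, $\|\iota_d g_k-\iota_d g_l\|_{L^p(\measureofmuzero)}=\|g_k-g_l\|_{L^p}>\eta$. By the lemma, each $\iota_d g_k$ lies in $\mathrm{Lip}_{\kappa_d}(\inputsetforlipscitz,\metricforlipschitz)$.

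Second, rescale to Lipschitz constant $1$. The functions $\kappa_d^{-1}\iota_d g_k$ lie in $\mathrm{Lip}_1(\inputsetforlipscitz,\metricforlipschitz)$ and are pairwise separated by $\kappa_d^{-1}\eta$ in $L^p(\measureofmuzero)$. This gives
\[
\packing(\mathrm{Lip}_1(\inputsetforlipscitz,\metricforlipschitz);\kappa_d^{-1}\eta)_{L^p(\measureofmuzero)} \ge \packing(\mathrm{Lip}_1([0,1]^d),\eta)_{L^p}.
\]

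The main obstacle is keeping the scales consistent, because rescaling the Lipschitz constant also shrinks the separation. To get separation $\epsilon$ on the left while the right side sits at scale $R\epsilon/\lambda_d^{(-\gammafunc+\gammadist)/2}$, I would instead pick $\eta$ with $\kappa_d^{-1}\eta=\epsilon$. This means taking $M$ as the packing number of the class $\mathrm{Lip}_{1}([0,1]^d)$ at a rescaled radius. I would then use the homogeneity identity
\[
\packing(\mathrm{Lip}_{L}([0,1]^d),t)=\packing(\mathrm{Lip}_1([0,1]^d),t/L),
\]
which holds because multiplication by $L$ is a bijection between the two classes. With this identity, the radius appearing on the right-hand side becomes $\epsilon$ times a factor of order $R^{\pm1}\lambda_d^{\mp(-\gammafunc+\gammadist)/2}$. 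The direction of that factor is determined by the convention in the corollary.

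I would check that the claimed form, $\epsilon R/\lambda_d^{(\cdot)/2}$, is the intended reading. Because the displayed inequality is stated only up to $\gtrsim$, constants in $R$ may be absorbed. Finally, I would verify that each function $\iota_d g_k$ is defined on the whole support of $\measureofmuzero$, in particular on $\inputsetforlipscitz$. This holds since $\Phi_j$ is defined everywhere.
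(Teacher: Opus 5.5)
Your argument is correct and uses exactly the paper's three ingredients: the Lipschitz rescaling by $\kappa_d=R/\lambda_d^{(-\gammafunc+\gammadist)/2}$, the inclusion $\iota_d(\mathrm{Lip}_1([0,1]^d))\subset\mathrm{Lip}_{\kappa_d}(\inputsetforlipscitz,\metricforlipschitz)$, and the $L^p$-isometry of $\iota_d$, applied in reverse order (the paper first writes $\packing(\mathrm{Lip}_1(\inputsetforlipscitz,\metricforlipschitz);\epsilon)=\packing(\mathrm{Lip}_{\kappa_d}(\inputsetforlipscitz,\metricforlipschitz);\kappa_d\epsilon)$). No direction is left to check: your inequality $\packing(\mathrm{Lip}_1(\inputsetforlipscitz,\metricforlipschitz);\kappa_d^{-1}\eta)\ge\packing(\mathrm{Lip}_1([0,1]^d),\eta)$ with $\eta=\kappa_d\epsilon=R\epsilon/\lambda_d^{(-\gammafunc+\gammadist)/2}$ is literally the claim, so the homogeneity identity on $[0,1]^d$ is superfluous, and you should drop the idea of absorbing $R$ into the radius, which is not harmless because the downstream bound $(c/(dt))^d$ changes by a factor $R^{\pm d}$ under such a rescaling.
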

\begin{proof}
    By rescaling the function, we have
    \begin{align}
        &\packing(\mathrm{Lip}_{1}(\inputsetforlipscitz,\metricforlipschitz);\epsilon)_{L^p}\\
        =&
        \packing
        \left(\mathrm{Lip}_{R/(\lambda_d^{(-\gammafunc+\gammadist)/2})}(\inputsetforlipscitz,\metricforlipschitz);
        \frac{R \epsilon}{(\lambda_d^{(-\gammafunc+\gammadist)/2})}
        \right)_{L^p}\\
        \gtrsim&
        \packing
        \left(\iota_d (\mathrm{Lip}_1([0,1]^d;\ell^\infty));
        \frac{R \epsilon}{(\lambda_d^{(-\gammafunc+\gammadist)/2})}
        \right)_{L^p}\\
        \gtrsim&
        \packing
        \left(\mathrm{Lip}_1([0,1]^d;\ell^\infty);
        \frac{R \epsilon}{(\lambda_d^{(-\gammafunc+\gammadist)/2})}
        \right)_{L^p([0,1]^d)}.
    \end{align}
\end{proof}
Thus, the problem of estimating the packing entropy of the infinite-dimensional class reduces to that of estimating the entropy of finite-dimensional Lipschitz functions on the cube. 
Fortunately, sharp lower bounds for the latter are available in the literature. Note that packing and covering are almost equivalent when $\epsilon\to 0$.

\begin{lemma}[\citet{lanthaler2024operatorlearninglipschitzoperators}]
    \label{lemma-lanthaler-lipschitz-standard}
    For $p\in [1,\infty)$ and $d\in\mathbb{N}$, there exists a constant $c>0$ independent of $d$ such that
    \begin{equation}
        \log \packing(\mathrm{Lip}_{1}([0,1]^d,\|\|_{\infty});\epsilon)_{\|\|_{L^p}}
        \gtrsim \qty(
            \frac{c}{d\epsilon}
        )^d,\quad \forall \epsilon \in (0, c/d].
    \end{equation}
\end{lemma}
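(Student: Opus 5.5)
The plan is the standard bump-function (Varshamov–Gilbert) construction, keeping track of how the constants depend on $d$. Throughout, $p$ is fixed and all constants may depend on $p$ but not on $d$.

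\textbf{Construction.} Fix an integer $m\ge1$, set $h=1/m$, and partition $[0,1]^d$ into $N=m^d$ closed subcubes $Q_k$ of side $h$ with centers $c_k$. On each subcube put the pyramid
\[
\varphi_k(x)=\big(h/2-\|x-c_k\|_\infty\big)_+ .
\]
By the reverse triangle inequality for $\|\cdot\|_\infty$, each $\varphi_k$ is $1$-Lipschitz with respect to $\|\cdot\|_\infty$. Each $\varphi_k$ is supported in $Q_k$ and vanishes on $\partial Q_k$. For every sign vector $\sigma\in\{\pm1\}^N$ define $g_\sigma=\sum_k\sigma_k\varphi_k$.

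\textbf{The $g_\sigma$ are in the class.} Take $x,y$ in different cells and move along the segment joining them. Across a shared face the function is $0$ on both sides, so the Lipschitz bound concatenates along the segment. Hence $g_\sigma\in\mathrm{Lip}_1([0,1]^d,\|\cdot\|_\infty)$.

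\textbf{$L^p$ separation.} The supports are essentially disjoint, so
\[
\|g_\sigma-g_{\sigma'}\|_{L^p}^p=2^p\,\mathrm{Ham}(\sigma,\sigma')\int\varphi_1^p .
\]
Changing variables to $u=2(x-c_1)/h\in[-1,1]^d$ and integrating in the radial variable $r=\|u\|_\infty$, whose surface measure is $d\,2^d r^{d-1}\,dr$, gives
\[
\int\varphi_1^p=h^{d}(h/2)^p\, d\,B(d,p+1)=h^{d}(h/2)^p\,\frac{\Gamma(p+1)\,d!}{(d+p)!}\cdot\frac{1}{\Gamma(1)}\cdot\frac{(d+p)!}{(d+p)!}\;\gtrsim\; h^{d+p}\,(2d)^{-p}\,\Gamma(p+1)(1+p)^{-p}.
\]
The last step uses $d!\,d^p/(d+p)!\ge (d/(d+p))^p\ge(1+p)^{-p}$. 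This is the one place where the $d$-dependence must be controlled uniformly, and it produces the factor $1/d$ in the final bound.

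\textbf{Packing set.} By the Varshamov–Gilbert lemma there are at least $2^{N/8}$ sign vectors with pairwise Hamming distance at least $N/8$. With $N=h^{-d}$, the corresponding $g_\sigma$ satisfy
\[
\|g_\sigma-g_{\sigma'}\|_{L^p}\;\ge\; 2\big((N/8)\,h^{d+p}(2d)^{-p}c_p\big)^{1/p}=c_p' \,h/d ,
\]
with $c_p'>0$ independent of $d$.

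\textbf{Choice of scale.} Given $\epsilon$, choose $m=\lfloor c_p'/(d\epsilon)\rfloor$, so that $h\approx d\epsilon/c_p'$. Then $c_p'h/d\ge\epsilon$, and the $g_\sigma$ form an $\epsilon$-packing of size $2^{N/8}$. This gives
\[
\log\packing\ge \tfrac{\log 2}{8}\,m^d\gtrsim\big(c/(d\epsilon)\big)^d .
\]
The floor costs only a factor of $2$ inside $m$, which is absorbed into $c$. The requirement $m\ge1$ is exactly the constraint $\epsilon\le c/d$ in the statement.

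\textbf{Main obstacle.} The step I expect to be the main obstacle is the uniform-in-$d$ lower bound on $\int\varphi_1^p$ relative to $h^{d+p}$. A careless volume estimate, such as the naive bound by an inscribed cube, loses a factor exponential in $d$, which would spoil the form $(c/(d\epsilon))^d$. I will handle it with the exact Beta-function computation above. If that turns out to be messy, I would instead restrict to the inner cube $\{\|u\|_\infty\le 1-1/d\}$, on which $\varphi_1\ge h/(2d)$ and whose volume fraction $(1-1/d)^d\ge 1/4$ for $d\ge2$. This yields the same $h/d$ separation directly.
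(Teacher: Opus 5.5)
Your proof is correct. There is no in-paper argument to compare it against step by step, because the paper states this lemma without proof and imports it directly from \citet{lanthaler2024operatorlearninglipschitzoperators}.

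Your route is the standard self-contained one:
\begin{itemize}
\item $\ell^\infty$-pyramids $\varphi_k$ on a grid of $N=m^d$ cells;
\item Varshamov--Gilbert applied to the sign patterns $\sigma\in\{\pm1\}^N$;
\item the exact integral $\int_{[-1,1]^d}(1-\|u\|_\infty)^p\,du=2^d\,\Gamma(d+1)\Gamma(p+1)/\Gamma(d+p+1)$.
\end{itemize}
What it buys over the citation is an explicit account of where the $1/d$ comes from. Under the uniform law, $\|u\|_\infty$ has density $d\,s^{d-1}$ on $[0,1]$, so the pyramid's $L^p$ mass sits where its height is only of order $h/d$. That uniformity in $d$ is exactly what the paper needs later, in \cref{lemma-packing-lipschitz-parallel-lanthaler}, where $d\simeq(\ln\epsilon^{-1})^{1/\alpha}\to\infty$. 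With the citation alone, the reader must trust that the source's constant is genuinely $d$-free. Your Beta computation, or your inner-cube fallback, makes this visible. The citation, for its part, buys brevity.

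A few small points to tidy, none of which affects the conclusion:
\begin{itemize}
\item For non-integer $p$, write $\Gamma(d+p+1)$ instead of $(d+p)!$. Log-convexity of $\Gamma$ (digamma $\psi(y)\le\log y$) gives $\Gamma(d+p+1)/\Gamma(d+1)\le(d+p+1)^p$, so $(1+p)^{-p}$ becomes $(2+p)^{-p}$.
\item Varshamov--Gilbert requires $N\ge 8$. For $N<8$, the pair $\{g_\sigma,-g_\sigma\}$ is already separated by $2\|g_\sigma\|_{L^p}\ge c_p'h/d$, which gives $\log\packing\ge\log 2\gtrsim N$.
\item The paper's packing definition uses strict separation $>\epsilon$, so shrink $c$ slightly.
\item Your fallback cube $\{\|u\|_\infty\le 1-1/d\}$ is null when $d=1$. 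Using $1-1/(2d)$ instead gives height at least $h/(4d)$ on a set of volume fraction at least $1/2$, for every $d\ge 1$.
\item Since $\|g_\sigma\|_\infty\le 1/2$, your functions also lie in the class if it carries a sup-norm bound. Without such a bound the packing number would be infinite and the lemma trivial.
\end{itemize}
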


Combining the embedding argument with the finite-dimensional lower bounds above, we arrive at the following result, which provides the desired exponential lower bound on the entropy growth.

\begin{lemma}[Parallel to \citet{lanthaler2024operatorlearninglipschitzoperators}]
    \label{lemma-packing-lipschitz-parallel-lanthaler}
    Let $\measureofmuzero$ be a probability measure on $\inputsetforlipscitz$ in \cref{setting-structured}. The packing entropy of $\mathrm{Lip}_1(\inputsetforlipscitz,\metricforlipschitz)$ with respect to the Bochner $L^p(\measureofmuzero)$-norm, satisfies
    \begin{equation}
        \log \packing(\mathrm{Lip}_1(\inputsetforlipscitz,\metricforlipschitz),\epsilon)_{L^2(\measureofmuzero)} \gtrsim \exp\qty(
            c_l(\ln \epsilon^{-1})^{\frac{\alpha+1}{\alpha}}
        )
    \end{equation}
    for some constant $c_l > 0$.
\end{lemma}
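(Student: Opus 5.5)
The plan is to combine \cref{corollary-packing-reduction} with \cref{lemma-lanthaler-lipschitz-standard} and then optimize over the free dimension $d$. First, since the $L^2(\measureofmuzero)$-packing of the Lipschitz class is what we need ($p=2$ is allowed in both results), \cref{corollary-packing-reduction} gives, for every $d\in\mathbb{N}$,
\begin{equation}
    \log \packing(\mathrm{Lip}_1(\inputsetforlipscitz,\metricforlipschitz);\epsilon)_{L^2(\measureofmuzero)}
    \gtrsim
    \log \packing\Big(\mathrm{Lip}_1([0,1]^d,\|\cdot\|_\infty);\tfrac{R\epsilon}{\lambda_d^{\kappa}}\Big)_{L^2([0,1]^d)},
    \qquad \kappa \coloneq \tfrac{-\gammafunc+\gammadist}{2}>0 .
\end{equation}
Applying \cref{lemma-lanthaler-lipschitz-standard} with resolution $\epsilon' = R\epsilon\lambda_d^{-\kappa}$, valid whenever $\epsilon'\le c/d$, yields the lower bound $\big(c\lambda_d^{\kappa}/(dR\epsilon)\big)^d$.

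Next I choose $d=d(\epsilon)$. Using $\lambda_d\gtrsim \exp(-c_0 d^\alpha)$, the logarithm of the bound is
\begin{equation}
    d\Big(\ln \epsilon^{-1} - \kappa c_0 d^{\alpha} - \ln d - \ln(R/c)\Big).
\end{equation}
Take $d \simeq \big(\ln\epsilon^{-1}/(2\kappa c_0)\big)^{1/\alpha}$, so that $\kappa c_0 d^\alpha \le \tfrac12\ln\epsilon^{-1}$; since $\ln d = O(\ln\ln\epsilon^{-1})$, for $\epsilon$ small the bracket is at least $\tfrac14 \ln\epsilon^{-1}$. Hence the log-packing number is $\gtrsim \exp\big(\tfrac14 d\ln\epsilon^{-1}\big) = \exp\big(c_l (\ln\epsilon^{-1})^{(\alpha+1)/\alpha}\big)$ with $c_l$ depending on $\alpha,\kappa,c_0$. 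The same choice verifies the admissibility condition $\epsilon'\le c/d$: indeed $\epsilon' = R\epsilon\lambda_d^{-\kappa}\lesssim \epsilon^{1/2}$, which is far smaller than $c/d \asymp (\ln\epsilon^{-1})^{-1/\alpha}$ for small $\epsilon$. Integer rounding of $d$ only changes constants.

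The main obstacle I expect is not the optimization but justifying that the embedded functions $\iota_d g$ genuinely live in the target class on the support of $\measureofmuzero$, namely that the image of \cref{lemma-isometry-lipschitz-to-rkhs-lipschitz} lies in $\inputsetforlipscitz$ (this uses compatibility of \cref{assumption-structural-for-lower-bound} with the ball constraint, as in the Example following it, requiring $\gammadist>\gammaball$) and that the rescaled Lipschitz constant uses the \emph{worst} coordinate $j=d$, i.e.\ $\max_{j\le d}\lambda_j^{(\gammafunc-\gammadist)/2}=\lambda_d^{-\kappa}$ by monotonicity. I would check these explicitly, then conclude; small $\epsilon$ regimes extend to all $\epsilon\in(0,\epsilon_0]$ by adjusting $c_l$.
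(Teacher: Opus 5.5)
Your proposal is correct and follows essentially the same route as the paper: you combine \cref{corollary-packing-reduction} with \cref{lemma-lanthaler-lipschitz-standard}, use $\lambda_d \gtrsim \exp(-c d^\alpha)$, and pick $d \asymp (\ln\epsilon^{-1})^{1/\alpha}$ so that the eigenvalue loss uses up only a fixed fraction of $\ln\epsilon^{-1}$. The paper's choice $d = (\log(c_2\epsilon^{-1})/(c'+1))^{1/\alpha}$ differs from your factor-$\tfrac12$ choice only in constants, and its asymptotic check that $\epsilon \le c_2 d^{-1}\exp(-c' d^\alpha)$ is the same admissibility verification as your $\epsilon' \lesssim \epsilon^{1/2} \ll c/d$.
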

\begin{proof}
    Combining \cref{corollary-packing-reduction} and \cref{lemma-lanthaler-lipschitz-standard},
    \begin{equation}
        \log \packing(\mathrm{Lip}_1(\inputsetforlipscitz,\metricforlipschitz),\epsilon)_{L^p}\gtrsim \qty(\frac{c_1\lambda_{d}^{\frac{-\gammafunc+\gammadist}{2}}}{8d_\epsilon \epsilon})^{d}
    \end{equation}
    where $c_1 > 0$ is a constant, provided $\epsilon \leq \frac{c_1 \lambda_{d}{\frac{-\gammafunc+\gammadist}{2}}}{d}$.
    Then, by $\lambda_d \gtrsim \exp(-c d^\alpha)$, the RHS is lower bounded by
    \begin{equation}
        (\log \packing \gtrsim) \qty(\frac{c_1\exp(- c' d^\alpha))}{d \epsilon})^{d} \quad \text{if}\;\epsilon \leq c_2 d^{-1}\exp(-c' d^\alpha),
    \end{equation}
    where $c' = \frac{c(-\gammafunc+\gammadist)}{2}$ and $c_2>0$ is another constant.
    Assuming that $\epsilon$ is sufficiently small, let us take $d$ as
    \begin{equation}
        d = \qty( \frac{\log (c_2 \epsilon^{-1})}{c'+1})^{1/\alpha} \; (\simeq (\ln \epsilon^{-1})^{1/\alpha}).
    \end{equation}
    By rearranging the above inequality, we also obtain
    \begin{equation}
        \epsilon = c_2 \exp(-(c'+1) d^\alpha),
    \end{equation}
    which can satisfy $\epsilon \leq c_2 d^{-1}\exp(-c' d^\alpha)$ asymptotically, because $\exp(-d^\alpha) \ll d^{-1}$ where $d \simeq (\ln \epsilon^{-1})^{1/\alpha} \to \infty$ as $\epsilon \to 0$.  
    Then, we have
    \begin{align}
         &\log \packing(\mathrm{Lip}_1(\inputsetforlipscitz,\metricforlipschitz),\epsilon)_{L^p}\\
         \gtrsim& \qty(\frac{c_1\exp(- c' d^\alpha))}{d \epsilon})^{d}\\
         \gtrsim& \qty(\exp \qty(
            -c'd^\alpha + (c'+1) d^\alpha - \ln d + O(1)
            )
         )^{d}\\
         \gtrsim& \exp (\Omega(d^{\alpha+1}))\\
         \gtrsim& \exp (\Omega((\ln \epsilon^{-1})^{\frac{\alpha+1}{\alpha}})),
    \end{align}
    where we used $d^\alpha \simeq \ln \epsilon^{-1}$ in the fourth inequality. 
\end{proof}

\subsection{Proof of Minimax Lower Bound}

\begin{theorem}[Minimax Lower Bound]
    \label{theorem-minimax-lower-bound}
    Under \cref{setting-structured}, we have
    \begin{equation}
        \inf_{\mathcal{S}_n \mapsto \hat{F}} \sup_{\tilde{F}^\star\in \mathcal{F}^\star} \E_{\mathcal{S}_n} \qty[ \|\hat{F}-F^\star\|_{L^2(\measureofinputs)}^2 ] \gtrsim \exp\qty(-O
            \qty(
                (\ln n)^{\frac{\alpha}{\alpha+1}}
            )
        ).
    \end{equation}
\end{theorem}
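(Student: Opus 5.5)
The argument chains the reduction of Step~1 with the entropy estimates of Step~2 and closes it with the Yang--Barron inequality.

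\textbf{Reduction.} By \cref{corollary-reduction-to-regression}, the minimax risk over $\mathcal{F}^\star$ with data $\mathcal{S}_n$ is bounded below by the minimax risk of plain Gaussian regression. In that problem we observe $\mathcal{U}_n=\{(\mu_0^{(i^*)},y_t)\}_t$, the target ranges over $\bar{\mathcal{F}}^\star=\mathrm{Lip}_L(\inputsetforlipscitz,\metricforlipschitz)$, and the loss is measured in $L^2(\measureofmuzero)$. It therefore suffices to lower bound this reduced risk. Rescaling by $L$ only changes constants, so I work with $L=1$.

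\textbf{Entropy calibration.} I will apply \cref{lemma-yang-barron} with two scales $\epsilon$ and $\delta$. From \cref{lemma-covering-lipschitz-based-on-boissard},
\[
\log\covering(\epsilon)\le C\exp\big(c_u(\ln\epsilon^{-1})^{(\alpha+1)/\alpha}\big).
\]
I choose $\epsilon_n$ so that this equals $n\epsilon_n^2/(2\sigma^2)$. Taking logarithms, $c_u(\ln\epsilon_n^{-1})^{(\alpha+1)/\alpha}+2\ln\epsilon_n^{-1}\simeq \ln n$, and hence
\[
\ln\epsilon_n^{-1}\simeq (\ln n/c_u)^{\alpha/(\alpha+1)}.
\]
The linear term $2\ln\epsilon_n^{-1}$ is lower order, so it only affects constants. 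By monotonicity, setting $\epsilon_n=\exp(-a(\ln n)^{\alpha/(\alpha+1)})$ with $a$ a large enough constant makes the covering condition hold.

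For the packing side I use \cref{lemma-packing-lipschitz-parallel-lanthaler}:
\[
\log\packing(\delta)\ge c\exp\big(c_l(\ln\delta^{-1})^{(\alpha+1)/\alpha}\big).
\]
I need this to be at least $2n\epsilon_n^2/\sigma^2+2\log 2$. It is enough to have $\log\packing(\delta)\gtrsim n$, which holds once $c_l(\ln\delta^{-1})^{(\alpha+1)/\alpha}\ge \ln n+O(1)$. So I take $\delta_n=\exp(-b(\ln n)^{\alpha/(\alpha+1)})$ with $b=(2/c_l)^{\alpha/(\alpha+1)}$, say. Both scales tend to zero, so for large $n$ they lie in the range where the asymptotic entropy bounds are valid.

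\textbf{Conclusion and the main obstacle.} \cref{lemma-yang-barron} then yields a minimax risk of at least $\delta_n^2=\exp(-2b(\ln n)^{\alpha/(\alpha+1)})$, which is the claimed bound. The step needing the most care is the mismatch between the two entropy constants $c_u$ and $c_l$. Because the required packing level depends only on $n$, not on $\epsilon_n$, the two scales can be chosen independently; the discrepancy lands only in the constant $b$ in the exponent, which the $O(\cdot)$ permits. I would also check two technical points:
\begin{itemize}
\item that \cref{lemma-packing-lipschitz-parallel-lanthaler}, stated for $L^p$, is being used with $p=2$;
\item that the packing and covering numbers refer to the same norm $L^2(\measureofmuzero)$, with the domain restricted to the support of $\measureofmuzero$ given by \cref{setting-structured}.
\end{itemize}
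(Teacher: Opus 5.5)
Your proposal is correct and is essentially the paper's own proof. It combines the reduction of \cref{corollary-reduction-to-regression}, the covering bound of \cref{lemma-covering-lipschitz-based-on-boissard}, the packing bound of \cref{lemma-packing-lipschitz-parallel-lanthaler}, and \cref{lemma-yang-barron}, with both scales of order $\exp(-\Theta((\ln n)^{\alpha/(\alpha+1)}))$. Your use of $\epsilon_n\le 1$ to reduce the packing requirement to $\log\mathcal{M}(\delta_n)\gtrsim n$ decouples the two scales a little more cleanly than the paper's bookkeeping.

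There is one slip. The covering condition $\log\mathcal{N}(\epsilon)\le n\epsilon^2/(2\sigma^2)$ gets harder as $\epsilon$ shrinks, so the constant $a$ in $\epsilon_n=\exp(-a(\ln n)^{\alpha/(\alpha+1)})$ must be small enough (any $a<c_u^{-\alpha/(\alpha+1)}$ works), not large enough. Your crossing-point choice of $\epsilon_n$ already satisfies the condition, so nothing downstream changes.
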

\begin{proof}
    By \cref{lemma-reduction-to-regression}, it is sufficient to evaluate 
    \begin{equation}
    \underset{\mathcal{U}_n\mapsto \hat{F}\in L^2(\measureofmuzero)}{\inf}
    \ \underset{F^\star\in\tilde{\mathcal{F}}^\star}{\sup}
    \E_{\,\mathcal{U}_n}\!\bigl[\|\hat{F}(\mu_0) - F^\star(\mu_0)\|_{L^2(\measureofmuzeroistar)}^2\bigr]
    \end{equation}
    where $\mathcal{U}_n = \{(\mu_0^{(i^*)},y_t)\}_{t=1}^n$ sampled as in \cref{setting-structured}, instead of the original problem.
    Let $V(\epsilon_n) \coloneq \log \covering(\mathrm{Lip}_1(\inputsetforlipscitz,\|\|_{\rkhs_0^{\gammafunc}});\epsilon)_{L^2}$ and $M(\delta_n) \coloneq \log \packing(\mathrm{Lip}_1(\inputsetforlipscitz,\|\|_{\rkhs_0^{\gammafunc}});\epsilon)_{L^2}$.
    
    First, let $\epsilon_n = C_{1,\sigma}\exp(-c_{\mathrm{E}}
        (\ln n)^{\frac{\alpha}{\alpha+1}})$ where $c_{\mathrm{E}} = \qty(
            \frac{1}{2c_u^2}
        )^{\frac{\alpha}{\alpha+1}}$. Then, by \cref{lemma-covering-lipschitz-based-on-boissard},
    \begin{align}
        \frac{V (\epsilon_n)}{\epsilon_n^2} \lesssim& \exp
        \qty(
            c_u (\log \epsilon_n^{-1})^{\frac{\alpha+1}{\alpha}}
            + 2c_\mathrm{E} (\ln n)^{\frac{\alpha}{\alpha+1}}
        )\\
        \lesssim& \exp
        \qty(
            c_u (c_\mathrm{E}(\ln n)^{\frac{\alpha}{\alpha+1}})^{\frac{\alpha+1}{\alpha}}
            + 2c_\mathrm{E} (\ln n)^{\frac{\alpha}{\alpha+1}}
        )\\
        \lesssim& \exp
        \qty(
            \frac{1}{2}\ln n
            + O((\ln n)^{\frac{\alpha}{\alpha+1}})
        )\\
        \lesssim& \frac{n}{\sigma^2}.
    \end{align}
    Note that $\sigma = \Theta(1)$.
    Next, we lower bound the packing entropy. Taking $\delta_n = C_{\sigma}\exp(-c_{\dd}
        (\ln n)^{\frac{\alpha}{\alpha+1}}
    )$ where $c_{\dd} = c_l^{-\frac{\alpha}{\alpha+1}}$ and $C_{\sigma} \gtrsim \frac{2}{\sigma^2}+\frac{2\log 2}{1}$ is a sufficiently large constant only  dependent of $\sigma = \Theta(1)$, using \cref{lemma-packing-lipschitz-parallel-lanthaler},
    \begin{align}
        \frac{M(\delta_n)}{\epsilon_n^2} \gtrsim \exp\qty(
            c_l (c_{\dd}^{\frac{\alpha+1}{\alpha}} \ln n)
            + O((\ln n)^{\frac{\alpha}{\alpha+1}})
        )\cdot \qty(\frac{2}{\sigma^2}+\frac{2\log 2}{1})\gtrsim n \qty(\frac{2}{\sigma^2}+\frac{2\log 2}{n}).
    \end{align}
    Finally, applying \cref{lemma-yang-barron}, we have
    \begin{equation}
        \inf_{\mathcal{S}_n \mapsto \hat{F}} \sup_{F\in \mathcal{F}^\circ} \E \qty[ \|\hat{F}-F\|_{L^2}^2 ] \gtrsim \delta_n^2 \gtrsim \exp\qty(-O
            \qty(
                (\ln n)^{\frac{\alpha}{\alpha+1}}
            )
        ).
    \end{equation}
\end{proof}

\revisedstart
\section{Synthetic Experiment on Measure-Valued Attention}
\label{appendix-section-synthetic-measure-exp}

To provide a minimal empirical sanity check of our risk bounds, we design a
simple synthetic experiment where the input is a \emph{measure-valued context}
on $[0,1]\times\{-1,+1\}$ and the model is a single \textsc{MLP}$\to$Attention$\to$\textsc{MLP} block.
The goal is to recover a scalar functional of an underlying ``associative''
measure from a mixture of associative and non-associative components.

\paragraph{Data-generating process.}
Fix a truncation level $M=16\in\mathbb{N}$ and an orthonormal trigonometric basis
on $[0,1]$,
\begin{equation}
  \phi_0(x) = 1,\quad \phi_j(x) = \sqrt{2}\,\sin(\pi j x),\quad j=1,\dots,M-1.
\end{equation}
For a smoothness parameter $\alpha>0$ we define eigenvalues
\begin{equation}
  \lambda_j = \exp(-j^{\alpha}), \qquad j=1,\dots,M-1.
\end{equation}
For each training example we sample two sets of coefficients
$Z_1,Z_2 \sim \mathcal{N}(0,I_{M-1})$ independently, $Z_{1,0},Z_{2,0}=0$, and form the (unnormalized)
densities on $[0,1]$
\begin{equation}
  \tilde{\mu}_k(x) \;=\; \sum_{j=0}^{M-1} \lambda_j\, Z_{k,j}\,\phi_j(x),
  \qquad k\in\{1,2\}.
\end{equation}
We discretize $[0,1]$ on a uniform grid $\{x_t\}_{t=1}^{T}$, $T=32$, clamp the density
to be nonnegative, and normalize to obtain a probability mass function
$(p_k(t))_{t=1}^T$:
\begin{equation}
  \mu_k^{\mathrm{raw}}(x_t) = \tilde{\mu}_k(x_t),\qquad
  \mu_k^+(x_t) = \max\{\mu_k^{\mathrm{raw}}(x_t),\varepsilon\},\qquad
  p_k(t) = \frac{\mu_k^+(x_t)}{\sum_{s=1}^T \mu_k^+(x_s)},
\end{equation}
with a small cutoff $\varepsilon>0$\footnote{In our theory, we did not explicitly investigated such an cutoff or the normalization for simplicity.}

Independently, we sample a ``query label'' $v^{(1)} \in \{-1,+1\}$ uniformly
and set $v^{(2)} := -v^{(1)}$.
We then define a product-measure mixture on $[0,1]\times\{-1,+1\}$ by
\begin{equation}
  \nu \;=\; \tfrac12\bigl(\mu_1 \otimes \delta_{v^{(1)}}\bigr)
          + \tfrac12\bigl(\mu_2 \otimes \delta_{v^{(2)}}\bigr),
\end{equation}
where $\mu_k$ is the discrete measure assigning mass $p_k(t)$ to $x_t$.
To construct the input token sequence, we draw $n_{\mathrm{tokens}}=5000$ i.i.d.\ Monte Carlo
samples $(X_i,Q_i) \sim \nu$:
\begin{equation}
  (X_i,V_i) =
  \begin{cases}
    (x_{T_1}, v^{(1)}), & \text{with prob.\ } \tfrac12,\ T_1 \sim p_1, \\
    (x_{T_2}, v^{(2)}), & \text{with prob.\ } \tfrac12,\ T_2 \sim p_2,
  \end{cases}
  \qquad i=1,\dots,n_{\mathrm{tokens}}.
\end{equation}
Finally, we append a single ``query token'' $(0,v^{(1)})$ at the end of the
sequence, so that each input example is a sequence
\begin{equation}
  \bigl\{(X_i,V_i)\bigr\}_{i=1}^{n_{\mathrm{tokens}}}
  \cup \{(0,v^{(1)})\} \;\in\; \bigl([0,1]\times\{-1,+1\}\bigr)^{n_{\mathrm{tokens}}+1}.
\end{equation}

The target output $Y$ depends only on the \emph{associative} measure $\mu_1$
(and is independent of $\mu_2$ and $v^{(2)}$):
\begin{equation}
  Y \;\simeq\; \tilde{F}^\star(\mu_1,\underbrace{X_{n_\mathrm{tokens}+1}}_{\text{query token}})
  \;:=\; v^{(1)} \cdot \sum_{j=0}^{M-1} \lambda_j\, Z_{1,j}^2.
  \label{eq:synthetic-target-functional}
\end{equation}
Intuitively, the model must use the final query token $(0,v^{(1)})$ to attend to
tokens consistent with $q_1$ and recover information about the hidden
coefficients $Z_1$ from Monte Carlo samples of $\mu_1$. Note that we add a small Gaussian noise with std $= 0.01$ in training.

\paragraph{Model and training.}
We use a minimal architecture that mirrors the theoretical measure-attention
operator:
\begin{center}
  context/query \textsc{MLP} $\;\rightarrow\;$ measure attention $\;\rightarrow\;$ \textsc{MLP head}.
\end{center}
For each example we construct a sequence of $T_{\mathrm{ctx}}$ context tokens
$(x_t, v_t)\in\mathbb{R}^2$ together with a final query token
$(0, v_{\mathrm{query}})$. The context tokens and the query token are embedded
by separate two-layer MLPs into $\mathbb{R}^{d_{\mathrm{model}}}$ with
$d_{\mathrm{model}} = 8$ and hidden width $d_{\mathrm{hidden}} = 8$. The
resulting query embedding provides the $Q$ vector, while the context embeddings
provide the $K$ and $V$ vectors for a single 4-head softmax attention layer. This
``measure-attention'' layer outputs a single $d_{\mathrm{model}}$-dimensional
representation, which is fed into a final two-layer MLP head to produce the
scalar prediction $\hat{Y}$. We train with the squared loss
$\ell(\hat{Y},Y) = (\hat{Y}-Y)^2$ using Adam with an exponentially decaying
learning rate for $20$ epochs.

For each $\alpha\in\{\alpha_1,\dots,\alpha_L\}$ we generate independent
training sets of sizes
$n \in \{n_{\min},\dots,n_{\max}\}$ (\ $n=2^k$ for $k=2,\dots,6$) and measure
the empirical risk $L(n)$ on a held-out validation set $(n_{\mathrm{val}}=2000)$.

\paragraph{Risk scaling.}
Theory predicts that in this setting the minimax risk decays as
\begin{equation}
  L^\star(n) \;\approx\; \exp\!\bigl(-c\,(\log n)^{\alpha/(\alpha+1)}\bigr),
\end{equation}
up to multiplicative constants. To compare with this prediction, for each
$\alpha$ we fit the parametric form
\begin{equation}
\label{eq:synthetic-fit-form}
  \log L(n) \;\approx\; A_\alpha
  \;-\; C_\alpha\,\bigl(\log n\bigr)^{\alpha/(\alpha+1)}
\end{equation}
by least squares over $(A_\alpha,C_\alpha)$ using the measured pairs
$\{(\log n_i,\log L(n_i))\}_i$.
Figure~\ref{fig:synthetic-risk-scaling} shows $\log L(n)$ against
$(\log n)^{\alpha/(\alpha+1)}$ together with the fitted curves.

\begin{figure}[t]
  \centering
  \includegraphics[width=0.6\textwidth]{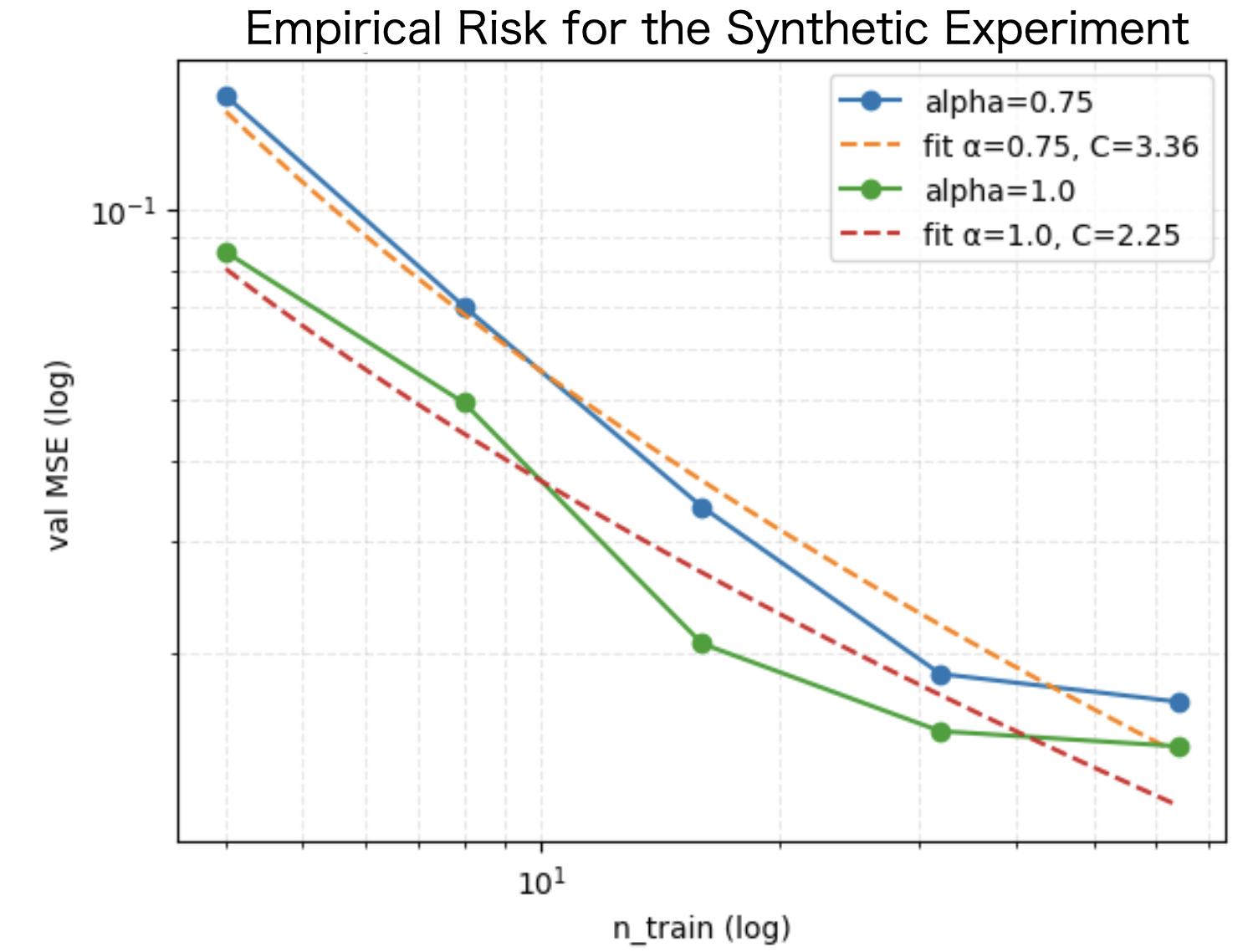}
  \caption{Empirical risk $L(n)$ for the synthetic measure-valued experiment,
  plotted on a transformed axis $(\log n)^{\alpha/(\alpha+1)}$ together with
  the fitted curves. Each risk was calculated with $2000$ unknown samples.
  }
  \label{fig:synthetic-risk-scaling}
\end{figure}

As a minimal sanity check, this synthetic experiment (\cref{fig:synthetic-risk-scaling}) in which varying the spectral decay parameter $\alpha$
systematically affects the convergence speed: heavier-tailed spectra (smaller
$\alpha$) lead to visibly slower decay of the empirical risk. This is
qualitatively consistent with the theoretical prediction, although we do not
attempt to match the precise asymptotic rate.

\paragraph{Attention-weight analysis.}

To check whether the attention layer actually uses the query tag, we inspect the
softmax attention weights of the trained model on the validation set.
For a given example, let
\[
  \{(X_t, V_t)\}_{t=1}^{T}
  \in \bigl([0,1]\times\{-1,+1\}\bigr)^{T}
\]
denote the $T$ context tokens, and let
\[
  (X_{\mathrm{q}}, V_{\mathrm{q}}) = (0, v^{(1)})
\]
be the query token appended at the end of the sequence.  For each attention
head $h=1,\dots,H$ we write
\[
  a^{(h)} \in [0,1]^T
\]
for the softmax attention weights from the query to the $T$ context positions,
so that
\[
  \sum_{t=1}^T a^{(h)}_t = 1.
\]

We are interested in how the query token redistributes its attention mass over
tokens whose tag matches the query versus those with the opposite tag.
Accordingly, we define the index sets
\begin{equation}
  S_{\mathrm{same}}
  \;:=\; \{1 \le t \le T : V_t = V_{\mathrm{q}}\},
  \qquad
  S_{\mathrm{diff}}
  \;:=\; \{1 \le t \le T : V_t \neq V_{\mathrm{q}}\},
\end{equation}
that is, we only consider the context tokens and exclude the query token
itself from both sets.  For each head $h$ we then compute the average
per-token attention weight assigned by the query to same-tag and
different-tag tokens,
\begin{equation}
  \bar w^{(h)}_{\mathrm{same}}
  \;:=\; \frac{1}{|S_{\mathrm{same}}|}
  \sum_{t \in S_{\mathrm{same}}} a^{(h)}_t,
  \qquad
  \bar w^{(h)}_{\mathrm{diff}}
  \;:=\; \frac{1}{|S_{\mathrm{diff}}|}
  \sum_{t \in S_{\mathrm{diff}}} a^{(h)}_t,
\end{equation}
as well as the total attention mass
\begin{equation}
  m^{(h)}_{\mathrm{same}}
  \;:=\; \sum_{t \in S_{\mathrm{same}}} a^{(h)}_t,
  \qquad
  m^{(h)}_{\mathrm{diff}}
  \;:=\; \sum_{t \in S_{\mathrm{diff}}} a^{(h)}_t.
\end{equation}
In practice, we implement this by adding a flag to the attention module that,
when enabled, stores the last softmax attention tensor
$A \in \mathbb{R}^{B\times H\times 1\times T}$ on the CPU after a
forward pass (where $B$ is the batch size), and we extract
$a^{(h)}$ as the length-$T$ vector $A_{b,h,1,:}$ corresponding to the
query-to-context weights for each example $b$ and head $h$.

We report in Table~\ref{tab:attention-stats} the mean and standard deviation of
$m^{(h)}_{\mathrm{same}}$ and $m^{(h)}_{\mathrm{diff}}$ over $1000$ validation
examples for each attention head $h$. On this synthetic task, two of the four
heads concentrate essentially all of their attention mass on tokens whose tag
matches the query tag, while another head exhibits the opposite preference and
one head remains nearly symmetric. Averaged across heads, the query token
assigns a larger total mass to tokens with the same tag than to those with the
opposite tag, indicating a net bias toward tag-conditioned retrieval.

The absolute scale of the averaged weights $\bar w_{\mathrm{same}}$ and
$\bar w_{\mathrm{diff}}$ is small (on the order of $10^{-4}$) simply because the
attention distribution is normalized over a long context of
$T_{\mathrm{ctx}} \approx n_{\mathrm{tokens}} = 5000$ positions. Under an
approximately uniform baseline, we would have
\begin{equation}
  \bar w_{\mathrm{unif}}
  \;\approx\; \frac{1}{T_{\mathrm{ctx}}}
  \;\approx\; \frac{1}{5000}
  \;\approx\; 2 \times 10^{-4},
\end{equation}
so the reported values should be interpreted relative to this $1/T_{\mathrm{ctx}}$
scale rather than as absolute probabilities.
In our construction, each context
token independently comes from $\mu_1\otimes\delta_{v^{(1)}}$ or
$\mu_2\otimes\delta_{v^{(2)}}$ with probability $1/2$, so typically
$|S_{\mathrm{same}}|\approx|S_{\mathrm{diff}}|\approx T_{\mathrm{ctx}}/2$.
Consequently, values around $\bar w \approx 2\times 10^{-4}$ correspond to
almost-uniform attention over all context tokens, whereas values around
$\bar w_{\mathrm{same}} \approx 4\times 10^{-4}$ and
$\bar w_{\mathrm{diff}} \approx 0$ indicate that a head places essentially all
of its attention mass on the same-tag subset (and analogously for the opposite
tag).

As a sanity check that the model genuinely uses the query input, we perform a
``query shuffle'' experiment at evaluation time: within each mini-batch we
randomly permute the last (query) token across examples, while keeping the
context tokens and targets fixed, and recompute the validation loss (the bottom of \cref{tab:attention-stats}). On this synthetic task, shuffling the query tokens increases the validation MSE, confirming that the
model relies nontrivially on the query input.

\begin{table}[t]
  \centering
  \caption{Average attention weight from the query token to same-tag vs.\ different-tag context tokens
  (mean and standard deviation over 1000 validation examples, with $64$ training data and $\alpha=1.0$). With $T_{\mathrm{ctx}}=5000$ and roughly half of the tokens sharing the query tag,
the uniform baseline is $\bar w\approx 2\times 10^{-4}$, while a head that
focuses almost exclusively on same-tag tokens reaches
$\bar w_{\mathrm{same}}\approx 4\times 10^{-4}$. We also report the validation MSE with the original
  queries and after shuffling queries within 1000 data for validation.}
  \begin{tabular}{cccccc}
    \toprule
    Head & $\bar w_{\text{same}}$ & $\bar w_{\text{diff}}$
         & $\mathrm{std}(w_{\text{same}})$ & $\mathrm{std}(w_{\text{diff}})$ \\
    \midrule
    0 & $1.96\times 10^{-4}$ & $2.04\times 10^{-4}$
      & $2.80\times 10^{-4}$ & $2.85\times 10^{-4}$ \\
    1 & $1.44\times 10^{-19}$ & $4.00\times 10^{-4}$
      & $2.10\times 10^{-19}$ & $4.00\times 10^{-4}$ \\
    2 & $\mathbf{4.00\times 10^{-4}}$ & $\mathbf{1.17\times 10^{-14}}$
      & $4.00\times 10^{-4}$ & $1.70\times 10^{-14}$ \\
    3 & $\mathbf{4.00\times 10^{-4}}$ & $\mathbf{2.86\times 10^{-27}}$
      & $4.00\times 10^{-4}$ & $0$ (too small) \\
    \midrule
    mean & $2.49\times 10^{-4}$ & $1.51\times 10^{-4}$
         & $2.70\times 10^{-4}$ & $1.71\times 10^{-4}$ \\
    \bottomrule
  \end{tabular}

  \vspace{0.5em}
  \begin{tabular}{ccc}
    \toprule
    & original queries & shuffled queries \\
    \midrule
    val MSE & $1.44\times 10^{-2}$ & $7.75\times 10^{-1}$ \\
    \bottomrule
  \end{tabular}
  \label{tab:attention-stats}
\end{table}

This minimal experiment is not intended as a thorough empirical study, but it
provides a sanity check that the qualitative order of the risk predicted by our
theory is reproducible in a simple measure-valued attention setting.

\revisedend

\end{document}